
\documentclass[nohyperref]{article}

\usepackage{microtype}
\usepackage{graphicx}
\usepackage{subfigure}
\usepackage{booktabs} 

\usepackage{hyperref}


\usepackage[accepted]{icml2022_hacked_for_arXiv}


\usepackage{amsmath}
\usepackage{amssymb}
\usepackage{mathtools}
\usepackage{amsthm}
\usepackage{diagbox}
\usepackage[capitalize,noabbrev]{cleveref}

\theoremstyle{plain}
\newtheorem{theorem}{Theorem}[section]

\newtheorem{lemma}[theorem]{Lemma}
\newtheorem{corollary}[theorem]{Corollary}

\theoremstyle{definition}
\newtheorem{definition}[theorem]{Definition}
\newtheorem{assumption}[theorem]{Assumption}
\theoremstyle{remark}


\input{preamble.tex}

\usepackage[colorinlistoftodos,bordercolor=orange,backgroundcolor=orange!20,linecolor=orange,textsize=scriptsize]{todonotes}

\icmltitlerunning{3PC: Three Point Compressors for Communication-Efficient Distributed Training}
\title{\bf 3PC: Three Point Compressors for Communication-Efficient Distributed Training and a Better Theory for Lazy Aggregation}

\author{\bf Peter Richt\'arik \\ KAUST\thanks{\scriptsize King Abdullah University of Science and Technology, Thuwal, Saudi Arabia.} \and \bf Igor Sokolov \\ KAUST  \and \bf Ilyas Fatkhullin \\ ETH AI Center \& ETH Zurich \and \bf Elnur Gasanov \\ KAUST \and   \bf Zhize Li \\KAUST \and \bf Eduard Gorbunov \\ MIPT\thanks{\scriptsize Moscow Institute of Physics and Technology, Dolgoprudny, Russia.}}

\date{}
\begin{document}
\maketitle

\begin{abstract}
We propose and study a new class of gradient communication mechanisms for communication-efficient training---three point compressors (\algname{3PC})---as well as efficient distributed nonconvex optimization algorithms that can take advantage of them. Unlike most established approaches, which rely on a static compressor choice (e.g., Top-$K$), our class allows the compressors to {\em evolve} throughout the training process, with the aim of improving the theoretical communication complexity and practical efficiency of the underlying methods. We show that our general approach can recover the recently proposed state-of-the-art error feedback mechanism \algname{EF21} \citep{EF21} and its theoretical properties as a special case, but also leads to a number of new efficient methods. Notably, our approach allows us to improve upon the state of the art in the algorithmic and theoretical foundations of the {\em lazy aggregation} literature \citep{LAG}. As a by-product that may be of independent interest, we provide a new and fundamental link between the lazy aggregation and error feedback literature. A special feature of our work is that we do not require the compressors to be unbiased.
\end{abstract}


\section{Introduction}

It has become apparent in the last decade that, other things equal, the practical utility of modern machine learning models grows with their size and with the amount of data points used in the training process. This {\em big model} and {\em big data} approach, however, comes with increased demands on the hardware, algorithms, systems and  software involved in the training process. 

\subsection{Big data and the need for distributed systems} In order to handle the large volumes of data involved in training SOTA models, it is now absolutely necessary to rely on (often massively) {\em distributed} computing systems \cite{Dean2012, DCGD, lin2018deep}. Indeed, due to storage and compute capacity limitations, large-enough training data sets can no longer be stored on a single machine, and instead need to be distributed across and processed by an often large number of parallel workers. 

In particular, in this work we consider distributed supervised learning problems of the form
\begin{equation} 
\label{eq:finit_sum} \squeeze \min \limits_{x\in \R^d} \sb{f(x)\eqdef \frac{1}{n}\sum \limits_{i=1}^n f_i(x)},
\end{equation}
where $n$ is the number of parallel workers/devices/clients,  $x$ is a vector representing the $d$ parameters of a machine learning model (e.g., the weights in a neural network), and $f_i(x)$ is the loss of model $x$ on the training data stored on client $i \in [n]\eqdef \{1, 2,\dots, n\}$.

In some applications, as in {\em federated learning (FL)} \cite{FedAvg2016,FEDLEARN,FEDOPT,FL2017-AISTATS}, the training data is captured in a distributed fashion in the first place, and there are reasons to process it in this decentralized fashion as well, as opposed to  first moving it to a centralized location, such as a datacenter, and subsequently processing it there. Indeed, FL refers to machine learning in the environment where a large collection of highly heterogeneous clients (e.g., mobile devices, smart home appliances or corporations) tries to collaboratively train a model using the diverse data stored on these devices, but without compromising the clients' data privacy. 

\subsection{Big model and the need for communication reduction} While distributing the data across several workers certainly alleviates the per-client storage and compute bottlenecks, the training task is obviously not fully decomposed this way. Indeed,  the $n$ clients still need to work together to train the model, and working together means {\em communication}. 

Since currently the most efficient training mechanisms rely on gradient-type methods \cite{bottou2012stochastic, ADAM, gorbunov2021marina}, and since these operate  by iteratively updating {\em all} the $d$ parameters describing the model, relying on big models  leads to the need to communicate large-dimensional gradient vectors, which is expensive. For this reason, modern distributed methods need to rely on mechanisms that alleviate this communication burden. 

Several  orthogonal algorithmic approaches have been proposed in the literature to tackle this issue. One strain of  methods, particularly popular in FL,  is based on 
{\em richer local training} (e.g., \algname{LocalSGD}), which typically means going beyond a single local gradient step before communication/aggregation across the workers is performed. This strategy is based on the hope that richer local training will ultimately lead to a dramatic reduction in the number of communication rounds  without increasing the local computation time by much \citep{localSGD-Stich, localSGD-AISTATS2020, Blake2020}. Another notable strain of  methods is based on 
{\em communication compression} (e.g., \algname{QSGD}),  which means applying a lossy transformation to the communicated gradient information. This strategy is based on the hope that communication compression will lead to a dramatic reduction in the communication time within each round without affecting the number of communication rounds by much \cite{DCGD, Alistarh-EF-NIPS2018, DIANA, ADIANA, Nonconvex-sigma_k, CANITA}.

\subsection{Gradient descent with compressed communication}

In this work we focus on algorithms based on the latter line of work: {\em communication compression}.

Perhaps conceptually the simplest yet versatile  gradient-based method for solving the distributed problem  \eqref{eq:finit_sum} employing communication compression is distributed compressed gradient descent (\algname{DCGD})  \cite{DCGD}. Given a sequence  $\{\gamma^t\}$ of learning rates, \algname{DCGD} performs the iterations \begin{equation}\label{eq:DCGD} \squeeze x^{t+1} = x^t - \gamma^t \frac{1}{n}\sum \limits_{i=1}^n g_i^t, \quad g_i^t = \cM_i^t(\nabla f_i(x^t)).\end{equation}
Above, $\cM_i^t$ represents {\em any} suitable gradient {\em communication mechanism}\footnote{\scriptsize We do not borrow the phrase ``communication mechanisms'' from any prior literature. We coined this phrase in order to be able to refer to a potentially arbitrary mechanism for transforming a $d$-dimensional gradient vector into another $d$-dimensional vector that is easier to communicate. This allows us to step back, and critically reassess the methodological foundations of the field in terms of the mathematical properties one should impart on such mechanisms for them to be effective.} for mapping the  possibly dense, high-dimensional, and hence hard-to-communicate gradient $\nabla f_i(x^t) \in \R^d$ into a vector of equal dimension, but one that can hopefully be communicated using much fewer bits.

\begin{table*}[t]
		\centering
		\scriptsize
		\caption{Summary of the methods fitting our general \algname{3PC} framework. For each method we give the formula for the \algname{3PC} compressor $\cC_{h,y}(x)$, its parameters $\thetaNEW$, $\betaNEW$, and the ratio $\nicefrac{\betaNEW}{\thetaNEW}$ appearing in the convergence rate. Notation: $\alpha$ = parameter of the contractive compressor $\cC$, $\omega$ = parameter of the unbiased compressor $\cQ$, $\thetaNEW_1, \betaNEW_1$ = parameters of three points compressor $\cC_{h,y}^1(x)$, $\bar\alpha = 1 - (1 - \al_1)(1 - \al_2)$, where $\alpha_1, \alpha_2$ are the parameters of the contractive compressors $\cC_1, \cC_2$, respectively.}
		\label{tab:methods}    
		\begin{threeparttable}
			\begin{tabular}{|c|c|c c c c|}
				\hline
				Variant of \algname{\tiny 3PC} (Alg.\ \ref{alg:3PC}) & Alg.\ \# & $\cC_{h,y}(x) =$ & $\thetaNEW$ & $\betaNEW$ & $\frac{\betaNEW}{\thetaNEW}$\\ 
				\hline
\hline
				\begin{tabular}{c}
				\algname{\tiny EF21}\\ \scriptsize{\citep{EF21}}
\end{tabular}				 & Alg.\ \ref{alg:b_diana} & $h + \cC(x - h)$ & $1 - \sqrt{1-\alpha}$ & $\frac{1-\alpha}{1 - \sqrt{1-\alpha}}$ & $\cO\left(\frac{1-\alpha}{\alpha^2}\right)$ \\ 				
\hline
				\begin{tabular}{c}
				\algname{\tiny LAG}\\ \citep{LAG} \tnote{{\color{blue}(3)}}
\end{tabular}				 & Alg.\ \ref{alg:lag} & \begin{tabular}{c}
	$\begin{cases} x,& \text{if } (\ast),\\ h,& \text{otherwise,} \end{cases}$\\ $(*)$ means $\|x- h\|^2 > \zeta \|x - y\|^2$
\end{tabular} & $1$ & $\zeta$ & $\cO\left(\zeta\right)$\\
\hline
				\begin{tabular}{c}
				\algname{\tiny CLAG}\\ \scriptsize{(NEW)}
\end{tabular}				 & Alg.\ \ref{alg:clag} & \begin{tabular}{c}
	$\begin{cases} h + \cC(x - h),& \text{if } (\ast),\\ h,& \text{otherwise,} \end{cases}$\\ $(*)$ means $\|x- h\|^2 > \zeta \|x - y\|^2$
\end{tabular} & $1 - \sqrt{1-\alpha}$ & $\max\left\{\frac{1-\alpha}{1 - \sqrt{1-\alpha}},\zeta\right\}$ & $\cO\left(\max\left\{\frac{1-\alpha}{\alpha^2}, \frac{\zeta}{\alpha}\right\}\right)$\\
\hline
				\begin{tabular}{c}
				\algname{\tiny 3PCv1}\\ \scriptsize{(NEW)}
\end{tabular}				 & Alg.\ \ref{alg:b_diana_2} & $y + \cC(x - y)$\tnote{{\color{blue}(1)}} & $1$ & $1-\alpha$ & $1-\alpha$\\ 
\hline
				\begin{tabular}{c}
				\algname{\tiny 3PCv2}\\ \scriptsize{(NEW)}
\end{tabular}				 & Alg.\ \ref{alg:anna} & \begin{tabular}{c}
	$b + \cC\left(x - b \right)$,\\ where $b = h + \cQ(x-y)$
\end{tabular} & $\alpha$ & $(1-\alpha)\omega$ & $\frac{(1-\alpha)\omega}{\alpha}$\\ 
\hline
				\begin{tabular}{c}
				\algname{\tiny 3PCv3}\\ \scriptsize{(NEW)}
\end{tabular}				 & Alg.\ \ref{alg:jeanne} & \begin{tabular}{c}
	$b + \cC\left(x - b \right)$,\\ where $b = \cC_{h,y}^1(x)$
\end{tabular} & $1 - (1 -\al)(1 -\thetaNEW_1)$  & $(1 - \alpha)\betaNEW_1$ & $\frac{(1 - \alpha)\betaNEW_1}{1 - (1 -\al)(1 -\thetaNEW_1)}$\\ 
\hline
				\begin{tabular}{c}
				\algname{\tiny 3PCv4}\\ \scriptsize{(NEW)}
\end{tabular}				 & Alg.\ \ref{alg:jacqueline} & \begin{tabular}{c}
	$b + \cC_1\left(x - b \right)$,\\ where $b = h + \cC_2(x-h)$
\end{tabular} & $1 - \sqrt{1-\bar\alpha}$ & $\frac{1-\bar\alpha}{1 - \sqrt{1-\bar\alpha}}$ & $\cO\left(\frac{1-\bar\alpha}{\bar\alpha^2}\right)$\\ 
				\hline				
				\begin{tabular}{c}
				\algname{\tiny 3PCv5}\\ \scriptsize{(NEW)}
\end{tabular}				 & Alg.\ \ref{alg:b_marina} & $\begin{cases}x,& \text{w.p.\ } p\\ h + \cC(x - y),& \text{w.p.\ } 1 - p \end{cases}$ & $1 - \sqrt{1-p}$ & $\frac{(1-p)(1-\alpha)}{1 - \sqrt{1-p}}$ & $\cO\left(\frac{(1-p)(1-\alpha)}{p^2}\right)$\\ 
				\hline
				\hline
				\begin{tabular}{c}
				\algname{\tiny MARINA}\\ \scriptsize{\citep{gorbunov2021marina}}
\end{tabular}				 & Alg.\ \ref{alg:marina} & N/A\tnote{{\color{blue}(2)}} & $p$ & $\frac{(1-p)\omega}{n}$ & $\frac{(1-p)\omega}{np}$\\
				\hline
			\end{tabular}
					\begin{tablenotes}
						\item [{\color{blue}(1)}] \algname{\tiny 3PCv1} requires communication of uncompressed vectors ($\nabla f_i(x^t)$). Therefore, the method is impractical. We include it as an idealized version of \algname{\tiny EF21}.						
						\item [{\color{blue}(2)}]  \algname{\tiny MARINA} does not fit the definition of three points compressor from \eqref{eq:ttp}. However, it satisfies \eqref{eq:key_inequality} with $G^t = \|g^t - \nabla f(x^t)\|^2$ and shown parameters $\thetaNEW$ and $\betaNEW$, i.e., \algname{\tiny MARINA} can be analyzed via our theoretical framework.
\item [{\color{blue}(3)}] \algname{\tiny LAG} presented in our work is a (massively) simplified version of \algname{\tiny LAG} considered by \cite{LAG}. However, we have decided to use the same name.							\end{tablenotes}
		\end{threeparttable}
	\end{table*}

\begin{table*}[t]
		\centering
		\scriptsize
		\caption{Comparison of exisiting and proposed theoretically-supported methods employing lazy aggregation. In the rates for our methods, $M_1 = L_{-} + L_{+}\sqrt{\nicefrac{\betaNEW}{\thetaNEW}}$ and $M_2 = \max\left\{L_{-} + L_{+}\sqrt{\nicefrac{2\betaNEW}{\thetaNEW}}, \nicefrac{\thetaNEW}{2\mu}\right\}$.}
		\label{tab:rates}    
		\begin{threeparttable}
			\begin{tabular}{| l | c c c c c |}
				\hline
				Method & \begin{tabular}{c} Simple \\ method? \end{tabular} & \begin{tabular}{c} Uses a contractive \\ compressor $\cC$? \end{tabular} & Strongly convex rate & {P\L} nonconvex  rate & General nonconvex rate  \\ 
				\hline
				\hline	
				\algname{\tiny LAG} \citep{LAG}            & \cmark &  \xmark    & linear  \tnote{\color{blue}(9)} & \xmark & \xmark \\ 							\algname{\tiny LAQ}  \citep{LAQ} & \xmark   & \cmark \tnote{\color{blue}(1)}  & linear  \tnote{\color{blue}(3)} & \xmark  & \xmark \\ 
				\algname{\tiny LENA}  \citep{LENA}   \tnote{\color{blue}(7)}     & \cmark \tnote{\color{blue}(4)} & \cmark \tnote{\color{blue}(8)}  & $\cO(G^4/T^2 \mu^2)$  \tnote{\color{blue}(5), (6)} & $\cO(G^4/T^2 \mu^2)$  \tnote{\color{blue}(5), (6)} & $\cO(G^{4/3}/T^{2/3})$ \tnote{\color{blue}(6)} \\ 				
\hline				
				\algname{\tiny LAG}  (NEW, 2022)       & \cmark & \xmark & $\cO(\exp(-T \mu /  M_2))$   & $\cO(\exp(-T \mu /  M_2))$  & $\cO(M_1/T)$ \\ 	
				\algname{\tiny CLAG}  (NEW, 2022)     & \cmark & \cmark \tnote{\color{blue}(2)}  & $\cO(\exp(-T \mu /  M_2))$   &$\cO(\exp(-T \mu /  M_2))$   & $\cO(M_1/T)$ \\ 	
				\hline
			\end{tabular}
					\begin{tablenotes}
						\item [{\color{blue}(1)}] They consider a specific form of quantization only.
						\item [{\color{blue}(2)}] Works with any contractive compressor, including low rank approximation, Top-$K$, Rand-$K$, quantization,  and more.
						\item [{\color{blue}(3)}] Their Theorem 1 does not present any {\em explicit} linear rate. 						\item [{\color{blue}(4)}] \algname{\tiny LENA} employs the classical \algname{\tiny EF} mechanism, but it is not clear what is this mechanism supposed to do.	
		\item [{\color{blue}(5)}] They consider an assumption ($\mu$-quasi-strong convexity) that is slightly stronger							than our {P\L} assumption. Both are weaker than strong convexity.
		\item [{\color{blue}(6)}] 
			They assume the local gradients to be bounded by $G$ ($\|\nabla f_i(x)\| \leq G$ for all $x$). We do not need such a strong assumption.
				\item [{\color{blue}(7)}] 
			They also consider the $0$-quasi-strong convex case (slight generalization of convexity); we do not consider the convex case. Moreover, they consider the stochastic case as well, we do not. We specialized all their results to the deterministic (i.e., full gradient) case for the purposes of this table.
				\item [{\color{blue}(8)}] Their contractive compressor depends on the trigger.
			\item [{\color{blue}(9)}] It is possible to specialize their method and proof so as to recover \algname{\tiny LAG} as presented in our work, and to recover a rate similar to ours.			
					\end{tablenotes}
		\end{threeparttable}
	\end{table*}

\section{Motivation and Background}
\label{sec:mot_and_back}

Our work is motivated by several methodological, theoretical and algorithmic issues and open problems arising in the literature related to two orthogonal approaches to designing  gradient {\em communication mechanisms} $\cM_i^t$: \begin{itemize} 
\item [i)]  {\em contractive compressors} \cite{Karimireddy_SignSGD,Stich-EF-NIPS2018,Alistarh-EF-NIPS2018,Koloskova2019DecentralizedDL,beznosikov2020biased}, and   
\item [ii)] {\em lazy aggregation} \cite{LAG,LAQ,LENA}.
\end{itemize}

The motivation for our work starts with several critical observations related to these two mechanisms.

\subsection{Contractive compression operators} \label{sub:contractive}

Arguably, the simplest class of communication mechanisms is based on the (as we shall see, naive) application of {\em contractive  compression operators} (or, {\em contractive compressors} for short) \cite{Koloskova2019DecentralizedDL, beznosikov2020biased}. In this approach, one sets
\begin{equation}\label{eq:b9fd-98ybhkfd}\cM_i^t(x) \equiv \cC(x),\end{equation}
where $\cC:\R^d \to \R^d$ is a (possibly randomized) mapping with the property 
\begin{equation}\label{eq:contractive-09u09fduf}\Exp{\norm{\cC(x)-x}^2} \leq (1- \lambdaNEW) \norm{x}^2, \quad \forall x\in \R^d,\end{equation}
where $0<\lambdaNEW\leq 1$ is the {\em contraction parameter}, and the expectation $\Exp{\cdot}$ is taken w.r.t.\ the randomness inherent in $\cC$.  For examples of contractive compressors (e.g., Top-$K$ and Rand-$K$ sparsifiers), please refer to Section~\ref{sec:contractive}, and Table~1 in \citep{UP2021, beznosikov2020biased}.

The algorithmic literature on {\em contractive compressors} (i.e., mappings $\cC$ satisfying \eqref{eq:contractive-09u09fduf}) is relatively much more developed, and dates back to at least 2014 with the work of \citet{Seide2014}, who proposed the {\em error feedback} (\algname{EF}) mechanism for fixing certain divergence issues which  arise empirically with the naive approach based on \eqref{eq:b9fd-98ybhkfd}. 

Despite several advances in our theoretical understanding of \algname{EF} over the last few years \cite{Stich-EF-NIPS2018,Karimireddy_SignSGD,A_better_alternative,DoubleSqueeze,Lin_EC_SGD}, a satisfactory grasp of \algname{EF} remained elusive.  Recently, \citet{EF21} proposed \algname{EF21}, which is a new algorithmic and analysis approach to error feedback, effectively fixing the previous weaknesses. In particular, while 
previous results offered weak $\cO(1/T^{2/3})$ rates (for smooth nonconvex problems), and did so under strong and often unrealistic assumptions (e.g., boundedness of the gradients), the \algname{EF21} approach offers \algname{GD}-like $\cO(1/T)$ rates, with standard assumptions only.\footnote{\scriptsize The \algname{\tiny EF21} method was extended by \citet{EF21BW} to deal with stochastic gradients, variance reduction, regularizers, momentum, server compression, and partial participation. However, such extensions are not the subject of our work.}

The heart of the \algname{EF21} method is a new communication mechanism $\cM_i^t$, generated from a  contractive compressor $\cC$, which fixes (in a theoretically and practically superior way to the standard fix offered by classical \algname{EF}) the above mentioned divergence issues. Their construction is {\em synthetic}: is starts with the choice of $\cC$ preferred by the user, and then constructs a new and adaptive communication mechanism based on it. We will describe this method in Section~\ref{sec:3PC}.

\subsection{Lazy aggregation}

An orthogonal approach to applying contractive  operators, whether with or without error feedback, is ``skipping'' communication. The basic idea of the {\em lazy aggregation} communication mechanism is for each worker $i$ to communicate its local gradient only if it differs ``significantly'' from the last gradient communicated before. 

In its simplest form, the \algname{LAG} method of \citet{LAG} is initialized with $g_i^0 = \nabla f_i(x^0)$ for all $i\in [n]$, which means that all the workers communicate their gradients at the start. In all subsequent iterations, each worker $i\in [n]$ defines $g_i^{t+1}$, which may be  interpreted as a ``compressed'' version of the true gradient $\nabla f_i(x^{t+1})$, via the  {\em lazy aggregation} rule
\begin{equation}\label{eq:LAG}g_i^{t+1}= \begin{cases}
				\nabla f_i(x^{t+1}) & \text{if } \|g_i^t - \nabla f_i(x^{t + 1})\|^2 > \zeta D_i^t, \\
				g_i^t  & \text{otherwise,}
		\end{cases}\end{equation}
where $D_i^t \eqdef \|\nabla f_i(x^{t+1}) - \nabla f_i(x^t)\|^2$ and $\zeta>0$ is the {\em trigger}.\footnote{\scriptsize It is possible to replace $D_i^t$ by $X_i^t = \zeta L_i^2 \|x^{t+1} - x^t\|^2$, and our theory will still trivially hold. This is the choice for the trigger condition made by \citet{LAG}. One can also work with the more general choice $X_i^t = \zeta_i \|x^{t+1} - x^t\|^2$; our theory can be adapted to this trivially.} The smaller the trigger $\zeta$, the more likely it is for the condition $\|g_i^t - \nabla f_i(x^{t + 1})\|^2 > \zeta D_i^t$, which triggers communication, to be satisfied. On the other hand, if $\zeta$ is very large, most iterations will skip communication and thus reuse the past gradient. Since the trigger fires dynamically based on conditions that change in time, it is hard to theoretically estimate how often communication skipping occurs. In fact, there are no results on this in the literature. Nevertheless, the lazy aggregation mechanism is empirically useful when compared to vanilla \algname{GD} \citep{LAG}.

Lazy aggregation is a much less studied and a much less understood communication mechanism than  contractive compressors. Indeed, only a handful of papers offer any convergence guarantees \citep{LAG, LAQ, LENA}, and the results presented in the first two of these papers are hard to penetrate. For example, no simple proof exists for the simple \algname{LAG} variant presented above. The best known rate in the smooth nonconvex regime is $\cO(1/T^{2/3})$, which differs from the $\cO(1/T)$ rate of \algname{GD}. The known rates in the strongly convex regime are also highly problematic: they are either not explicit \citep{LAG, LAQ}, or sublinear \citep{LENA}. Furthermore, it is not clear whether an \algname{EF} mechanism is needed to stabilize lazy aggregation methods, which is a necessity in the case of contractive compressors. While \citet{LENA} proposed a combination of \algname{LAG} and \algname{EF}, their analysis leads to weak rates (see Table~\ref{tab:rates}), and does not seem to point to  theoretical advantages due to  \algname{EF}.

\section{Summary of Contributions}
\label{sec:contributions}

We now summarize our main contributions:

{$\bullet$ \bf Unification through the \algname{3PC} method.} At present, the two communication mechanisms outlined above, {\em contractive compressors}  and    {\em lazy aggregation}, are viewed as different approaches to the same problem---reducing the communication overhead in distributed gradient-type methods---requiring  different tools, and facing different theoretical challenges. 
 We propose a {\em unified method}---which we call \algname{3PC} (Algorithm~\ref{alg:3PC})---which includes \algname{EF21} (Algorithm~\ref{alg:b_diana})  and \algname{LAG} (Algorithm~\ref{alg:lag}) as special cases. 

{$\bullet$ \bf Several new methods.} The  \algname{3PC} method is much more general than either \algname{EF21} or \algname{LAG}, and includes a number of new specific methods. For example, we propose \algname{CLAG}, which is a combination of \algname{EF21} and \algname{LAG} benefiting from both contractive compressors and lazy aggregation.  We show experimentally that \algname{CLAG} can be better than both \algname{EF21} and \algname{LAG}: that is, we obtain combined benefits of both approaches. We obtain a number of other new methods, such as \algname{3PCv2}, \algname{3PCv3} and \algname{3PCv4}.  We show experimentally that \algname{3PCv2} can outperform \algname{EF21}. See Table~\ref{tab:methods} for a summary of the proposed methods.

{$\bullet$ \bf Three point compressors.} Our proposed method, \algname{3PC}, can be viewed as \algname{DCGD} with a {\em new class of communication mechanisms}, based on the new notion of a {\em three point compressor} (\algname{3PC})\footnote{\scriptsize We use the same name for the method and the compressor on purpose.}; see Section~\ref{sec:3PC} for  details.  By design, and in contrast to  contractive compressors,  our communication mechanism based on the \algname{3PC} compressor is able to ``evolve'' and thus improve throughout the iterations. In particular, its {\em compression error decays}, which is the key reason behind its superior theoretical properties. In summary, the properties defining the \algname{3PC} compressor distill the important characteristics of a theoretically well performing communication mechanism, and this is the first time such characteristics have been explicitly identified and formalized.

The observation that lazy aggregation is a \algname{3PC} compressor explains why error feedback is {\em not} needed to stabilize \algname{LAG} and similar methods.

{$\bullet$ \bf Strong rates.} We prove an $\cO(1/T)$ rate for \algname{3PC} for smooth nonconvex problems, which up to constants matches the rate of \algname{GD}. Furthermore, we prove a \algname{GD}-like linear convergence rate under the Polyak-{\L}ojasiewicz condition.  Our general theory recovers the \algname{EF21} rates proved by \citet{EF21} exactly. Our rates for lazily aggregated methods (\algname{LAG} and \algname{CLAG}) are new, and better than the results obtained by \citet{LAG, LAQ} and \citet{LENA} in all regimes considered. In the general smooth nonconvex regime, only \citet{LENA} obtain rates. However, they require strong assumptions (gradients bounded by a constant $G$), and their rate is $\cO(G^{4/3}/T^{2/3})$, whereas we do not need such assumptions and obtain the \algname{GD}-like rate $\cO(M_1/T)$. In the   strongly convex regime, \citet{LAG} and \citet{LAQ} obtain non-specific linear rates, while \citet{LENA} obtain the  sublinear rate $\cO(G^4/ T^2 \mu^2)$. In contrast, we obtain explicit  \algname{GD}-like linear rates under the weaker {P\L} condition. 

Furthermore, our variant of \algname{LAG}, and our convergence theory and proofs, are much simpler than those presented in \citep{LAG}. In fact, it is not clear to us whether the many additional features employed by \citet{LAG} have any theoretical or practical benefits. We believe that our simple treatment can be useful for other researchers to further advance the field. 

For a detailed comparison of rates, please refer to  Table~\ref{tab:rates}.

\begin{algorithm*}[t]
   \caption{\algname{3PC} (\algname{DCGD} method using the \algname{3PC} communication mechanism)}\label{alg:3PC}
\begin{algorithmic}[1]
   \STATE {\bfseries Input:} starting point $x^0\in \R^d$ (on all workers), stepsize $\gamma>0$, number of iterations $T$, starting vectors $g_i^0\in \R^d$ for $i \in [n]$ (known to the server and all workers)
   \STATE {\bfseries Initialization:}  $g^0 = \frac{1}{n}\sum_{i=1}^n g_i^0$ \hfill (Server aggregates initial gradient estimates)
   \FOR{$t=0, 1, \ldots, T-1$}
   \STATE Broadcast $g^t$ to all workers      
   \FOR{$i = 1, \ldots, n$ in parallel} 
   \STATE $x^{t+1} = x^t - \gamma g^t$ \hfill (Take a gradient-type step)
   \STATE {\color{red}Set $g_i^{t+1} = \cM_i^{t+1}(\nabla f_i(x^{t+1})) \eqdef \cC_{g_i^t, \nabla f_i(x^t)}(\nabla f_i(x^{t+1}))$  \hfill (Apply \algname{3PC} to compress the latest gradient)}
   \STATE Communicate $g_i^{t+1}$ to the server    
   \ENDFOR
   \STATE Server aggregates received messages: $g^{t+1} = \tfrac{1}{n}\sum_{i=1}^n g_i^{t+1} $ 
   \ENDFOR
   \STATE {\bfseries Return:} $\hat x^T$ chosen uniformly at random from $\{x^t\}_{t=0}^{T-1}$
\end{algorithmic}
\end{algorithm*}

\section{Three Point Compressors}
\label{sec:3PC}

We now formally introduce the concept of a {\em three point compressor} (\algname{3PC}).

\begin{definition}[Three point compressor]\label{def:ttp}
	We say that a (possibly randomized) map $$\cC_{h,y}(x): \underbrace{\R^d}_{h\in} \times \underbrace{\R^d}_{y\in} \times \underbrace{\R^d}_{x\in} \rightarrow \R^d$$ is a 
	three point compressor (\algname{3PC}) if there exist constants $0<\thetaNEW \le 1$ and $\betaNEW \ge 0$ such that the following relation holds for all $x, y, h \in \R^d$
	\begin{eqnarray}
		\Exp{\sqnorm{\cC_{h,y}(x) - x }} & \leq & (1 - \thetaNEW)\sqnorm{h - y}  \notag \\
		&& \quad + \betaNEW \sqnorm{x-y}.\label{eq:ttp}
	\end{eqnarray}
\end{definition}

The vectors $y \in \R^d$ and $h \in \R^d$ are parameters defining the compressor. Once fixed, $\cC_{h,y}:\R^d \to \R^d$ is the  compression mapping used to compress vector $x\in \R^d$. 

\subsection{Connection with contractive compressors} Note that if we set $h=0$ and $y=x$, then inequality \eqref{eq:ttp} specializes to 
 	\begin{eqnarray}
		\Exp{\sqnorm{\cC_{0,x}(x) - x }} & \leq & (1 - \thetaNEW)\sqnorm{x} ,
	\end{eqnarray}
which is the inequality defining a contractive compressor. In other words, a particular {\em restriction} of the parameters of any \algname{3PC} compressor is necessarily a contractive compressor. 

However, this is not the restriction we will use to design our compression mechanism. Instead, as we shall describe next, we will choose the sequence of vectors $h$ and $y$ in an adaptive fashion, based on the path generated by \algname{DCGD}.

\subsection{Designing a communication mechanism using a \algname{3PC} compressor}

We now describe our proposal for how to use a \algname{3PC} compressor to design a good communication mechanism $\{\cM_{i}^k\}$ to be used within \algname{DCGD}. Recall from \eqref{eq:DCGD} that all we need to do is to define the mapping 
\[   \cM_i^t : \nabla f_i(x^t) \mapsto g_i^t.\] 

First, we allow the initial compressed gradients $\{g_i^0\}_{i=1}^n$ to be chosen arbitrarily. Here are some examples of possible choices: 
{\bf a) Full gradients:} $g_i^0 = \nabla f_i(x^0)$ for all $i\in [n]$. The benefit of this choice is that no information is loss at the start of the process. On the other hand, the full $d$-dimensional gradients need to be sent by the workers to the server, which is potentially an expensive pre-processing step.
 {\bf b) Compressed gradients:} $g_i^0 = \cC(\nabla f_i(x^0))$ for all $i\in [n]$, where $\cC$ is an arbitrary compression mapping (e.g., a contractive compressor). While some information is lost right at the start of the process (compared to a \algname{GD} step), the benefit of this choice is that no full dimensional vectors need to be communicated.
 {\bf c) Zero preprocessing:} $g_i^0 = 0$  for all $i\in [n]$.

Having chosen $g_i^0, \dots, g_i^0$ for all $i\in [n]$, it remains to define the communication mechanism $\cM_i^t$ for $t\geq 1$. We will do this on-the-fly as \algname{DCGD} is run, with the help of the parameters $h$ and $y$, which we choose adaptively.    Consider the viewpoint of a worker $i\in [n]$ in iteration $t+1$, with $t\geq 0$. In this iteration, worker $i$ wishes to compress the vector  $x = \nabla f_i(x^{t+1})$. Let $g_i^t$ denote the compressed version of the vector $\nabla f_i(x^{t})$, i.e., $g_i^t = \cM_i^t(\nabla f_i(x^{t}))$. We choose $$y = \nabla f_i(x^t) \qquad  \text{and} \qquad h = g_i^t.$$ 
With these parameter choices, we define the compressed version of $x = \nabla f_i(x^{t+1})$ by setting \begin{equation}\label{eq:CM-987943}\cM_i^{t+1}(\nabla f_i(x^{t+1})) \overset{\eqref{eq:DCGD}}{=} g_i^{t+1} \eqdef \cC_{g_i^t, \nabla f_i(x^t)}(\nabla f_i(x^{t+1})).\end{equation}

Our porposed \algname{3PC} method (Algorithm~\ref{alg:3PC}) is just \algname{DCGD} with the compression mechanism described above.

\subsection{The \algname{3PC} inequality}

For the parameter choices made above, \eqref{eq:ttp} specializes to
 	\begin{eqnarray}
		\Exp{E_i^{t+1} \;|\; x^t, g_i^t} \leq (1 - \thetaNEW) E_i^t + \betaNEW D_i^t,\label{eq:ttp-specific}
	\end{eqnarray}
	where $$E_i^t \eqdef \sqnorm{g_i^{t} -  \nabla f_i(x^{t}) }$$ and $$D_i^t \eqdef  \sqnorm{\nabla f_i(x^{t+1})-\nabla f_i(x^t)}.$$ This inequality has a natural interpretation. It enforces the compression error $E_i^{t}$ to shrink by the factor of $1-\thetaNEW$ in each communication round, subject to an additive penalty proportional to $D_i^t$. If the iterates converge, then the penalty will eventually vanish as well provided that the gradient of $f_i$ is continuous. Intuitively speaking, this forces the compression error $E^t$ to improve in time. 
	
	We note that applying a simple contractive compressor in place of $\cM_i^t$ does not have this favorable property, and this is what causes the convergence issues in existing literature on this topic. This is what the \algname{EF} literature was trying to solve since 2014, and what the \algname{EF21} mechanism resolved in 2021. However, no such progress happened in the lazy aggregation literature yet, and one of the key contributions of our work is to remedy this situation.

\subsection{\algname{EF21} mechanism is a \algname{3PC} compressor}

We will now show that the compression mechanism $\cM_i^t$ employed in  \algname{EF21} comes from a \algname{3PC} compressor. Let $\cC:\R^d\to \R^d$ be a contractive compressor with contraction parameter $\alpha$, and define
	\begin{eqnarray}
	\cC_{h,y}(x) \eqdef h + \cC(x - h).  \label{eq:---=08}
	\end{eqnarray}
If we use this mapping to define a compression mechanism $\cM_i^t$ via \eqref{eq:CM-987943}, use this within \algname{DCGD}, 
we obtain the \algname{EF21} method of \citet{EF21}. Indeed, observe that Algorithm~\ref{alg:b_diana} (\algname{EF21}) is a special case of Algorithm~\ref{alg:3PC} (\algname{3PC}).

The next lemma shows that \eqref{eq:---=08} is a \algname{3PC} compressor.

\begin{lemma}\label{lem:biased_diana}
	The mapping \eqref{eq:---=08}
	satisfies \eqref{eq:ttp} with $\thetaNEW \eqdef 1 - (1-\alpha)(1+s)$ and $\betaNEW \eqdef (1-\alpha)\rb{1+ s^{-1} }$, where $s> 0$ is any scalar satisfying  $(1-\alpha)\rb{1 + s} < 1$.
\end{lemma}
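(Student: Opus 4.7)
The plan is to reduce the inequality to the defining property of the contractive compressor $\cC$ via a single application of Young's inequality. The key observation is that by the very definition $\cC_{h,y}(x) = h + \cC(x-h)$, the quantity $\cC_{h,y}(x) - x$ simplifies drastically: it equals $\cC(x-h) - (x-h)$, which no longer depends on $y$. Hence
\[
\Exp{\sqnorm{\cC_{h,y}(x) - x}} \;=\; \Exp{\sqnorm{\cC(x-h)-(x-h)}} \;\leq\; (1-\alpha)\sqnorm{x-h},
\]
by the contraction property \eqref{eq:contractive-09u09fduf}.

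Next, I would introduce $y$ by decomposing $x-h = (x-y) - (h-y)$ and applying Young's inequality $\sqnorm{a+b} \leq (1+s)\sqnorm{a} + (1+s^{-1})\sqnorm{b}$ with $a = h-y$ and $b = y-x$, obtaining
\[
\sqnorm{x-h} \;\leq\; (1+s)\sqnorm{h-y} + (1+s^{-1})\sqnorm{x-y}.
\]
Combining with the previous bound gives
\[
\Exp{\sqnorm{\cC_{h,y}(x) - x}} \;\leq\; (1-\alpha)(1+s)\sqnorm{h-y} + (1-\alpha)(1+s^{-1})\sqnorm{x-y},
\]
which is precisely \eqref{eq:ttp} with $\thetaNEW = 1-(1-\alpha)(1+s)$ and $\betaNEW = (1-\alpha)(1+s^{-1})$.

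Finally, I would check that the announced parameters are admissible: the hypothesis $(1-\alpha)(1+s) < 1$ ensures $\thetaNEW > 0$, while $\thetaNEW \leq 1$ follows since $(1-\alpha)(1+s) \geq 0$; and $\betaNEW \geq 0$ is immediate. There is no real obstacle — the proof is essentially two lines, with the only mild subtlety being the correct ordering of $(1+s)$ vs.\ $(1+s^{-1})$ in Young's inequality so that the $\sqnorm{h-y}$ term absorbs the factor controlling $\thetaNEW$ and the $\sqnorm{x-y}$ term absorbs the factor controlling $\betaNEW$.
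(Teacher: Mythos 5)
Your proposal is correct and follows the paper's own argument exactly: reduce to $\cC(x-h)-(x-h)$, apply the contraction property \eqref{eq:contractive-09u09fduf}, then split $x-h$ via Young's inequality with the $(1+s)$ weight on $\sqnorm{h-y}$ and the $(1+s^{-1})$ weight on $\sqnorm{x-y}$. The added check that the constraint $(1-\alpha)(1+s)<1$ makes $\thetaNEW$ admissible is a nice touch the paper leaves implicit.
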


\subsection{\algname{LAG} mechanism is a \algname{3PC} compressor}

We will now show that the compression mechanism $\cM_i^t$ employed in \algname{LAG} comes from a \algname{3PC} compressor. In fact, let us define \algname{CLAG}, and recover \algname{LAG} from it as a special case. Let $\cC:\R^d\to \R^d$ be a contractive compressor with contraction parameter $\alpha$. Choose a trigger $\zeta>0$, and define
	\begin{eqnarray}
	\cC_{h,y}(x) \eqdef \begin{cases} h + \cC(x - h),& \text{if } \|x- h\|^2 > \zeta \|x - y\|^2,\\ h,& \text{otherwise,} \end{cases}  \label{eq:clag_98f9d8}
	\end{eqnarray}

If we use this mapping to define a compression mechanism $\cM_i^t$ via \eqref{eq:CM-987943}, use this within \algname{DCGD}, 
we obtain our new \algname{CLAG} method. Indeed, observe that Algorithm~\ref{alg:clag} (\algname{CLAG}) is a special case of Algorithm~\ref{alg:3PC} (\algname{3PC}). 

The next lemma shows that \eqref{eq:clag_98f9d8} is a \algname{3PC} compressor.

\begin{lemma}\label{lem:clag}
The mapping \eqref{eq:clag_98f9d8} satisfies \eqref{eq:ttp} with $\thetaNEW \eqdef 1 - (1-\alpha)(1+s)$ and $\betaNEW \eqdef \max\left\{(1-\alpha)\rb{1+ s^{-1} }, \zeta\right\}$, where $s> 0$ is any scalar satisfying  $(1-\alpha)\rb{1 + s} < 1$.
\end{lemma}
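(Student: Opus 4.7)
The plan is to split on the trigger condition appearing inside \eqref{eq:clag_98f9d8} and bound the two cases separately, reusing the elementary argument behind \cref{lem:biased_diana} in the ``fire'' case and a direct estimate in the ``skip'' case.

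First, suppose the trigger fires, i.e.\ $\|x-h\|^2 > \zeta\|x-y\|^2$. Then $\cC_{h,y}(x) = h + \cC(x-h)$, so
$$\cC_{h,y}(x) - x = \cC(x-h) - (x-h),$$
and the contractivity of $\cC$ yields $\E\|\cC_{h,y}(x)-x\|^2 \le (1-\alpha)\|x-h\|^2$. I would then apply the standard Young-type inequality $\|a+b\|^2 \le (1+s)\|a\|^2 + (1+s^{-1})\|b\|^2$ to the splitting $x-h = (y-h) + (x-y)$, obtaining
$$(1-\alpha)\|x-h\|^2 \le (1-\alpha)(1+s)\|h-y\|^2 + (1-\alpha)(1+s^{-1})\|x-y\|^2.$$
With $\thetaNEW = 1 - (1-\alpha)(1+s)$ this is exactly $(1-\thetaNEW)\|h-y\|^2 + (1-\alpha)(1+s^{-1})\|x-y\|^2$, and since $\betaNEW \ge (1-\alpha)(1+s^{-1})$ by definition, the bound \eqref{eq:ttp} follows in this case.

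Second, suppose the trigger does not fire, i.e.\ $\|x-h\|^2 \le \zeta\|x-y\|^2$. Then $\cC_{h,y}(x) = h$, so
$$\|\cC_{h,y}(x) - x\|^2 = \|h-x\|^2 \le \zeta\|x-y\|^2 \le \betaNEW\|x-y\|^2,$$
using that $\betaNEW \ge \zeta$. Dropping the nonnegative term $(1-\thetaNEW)\|h-y\|^2$ on the right-hand side of \eqref{eq:ttp} gives the inequality. Combining the two cases yields the claim.

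The argument is essentially a case analysis: the ``fire'' case is mathematically identical to the proof of \cref{lem:biased_diana}, and the ``skip'' case is handled directly by the trigger condition. The only subtle point is verifying that both cases can be bounded simultaneously by the same constants $\thetaNEW$ and $\betaNEW$, which forces the $\max$ in the definition of $\betaNEW$; there is no genuine obstacle beyond bookkeeping.
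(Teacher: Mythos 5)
Your proposal is correct and follows essentially the same route as the paper's proof: a case split on the trigger, with the contractivity-plus-Young argument from the proof of \cref{lem:biased_diana} in the ``fire'' case and the direct bound $\|h-x\|^2 \le \zeta\|x-y\|^2$ in the ``skip'' case, the $\max$ in $\betaNEW$ absorbing both bounds. No gaps.
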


The \algname{LAG} method is obtained as a special case of \algname{CLAG} by choosing $\cC$ to be the identity mapping (for which $\alpha=1$).

\subsection{Further \algname{3PC} compressors and methods}

In Table~\ref{tab:methods}  we summarize several further \algname{3PC} compressors and the new algorithms they lead to (e.g., \algname{3PCv1}--\algname{3PCv5}). The details are given the in the appendix. 

\section{Theory}
\label{sec:theory}
	
We are now ready to present our theoretical convergence results for the \algname{3PC} method (Algorithm~\ref{alg:3PC}), the main steps of which are 
\begin{equation}
\squeeze	x^{t+1} = x^t - \gamma g^t,\quad g^t = \frac{1}{n}\sum\limits_{i=1}^n g_i^t, \label{eq:CGD_1}
\end{equation}
\begin{equation}
	g_i^{t+1} = \cC_{g_i^t, \nabla f_i(x^t)}(\nabla f_i(x^{t+1})).\label{eq:CGD_2}
\end{equation}

Recall that the \algname{3PC} method is  \algname{DCGD} with a particular choice of the communication mechanism $\{\cM_i^t\}$ based on an arbitrary \algname{3PC} compressor $\cC_{h,y}(x)$.

\subsection{Assumptions}

We rely on the following standard assumptions.

\begin{assumption} \label{ass:diff} The functions $f_1,\dots,f_n: \R^d\to \R$ are differentiable. Moreover, $f$ is lower bounded, i.e., there exists $f^{\inf} \in \R$ such that $ f(x) \geq f^{\inf}$ for all $x \in \R^d$. 
	\end{assumption}
	
	\begin{assumption}\label{as:L_smoothness} The function $f:\R^d \to \R$ is $L_{-}$-smooth, i.e., it is differentibale and its gradient satisfies
	\begin{equation}
		\|\nabla f(x) - \nabla f(y)\| \leq L_{-}\|x-y\|\quad \forall x,y\in \R^d. \label{eq:L_smoothness}
	\end{equation}
	\end{assumption}
	
	\begin{assumption} \label{as:L_+}There is a constant $L_+>0$ such that $\frac{1}{n} \sum_{i=1}^n \norm{\nabla f_i(x) - \nabla f_i(y)}^2 \leq L^2_{+} \norm{x-y}^2$ for all $x,y\in \R^d$.  Let $L_+$ be the smallest such number.
	\end{assumption}
		It is easy to see that $L_-\leq L_+$. We borrow this notation for the smoothness constants from \citep{PermK}.

\subsection{Convergence for general nonconvex functions}\label{sec:gen_non_cvx}

The following lemma is based on the properties of the \algname{3PC} compressor. It establishes the key inequality for the convergence analysis. The proof follows easily from the definition of a \algname{3PC} compressor and Assumption~\ref{as:L_+}.

\begin{lemma}\label{lem:key_lemma}
	Let Assumption~\ref{as:L_+} hold. Consider the \algname{3PC} method. Then, the sequence
	\begin{equation}\label{eq:G^t} 
	\squeeze G^t \eqdef \frac{1}{n} \sum \limits_{i=1}^n \sqnorm{ g_i^t-\nabla f_i(x^{t}) }
	\end{equation}
	for all $t\ge 0$  satisfies
	\begin{eqnarray}
	\squeeze	\Exp{G^{t+1}} \leq (1-\thetaNEW)\Exp{G^t} + \betaNEW L_{+}^2 \Exp{\sqnorm{x^{t+1} - x^t}}. \label{eq:key_inequality}
	\end{eqnarray}
\end{lemma}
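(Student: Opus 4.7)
The plan is to apply the defining inequality of a 3PC compressor termwise (one worker at a time), then average over workers and take total expectation, finally replacing the aggregated gradient-difference term by a quantity depending only on the iterate gap via Assumption~\ref{as:L_+}.

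First, I would fix an index $i \in [n]$ and apply the 3PC inequality \eqref{eq:ttp} to the update rule \eqref{eq:CGD_2} with the specific choice $x = \nabla f_i(x^{t+1})$, $y = \nabla f_i(x^t)$, and $h = g_i^t$, conditionally on the history $\mathcal{F}^t$ (so that $x^{t+1}, x^t, g_i^t$ are all treated as deterministic and the only randomness is the internal randomness of $\cC_{h,y}$ used at step $t+1$). This yields
\begin{equation*}
\E\!\left[\sqnorm{g_i^{t+1} - \nabla f_i(x^{t+1})} \,\big|\, \mathcal{F}^t\right] \leq (1-\thetaNEW)\sqnorm{g_i^t - \nabla f_i(x^t)} + \betaNEW \sqnorm{\nabla f_i(x^{t+1}) - \nabla f_i(x^t)}.
\end{equation*}

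Next I would average these $n$ inequalities over $i \in [n]$. On the left-hand side, by definition of $G^{t+1}$ from \eqref{eq:G^t}, the average of the conditional expectations is the conditional expectation of $G^{t+1}$. On the right-hand side, the first term becomes $(1-\thetaNEW) G^t$ by the definition of $G^t$, and the second term is $\betaNEW \cdot \tfrac{1}{n}\sum_{i=1}^n \sqnorm{\nabla f_i(x^{t+1}) - \nabla f_i(x^t)}$.

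At this point I would invoke Assumption~\ref{as:L_+} to obtain the bound
\begin{equation*}
\frac{1}{n}\sum_{i=1}^n \sqnorm{\nabla f_i(x^{t+1}) - \nabla f_i(x^t)} \leq L_+^2 \sqnorm{x^{t+1}-x^t},
\end{equation*}
which replaces the average gradient-difference term by the iterate-gap term that appears in the target inequality. Combining these ingredients gives the stated bound conditionally on $\mathcal{F}^t$, and applying the tower property $\E[\E[\cdot \mid \mathcal{F}^t]] = \E[\cdot]$ delivers \eqref{eq:key_inequality}. There is no genuine obstacle here: the proof is essentially bookkeeping. The only subtle point worth care is making sure that $x^{t+1}$ and $x^t$, as well as $g_i^t$, are measurable with respect to $\mathcal{F}^t$ (they are, by \eqref{eq:CGD_1}), so that the pointwise 3PC inequality can be legitimately applied inside the conditional expectation.
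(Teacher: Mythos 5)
Your proposal is correct and follows essentially the same route as the paper's proof: apply the defining inequality \eqref{eq:ttp} of the \algname{3PC} compressor to each worker's update \eqref{eq:CGD_2}, average over $i$, bound the averaged gradient-difference term via Assumption~\ref{as:L_+}, and take total expectation. Your explicit handling of the conditioning on $\mathcal{F}^t$ and the tower property is a slightly more careful rendering of a step the paper treats implicitly, but the argument is the same.
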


Using this lemma and arguments from the analysis of \algname{SGD} for non-convex problems \citep{PAGE, EF21}, we derive the following result.

\begin{theorem}\label{thm:main_thm_gen_non_cvx}
	Let Assumptions~\ref{ass:diff},~\ref{as:L_smoothness},~\ref{as:L_+} hold. Assume that the stepsize $\gamma$ of the \algname{3PC} method satisfies $0 \leq \gamma \leq \nicefrac{1}{M_1}$, where $M_1 = L_{-} + L_{+}\sqrt{\nicefrac{\betaNEW}{\thetaNEW}}$. Then, for any $T \ge 1$ we have
	\begin{equation}
\squeeze		\Exp{\norm{ \nabla f(\hat x^T)}^2} \leq \frac{2 \Delta^0}{\gamma T} + \fr{\Exp{G^0}}{\thetaNEW T}, \label{eq:main_thm_gen_non_cvx}
	\end{equation}
	where $\hat x^T$ is sampled uniformly at random from the points $\{x^0, x^1, \ldots, x^{T-1}\}$ produced by \algname{3PC} , $\Delta^0 \eqdef f(x^0) - f^{\inf}$, and $G^0$ is defined in \eqref{eq:G^t}.
\end{theorem}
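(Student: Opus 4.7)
The plan is to use a Lyapunov function argument combining the standard descent lemma (from $L_-$-smoothness) with the \algname{3PC} recursion from Lemma~\ref{lem:key_lemma}.

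First, I would apply $L_-$-smoothness of $f$ to the step $x^{t+1} = x^t - \gamma g^t$ and use the identity $-\langle a,b\rangle = \tfrac12(\|a-b\|^2 - \|a\|^2 - \|b\|^2)$ to arrive at
\begin{equation*}
f(x^{t+1}) \le f(x^t) - \tfrac{\gamma}{2}\|\nabla f(x^t)\|^2 - \tfrac{\gamma(1-L_-\gamma)}{2}\|g^t\|^2 + \tfrac{\gamma}{2}\|g^t - \nabla f(x^t)\|^2.
\end{equation*}
By Jensen's inequality on the average $g^t = \tfrac{1}{n}\sum_i g_i^t$, the last term is upper bounded by $\tfrac{\gamma}{2} G^t$. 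Taking expectations gives a descent inequality in terms of $f$, $\|\nabla f\|^2$, $\|g^t\|^2$ and $G^t$.

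Next, I would introduce the Lyapunov function $\Phi^t \eqdef f(x^t) - \finf + c\, G^t$ for a constant $c > 0$ to be tuned, and combine the descent inequality with Lemma~\ref{lem:key_lemma}, noting that $\Exp{\|x^{t+1}-x^t\|^2} = \gamma^2\Exp{\|g^t\|^2}$. This yields
\begin{equation*}
\Exp{\Phi^{t+1}} \le \Exp{\Phi^t} - \tfrac{\gamma}{2}\Exp{\|\nabla f(x^t)\|^2} + \bigl[\tfrac{\gamma}{2} - c\thetaNEW\bigr]\Exp{G^t} + \bigl[c\betaNEW L_+^2\gamma^2 - \tfrac{\gamma(1-L_-\gamma)}{2}\bigr]\Exp{\|g^t\|^2}.
\end{equation*}
Setting $c = \tfrac{\gamma}{2\thetaNEW}$ kills the $G^t$ coefficient. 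The $\|g^t\|^2$ coefficient is non-positive iff $L_-\gamma + \tfrac{\betaNEW L_+^2}{\thetaNEW}\gamma^2 \le 1$, and I would verify that the stepsize bound $\gamma \le 1/M_1 = 1/(L_- + L_+\sqrt{\betaNEW/\thetaNEW})$ is sufficient: writing $a = L_-/M_1$ and $b = L_+\sqrt{\betaNEW/\thetaNEW}/M_1$ so that $a+b=1$, we get $L_-\gamma + \tfrac{\betaNEW L_+^2}{\thetaNEW}\gamma^2 \le a + b^2 \le a + b = 1$.

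With both bracketed terms non-positive, the inequality simplifies to $\Exp{\Phi^{t+1}} \le \Exp{\Phi^t} - \tfrac{\gamma}{2}\Exp{\|\nabla f(x^t)\|^2}$. Telescoping over $t=0,\dots,T-1$, using $\Phi^T \ge 0$ (since $f \ge \finf$) and $\Phi^0 = \Delta^0 + \tfrac{\gamma}{2\thetaNEW}\Exp{G^0}$, gives
\begin{equation*}
\frac{1}{T}\sum_{t=0}^{T-1}\Exp{\|\nabla f(x^t)\|^2} \le \frac{2\Delta^0}{\gamma T} + \frac{\Exp{G^0}}{\thetaNEW T},
\end{equation*}
and the left-hand side equals $\Exp{\|\nabla f(\hat x^T)\|^2}$ by the random-sampling definition of $\hat x^T$. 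The main obstacle is the algebraic verification that the simple closed-form bound $\gamma \le 1/M_1$ controls the quadratic-in-$\gamma$ condition $L_-\gamma + \tfrac{\betaNEW L_+^2}{\thetaNEW}\gamma^2 \le 1$; the $a+b=1$ trick makes this clean. The only moving piece beyond this is choosing the Lyapunov weight $c = \gamma/(2\thetaNEW)$, which is dictated by the $G^t$ coefficient cancellation.
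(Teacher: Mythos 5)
Your proposal is correct and follows essentially the same route as the paper: the descent inequality you derive is exactly Lemma 2 of PAGE (the paper's Lemma~\ref{lem:aux_smooth_lemma}, stated with $\|x^{t+1}-x^t\|^2=\gamma^2\|g^t\|^2$ in place of your $\|g^t\|^2$ term), the Lyapunov weight $c=\gamma/(2\thetaNEW)$ matches the paper's choice, and your inline $a+b=1$ verification of $L_-\gamma+\tfrac{\betaNEW L_+^2}{\thetaNEW}\gamma^2\le 1$ is just a direct proof of the paper's cited Lemma 5 of \citep{EF21}. No gaps.
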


The theorem implies the following fact.
\begin{corollary}\label{cor:main_cor_gen_non_cvx}
	Let the assumptions of Theorem~\ref{thm:main_thm_gen_non_cvx} hold and choose the stepsize
	\begin{equation*}
	\squeeze	\gamma = \frac{1}{L_{-} + L_{+}\sqrt{\nicefrac{\betaNEW}{\thetaNEW}}}.
	\end{equation*}
	Then for any $T\geq 1$ we have
	\begin{equation}
	\squeeze	\Exp{\norm{ \nabla f(\hat x^T)}^2} \leq \frac{2 \Delta^0\left(L_{-} + L_{+}\sqrt{\nicefrac{\betaNEW}{\thetaNEW}}\right)}{ T} + \fr{\Exp{G^0}}{\thetaNEW T}.\notag
	\end{equation}
	That is, to achieve $\Exp{\norm{ \nabla f(\hat x^T)}^2} \leq \varepsilon^2$ for some $\varepsilon > 0$, the \algname{3PC} method requires
	\begin{equation}
	\squeeze	T = \cO\left(\frac{\Delta^0\left(L_{-} + L_{+}\sqrt{\nicefrac{\betaNEW}{\thetaNEW}}\right)}{\varepsilon^2} + \fr{\Exp{G^0}}{\thetaNEW \varepsilon^2}\right) \label{eq:main_complexity_gen_non_cvx}
	\end{equation}
	iterations (=communication rounds).
\end{corollary}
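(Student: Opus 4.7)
The plan is to derive this corollary as a direct specialization of Theorem~\ref{thm:main_thm_gen_non_cvx}. The hypotheses of the theorem require $0 \le \gamma \le 1/M_1$ with $M_1 = L_- + L_+\sqrt{\beta/\theta}$, and the proposed choice $\gamma = 1/M_1$ sits exactly at the boundary of this admissible range, so all assumptions of the theorem transfer verbatim. The rest is essentially substitution, so I do not anticipate any real technical obstacle.

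First, I would plug $\gamma = 1/M_1$ into the bound~\eqref{eq:main_thm_gen_non_cvx}. The first summand becomes
\[
\frac{2\Delta^0}{\gamma T} \;=\; \frac{2\Delta^0 M_1}{T} \;=\; \frac{2\Delta^0\bigl(L_- + L_+\sqrt{\beta/\theta}\bigr)}{T},
\]
while the second summand $\Exp{G^0}/(\theta T)$ is untouched since the stepsize does not appear there. Summing the two yields the first displayed inequality of the corollary exactly as stated.

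Second, to obtain the iteration complexity~\eqref{eq:main_complexity_gen_non_cvx}, I would enforce that each of the two summands in the bound be at most $\varepsilon^2/2$ (a standard ``split the budget'' argument). Solving the first inequality for $T$ gives $T \ge 4\Delta^0(L_- + L_+\sqrt{\beta/\theta})/\varepsilon^2$, and solving the second gives $T \ge 2\Exp{G^0}/(\theta \varepsilon^2)$. Adding these thresholds (or taking their maximum, which only affects constants) and absorbing numerical factors into the $\cO(\cdot)$ notation produces the claimed complexity. Since each step of \algname{3PC} performs exactly one communication round, ``iterations'' and ``communication rounds'' coincide, which justifies the final parenthetical remark.

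In short, the only ``work'' is arithmetic substitution plus a standard inversion of the rate into a complexity bound; no new estimates beyond those already established in Lemma~\ref{lem:key_lemma} and Theorem~\ref{thm:main_thm_gen_non_cvx} are needed, and no assumption beyond those carried over from the theorem is invoked.
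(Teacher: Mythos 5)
Your proposal is correct and matches the paper's (implicit) argument exactly: the corollary is obtained by substituting $\gamma = 1/M_1$ into the bound of Theorem~\ref{thm:main_thm_gen_non_cvx} and then inverting the resulting rate into an iteration count, which is precisely the routine substitution you describe. No gaps.
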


\subsection{Convergence under the {P\L} condition}\label{sec:PL}
In this part we provide our main convergence result under the Polyak-{\L}ojasiewicz (P\L) condition.

\begin{assumption}[P{\L} condition]\label{as:PL}
Function $f:\R^d \to \R$ satisfies the Polyak-{\L}ojasiewicz (P\L) condition with parameter $\mu > 0$, i.e.,
\begin{equation}
	\|\nabla f(x)\|^2 \geq 2\mu\left(f(x) - f^*\right), \quad \forall x\in \R^d, \label{eq:PL}
\end{equation}
where $x^* \eqdef \argmin_{x\in\R^d} f(x)$ and $f^*\eqdef f(x^*)$.
\end{assumption}

In this setting, we get the following result.

\begin{theorem}\label{thm:main_thm_PL}
	Let Assumptions~\ref{ass:diff},~\ref{as:L_smoothness},~\ref{as:L_+},~\ref{as:PL} hold. Assume that the stepsize $\gamma$ of the \algname{3PC} method satisfies $0 \leq \gamma \leq \nicefrac{1}{M_2}$, where $M_2 = \max\left\{L_{-} + L_{+}\sqrt{\nicefrac{2\betaNEW}{\thetaNEW}}, \nicefrac{\thetaNEW}{2\mu}\right\}$. Then, for any $T \ge 0$ and $\Delta^0 \eqdef f(x^0) - f(x^*)$ we have
	\begin{equation}
	\squeeze	\Exp{f(x^T)} - f^* \leq \left(1 - \gamma\mu\right)^T\left(\Delta^0 + \frac{\gamma}{\thetaNEW}\Exp{G^{0}}\right). \label{eq:main_thm_PL}
	\end{equation}
\end{theorem}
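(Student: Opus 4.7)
The plan is to combine a descent inequality coming from $L_-$-smoothness and the P{\L} condition with the compression error recurrence in Lemma~\ref{lem:key_lemma}, packaged into a Lyapunov function that contracts by the factor $1-\gamma\mu$ per step.

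First, I would produce the descent inequality. From $L_-$-smoothness applied to the pair $(x^t,x^{t+1})$, the update $x^{t+1}-x^t=-\gamma g^t$, and the algebraic identity
\begin{equation*}
-\gamma\langle\nabla f(x^t),g^t\rangle=\tfrac{\gamma}{2}\sqnorm{g^t-\nabla f(x^t)}-\tfrac{\gamma}{2}\sqnorm{\nabla f(x^t)}-\tfrac{\gamma}{2}\sqnorm{g^t},
\end{equation*}
I get a bound on $f(x^{t+1})-f(x^t)$. Applying Jensen $\sqnorm{g^t-\nabla f(x^t)}\leq G^t$ (since $g^t$ and $\nabla f(x^t)$ are both $n$-averages) together with the P{\L} estimate $\sqnorm{\nabla f(x^t)}\geq 2\mu(f(x^t)-f^*)$, and then subtracting $f^*$ and taking expectations, yields
\begin{equation*}
\Exp{f(x^{t+1})-f^*}\leq(1-\gamma\mu)\Exp{f(x^t)-f^*}+\tfrac{\gamma}{2}\Exp{G^t}-\tfrac{\gamma(1-\gamma L_-)}{2}\Exp{\sqnorm{g^t}}.
\end{equation*}

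Guided by the theorem's right-hand side, I would introduce the Lyapunov function $\Phi^t\eqdef\Exp{f(x^t)-f^*}+\tfrac{\gamma}{\theta}\Exp{G^t}$ and add $\tfrac{\gamma}{\theta}$ times Lemma~\ref{lem:key_lemma} (which reads $\Exp{G^{t+1}}\leq(1-\theta)\Exp{G^t}+\beta L_+^2\gamma^2\Exp{\sqnorm{g^t}}$, using $\sqnorm{x^{t+1}-x^t}=\gamma^2\sqnorm{g^t}$) to the displayed descent inequality. To close the recursion $\Phi^{t+1}\leq(1-\gamma\mu)\Phi^t$, two sufficient conditions appear. (i) The coefficient of $\Exp{G^t}$, which simplifies to $\tfrac{\gamma}{\theta}\bigl(1-\tfrac{\theta}{2}\bigr)$, must be at most $(1-\gamma\mu)\tfrac{\gamma}{\theta}$; this reduces to $\gamma\mu\leq\tfrac{\theta}{2}$. (ii) The coefficient of $\Exp{\sqnorm{g^t}}$, namely $\tfrac{\gamma^3\beta L_+^2}{\theta}-\tfrac{\gamma(1-\gamma L_-)}{2}$, must be nonpositive, i.e.\ $\gamma L_-+\tfrac{2\gamma^2\beta L_+^2}{\theta}\leq 1$. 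Condition (i) is furnished by the $\tfrac{\theta}{2\mu}$ piece of $M_2$. For (ii), the quadratic-to-linear trick $(\gamma L_+\sqrt{2\beta/\theta})^2\leq\gamma L_+\sqrt{2\beta/\theta}$ (valid whenever this quantity is at most $1$) reduces (ii) to $\gamma(L_-+L_+\sqrt{2\beta/\theta})\leq 1$, exactly the other piece of $M_2$.

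Once the one-step inequality is in hand, a trivial induction gives $\Phi^T\leq(1-\gamma\mu)^T\Phi^0=(1-\gamma\mu)^T\bigl(\Delta^0+\tfrac{\gamma}{\theta}\Exp{G^0}\bigr)$, and dropping the nonnegative $\tfrac{\gamma}{\theta}\Exp{G^T}$ on the left produces the claimed bound. The main obstacle is purely algebraic: matching coefficients so that both sufficient conditions collapse into the single stepsize bound $\gamma\leq 1/M_2$. The Lyapunov coefficient is forced to be $\gamma/\theta$ by the statement itself, so the whole exercise becomes one of careful bookkeeping rather than a conceptual leap.
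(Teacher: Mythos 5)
Your proposal is correct and follows essentially the same route as the paper: the same smoothness-based descent inequality (the paper cites it as Lemma~\ref{lem:aux_smooth_lemma}, you derive it inline, writing $\sqnorm{x^{t+1}-x^t}=\gamma^2\sqnorm{g^t}$), the same Jensen step $\sqnorm{g^t-\nabla f(x^t)}\le G^t$, the same Lyapunov function $\Exp{f(x^t)-f^*}+\tfrac{\gamma}{\thetaNEW}\Exp{G^t}$, and the same two stepsize conditions, with your ``quadratic-to-linear trick'' being exactly the content of Lemma~\ref{lem:stepsize_page_fact} that the paper invokes. No gaps.
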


The theorem implies the following fact.
\begin{corollary}\label{cor:main_cor_PL}
	Let the assumptions of Theorem~\ref{thm:main_thm_PL} hold and choose the stepsize
	\begin{equation*}
	\squeeze	\gamma = \min\left\{\frac{1}{L_{-} + L_{+}\sqrt{\nicefrac{2\betaNEW}{\thetaNEW}}}, \frac{\thetaNEW}{2\mu}\right\}.
	\end{equation*}
	Then to achieve $\Exp{f(x^T)} - f^* \leq \varepsilon$ for some $\varepsilon > 0$ the method requires
	\begin{equation}
\squeeze		\cO\left(\max\left\{\frac{L_{-} + L_{+}\sqrt{\nicefrac{\betaNEW}{\thetaNEW}}}{\mu}, \thetaNEW\right\}\log \frac{\Delta^0 + \Exp{G^{0}}\nicefrac{\gamma}{\thetaNEW}}{\varepsilon}\right) \label{eq:main_complexity_PL}
	\end{equation}
	iterations (=communication rounds).
\end{corollary}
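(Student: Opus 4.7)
\textbf{Proof plan for Corollary~\ref{cor:main_cor_PL}.} My plan is to derive the corollary directly from Theorem~\ref{thm:main_thm_PL} via the standard inversion of a geometric rate into a logarithmic iteration count. First, I would observe that the prescribed stepsize $\gamma = \min\{1/(L_{-} + L_{+}\sqrt{2\betaNEW/\thetaNEW}),\, \thetaNEW/(2\mu)\}$ equals $1/M_2$ by definition of $M_2$, and therefore automatically meets the hypothesis $\gamma \le 1/M_2$ of the theorem. Consequently, the theorem's conclusion
\[
\Exp{f(x^T)} - f^* \;\le\; (1-\gamma\mu)^T\!\left(\Delta^0 + \tfrac{\gamma}{\thetaNEW}\Exp{G^0}\right)
\]
is available for free and forms the starting point.

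Next, I would apply the textbook estimate $(1-\gamma\mu)^T \le \exp(-T\gamma\mu)$ (legitimate because $\gamma\mu \le \thetaNEW/2 \le 1/2$) to convert the linear contraction into a sufficient condition on $T$: in order to drive the suboptimality below $\varepsilon$ it suffices to choose
\[
T \;\ge\; \tfrac{1}{\gamma\mu}\,\log\tfrac{\Delta^0 + (\gamma/\thetaNEW)\Exp{G^0}}{\varepsilon}.
\]
The last step is to make $1/(\gamma\mu)$ explicit. Since the minimum defining $\gamma$ turns into a maximum upon reciprocation,
\[
\tfrac{1}{\gamma\mu} \;=\; \max\!\left\{\tfrac{L_{-} + L_{+}\sqrt{2\betaNEW/\thetaNEW}}{\mu},\; \tfrac{2}{\thetaNEW}\right\},
\]
and absorbing the multiplicative $\sqrt{2}$ and the constant $2$ into the $\cO(\cdot)$ notation yields the stated iteration complexity.

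\textbf{Where the work lies.} There is no substantive obstacle here; the argument is essentially a one-liner once Theorem~\ref{thm:main_thm_PL} is in hand. The only care required is to cleanly track the two branches of the $\min$ defining $\gamma$ so that the $\max$ appearing inside the $\cO(\cdot)$ is derived correctly: the first entry of that maximum originates from the smoothness branch of the stepsize, while the second entry comes from the $\thetaNEW/(2\mu)$ branch and equals, up to constants, $1/\thetaNEW$.
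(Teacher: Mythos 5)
Your argument is correct and is exactly the (implicit, one-line) derivation the paper intends for this corollary: substitute the prescribed stepsize into Theorem~\ref{thm:main_thm_PL}, bound $(1-\gamma\mu)^T\le e^{-\gamma\mu T}$, and invert the exponential. Two notational points deserve explicit attention, though. First, the prescribed $\gamma$ is not literally $1/M_2$ for $M_2$ as printed in the theorem: inverting that $M_2$ gives $\min\bigl\{1/(L_-+L_+\sqrt{\nicefrac{2\betaNEW}{\thetaNEW}}),\,2\mu/\thetaNEW\bigr\}$, whose second entry is the reciprocal of the corollary's. What the theorem's proof actually requires is $1-\nicefrac{\thetaNEW}{2}\le 1-\gamma\mu$, i.e.\ $\gamma\le\thetaNEW/(2\mu)$, so the corollary's stepsize is the correct one and the second entry of $M_2$ should be read as $2\mu/\thetaNEW$; you should not simply assert $\gamma=1/M_2$ without this remark. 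Second, your computation correctly yields $1/(\gamma\mu)=\max\bigl\{(L_-+L_+\sqrt{\nicefrac{2\betaNEW}{\thetaNEW}})/\mu,\,2/\thetaNEW\bigr\}$, whose second entry is $2/\thetaNEW$ and not the $\thetaNEW$ printed in \eqref{eq:main_complexity_PL}; since $\thetaNEW\le 1$ the printed entry could never dominate the maximum, so it is evidently a typo for $\nicefrac{1}{\thetaNEW}$. Your derivation therefore establishes the corrected bound rather than the bound as literally stated, and this discrepancy should be flagged rather than silently absorbed into the $\cO(\cdot)$.
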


\subsection{Commentary}
As mentioned in the contributions section, the above rates match those of \algname{GD} in the considered regimes, up to constants factors, and at present constitute the new best-known rates for methods based on lazy aggregation. They recover the best known rates for error feedback since they match the rate of \algname{EF21}.

\section{Experiments}
\label{sec:experiments}
Now we empirically test the new variants of \algname{3PC} in two expriments. In the first experiment, we focus on compressed lazy aggregation mechanism and study the behavior of \algname{CLAG} (\cref{alg:clag}) combined with Top-$K$ compressor. In the second one, we compare \algname{3PCv2} (\cref{alg:anna}) to \algname{EF21} with Top-$K$ on a practical task of learning a representation of MNIST dataset \cite{LeCun_MNIST}. 
\begin{figure*}[h]
	\centering
	\includegraphics[width=\textwidth]{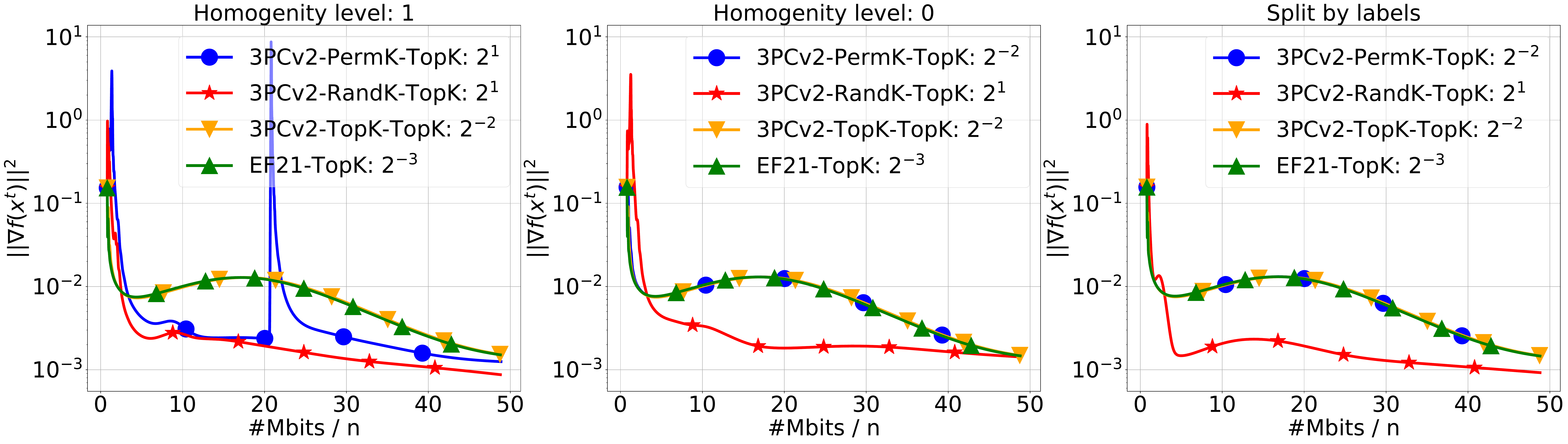}
	\caption{Comparison of \algname{3PCv2} with Perm-$K$, Rand-$K$ and Top-$K$  as the first compressor. Top-$K$ is used as the second compressor. Number of clients $n = 100$, compression level $K = 251$. \algname{EF21} with Top-$K$ is provided for the reference.}
	\label{fig:anna-100-nodes-grads_main}
\end{figure*}
\subsection{Is \algname{CLAG} better than \algname{LAG} and \algname{EF21}?}
Consider solving the non-convex logistic regression problem
\begin{equation*}
\squeeze	\min\limits_{x \in \R^d} \left[ f(x) \eqdef  \frac{1}{N}\sum\limits_{i=1}^N \log(1 + e^{-y_i a_i^\top x}) + \lambda \sum\limits_{j=1}^d \frac{x_j^2}{1 + x_j^2}\right],
\end{equation*}
where $a_i \in \R^d$, $y_i \in \{-1, 1\}$ are the training data and labels, and $\lambda > 0$ is a regularization parameter, which is fixed to $\lambda =0.1$. We use four LIBSVM~\cite{chang2011libsvm} datasets \emph{phishing, w6a, a9a, ijcnn1} as training data. Each dataset is shuffled and split into $n=20$ equal parts.

We vary two parameters of \algname{CLAG}, $K$ and $\zeta$, and report the number of bits (per worker) sent from clients to the server to achieve $\|\nabla f(x^t)\| < 10^{-2}$. For each pair ($K$, $\zeta$), we fine-tune the stepsize of \algname{CLAG} with multiples ($1, 2^1, 2^2, \dots, 2^{11}$) of the theoretical stepsize. We report the results on a heatmap (see \cref{fig:heatmapreducedclagijcnn1}) for the representative dataset \emph{ijcnn1}. Other datasets are included in \cref{sec:extra_experiments}. On the heatmap, we vary $\zeta$ along rows and $K$ along columns. Notice that \algname{CLAG} reduces to \algname{LAG} when $K=d$ (bottom row) and to \algname{EF21} when $\zeta = 0$ (left column). 

The experiment shows that the minimum communication complexity is attained at a combination of $(K, \zeta)$, which does not reduce \algname{CLAG} to its special cases: \algname{EF21} or \algname{LAG}. This empirically confirms that \algname{CLAG} \textit{has better communication complexity than \algname{EF21} and \algname{LAG}.}
\begin{figure}[h]
	\centering
	\includegraphics[width=1.1\linewidth]{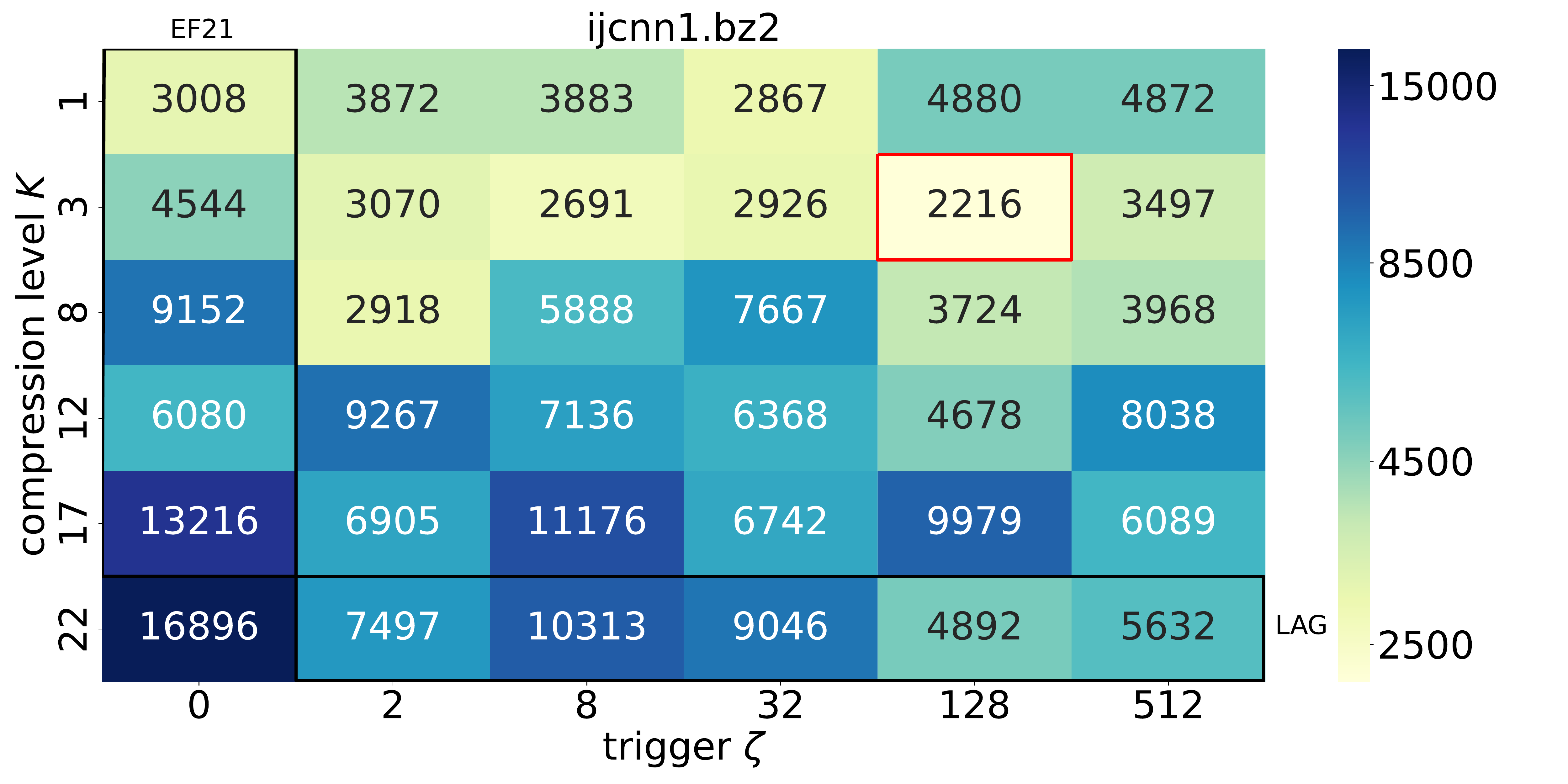}
	\caption{Heatmap of communication complexities of \algname{CLAG} for different combination of compression levels $K$ and triggers $\zeta$ with tuned stepsizes on {\em ijcnn1} dataset. We contour cells corresponding to \algname{EF21} and \algname{LAG}, as special cases of \algname{CLAG}, by black rectangles. The red-contoured cell indicates the experiment with the smallest communication cost.}
	\label{fig:heatmapreducedclagijcnn1}
\end{figure}
Additional experiments validating the performance of \algname{CLAG} are reported in \cref{sec:extra_experiments}
\subsection{Other \algname{3PC} variants}
We consider the objective
\begin{equation*}
	\small \min _{\boldsymbol{D} \in \R^{d_{f} \times d_{e}}, \boldsymbol{E} \in \mathbb{R}^{d_{e} \times d_{f}}} \left[f(\boldsymbol{D}, \boldsymbol{E}):=\frac{1}{N} \sum_{i=1}^{N}\left\|\boldsymbol{D} \boldsymbol{E} a_{i}-a_{i}\right\|^{2}\right],
\end{equation*}
where $a_i$ are flattened represenations of images with $d_f = 784$, $\boldsymbol{D}$ and $\boldsymbol{E}$ are learned parameters of the autoencoder model. We fix the encoding dimensions as $d_e = 16$ and distribute the data samples across $n = 100$ clients. In order to control the heterogenity of this distribution, we consider three cases. First, each client owns the same data (``\textit{homogeneity level: $1$}''). Second, the data is randomly split among client (``\textit{homogeneity level: $0$}''). Finally, we consider an extremely heterogeneous case, where the images are ``\textit{split by labels}''. $K$ is set to $\nfr{d}{n}$, where $d = 2 \cdot d_f \cdot d_e = 25088$ is the total dimension of learning parameters $\boldsymbol{D}$ and $\boldsymbol{E}$. We apply three different sparsifiers (Top-$K$, Rand-$K$, Perm-$K$) for the first compressor of \algname{3PCv2} (\cref{alg:anna}) and fix the second one as Top-$K$.\footnote{\scriptsize See \cref{sec:extra_experiments,sec:contractive} for the definitions and more detailed description.} \algname{3PCv2} method communicates two sparse sequences at each communication round, while \algname{EF21} only one. To account for this, we select $K_1, K_2$ from the set $\{\nfr{K}{2}, K\}$, that is there are four possible choices  for compression levels $K_1, K_2$ of two sparsifiers in \algname{3PCv2}. Then we select the pair which works best. We fine-tune every method with the stepsizes from the set $\{2^{-12}, 2^{-11}, \dots, 2^{5}\}$ and select the best run based on the value of $\sqnorm{\nfxt}$ at the last iterate. The stepsize for each method is indicated in the legend of each plot. 

\Cref{fig:anna-100-nodes-grads_main} demonstrates that \algname{3PCv2} is \textit{competitive with the  \algname{EF21} method and, in some cases, superior}. The improvement is particularly prominent in the heterogeneous regime.
Experiments with other variants, \algname{3PCv1}--\algname{3PCv5}, including the experiments on a carefully designed syntetic quadratic problem, are reported in \cref{sec:extra_experiments}.

\bibliography{icml2022_3_points}

\begin{thebibliography}{39}
\providecommand{\natexlab}[1]{#1}
\providecommand{\url}[1]{\texttt{#1}}
\expandafter\ifx\csname urlstyle\endcsname\relax
  \providecommand{\doi}[1]{doi: #1}\else
  \providecommand{\doi}{doi: \begingroup \urlstyle{rm}\Url}\fi

\bibitem[Alistarh et~al.(2018)Alistarh, Hoefler, Johansson, Khirirat,
  Konstantinov, and Renggli]{Alistarh-EF-NIPS2018}
Alistarh, D., Hoefler, T., Johansson, M., Khirirat, S., Konstantinov, N., and
  Renggli, C.
\newblock The convergence of sparsified gradient methods.
\newblock In \emph{Advances in Neural Information Processing Systems
  (NeurIPS)}, 2018.

\bibitem[Beznosikov et~al.(2020)Beznosikov, Horv{\'a}th, Richt{\'a}rik, and
  Safaryan]{beznosikov2020biased}
Beznosikov, A., Horv{\'a}th, S., Richt{\'a}rik, P., and Safaryan, M.
\newblock On biased compression for distributed learning.
\newblock \emph{arXiv preprint arXiv:2002.12410}, 2020.

\bibitem[Bottou(2012)]{bottou2012stochastic}
Bottou, L.
\newblock \emph{Stochastic Gradient Descent Tricks}, volume 7700 of
  \emph{Lecture Notes in Computer Science (LNCS)}, pp.\  430--445.
\newblock Springer, neural networks, tricks of the trade, reloaded edition,
  January 2012.
\newblock URL
  \url{https://www.microsoft.com/en-us/research/publication/stochastic-gradient-tricks/}.

\bibitem[Chang \& Lin(2011)Chang and Lin]{chang2011libsvm}
Chang, C.-C. and Lin, C.-J.
\newblock {LIBSVM}: a library for support vector machines.
\newblock \emph{{ACM} {T}ransactions on {I}ntelligent {S}ystems and
  {T}echnology (TIST)}, 2\penalty0 (3):\penalty0 1--27, 2011.

\bibitem[Chen et~al.(2018)Chen, Giannakis, Sun, and Yin]{LAG}
Chen, T., Giannakis, G., Sun, T., and Yin, W.
\newblock {LAG}: Lazily aggregated gradient for communication-efficient
  distributed learning.
\newblock \emph{Advances in Neural Information Processing Systems}, 2018.

\bibitem[Dean et~al.(2012)Dean, Corrado, Monga, Chen, Devin, Mao, Senior,
  Tucker, Yang, Le, and et~al]{Dean2012}
Dean, J., Corrado, G., Monga, R., Chen, K., Devin, M., Mao, M., Senior, A.,
  Tucker, P., Yang, K., Le, Q.~V., and et~al.
\newblock Large scale distributed deep networks.
\newblock In \emph{Advances in Neural Information Processing Systems}, pp.\
  1223--1231, 2012.

\bibitem[Fatkhullin et~al.(2021)Fatkhullin, Sokolov, Gorbunov, Li, and
  Richt\'{a}rik]{EF21BW}
Fatkhullin, I., Sokolov, I., Gorbunov, E., Li, Z., and Richt\'{a}rik, P.
\newblock Ef21 with bells \& whistles: practical algorithmic extensions of
  modern error feedback.
\newblock \emph{arXiv preprint arXiv:2110.03294}, 2021.

\bibitem[Ghadikolaei et~al.(2021)Ghadikolaei, Stich, and Jaggi]{LENA}
Ghadikolaei, H.~S., Stich, S., and Jaggi, M.
\newblock {LENA}: Communication-efficient distributed learning with
  self-triggered gradient uploads.
\newblock In \emph{International Conference on Artificial Intelligence and
  Statistics}, pp.\  3943--3951. PMLR, 2021.

\bibitem[Gorbunov et~al.(2020)Gorbunov, Kovalev, Makarenko, and
  Richt\'{a}rik]{Lin_EC_SGD}
Gorbunov, E., Kovalev, D., Makarenko, D., and Richt\'{a}rik, P.
\newblock Linearly converging error compensated {SGD}.
\newblock In \emph{34th Conference on Neural Information Processing Systems
  (NeurIPS)}, 2020.

\bibitem[Gorbunov et~al.(2021)Gorbunov, Burlachenko, Li, and
  Richt\'arik]{gorbunov2021marina}
Gorbunov, E., Burlachenko, K.~P., Li, Z., and Richt\'arik, P.
\newblock {MARINA}: Faster non-convex distributed learning with compression.
\newblock In Meila, M. and Zhang, T. (eds.), \emph{Proceedings of the 38th
  International Conference on Machine Learning}, volume 139 of
  \emph{Proceedings of Machine Learning Research}, pp.\  3788--3798. PMLR,
  18--24 Jul 2021.
\newblock URL \url{https://proceedings.mlr.press/v139/gorbunov21a.html}.

\bibitem[Horv{\'a}th \& Richt{\'a}rik(2020)Horv{\'a}th and
  Richt{\'a}rik]{horvath2020better}
Horv{\'a}th, S. and Richt{\'a}rik, P.
\newblock A better alternative to error feedback for communication-efficient
  distributed learning.
\newblock \emph{arXiv preprint arXiv:2006.11077}, 2020.

\bibitem[Horv\'{a}th \& Richt\'{a}rik(2021)Horv\'{a}th and
  Richt\'{a}rik]{A_better_alternative}
Horv\'{a}th, S. and Richt\'{a}rik, P.
\newblock A better alternative to error feedback for communication-efficient
  distributed learning.
\newblock In \emph{9th International Conference on Learning Representations
  (ICLR)}, 2021.

\bibitem[Karimireddy et~al.(2019)Karimireddy, Rebjock, Stich, and
  Jaggi]{Karimireddy_SignSGD}
Karimireddy, S.~P., Rebjock, Q., Stich, S., and Jaggi, M.
\newblock Error feedback fixes {S}ign{SGD} and other gradient compression
  schemes.
\newblock In \emph{36th International Conference on Machine Learning (ICML)},
  2019.

\bibitem[Khaled et~al.(2020)Khaled, Mishchenko, and
  Richt\'{a}rik]{localSGD-AISTATS2020}
Khaled, A., Mishchenko, K., and Richt\'{a}rik, P.
\newblock Tighter theory for local {SGD} on identical and heterogeneous data.
\newblock In \emph{The 23rd International Conference on Artificial Intelligence
  and Statistics (AISTATS 2020)}, 2020.

\bibitem[Khirirat et~al.(2018)Khirirat, Feyzmahdavian, and Johansson]{DCGD}
Khirirat, S., Feyzmahdavian, H.~R., and Johansson, M.
\newblock Distributed learning with compressed gradients.
\newblock \emph{arXiv preprint arXiv:1806.06573}, 2018.

\bibitem[Kingma \& Ba(2014)Kingma and Ba]{ADAM}
Kingma, D.~P. and Ba, J.
\newblock Adam: a method for stochastic optimization.
\newblock In \emph{The 3rd International Conference on Learning
  Representations}, 2014.
\newblock URL \url{https://arxiv.org/pdf/1412.6980.pdf}.

\bibitem[Koloskova et~al.(2020)Koloskova, Lin, Stich, and
  Jaggi]{Koloskova2019DecentralizedDL}
Koloskova, A., Lin, T., Stich, S., and Jaggi, M.
\newblock Decentralized deep learning with arbitrary communication compression.
\newblock In \emph{International Conference on Learning Representations
  (ICLR)}, 2020.

\bibitem[Kone\v{c}n\'{y} et~al.(2016{\natexlab{a}})Kone\v{c}n\'{y}, McMahan,
  Ramage, and Richt\'{a}rik]{FEDOPT}
Kone\v{c}n\'{y}, J., McMahan, H.~B., Ramage, D., and Richt\'{a}rik, P.
\newblock Federated optimization: distributed machine learning for on-device
  intelligence.
\newblock \emph{arXiv:1610.02527}, 2016{\natexlab{a}}.

\bibitem[Kone\v{c}n\'{y} et~al.(2016{\natexlab{b}})Kone\v{c}n\'{y}, McMahan,
  Yu, Richt\'{a}rik, Suresh, and Bacon]{FEDLEARN}
Kone\v{c}n\'{y}, J., McMahan, H.~B., Yu, F., Richt\'{a}rik, P., Suresh, A.~T.,
  and Bacon, D.
\newblock Federated learning: strategies for improving communication
  efficiency.
\newblock In \emph{NIPS Private Multi-Party Machine Learning Workshop},
  2016{\natexlab{b}}.

\bibitem[LeCun et~al.(2010)LeCun, Cortes, and Burges]{LeCun_MNIST}
LeCun, Y., Cortes, C., and Burges, C.
\newblock Mnist handwritten digit database.
\newblock \emph{ATTLabs [Online]}, 2010.
\newblock URL \url{http://yann.lecun.com/exdb/mnist}.

\bibitem[Li \& Richt\'{a}rik(2020)Li and Richt\'{a}rik]{Nonconvex-sigma_k}
Li, Z. and Richt\'{a}rik, P.
\newblock A unified analysis of stochastic gradient methods for nonconvex
  federated optimization.
\newblock \emph{arXiv preprint arXiv:2006.07013}, 2020.

\bibitem[Li \& Richt{\'a}rik(2021)Li and Richt{\'a}rik]{CANITA}
Li, Z. and Richt{\'a}rik, P.
\newblock {CANITA}: Faster rates for distributed convex optimization with
  communication compression.
\newblock In \emph{Advances in Neural Information Processing Systems}, 2021.
\newblock arXiv:2107.09461.

\bibitem[Li et~al.(2020)Li, Kovalev, Qian, and Richt{\'a}rik]{ADIANA}
Li, Z., Kovalev, D., Qian, X., and Richt{\'a}rik, P.
\newblock Acceleration for compressed gradient descent in distributed and
  federated optimization.
\newblock In \emph{International Conference on Machine Learning (ICML)}, pp.\
  5895--5904. PMLR, 2020.

\bibitem[Li et~al.(2021)Li, Bao, Zhang, and Richt{\'a}rik]{PAGE}
Li, Z., Bao, H., Zhang, X., and Richt{\'a}rik, P.
\newblock {PAGE}: A simple and optimal probabilistic gradient estimator for
  nonconvex optimization.
\newblock In \emph{International Conference on Machine Learning (ICML)}, pp.\
  6286--6295. PMLR, 2021.

\bibitem[Lin et~al.(2018)Lin, Han, Mao, Wang, and Dally]{lin2018deep}
Lin, Y., Han, S., Mao, H., Wang, Y., and Dally, B.
\newblock Deep gradient compression: Reducing the communication bandwidth for
  distributed training.
\newblock In \emph{International Conference on Learning Representations}, 2018.

\bibitem[McMahan et~al.(2016)McMahan, Moore, Ramage, and Ag\"{u}era~y
  Arcas]{FedAvg2016}
McMahan, B., Moore, E., Ramage, D., and Ag\"{u}era~y Arcas, B.
\newblock Federated learning of deep networks using model averaging.
\newblock \emph{arXiv preprint arXiv:1602.05629}, 2016.

\bibitem[McMahan et~al.(2017)McMahan, Moore, Ramage, Hampson, and Ag\"{u}era~y
  Arcas]{FL2017-AISTATS}
McMahan, H.~B., Moore, E., Ramage, D., Hampson, S., and Ag\"{u}era~y Arcas, B.
\newblock Communication-efficient learning of deep networks from decentralized
  data.
\newblock In \emph{Proceedings of the 20th International Conference on
  Artificial Intelligence and Statistics (AISTATS)}, 2017.

\bibitem[Mishchenko et~al.(2019)Mishchenko, Gorbunov, Tak{\'a}{\v{c}}, and
  Richt{\'a}rik]{DIANA}
Mishchenko, K., Gorbunov, E., Tak{\'a}{\v{c}}, M., and Richt{\'a}rik, P.
\newblock Distributed learning with compressed gradient differences.
\newblock \emph{arXiv preprint arXiv:1901.09269}, 2019.

\bibitem[Nesterov et~al.(2018)]{nesterov2018lectures}
Nesterov, Y. et~al.
\newblock \emph{Lectures on convex optimization}, volume 137.
\newblock Springer, 2018.

\bibitem[Richt\'{a}rik et~al.(2021)Richt\'{a}rik, Sokolov, and
  Fatkhullin]{EF21}
Richt\'{a}rik, P., Sokolov, I., and Fatkhullin, I.
\newblock {EF21}: A new, simpler, theoretically better, and practically faster
  error feedback.
\newblock In \emph{Advances in Neural Information Processing Systems}, 2021.

\bibitem[Safaryan et~al.(2021{\natexlab{a}})Safaryan, Islamov, Qian, and
  Richt\'{a}rik]{FedNL}
Safaryan, M., Islamov, R., Qian, X., and Richt\'{a}rik, P.
\newblock {FedNL}: Making {N}ewton-type methods applicable to federated
  learning.
\newblock \emph{arXiv preprint arXiv:2106.02969}, 2021{\natexlab{a}}.

\bibitem[Safaryan et~al.(2021{\natexlab{b}})Safaryan, Shulgin, and
  Richt\'{a}rik]{UP2021}
Safaryan, M., Shulgin, E., and Richt\'{a}rik, P.
\newblock Uncertainty principle for communication compression in distributed
  and federated learning and the search for an optimal compressor.
\newblock \emph{Information and Inference: A Journal of the IMA},
  2021{\natexlab{b}}.

\bibitem[Seide et~al.(2014)Seide, Fu, Droppo, Li, and Yu]{Seide2014}
Seide, F., Fu, H., Droppo, J., Li, G., and Yu, D.
\newblock 1-bit stochastic gradient descent and its application to
  data-parallel distributed training of speech {DNN}s.
\newblock In \emph{Fifteenth Annual Conference of the International Speech
  Communication Association}, 2014.

\bibitem[Stich(2020)]{localSGD-Stich}
Stich, S.~U.
\newblock Local {SGD} converges fast and communicates little.
\newblock In \emph{International Conference on Learning Representations}, 2020.

\bibitem[Stich et~al.(2018)Stich, Cordonnier, and Jaggi]{Stich-EF-NIPS2018}
Stich, S.~U., Cordonnier, J.-B., and Jaggi, M.
\newblock Sparsified {SGD} with memory.
\newblock In \emph{Advances in Neural Information Processing Systems
  (NeurIPS)}, 2018.

\bibitem[Sun et~al.(2019)Sun, Chen, Giannakis, and Yang]{LAQ}
Sun, J., Chen, T., Giannakis, G., and Yang, Z.
\newblock Communication-efficient distributed learning via lazily aggregated
  quantized gradients.
\newblock \emph{Advances in Neural Information Processing Systems},
  32:\penalty0 3370--3380, 2019.

\bibitem[Szlendak et~al.(2021)Szlendak, Tyurin, and Richt\'{a}rik]{PermK}
Szlendak, R., Tyurin, A., and Richt\'{a}rik, P.
\newblock Permutation compressors for provably faster distributed nonconvex
  optimization.
\newblock \emph{arXiv preprint arXiv:2110.03300, 2021}, 2021.

\bibitem[Tang et~al.(2020)Tang, Lian, Yu, Zhang, and Liu]{DoubleSqueeze}
Tang, H., Lian, X., Yu, C., Zhang, T., and Liu, J.
\newblock {D}ouble{S}queeze: {P}arallel stochastic gradient descent with
  double-pass error-compensated compression.
\newblock In \emph{Proceedings of the 36th International Conference on Machine
  Learning (ICML)}, 2020.

\bibitem[Woodworth et~al.(2020)Woodworth, Patel, Stich, Dai, Bullins, McMahan,
  Shamir, and Srebro]{Blake2020}
Woodworth, B., Patel, K.~K., Stich, S.~U., Dai, Z., Bullins, B., McMahan,
  H.~B., Shamir, O., and Srebro, N.
\newblock Is local {SGD} better than minibatch {SGD}?
\newblock \emph{arXiv preprint arXiv:2002.07839}, 2020.

\end{thebibliography}
\bibliographystyle{icml2022}

\newpage
\appendix
\onecolumn

\part*{APPENDIX}

\renewcommand{\contentsname}{Table of Contents}
{\setlength{\parskip}{0.15em}\tableofcontents}

\newpage



\section{Examples of Contractive Compressors}  \label{sec:contractive}

The simplest example of a contractive compressor is the identity mapping, $\cC(x)\equiv x$, which satisfies \eqref{eq:contractive-09u09fduf} with $\lambdaNEW=1$, and using which \algname{DCGD} reduces to (distributed) gradient descent.

\subsection{Top-$K$} A typical non-trivial example of a contractive compressor is the Top-$K$ sparsification operator \cite{Alistarh-EF-NIPS2018}, which is a deterministic mapping characterized by a parameter $1\leq K\leq d$ defining the required level of sparsification. The smaller this parameter is, the higher compression level is applied, and the smaller the contraction parameter $\lambdaNEW$ becomes, which indicates that there is a larger error between the message $x$ we wanted to send, and the compressed message $\cC(x)$ we actually  sent. In the extreme case $K=d$, we have $\cC(x)=x$, and the input vector is left intact, and hence uncompressed. In this case, $\lambdaNEW=1$. If $K=1$, then all entries of $x$ are zeroed out, except for the largest entry in absolute value, breaking ties arbitrarily. This choice offers a $d:1$ compression ratio, which can be dramatic if $d$ is large, which is the case when working with  big models. In this case,  $\lambdaNEW=1/d$. The general choice of $K$ leaves just $K$ nonzero entries intact, those that are largest in absolute value (again, breaking ties arbitrarily), with the remaining $d-K$ entries zeroed out. This offers a $(d-K):1$ compression ratio, with contraction factor $\lambdaNEW=K/d$. 

\subsection{Rand-$K$} One of the simplest randomized sparsification operators is Rand-$K$ \cite{DCGD}. It is similar to Top-$K$, with the exception that the $K$ entries that are retained are chosen uniformly at random rather than greedily. Just like in the case of Top-$K$, the worst-case (expected) error produced by Rand-$K$ is characterized by $\lambdaNEW=K/d$. However, on inputs $x$ that are not worse-case, which naturally happens often throughout the training process, the empirical error of the greedy Top-$K$ sparsifier can be much smaller than that of its randomized cousin. This has been observed in practice, and this is one of the reasons why greedy compressors, such as Top-$K$, are often preferred to their randomized counterparts. 

\subsection{cRand-$K$} Contractive Rand-$K$ operator applied to vector $x \in \R^d$ uniformly at random chooses $K$ entries out of $d$ but, unlike Rand-$K$, does not scale the resulting vector. In this case, the resulting vector is no more unbiased but it still satisfies the definition of the contractive operator. Indeed, let $\cS$ be a set of indices of size $K$. Then,
\vspace{-1mm}
\begin{align*}
\Exp{\|\cC(x) - x\|^2_2} =  \Exp{ \sum\limits_{i=1}^d 1_{i \notin \cS} x_i^2} = \sum\limits_{i=1}^d \Exp {1_{i \notin \cS} x_i^2} = \sum\limits_{i=1}^d \left(1 - \frac{K}{d}\right) x_i^2 =  \left(1 - \frac{K}{d}\right) \|x\|^2_2.
\end{align*}

\vspace{-4mm}
\subsection{Perm-$K$ and cPerm-$K$} Permutation compressor (Perm-$K$) is described in~\cite{PermK} (case $d > n$, Definition 2 in the original paper). Contractive permutation compressor (cPerm-$K$) on top of Perm-$K$ scales the resulting vector by factor $\frac{1}{1 + \omega}$.

\subsection{Unbiased compressors}

Rand-$K$, as defined above, arises from a more general class of compressors, which we now present, by appropriate scaling.

\begin{definition}[Unbiased Compressor]\label{def:unbiased_compressor}
	We say that a randomized map $\cQ: \R^{d} \rightarrow \R^{d}$ is an {\em unbiased compression operator}, or simply just {\em unbiased compressor}, if there exists a constant $\omega \geq 0$ such that 
		\begin{eqnarray}\label{eq:unb_compressor}
		\Exp{\cQ(x)} = x,\quad \Exp{\|\cQ(x) - x\|^{2}} \leq \omega \|x\|^{2}, \quad \forall x\in \R^d.
	\end{eqnarray}
\end{definition}

Its is well known and trivial to check that for any unbiased compressor $\cQ$, the compressor $\frac{1}{\omega+1}\cQ$ is contractive, with contraction parameter $\alpha = \frac{1}{\omega+1}$. It is easy to see that the contractive Rand-$K$ operator defined above becomes unbiased once it is scaled by the factor $\frac{d}{K}$.

\vspace{-1mm}
\subsection{Further examples}
For further examples of  contractive compressors (e.g., quantization-based, rank-based), we refer the reader to \citet{beznosikov2020biased} and \citet{UP2021, FedNL}.

\newpage

\section{Proofs of The Main Results}

\subsection{Three Lemmas}

We will rely on two lemmas, one from \citep{EF21}, and one from \citep{PAGE}. The first lemma will allow us to simplify the expression for the maximal allowable stepsize in our method (at the cost of being suboptimal by the factor of 2 at most), and the second forms an important step in our convergence proof.

\begin{lemma}[Lemma 5 of \citep{EF21}]\label{lem:stepsize_page_fact}
	If $0 \leq \gamma \leq \fr{1}{\sqrt{a}+b}$, then $a \gamma^{2}+b \gamma \leq 1$. Moreover, the bound is tight up to the factor of 2 since $\fr{1}{\sqrt{a}+b} \leq \min \left\{\fr{1}{\sqrt{a}}, \fr{1}{b}\right\} \leq \fr{2}{\sqrt{a}+b}$.
\end{lemma}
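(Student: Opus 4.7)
The plan is to handle the two halves of the lemma independently; both are elementary algebraic inequalities, so I would not expect any genuine obstacle, but I will lay out the route I have in mind.

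For the first claim, my approach is to factor the left-hand side and control each factor separately. Specifically, I would write
\begin{equation*}
a\gamma^2 + b\gamma \;=\; \gamma \bigl( \sqrt{a} \cdot \sqrt{a}\,\gamma + b \bigr).
\end{equation*}
The hypothesis $\gamma \leq 1/(\sqrt{a}+b)$ together with $a,b\geq 0$ gives $\sqrt{a}\,\gamma \leq \sqrt{a}/(\sqrt{a}+b) \leq 1$. Substituting this back into the factorization yields $a\gamma^2 + b\gamma \leq \gamma(\sqrt{a}+b) \leq 1$, which is the desired bound. (An alternative route, which I mention as a sanity check, is to plug $\gamma = 1/(\sqrt{a}+b)$ directly into the quadratic and verify $a + b\sqrt{a} + b^2 \leq (\sqrt{a}+b)^2$ by comparing $b\sqrt{a}$ against $2b\sqrt{a}$; since the quadratic $a\gamma^2+b\gamma$ is increasing in $\gamma\geq 0$, this suffices.)

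For the tightness claim, I would argue the two inequalities in turn. The bound $1/(\sqrt{a}+b) \leq \min\{1/\sqrt{a},\,1/b\}$ is immediate from $\sqrt{a}+b\geq \sqrt{a}$ and $\sqrt{a}+b\geq b$ (with the usual convention that $1/0 = +\infty$ so that degenerate cases cause no trouble). For the second inequality, I would rewrite $\min\{1/\sqrt{a},1/b\} = 1/\max\{\sqrt{a},b\}$ and then use $\max\{\sqrt{a},b\} \geq (\sqrt{a}+b)/2$, which gives $\min\{1/\sqrt{a},1/b\} \leq 2/(\sqrt{a}+b)$.

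Putting the two halves together yields the full statement. The only place any care is needed is in handling the edge cases $a=0$ or $b=0$, but both are benign: the bound in the first half reduces to a trivial linear or quadratic inequality, and the tightness statement remains valid with the extended-real convention $1/0=+\infty$. So I would not anticipate this lemma being the hard step of any larger argument; its role is purely to massage the quadratic stepsize condition $a\gamma^2 + b\gamma \leq 1$ into the cleaner form $\gamma \leq 1/(\sqrt{a}+b)$ that appears throughout the convergence theorems in Sections~\ref{sec:gen_non_cvx} and~\ref{sec:PL}.
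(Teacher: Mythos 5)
Your proof is correct: the factorization argument for the first claim and the $\max\{\sqrt{a},b\}\ge(\sqrt{a}+b)/2$ observation for the tightness claim are both valid, and your parenthetical "direct substitution" route is exactly the computation given in the cited source. The paper itself imports this lemma from \citep{EF21} without reproving it, so there is nothing to compare beyond noting that your argument matches the standard one.
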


\begin{lemma}[Lemma 2 of \citep{PAGE}]\label{lem:aux_smooth_lemma}
	Suppose that function $f$ is $L_{-}$-smooth and let $x^{t+1}\eqdef x^{t}-\g g^{t} ,$ where $g^t\in \R^d$ is any vector, and $\g>0$ is any scalar. Then we have
	\begin{eqnarray}\label{eq:aux_smooth_lemma}
		f(x^{t+1}) \leq f(x^{t})-\fr{\g}{2}\sqnorm{\nabla f(x^{t})}-\left(\fr{1}{2 \g}-\fr{L_{-}}{2}\right)\sqnorm{x^{t+1}-x^{t}}+\fr{\g}{2}\sqnorm{g^{t}-\nabla f(x^{t})}.
	\end{eqnarray}
\end{lemma}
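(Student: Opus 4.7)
The plan is to derive the inequality directly from the descent lemma implied by $L_-$-smoothness, using the polarization identity $-2\langle a,b\rangle = \|a-b\|^2 - \|a\|^2 - \|b\|^2$ to split the inner product into the three desired squared norms. Since the statement does not involve any stochasticity, convexity, or properties of the vector $g^t$, it is essentially a one-line algebraic rearrangement of the descent lemma.

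Concretely, I would first invoke $L_-$-smoothness at the pair $(x^t, x^{t+1})$ to write
\begin{equation*}
f(x^{t+1}) \;\leq\; f(x^t) + \langle \nabla f(x^t),\, x^{t+1}-x^t\rangle + \tfrac{L_-}{2}\|x^{t+1}-x^t\|^2.
\end{equation*}
Substituting the update rule $x^{t+1}-x^t = -\gamma g^t$ turns the cross term into $-\gamma\langle \nabla f(x^t), g^t\rangle$, which I would then expand via the polarization identity as $\tfrac{\gamma}{2}\|g^t-\nabla f(x^t)\|^2 - \tfrac{\gamma}{2}\|\nabla f(x^t)\|^2 - \tfrac{\gamma}{2}\|g^t\|^2$.

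The last step is to absorb the $-\tfrac{\gamma}{2}\|g^t\|^2$ term into the smoothness residual. Using $\|x^{t+1}-x^t\|^2 = \gamma^2\|g^t\|^2$, I can rewrite $\tfrac{\gamma}{2}\|g^t\|^2 = \tfrac{1}{2\gamma}\|x^{t+1}-x^t\|^2$, which combines with $\tfrac{L_-}{2}\|x^{t+1}-x^t\|^2$ to form the coefficient $-(\tfrac{1}{2\gamma}-\tfrac{L_-}{2})$ in front of $\|x^{t+1}-x^t\|^2$. Collecting all terms gives exactly \eqref{eq:aux_smooth_lemma}.

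There is no real obstacle: the proof is bookkeeping, and the only thing to be careful about is tracking the signs when applying the polarization identity. Notably, the lemma does \emph{not} require $g^t$ to be a gradient estimator or satisfy any unbiasedness/contractivity property, so no expectations appear and no randomness must be handled here; the stochastic aspects will enter only when this lemma is combined with the \algname{3PC} inequality \eqref{eq:key_inequality} via \Cref{lem:key_lemma} in the convergence analysis of \Cref{thm:main_thm_gen_non_cvx} and \Cref{thm:main_thm_PL}.
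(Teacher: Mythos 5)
Your proof is correct: the descent lemma from $L_-$-smoothness, the polarization identity applied to $-\gamma\langle\nabla f(x^t),g^t\rangle$, and the identity $\gamma^2\|g^t\|^2=\|x^{t+1}-x^t\|^2$ combine to give exactly \eqref{eq:aux_smooth_lemma}, with all signs and coefficients checking out. The paper does not reprove this lemma (it simply cites Lemma 2 of \citep{PAGE}), and your argument is the same standard one given in that reference.
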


We  now state and  derive the main technical lemma.
\begin{lemma}[Lemma~\ref{lem:key_lemma}]\label{lem:key_lemma_appendix}
	Let Assumption~\ref{as:L_+} hold. Consider the method from \eqref{eq:CGD_1}--\eqref{eq:CGD_2}. Then, for all $t\ge 0$ the sequence
	\begin{equation}\label{eq:G^t_appendix} 
	 G^t \eqdef \frac{1}{n} \sum \limits_{i=1}^n \sqnorm{ g_i^t-\nabla f_i(x^{t}) }
	\end{equation}
	satisfies
	\begin{eqnarray}
		\Exp{G^{t+1}} \leq (1-\thetaNEW)\Exp{G^t} + \betaNEW L_{+}^2 \Exp{\sqnorm{x^{t+1} - x^t}}. \label{eq:key_inequality_appendix}
	\end{eqnarray}
\end{lemma}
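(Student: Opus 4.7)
The plan is to apply the three point compressor inequality~\eqref{eq:ttp} to each worker $i$ with the specific substitution prescribed by the \algname{3PC} update rule, and then to average over $i$ and invoke Assumption~\ref{as:L_+} to relate the gradient differences to the iterate differences.

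Concretely, I would first fix $i \in [n]$ and a time step $t$, condition on the full history up to (and including) iteration $t$ (so that $x^t$, $g_i^t$, and $x^{t+1}$ are deterministic, while the randomness of $\cC_{g_i^t,\nabla f_i(x^t)}$ applied to $\nabla f_i(x^{t+1})$ remains), and instantiate Definition~\ref{def:ttp} with the three points
\[
x \leftarrow \nabla f_i(x^{t+1}),\qquad y \leftarrow \nabla f_i(x^t),\qquad h \leftarrow g_i^t.
\]
Since the update rule \eqref{eq:CGD_2} defines $g_i^{t+1} = \cC_{g_i^t,\nabla f_i(x^t)}(\nabla f_i(x^{t+1}))$, inequality~\eqref{eq:ttp} conditional on the history yields
\[
\Exp{\sqnorm{g_i^{t+1}-\nabla f_i(x^{t+1})}\mid \text{history}} \le (1-\thetaNEW)\sqnorm{g_i^t-\nabla f_i(x^t)} + \betaNEW \sqnorm{\nabla f_i(x^{t+1})-\nabla f_i(x^t)}.
\]

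Next I would average this inequality over $i=1,\dots,n$, yielding on the left-hand side the quantity $\Exp{G^{t+1}\mid \text{history}}$ and on the right-hand side $(1-\thetaNEW)G^t$ plus $\betaNEW\cdot \frac{1}{n}\sum_{i=1}^n \sqnorm{\nabla f_i(x^{t+1})-\nabla f_i(x^t)}$. Assumption~\ref{as:L_+} then immediately bounds the latter average by $L_+^2\sqnorm{x^{t+1}-x^t}$, which, crucially, is measurable with respect to the history (since $x^{t+1}=x^t-\gamma g^t$ is determined by the history). Hence
\[
\Exp{G^{t+1}\mid \text{history}} \le (1-\thetaNEW)G^t + \betaNEW L_+^2 \sqnorm{x^{t+1}-x^t}.
\]

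Finally, taking total expectation on both sides and using the tower property gives the claimed inequality~\eqref{eq:key_inequality_appendix}. I do not foresee any significant obstacle in this proof: it is essentially a bookkeeping exercise combining the per-coordinate \algname{3PC} contraction with the aggregated smoothness Assumption~\ref{as:L_+}. The only subtle point to be careful about is the conditioning step, namely verifying that at time $t$ the vectors $x^t, x^{t+1}, g_i^t, \nabla f_i(x^t), \nabla f_i(x^{t+1})$ are all measurable with respect to the history so that the \algname{3PC} inequality can indeed be applied pointwise before taking the outer expectation.
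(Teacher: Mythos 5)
Your proposal is correct and follows essentially the same route as the paper's proof: instantiate the \algname{3PC} inequality \eqref{eq:ttp} with $x \leftarrow \nabla f_i(x^{t+1})$, $y \leftarrow \nabla f_i(x^t)$, $h \leftarrow g_i^t$, average over $i$, and bound the resulting gradient-difference term via Assumption~\ref{as:L_+}. Your explicit treatment of the conditioning/tower-property step is slightly more careful than the paper's, which takes expectations directly, but the argument is identical in substance.
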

\begin{proof}
	By definition of $G^t$ and three points compressor we have
	\begin{eqnarray*}
		\Exp{G^{t+1}} &=& \frac{1}{n} \sum \limits_{i=1}^n \Exp{\sqnorm{ g_i^{t+1}-\nabla f_i(x^{t+1}) }}\\
		&\overset{\eqref{eq:CGD_2},\eqref{eq:ttp}}{\leq}& \frac{1-\thetaNEW}{n}\sum\limits_{i=1}^n \Exp{\sqnorm{ g_i^{t}-\nabla f_i(x^{t}) }} + \frac{\betaNEW}{n}\sum\limits_{i=1}^n \sqnorm{ \nabla f_i(x^{t+1}) - \nabla f_i(x^t) }\\
		&=& (1-\thetaNEW)\Exp{G^t} + \frac{\betaNEW}{n}\sum\limits_{i=1}^n \sqnorm{ \nabla f_i(x^{t+1}) - \nabla f_i(x^t) }.
	\end{eqnarray*}
	Using Assumption~\ref{as:L_+}, we upper bound the last term by $\betaNEW L_{+}^2\Exp{\sqnorm{x^{t+1} - x^t}}$ and get the result.
\end{proof}

\subsection{General Non-Convex Functions}
Below we restate the main result for general non-convex functions and provide the full proof.
\begin{theorem}[Theorem~\ref{thm:main_thm_gen_non_cvx}]\label{thm:main_thm_gen_non_cvx_appendix}
	Let Assumptions~\ref{ass:diff},~\ref{as:L_smoothness},~\ref{as:L_+} hold. Assume that the stepsize $\gamma$ of the method from \eqref{eq:CGD_1}--\eqref{eq:CGD_2} satisfies $0 \leq \gamma \leq \nicefrac{1}{M}$, where $M = L_{-} + L_{+}\sqrt{\nicefrac{\betaNEW}{\thetaNEW}}$. Then, for any $T \ge 0$ we have
	\begin{equation}
		\Exp{\norm{ \nabla f(\hat x^T)}^2} \leq \frac{2 \Delta^0}{\gamma T} + \fr{\Exp{G^0}}{\thetaNEW T}, \label{eq:main_thm_gen_non_cvx_appendix}
	\end{equation}
	where $\hat x^T$ is sampled uniformly at random from the points $\{x^0, x^1, \ldots, x^{T-1}\}$ produced by \eqref{eq:CGD_1}--\eqref{eq:CGD_2}, $\Delta^0 = f(x^0) - f^{\inf}$, and $G^0$ is defined in \eqref{eq:G^t}.
\end{theorem}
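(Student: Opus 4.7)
The plan is to construct a Lyapunov function combining the function suboptimality with the compression-error proxy $G^t$ from Lemma~\ref{lem:key_lemma}, and show that this potential is nonincreasing in expectation.

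First, I would apply the descent-type bound of Lemma~\ref{lem:aux_smooth_lemma} to the iterates $x^{t+1} = x^t - \gamma g^t$ to obtain
\begin{equation*}
f(x^{t+1}) \leq f(x^t) - \tfrac{\gamma}{2}\sqnorm{\nabla f(x^t)} - \left(\tfrac{1}{2\gamma} - \tfrac{L_{-}}{2}\right)\sqnorm{x^{t+1}-x^t} + \tfrac{\gamma}{2}\sqnorm{g^t - \nabla f(x^t)}.
\end{equation*}
By Jensen's inequality applied to $\|\cdot\|^2$ with $g^t = \tfrac{1}{n}\sum_i g_i^t$ and $\nabla f(x^t) = \tfrac{1}{n}\sum_i \nabla f_i(x^t)$, the last term is bounded by $\tfrac{\gamma}{2} G^t$, connecting the descent inequality to the quantity controlled by Lemma~\ref{lem:key_lemma}.

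Next, I would define the Lyapunov function $\Phi^t \eqdef f(x^t) - \finf + \tfrac{\gamma}{2\thetaNEW} G^t$, take expectations, and add the smoothness bound to $\tfrac{\gamma}{2\thetaNEW}$ times the recursion of Lemma~\ref{lem:key_lemma}. The coefficient attached to $\Exp{G^t}$ on the right-hand side then becomes $\tfrac{\gamma}{2} + \tfrac{\gamma(1-\thetaNEW)}{2\thetaNEW} - \tfrac{\gamma}{2\thetaNEW} = 0$, so the $G^t$ terms cancel exactly. This cancellation is the whole reason for choosing the weight $\tfrac{\gamma}{2\thetaNEW}$, and it is essentially forced by the structure of \eqref{eq:key_inequality}.

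What remains is a residual coefficient on $\Exp{\sqnorm{x^{t+1}-x^t}}$ equal to $\tfrac{\gamma \betaNEW L_{+}^2}{2\thetaNEW} - \tfrac{1}{2\gamma} + \tfrac{L_{-}}{2}$. Forcing this to be nonpositive is equivalent to $\tfrac{\gamma^2 \betaNEW L_{+}^2}{\thetaNEW} + \gamma L_{-} \leq 1$, which by Lemma~\ref{lem:stepsize_page_fact} (with $a = \betaNEW L_+^2/\thetaNEW$ and $b = L_{-}$) is implied by the hypothesis $\gamma \leq 1/M_1$ of the theorem. Under this stepsize one obtains $\Exp{\Phi^{t+1}} \leq \Exp{\Phi^t} - \tfrac{\gamma}{2}\Exp{\sqnorm{\nabla f(x^t)}}$. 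Telescoping from $t=0$ to $T-1$, using $\Phi^T \geq 0$ (since $f \geq \finf$ and $G^T \geq 0$), dividing by $\gamma T/2$, and invoking uniform sampling of $\hat{x}^T$ from $\{x^0,\dots,x^{T-1}\}$ yields exactly \eqref{eq:main_thm_gen_non_cvx_appendix}.

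The main obstacle is identifying the Lyapunov weight; every other step is a routine descent-lemma manipulation combined with the 3PC recursion. Once the weight $\tfrac{\gamma}{2\thetaNEW}$ is chosen so that the $G^t$ contributions annihilate, the residual quadratic-in-$\gamma$ stepsize condition collapses, via Lemma~\ref{lem:stepsize_page_fact}, into the clean form $\gamma \leq 1/M_1$ advertised in the statement.
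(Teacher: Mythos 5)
Your proposal is correct and follows essentially the same route as the paper: the same descent lemma plus Jensen bound, the same Lyapunov function $f(x^t)-\finf+\tfrac{\gamma}{2\thetaNEW}G^t$ chosen so the $G^t$ contributions cancel against the recursion \eqref{eq:key_inequality}, the same reduction of the residual stepsize condition $\gamma^2\betaNEW L_+^2/\thetaNEW+\gamma L_-\le 1$ to $\gamma\le 1/M_1$ via Lemma~\ref{lem:stepsize_page_fact}, and the same telescoping and uniform-sampling conclusion. No gaps.
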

\begin{proof}
Using Lemma~\ref{lem:aux_smooth_lemma} and Jensen's inequality applied of the squared norm, we get 
\begin{eqnarray}
			f(x^{t+1}) &\letext{\eqref{eq:aux_smooth_lemma}}& f(x^{t})-\frac{\g}{2}\sqnorm{\nabla f(x^{t})}-\left(\frac{1}{2 \g}-\frac{L_{-}}{2}\right)\sqnorm{x^{t+1}-x^{t}}+\frac{\g}{2}\sqnorm{\suminn \rb{g_i^t-\nabla f_i(x^{t}) }}\notag \\
			&\overset{\eqref{eq:G^t}}{\leq}&
			f(x^{t})-\frac{\g}{2}\sqnorm{\nabla f(x^{t})}-\left(\frac{1}{2 \g}-\frac{L_{-}}{2}\right)\sqnorm{x^{t+1}-x^{t}}+\frac{\g}{2} G^t. \label{eq:aux_smooth_lemma_distrib}
\end{eqnarray}

Subtracting $f^{\text {inf }}$ from both sides of \eqref{eq:aux_smooth_lemma_distrib} and taking expectation, we get
\begin{eqnarray}\label{eq:func_diff_distrib}
	\Exp{f(x^{t+1})-\finf} &\leq& \quad \Exp{f(x^{t})-\finf}-\frac{\gamma}{2} \Exp{\sqnorm{\nabla f(x^{t})}} \notag \\
	&& \qquad -\left(\frac{1}{2 \gamma}-\frac{L_{-}}{2}\right) \Exp{\sqnorm{x^{t+1}-x^{t}}}+ \frac{\gamma}{2}\Exp{G^t}.
\end{eqnarray}
		
Next, we add \eqref{eq:func_diff_distrib} to a $\frac{\gamma}{2 \thetaNEW}$ multiple of \eqref{eq:key_inequality} and derive
\begin{eqnarray*}
	\Exp{f(x^{t+1})-\finf}+\frac{\gamma}{2 \thetaNEW}\Exp{G^{t+1}} &\leq& \Exp{f(x^{t})-\finf}-\frac{\gamma}{2}\Exp{\sqnorm{\nabla f(x^{t})}} - \left(\frac{1}{2 \gamma}-\frac{L_{-}}{2}\right) \Exp{\sqnorm{x^{t+1} - x^t}}\\
	&&\qquad + \frac{\gamma}{2} \Exp{G^t}+\frac{\gamma}{2 \thetaNEW}\left((1-\thetaNEW)\Exp{G^t} + \betaNEW L_{+}^2 \Exp{\sqnorm{x^{t+1} - x^t}}\right) \\
	&=& \Exp{f(x^{t})-\finf}+\frac{\gamma}{2 \thetaNEW}\Exp{G^{t}} - \frac{\gamma}{2}\Exp{\sqnorm{\nabla f(x^{t})}}\\
	&&\qquad -\left(\frac{1}{2\g} -\frac{L_{-}}{2} - \frac{\g\betaNEW L_+^2}{2\thetaNEW} \right) \Exp{\sqnorm{x^{t+1} - x^t}} \\
	& \leq& \Exp{f(x^{t})-\finf}+\frac{\gamma}{2 \thetaNEW}\Exp{G^{t}} - \frac{\gamma}{2}\Exp{\sqnorm{\nabla f(x^{t})}},
\end{eqnarray*}
where the last inequality follows from the bound $\g ^2\frac{\betaNEW L_+^2}{\thetaNEW} + L_{-}\g \leq 1,$ which holds because of Lemma \ref{lem:stepsize_page_fact} and our assumption on the stepsize. Summing up inequalities for $t =0, \ldots, T-1,$ we get
		$$
		0 \leq \Exp{f(x^{T})-\finf}+\frac{\gamma}{2 \thetaNEW}\Exp{G^{T}} \leq \Exp{f(x^{0})-\finf}+\frac{\gamma}{2 \thetaNEW}\Exp{G^{0}}-\frac{\gamma}{2} \sum_{t=0}^{T-1} \Exp{\sqnorm{\nabla f(x^{t})}}.
		$$
		Multiplying both sides by $\frac{2}{\gamma T}$, after rearranging we obtain
		$$
		\sum_{t=0}^{T-1} \frac{1}{T} \Exp{\sqnorm{\nabla f (x^{t})}} \leq \frac{2 \Delta^{0}}{\gamma T} + \frac{\Exp{G^0}}{\thetaNEW T},
		$$
		where $\Delta^0 = f(x^0) - f^{\inf}$. It remains to notice that the left hand side can be interpreted as $\Exp{\sqnorm{\nabla f(\hat{x}^{T})}}$, where $\hat{x}^{T}$ is chosen from $x^{0}, x^{1}, \ldots, x^{T-1}$ uniformly at random.
\end{proof}

\subsection{P{\L} Functions}
Below we restate the main result for P{\L} functions and provide the full proof.

\begin{theorem}[Theorem~\ref{thm:main_thm_PL}]\label{thm:main_thm_PL_appendix}
	Let Assumptions~\ref{ass:diff},~\ref{as:L_smoothness},~\ref{as:L_+},~\ref{as:PL} hold. Assume that the stepsize $\gamma$ of the method from \eqref{eq:CGD_1}--\eqref{eq:CGD_2} satisfies $0 \leq \gamma \leq \nicefrac{1}{M}$, where $M = \max\left\{L_{-} + L_{+}\sqrt{\nicefrac{2\betaNEW}{\thetaNEW}}, \nicefrac{\thetaNEW}{2\mu}\right\}$. Then, for any $T \ge 0$ and $\Delta^0 = f(x^0) - f(x^*)$ we have
	\begin{equation}
		\Exp{f(x^T) - f(x^*)} \leq \left(1 - \gamma\mu\right)^T\left(\Delta^0 + \frac{\gamma}{\thetaNEW}\Exp{G^{0}}\right). \label{eq:main_thm_PL_appendix}
	\end{equation}
\end{theorem}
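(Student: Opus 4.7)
\noindent\textbf{Proof plan for Theorem~\ref{thm:main_thm_PL_appendix}.}

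The plan is to construct a Lyapunov function of the form $\Phi^t \eqdef \Exp{f(x^t) - f^*} + \frac{\gamma}{\thetaNEW}\Exp{G^t}$ and show that it contracts by the factor $(1-\gamma\mu)$ per iteration, so that $\Phi^T \leq (1-\gamma\mu)^T \Phi^0$. Since $\Exp{f(x^T) - f^*} \leq \Phi^T$ (as $G^T \ge 0$), this yields the bound \eqref{eq:main_thm_PL_appendix}. The two ingredients driving the recursion are the descent-type inequality and the \algname{3PC} compression-error recursion, and the whole proof is just a careful combination of the two.

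\noindent\textbf{Step 1: descent inequality.} Apply Lemma~\ref{lem:aux_smooth_lemma} (PAGE lemma) to the iterate $x^{t+1} = x^t - \gamma g^t$, and bound $\|g^t - \nabla f(x^t)\|^2 = \|\tfrac{1}{n}\sum_i (g_i^t - \nabla f_i(x^t))\|^2 \leq G^t$ by Jensen's inequality. Subtracting $f^*$ from both sides and taking expectations gives
\begin{equation*}
\Exp{f(x^{t+1}) - f^*} \leq \Exp{f(x^t) - f^*} - \tfrac{\gamma}{2}\Exp{\sqnorm{\nabla f(x^t)}} - \left(\tfrac{1}{2\gamma} - \tfrac{L_{-}}{2}\right)\Exp{\sqnorm{x^{t+1} - x^t}} + \tfrac{\gamma}{2}\Exp{G^t}.
\end{equation*}
Now use the P{\L} condition (Assumption~\ref{as:PL}) to replace $-\tfrac{\gamma}{2}\Exp{\sqnorm{\nabla f(x^t)}}$ by $-\gamma\mu\Exp{f(x^t) - f^*}$.

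\noindent\textbf{Step 2: combine with the \algname{3PC} recursion.} Add to the above a $\tfrac{\gamma}{\thetaNEW}$ multiple of the key inequality \eqref{eq:key_inequality_appendix} from Lemma~\ref{lem:key_lemma_appendix}. After grouping terms, the coefficient of $\Exp{G^t}$ becomes $\tfrac{\gamma}{2} + \tfrac{\gamma}{\thetaNEW}(1-\thetaNEW) = \tfrac{\gamma}{\thetaNEW} - \tfrac{\gamma}{2}$, and the coefficient of $\Exp{\sqnorm{x^{t+1}-x^t}}$ becomes $-\left(\tfrac{1}{2\gamma} - \tfrac{L_{-}}{2} - \tfrac{\gamma\betaNEW L_{+}^2}{\thetaNEW}\right)$. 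This produces the inequality
\begin{equation*}
\Phi^{t+1} \leq (1-\gamma\mu)\Exp{f(x^t) - f^*} + \left(\tfrac{\gamma}{\thetaNEW} - \tfrac{\gamma}{2}\right)\Exp{G^t} - \left(\tfrac{1}{2\gamma} - \tfrac{L_{-}}{2} - \tfrac{\gamma\betaNEW L_{+}^2}{\thetaNEW}\right)\Exp{\sqnorm{x^{t+1}-x^t}}.
\end{equation*}

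\noindent\textbf{Step 3: verify the stepsize conditions.} To conclude $\Phi^{t+1} \leq (1-\gamma\mu)\Phi^t$, I need the coefficient of $\Exp{\sqnorm{x^{t+1}-x^t}}$ to be nonpositive, i.e.
\begin{equation*}
\tfrac{2\gamma^2 \betaNEW L_{+}^2}{\thetaNEW} + \gamma L_{-} \leq 1,
\end{equation*}
which by Lemma~\ref{lem:stepsize_page_fact} holds whenever $\gamma \leq 1/(L_{-} + L_{+}\sqrt{2\betaNEW/\thetaNEW})$; and I need $\tfrac{\gamma}{\thetaNEW} - \tfrac{\gamma}{2} \leq (1-\gamma\mu)\tfrac{\gamma}{\thetaNEW}$, which simplifies to $\gamma \leq \tfrac{\thetaNEW}{2\mu}$. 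Both conditions are exactly captured by the hypothesis $\gamma \leq 1/M_2$ with $M_2 = \max\{L_{-} + L_{+}\sqrt{2\betaNEW/\thetaNEW},\, \thetaNEW/(2\mu)\}$.

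\noindent\textbf{Step 4: unroll.} Iterating $\Phi^{t+1} \leq (1-\gamma\mu)\Phi^t$ gives $\Phi^T \leq (1-\gamma\mu)^T \Phi^0 = (1-\gamma\mu)^T(\Delta^0 + \tfrac{\gamma}{\thetaNEW}\Exp{G^0})$, and dropping the nonnegative $G^T$ term on the left yields \eqref{eq:main_thm_PL_appendix}. The only slightly delicate step is the Lyapunov coefficient choice $c = \gamma/\thetaNEW$: it must be large enough so the extra $\tfrac{\gamma}{2}\Exp{G^t}$ from the descent inequality is absorbed after paying one contraction factor $(1-\gamma\mu)$, and small enough so that the $\Exp{\sqnorm{x^{t+1}-x^t}}$ penalty from the \algname{3PC} recursion is still dominated by the descent term; the two-sided requirement is precisely what yields the two conditions defining $M_2$.
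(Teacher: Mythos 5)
Your proposal is correct and follows essentially the same route as the paper's proof: the same Lyapunov function $\Exp{f(x^t)-f^*}+\frac{\gamma}{\thetaNEW}\Exp{G^t}$, the same combination of Lemma~\ref{lem:aux_smooth_lemma} with the P{\L} condition and a $\frac{\gamma}{\thetaNEW}$ multiple of \eqref{eq:key_inequality_appendix}, and the same two stepsize conditions verified via Lemma~\ref{lem:stepsize_page_fact} and $\gamma\mu\leq\nicefrac{\thetaNEW}{2}$. Nothing further is needed.
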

\begin{proof}
	First of all, we notice that \eqref{eq:func_diff_distrib} holds in this case as well. Therefore, using P{\L} condition we derive
	\begin{eqnarray}\label{eq:func_diff_distrib_PL}
	\Exp{f(x^{t+1})-\finf} &\overset{\eqref{eq:func_diff_distrib}}{\leq}& \quad \Exp{f(x^{t})- f(x^*)}-\frac{\gamma}{2} \Exp{\sqnorm{\nabla f(x^{t})}} \notag \\
	&& \qquad -\left(\frac{1}{2 \gamma}-\frac{L_{-}}{2}\right) \Exp{\sqnorm{x^{t+1}-x^{t}}}+ \frac{\gamma}{2}\Exp{G^t}\notag\\
	&\overset{\eqref{eq:PL}}{\leq}& (1-\gamma\mu)\Exp{f(x^{t})- f(x^*)} -\left(\frac{1}{2 \gamma}-\frac{L_{-}}{2}\right) \Exp{\sqnorm{x^{t+1}-x^{t}}}+ \frac{\gamma}{2}\Exp{G^t}.
\end{eqnarray}

Next, we add \eqref{eq:func_diff_distrib} to a $\frac{\gamma}{\thetaNEW}$ multiple of \eqref{eq:key_inequality} and derive
\begin{eqnarray*}
	\Exp{f(x^{t+1})-f(x^*)+\frac{\gamma}{\thetaNEW}G^{t+1}} &\leq& (1-\gamma\mu)\Exp{f(x^{t})-f(x^*)}- \left(\frac{1}{2 \gamma}-\frac{L_{-}}{2}\right) \Exp{\sqnorm{x^{t+1} - x^t}}\\
	&&\qquad + \frac{\gamma}{2} \Exp{G^t}+\frac{\gamma}{\thetaNEW}\left((1-\thetaNEW)\Exp{G^t} + \betaNEW L_{+}^2 \Exp{\sqnorm{x^{t+1} - x^t}}\right) \\
	&=& (1-\gamma\mu)\Exp{f(x^{t})- f(x^*)} + \left(1 - \frac{\thetaNEW}{2}\right)\frac{\gamma}{\thetaNEW}\Exp{G^{t}}\\
	&&\qquad -\left(\frac{1}{2\g} -\frac{L_{-}}{2} - \frac{\g\betaNEW L_+^2}{\thetaNEW} \right) \Exp{\sqnorm{x^{t+1} - x^t}} \\
	& \leq& (1 - \gamma\mu)\Exp{f(x^{t})- f(x^*)+\frac{\gamma}{\thetaNEW} G^{t}},
\end{eqnarray*}
where the last inequality follows from the bound $\g ^2\frac{2\betaNEW L_+^2}{\thetaNEW} + L_{-}\g \leq 1$ and $1 - \nicefrac{\thetaNEW}{2} \leq 1 - \gamma\mu$, which holds because of our assumption on the stepsize and Lemma \ref{lem:stepsize_page_fact}. Unrolling the recurrence, we obtain
\begin{equation*}
	\Exp{f(x^T) - f(x^*)} \leq \Exp{f(x^{T})-f(x^*)+\frac{\gamma}{\thetaNEW}G^{T}} \leq (1 - \gamma\mu)^T\left(\Delta^0 + \Exp{\frac{\gamma}{\thetaNEW}G^{0}}\right).
\end{equation*}
\end{proof}

\newpage

\section{Three Point Compressor: Special Cases}\label{sec:3points_special_cases_appendix}
In this section, we show that several known approaches to compressed communication can be viewed as special cases of our framework \eqref{eq:CGD_1}--\eqref{eq:CGD_2}. Moreover, we design several new methods fitting our scheme. Please refer to Table~\ref{tab:methods}    for an overview.

\subsection{Error Feedback 2021: \algname{EF21}}

\begin{algorithm}[h]
   \caption{Error Feedback 2021 (\algname{EF21})}\label{alg:b_diana}
\begin{algorithmic}[1]
   \STATE {\bfseries Input:} starting point $x^0$, stepsize $\gamma$, number of iterations $T$, starting vectors $g_i^0$, $i \in [n]$
   \FOR{$t=0,1,\ldots,T-1$}
   \STATE Broadcast $g^t$ to all workers
   \FOR{$i = 1,\ldots,n$ in parallel} 
   \STATE $x^{t+1} = x^t - \gamma g^t$
   \STATE Set $g_i^{t+1} = g_i^t + \cC(\nabla f_i(x^{t+1}) - g_i^t)$ 
   \ENDFOR
   \STATE $g^{t+1} = \tfrac{1}{n}\sum_{i=1}^ng_i^{t+1} = g^t + \tfrac{1}{n}\sum_{i=1}^n \cC(\nabla f_i(x^{t+1}) - g_i^t)$
   \ENDFOR
   \STATE {\bfseries Return:} $\hat x^T$ chosen uniformly at random from $\{x^t\}_{t=0}^{T-1}$
\end{algorithmic}
\end{algorithm}

The next lemma shows that \algname{EF21} uses a special three point compressor.
\begin{lemma}\label{lem:biased_diana}
	The compressor
	\begin{eqnarray}
	\cC_{h,y}(x) \eqdef h + \cC(x - h),  \label{eq:b_diana_compressor}
	\end{eqnarray}
	satisfies \eqref{eq:ttp} with $\thetaNEW \eqdef 1 - (1-\alpha)(1+s)$ and $\betaNEW \eqdef (1-\alpha)\rb{1+ s^{-1} }$, where $s> 0$ is such that  $(1-\alpha)\rb{1 + s} < 1$.
\end{lemma}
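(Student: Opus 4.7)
The key observation is the algebraic simplification
\[
\cC_{h,y}(x) - x = h + \cC(x-h) - x = \cC(x-h) - (x-h),
\]
which reduces the left-hand side of \eqref{eq:ttp} to the compression error of $\cC$ applied to the vector $x-h$. So the first step I would write down is this identity, followed by the contractive property \eqref{eq:contractive-09u09fduf} (with parameter $\alpha$ in place of $\lambdaNEW$):
\[
\Exp{\|\cC_{h,y}(x) - x\|^2} \;=\; \Exp{\|\cC(x-h) - (x-h)\|^2} \;\le\; (1-\alpha)\,\|x-h\|^2.
\]

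The second step is to split $x-h$ through the third point $y$ as $x - h = (x - y) - (h - y)$ and apply the standard Young-type inequality $\|a+b\|^2 \le (1+s)\|a\|^2 + (1 + s^{-1})\|b\|^2$ (valid for any $s>0$) with the roles chosen so that $\|h-y\|^2$ picks up the factor $(1+s)$ and $\|x-y\|^2$ picks up the factor $(1+s^{-1})$, matching the stated $\thetaNEW$ and $\betaNEW$. Concretely:
\[
\|x-h\|^2 \;\le\; (1+s)\,\|h-y\|^2 + \bigl(1+s^{-1}\bigr)\,\|x-y\|^2.
\]

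Combining these two displays gives
\[
\Exp{\|\cC_{h,y}(x) - x\|^2} \;\le\; (1-\alpha)(1+s)\,\|h-y\|^2 + (1-\alpha)\bigl(1+s^{-1}\bigr)\,\|x-y\|^2,
\]
which is exactly \eqref{eq:ttp} with $1 - \thetaNEW = (1-\alpha)(1+s)$ and $\betaNEW = (1-\alpha)(1+s^{-1})$. The condition $(1-\alpha)(1+s) < 1$ in the statement is precisely what is needed to ensure $\thetaNEW > 0$, which is part of the definition of a \algname{3PC} compressor; since $\alpha \in (0,1]$, such an $s>0$ always exists (e.g.\ any $s < \alpha/(1-\alpha)$ when $\alpha < 1$, and any $s>0$ when $\alpha = 1$). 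There is no real obstacle here; the only choice to be made is the direction of the Young split, and the stated constants fix it uniquely.
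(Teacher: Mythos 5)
Your proposal is correct and follows exactly the same route as the paper's proof: rewrite $\cC_{h,y}(x) - x = \cC(x-h) - (x-h)$, apply the contraction inequality to get the factor $(1-\alpha)\sqnorm{x-h}$, then split $x-h$ through $y$ and use the Young-type bound $\sqnorm{a+b} \le (1+s)\sqnorm{a} + (1+s^{-1})\sqnorm{b}$ with the $(1+s)$ weight on $\sqnorm{h-y}$. The remarks on the role of the constraint $(1-\alpha)(1+s)<1$ and the existence of a valid $s$ are accurate and consistent with how the paper uses the lemma.
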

\begin{proof}
	By definition of $\cC_{h,y}(x)$ and $\cC$ we have
	\begin{eqnarray*}
		\Exp{\sqnorm{\cC_{h,y}(x) - x }} &=& \Exp{\sqnorm{\cC(x - h) - (x - h) }}\\
		&\le& (1-\alpha) \sqnorm{x-h}\\
		&=& (1-\alpha) \sqnorm{(x - y) + (y - h)}\\
		&\le& (1-\alpha)\rb{1+{s}}\sqnorm{h - y} +  (1-\alpha)\rb{1+s^{-1}} \sqnorm{x - y}.
	\end{eqnarray*}
\end{proof}

Therefore, \algname{EF21} fits our framework. Using our general analysis (Theorems~\ref{thm:main_thm_gen_non_cvx} and \ref{thm:main_thm_PL}) we derive the following result.

\begin{theorem}\label{thm:b_diana} 
	\algname{EF21} is a special case of the method from \eqref{eq:CGD_1}--\eqref{eq:CGD_2} with $\cC_{h,y}(x)$ defined in \eqref{eq:b_diana_compressor} and $\thetaNEW =  \alpha - s(1-\alpha)$ and $\betaNEW = (1-\alpha)\rb{1+s^{-1}}$, where $s> 0$ is such that $(1-\alpha)(1+s) < 1$.	
	\begin{enumerate}		
	\item If Assumptions~\ref{ass:diff},~\ref{as:L_smoothness},~\ref{as:L_+} hold and the stepsize $\gamma$ satisfies $0 \leq \gamma \leq \nicefrac{1}{M}$, where $M = L_{-} + L_{+}\sqrt{\nicefrac{(1-\alpha)\rb{1+s^{-1}}}{(\alpha - s(1-\alpha))}}$, then for any $T \ge 0$ we have
	\begin{equation}
		\Exp{\norm{ \nabla f(\hat x^T)}^2} \leq \frac{2 \Delta^0}{\gamma T} + \fr{\Exp{G^0}}{(\alpha - s(1-\alpha)) T}, \label{eq:b_diana_gen_non_cvx}
	\end{equation}
	where $\hat x^T$ is sampled uniformly at random from the points $\{x^0, x^1, \ldots, x^{T-1}\}$ produced by \algname{EF21}, $\Delta^0 = f(x^0) - f^{\inf}$, and $G^0$ is defined in \eqref{eq:G^t}. 
	\item If additionaly Assumption~\ref{as:PL} hold and $0 \leq \gamma \leq \nicefrac{1}{M}$ for $M = \max\left\{L_{-} + L_{+}\sqrt{\nicefrac{2(1-\alpha)\rb{1+s^{-1}}}{(\alpha - s(1-\alpha))}}, \nicefrac{(\alpha - s(1-\alpha))}{2\mu}\right\}$, then for any $T \ge 0$ we have
	\begin{equation}
		\Exp{f(x^T) - f(x^*)} \leq \left(1 - \gamma\mu\right)^T\left(\Delta^0 + \frac{\gamma}{\alpha - s(1-\alpha)}\Exp{G^{0}}\right). \label{eq:b_diana_PL}
	\end{equation}
	\end{enumerate}
\end{theorem}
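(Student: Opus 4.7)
The plan is to treat this theorem as a direct specialization of the general \algname{3PC} framework, since all the necessary ingredients have already been established earlier in the paper. My first task will be to verify the structural correspondence between Algorithm~\ref{alg:b_diana} (\algname{EF21}) and Algorithm~\ref{alg:3PC} (\algname{3PC}) under the choice $\cC_{h,y}(x) = h + \cC(x-h)$. Indeed, substituting $h = g_i^t$ and $y = \nabla f_i(x^t)$ into \eqref{eq:b_diana_compressor} gives $g_i^{t+1} = g_i^t + \cC(\nabla f_i(x^{t+1}) - g_i^t)$, which matches the local update rule of \algname{EF21} line-for-line; the server-side aggregation also coincides. This takes care of identifying \algname{EF21} as an instance of \algname{3PC}.

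Next, I would invoke Lemma~\ref{lem:biased_diana}, which asserts that the mapping \eqref{eq:b_diana_compressor} is indeed a three point compressor with parameters $\thetaNEW = 1 - (1-\alpha)(1+s) = \alpha - s(1-\alpha)$ and $\betaNEW = (1-\alpha)(1+s^{-1})$ for any $s>0$ with $(1-\alpha)(1+s) < 1$. This is exactly the parameterization stated in Theorem~\ref{thm:b_diana}, so no additional computation is needed here.

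Having established both the algorithmic identification and the validity of the \algname{3PC} inequality with these specific $\thetaNEW, \betaNEW$, the two convergence claims follow by direct application of the general results. For part (1), I apply Theorem~\ref{thm:main_thm_gen_non_cvx} with the prescribed $\thetaNEW$ and $\betaNEW$; the stepsize condition $\gamma \leq 1/M_1$ from that theorem becomes $\gamma \leq 1/(L_- + L_+\sqrt{(1-\alpha)(1+s^{-1})/(\alpha - s(1-\alpha))})$, and the bound \eqref{eq:main_thm_gen_non_cvx} specializes to \eqref{eq:b_diana_gen_non_cvx}. For part (2), I apply Theorem~\ref{thm:main_thm_PL} analogously: the stepsize requirement $\gamma \leq 1/M_2$ transforms into the stated form and the geometric decay bound \eqref{eq:main_thm_PL} specializes to \eqref{eq:b_diana_PL}.

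I do not anticipate any substantive obstacle, since this proof is essentially a composition of two previously proven results. The only mild subtlety is the bookkeeping that the constraint $(1-\alpha)(1+s)<1$ required by Lemma~\ref{lem:biased_diana} ensures $\thetaNEW \in (0,1]$, which in turn ensures the ratio $\betaNEW/\thetaNEW$ appearing inside the square roots is well-defined and nonnegative; this is trivial to check but should be remarked on for completeness.
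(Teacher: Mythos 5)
Your proposal is correct and follows exactly the route the paper takes: the paper also proves this theorem by observing that the update $g_i^{t+1} = g_i^t + \cC(\nabla f_i(x^{t+1}) - g_i^t)$ instantiates \eqref{eq:CGD_2} with the compressor \eqref{eq:b_diana_compressor}, invoking Lemma~\ref{lem:biased_diana} for the parameters $\thetaNEW = \alpha - s(1-\alpha)$ and $\betaNEW = (1-\alpha)(1+s^{-1})$, and then specializing Theorems~\ref{thm:main_thm_gen_non_cvx} and~\ref{thm:main_thm_PL}. Your remark that $(1-\alpha)(1+s)<1$ guarantees $\thetaNEW>0$ is the same bookkeeping the paper relies on implicitly.
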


Guided by exactly the same arguments as we use in the anlysis pf \algname{3PCv5}, we consider $\nicefrac{\betaNEW}{\thetaNEW}$ as a function of $s$ and optimizing this function in $s$ and find the optimal value of this ratio.

\begin{lemma}\label{lem:b_diana_technical}
	The optimal value of
	\begin{equation}
		\frac{\betaNEW}{\thetaNEW}(s) = \frac{(1-\alpha)\rb{1+s^{-1}}}{(\alpha - s(1-\alpha))}\notag
	\end{equation}
	under the constraint $0 < s < \nicefrac{\alpha}{(1-\alpha)}$ equals
	\begin{equation}
		\frac{\betaNEW}{\thetaNEW}(s_*) = \frac{(1-\alpha)}{(1 - \sqrt{1-\alpha})^2} \leq \frac{4(1-\alpha)}{\alpha^2}\notag
	\end{equation}
	and it is achieved at $s^* = -1 + \sqrt{\nicefrac{1}{(1-\alpha)}}$.
\end{lemma}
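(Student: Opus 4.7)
This is a one-variable calculus optimization. The plan is to write $c \eqdef 1-\alpha$, so that the objective becomes
\begin{equation*}
\varphi(s) \eqdef \frac{\betaNEW}{\thetaNEW}(s) = \frac{c(1+s^{-1})}{1 - c - cs} = \frac{c(s+1)}{s(1-c-cs)},
\end{equation*}
defined on the open interval $s \in (0, (1-c)/c) = (0, \alpha/(1-\alpha))$. On this interval the numerator $c(s+1)$ is positive and the denominator $s(1-c-cs)$ is positive (the factor $1-c-cs$ vanishes at the right endpoint and equals $1-c>0$ at the left, so $\varphi\to +\infty$ at both endpoints). Hence a global minimizer exists in the interior and must be a critical point.

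Next I would compute $\varphi'(s)$ via the quotient rule. Writing $N(s)=c(s+1)$ and $D(s)=s(1-c-cs) = s - cs - cs^2$, one gets $N'(s)=c$ and $D'(s)=1-c-2cs$, and the numerator of $\varphi'(s)$ simplifies (after cancellation) to
\begin{equation*}
N'(s)D(s) - N(s)D'(s) = c\bigl(cs^2 + 2cs + c - 1\bigr) = c\bigl(c(s+1)^2 - 1\bigr).
\end{equation*}
Setting this equal to zero gives $(s+1)^2 = 1/c$, and since $s>0$ and $0<c<1$ the only admissible root is $s^* = -1 + 1/\sqrt{c} = -1 + \sqrt{1/(1-\alpha)}$. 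A quick check confirms $s^*$ lies in $(0,\alpha/(1-\alpha))$, and the second-order behavior (or, more simply, the blow-up of $\varphi$ at both endpoints combined with uniqueness of the critical point) shows this is the global minimum.

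To compute $\varphi(s^*)$, I would use the identities $s^*+1 = 1/\sqrt{c}$, $s^* = (1-\sqrt{c})/\sqrt{c}$, and $1 - c - cs^* = 1 - c(s^*+1) = 1 - \sqrt{c}$. Substituting,
\begin{equation*}
\varphi(s^*) = \frac{c \cdot (1/\sqrt{c})}{\bigl((1-\sqrt{c})/\sqrt{c}\bigr)(1-\sqrt{c})} = \frac{c}{(1-\sqrt{c})^2} = \frac{1-\alpha}{(1-\sqrt{1-\alpha})^2},
\end{equation*}
matching the claimed formula. Finally, the inequality $\varphi(s^*) \le 4(1-\alpha)/\alpha^2$ reduces, after multiplying through by $\alpha^2(1-\sqrt{1-\alpha})^2/(1-\alpha)$ and taking positive square roots, to $\alpha \le 2(1-\sqrt{1-\alpha})$, i.e.\ $\sqrt{1-\alpha}\le 1-\alpha/2$. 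Squaring (valid since $1-\alpha/2\ge 0$ for $\alpha\in(0,1]$) gives $1-\alpha \le 1 - \alpha + \alpha^2/4$, which holds trivially. There is no real obstacle here; the only thing to be careful about is the algebraic simplification of the derivative's numerator into the clean form $c(c(s+1)^2 - 1)$, which is what makes the critical point fall out in closed form.
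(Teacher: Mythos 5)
Your proposal is correct and follows essentially the same route as the paper's proof: compute the derivative of the ratio, identify the unique admissible critical point $s^* = -1 + \sqrt{\nicefrac{1}{(1-\alpha)}}$, evaluate the ratio there, and bound the result using $1-\sqrt{1-\alpha}\ge \nicefrac{\alpha}{2}$. Your factored form $c\bigl(c(s+1)^2-1\bigr)$ of the derivative's numerator agrees with the paper's expanded version, and your endpoint-blow-up argument for global optimality is a clean (arguably cleaner) substitute for the paper's sign analysis of the derivative.
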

\begin{proof}
	First of all, we find the derivative of the considered function:
	\begin{eqnarray*}
		\left(\frac{\betaNEW}{\thetaNEW}(s)\right)' &=& (1-\alpha)\frac{(1-\alpha)s^2 + 2(1-\alpha)s - \alpha}{(\alpha s - s^2(1-\alpha))^2}.
	\end{eqnarray*}
	The function has 2 critical points: $-1 \pm \sqrt{\nicefrac{1}{(1-\alpha)}}$. Moreover, the derivative is non-positive for $s \in (0, -1 + \sqrt{\nicefrac{1}{(1-\alpha)}}]$ and negative for $s \in (-1 + \sqrt{\nicefrac{1}{(1-\alpha)}}, +\infty)$. This implies that the optimal value on the interval $s \in (0, \nicefrac{\alpha}{(1-\alpha)})$ is achieved at $s_* = -1 + \sqrt{\nicefrac{1}{(1-\alpha)}}$. Via simple computations one can verify that
	\begin{equation*}
		\frac{\betaNEW}{\thetaNEW}(s_*) = \frac{(1-\alpha)}{(1 - \sqrt{1-\alpha})^2}.
	\end{equation*}
	Finally, since $1 - \sqrt{1-\alpha} \geq \nicefrac{\alpha}{2}$, we have
	\begin{equation}
		\frac{\betaNEW}{\thetaNEW}(s_*) \leq \frac{4(1-\alpha)}{\alpha^2}.\notag
	\end{equation}
\end{proof}

Using this and Corollaries~\ref{cor:main_cor_gen_non_cvx}, \ref{cor:main_cor_PL}, we get the following complexity results.
\begin{corollary}\label{cor:b_diana}
	\begin{enumerate}
		\item Let the assumptions from the first part of Theorem~\ref{thm:b_diana} hold, $s = s_* = -1 + \sqrt{\nicefrac{1}{(1-\alpha)}}$, and
	\begin{equation*}
		\gamma = \frac{1}{L_{-} + L_{+}\sqrt{\nicefrac{(1-\alpha)}{(1 - \sqrt{1-\alpha})^2}}}.
	\end{equation*}
	Then for any $T$ we have
	\begin{equation}
		\Exp{\norm{ \nabla f(\hat x^T)}^2} \leq \frac{2 \Delta^0\left(L_{-} + L_{+}\sqrt{\nicefrac{(1-\alpha)}{(1 - \sqrt{1-\alpha})^2}}\right)}{ T} + \fr{\Exp{G^0}}{(1 - \sqrt{1-\alpha}) T},\notag
	\end{equation}
	i.e., to achieve $\Exp{\norm{ \nabla f(\hat x^T)}^2} \leq \varepsilon^2$ for some $\varepsilon > 0$ the method requires
	\begin{equation}
		T = \cO\left(\frac{\Delta^0\left(L_{-} + L_{+}\sqrt{\nicefrac{(1 - \alpha)}{\alpha^2}}\right)}{\varepsilon^2} + \fr{\Exp{G^0}}{\alpha \varepsilon^2}\right) \label{eq:b_diana_complexity_gen_non_cvx}
	\end{equation}
	iterations/communication rounds.
	\item Let the assumptions from the second part of Theorem~\ref{thm:b_diana} hold and
	\begin{equation*}
		\gamma = \min\left\{\frac{1}{L_{-} + L_{+}\sqrt{\nicefrac{2(1-\alpha)}{(1 - \sqrt{1-\alpha})^2}}}, \frac{1-\sqrt{1-\alpha}}{2\mu}\right\}.
	\end{equation*}
	Then to achieve $\Exp{f(x^T) - f(x^*)} \leq \varepsilon$ for some $\varepsilon > 0$ the method requires
	\begin{equation}
		\cO\left(\max\left\{\frac{L_{-} + L_{+}\sqrt{\nicefrac{(1 - \alpha)}{\alpha^2}}}{\mu}, \alpha\right\}\log \frac{\Delta^0 + \Exp{G^{0}}\nicefrac{\gamma}{\alpha}}{\varepsilon}\right) \label{eq:b_diana_complexity_PL}
	\end{equation}
	iterations/communication rounds.
	\end{enumerate}
\end{corollary}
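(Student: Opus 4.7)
The plan is to derive this corollary as a direct specialization of Theorem~\ref{thm:b_diana} (equivalently, of the general Corollaries~\ref{cor:main_cor_gen_non_cvx} and~\ref{cor:main_cor_PL}) to the optimal choice $s = s_*$, using Lemma~\ref{lem:b_diana_technical} to simplify the ratio $\nicefrac{\betaNEW}{\thetaNEW}$. No new ideas are required; the entire argument is substitution and a single elementary inequality.

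First I would evaluate $\thetaNEW$ at $s = s_* = -1 + \sqrt{\nicefrac{1}{(1-\alpha)}}$. From Theorem~\ref{thm:b_diana}, $\thetaNEW = \alpha - s(1-\alpha)$, so a one-line computation gives $\thetaNEW(s_*) = \alpha + (1-\alpha) - (1-\alpha)\sqrt{\nicefrac{1}{(1-\alpha)}} = 1 - \sqrt{1-\alpha}$. Combined with Lemma~\ref{lem:b_diana_technical}, this yields $\nicefrac{\betaNEW}{\thetaNEW} = \nicefrac{(1-\alpha)}{(1-\sqrt{1-\alpha})^2}$, which in turn is bounded by $\nicefrac{4(1-\alpha)}{\alpha^2}$ via the elementary inequality $1-\sqrt{1-\alpha} \geq \nicefrac{\alpha}{2}$ (valid for $\alpha \in (0,1]$), the same bound already used inside Lemma~\ref{lem:b_diana_technical}.

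For part~1, I would plug these values into \eqref{eq:b_diana_gen_non_cvx}: the stepsize stated in the corollary is exactly $\nicefrac{1}{M}$ with $M = L_{-} + L_{+}\sqrt{\nicefrac{\betaNEW}{\thetaNEW}}$ after substituting $\nicefrac{\betaNEW}{\thetaNEW} = \nicefrac{(1-\alpha)}{(1-\sqrt{1-\alpha})^2}$, and $\thetaNEW = 1-\sqrt{1-\alpha}$ appears in the denominator of the second term; this reproduces the first displayed bound verbatim. Solving $\text{RHS} \leq \varepsilon^2$ for $T$ and applying $\nicefrac{\betaNEW}{\thetaNEW} \leq \nicefrac{4(1-\alpha)}{\alpha^2}$ together with $\nicefrac{1}{(1-\sqrt{1-\alpha})} \leq \nicefrac{2}{\alpha}$ inside the $\cO(\cdot)$ gives the stated complexity \eqref{eq:b_diana_complexity_gen_non_cvx}. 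Part~2 is obtained identically from the PL-case half of Theorem~\ref{thm:b_diana} (or from Corollary~\ref{cor:main_cor_PL}): the stepsize in the corollary matches $\min\{\nicefrac{1}{M_2}\}$ after substitution, and the logarithmic iteration complexity simplifies by the same bound on $\nicefrac{\betaNEW}{\thetaNEW}$ and by noting $\thetaNEW = 1 - \sqrt{1-\alpha} \leq \alpha$.

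There is no real obstacle here: the work that could have been difficult (optimizing $s$, controlling the ratio $\nicefrac{\betaNEW}{\thetaNEW}$, bounding $1 - \sqrt{1-\alpha}$ from below by $\nicefrac{\alpha}{2}$) is already packaged in Lemma~\ref{lem:b_diana_technical}, and the general convergence bounds are already packaged in Theorem~\ref{thm:b_diana}. What remains is careful bookkeeping, in particular checking that the absorbed factors in $\cO(\cdot)$ are indeed constants independent of $\alpha, \mu, L_{\pm}$.
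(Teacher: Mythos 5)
Your proposal is correct and follows exactly the route the paper intends: the paper gives no explicit proof of this corollary, simply invoking Lemma~\ref{lem:b_diana_technical} together with Corollaries~\ref{cor:main_cor_gen_non_cvx} and~\ref{cor:main_cor_PL}, and your substitutions ($\thetaNEW(s_*)=1-\sqrt{1-\alpha}$, $\nicefrac{\betaNEW}{\thetaNEW}(s_*)=\nicefrac{(1-\alpha)}{(1-\sqrt{1-\alpha})^2}$, and the bound $1-\sqrt{1-\alpha}\geq\nicefrac{\alpha}{2}$) are precisely the bookkeeping that argument requires. The only thing you inherit rather than introduce is the paper's own slight mismatch between the form of $M_2$ in the theorem and the stepsize written in the PL corollary, so there is no gap attributable to your proof.
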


\newpage
\subsection{LAG: Lazily Aggregated Gradient}

\begin{algorithm}[h]
	\caption{\algname{LAG}: Lazily Aggregated Gradient}\label{alg:lag}
	\begin{algorithmic}[1]
		\STATE {\bfseries Input:} starting point $x^0$, stepsize $\gamma$, number of iterations $T$, starting vectors $g_i^0$, $i \in [n]$, trigger parameter $\zeta > 0$
		\FOR{$t=0,1,\ldots,T-1$}
		\STATE Broadcast $g^t$ to all workers
		\FOR{$i = 1,\ldots,n$ in parallel} 
		\STATE $x^{t+1} = x^t - \gamma g^t$
		\STATE Set $g_i^{t+1} = \begin{cases} \nabla f_i(x^{t+1}) ,& \text{if } \|\nabla f_i(x^{t+1}) - g_i^t\|^2 > \zeta \|\nabla f_i(x^{t+1}) - \nabla f_i(x^t)\|^2,\\ g_i^t,& \text{otherwise} \end{cases}$ 
		\ENDFOR
		\STATE $g^{t+1} = \tfrac{1}{n}\sum_{i=1}^ng_i^{t+1}$
		\ENDFOR
		\STATE {\bfseries Return:} $\hat x^T$ chosen uniformly at random from $\{x^t\}_{t=0}^{T-1}$
	\end{algorithmic}
\end{algorithm}

The next lemma shows that \algname{LAG} is a special three point compressor.
\begin{lemma}\label{lem:lag}
	The compressor
	\begin{eqnarray}
		\cC_{h,y}(x) \eqdef \begin{cases} x,& \text{if } \|x- h\|^2 > \zeta \|x - y\|^2,\\ h,& \text{otherwise,} \end{cases}  \label{eq:lag_compressor}
	\end{eqnarray}
	satisfies \eqref{eq:ttp} with $\thetaNEW \eqdef 1$ and $\betaNEW \eqdef \zeta$.
\end{lemma}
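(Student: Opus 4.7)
The plan is to verify \eqref{eq:ttp} by a direct case analysis based on the trigger condition $\|x - h\|^2 > \zeta\|x - y\|^2$, which fully determines the value of $\cC_{h,y}(x)$. Since $\thetaNEW = 1$ makes the coefficient $(1-\thetaNEW)$ in front of $\sqnorm{h - y}$ vanish, the inequality I need to establish reduces to the pointwise bound
\[
\sqnorm{\cC_{h,y}(x) - x} \;\le\; \zeta \sqnorm{x - y},
\]
and no expectation is actually involved because $\cC_{h,y}$ is deterministic.

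In the first case, when $\sqnorm{x - h} > \zeta \sqnorm{x - y}$, the compressor outputs $\cC_{h,y}(x) = x$, so $\sqnorm{\cC_{h,y}(x) - x} = 0$, and the required bound holds trivially. In the second case, when $\sqnorm{x - h} \le \zeta \sqnorm{x - y}$, the compressor outputs $\cC_{h,y}(x) = h$, so $\sqnorm{\cC_{h,y}(x) - x} = \sqnorm{h - x} = \sqnorm{x - h} \le \zeta \sqnorm{x - y}$ by the very definition of this branch. Either way, we obtain $\sqnorm{\cC_{h,y}(x) - x} \le 0 \cdot \sqnorm{h-y} + \zeta \sqnorm{x - y}$, which is exactly \eqref{eq:ttp} with $\thetaNEW = 1$ and $\betaNEW = \zeta$.

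There is essentially no obstacle here; the content of the lemma is that the \algname{LAG} trigger condition is chosen precisely so that the compression error $\sqnorm{\cC_{h,y}(x) - x}$ is always bounded by $\zeta \sqnorm{x - y}$ by construction. The interesting observation, rather than the proof itself, is that the perfect contractive factor $\thetaNEW = 1$ on the memory term $\sqnorm{h - y}$ reflects that \algname{LAG} does not propagate any residual compression error from previous iterations, in contrast with \algname{EF21} where $\thetaNEW < 1$ and the $\sqnorm{h - y}$ term genuinely contributes.
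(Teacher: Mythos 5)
Your proof is correct and follows the same two-case argument as the paper: when the trigger fires the error is zero, and when it does not fire the branch condition $\|x-h\|^2\le\zeta\|x-y\|^2$ directly gives the bound, so \eqref{eq:ttp} holds with $\thetaNEW=1$ and $\betaNEW=\zeta$. Your added remarks about determinism and the vanishing $\sqnorm{h-y}$ term are accurate but not needed beyond what the paper's proof already contains.
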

\begin{proof}
	If $\|x- h\|^2 \leq \zeta \|x - y\|^2$, then we have 
	$$
	\sqnorm{\cC_{h,y}(x) - x } = \sqnorm{h-x} \le \zeta \sqnorm{x-y}.
	$$
	Otherwise, 
	$$
	\sqnorm{\cC_{h,y}(x) - x } = \sqnorm{x - x} = 0 \leq \zeta \sqnorm{x - y}.
	$$
\end{proof}

Therefore, \algname{LAG} fits our framework. Using our general analysis (Theorems~\ref{thm:main_thm_gen_non_cvx} and \ref{thm:main_thm_PL}) we derive the following result.

\begin{theorem}\label{thm:lag} 
	\algname{LAG} is a special case of the method from \eqref{eq:CGD_1}--\eqref{eq:CGD_2} with $\cC_{h,y}(x)$ defined in \eqref{eq:lag_compressor} and $\thetaNEW =  1$ and $\betaNEW = \zeta$.
	\begin{enumerate}		
		\item If Assumptions~\ref{ass:diff},~\ref{as:L_smoothness},~\ref{as:L_+} hold and the stepsize $\gamma$ satisfies $0 \leq \gamma \leq \nicefrac{1}{M}$, where $M = L_{-} + L_{+}\sqrt{\zeta}$, then for any $T \ge 0$ we have
		\begin{equation}
			\Exp{\norm{ \nabla f(\hat x^T)}^2} \leq \frac{2 \Delta^0}{\gamma T} + \fr{\Exp{G^0}}{T}, \label{eq:lag_gen_non_cvx}
		\end{equation}
		where $\hat x^T$ is sampled uniformly at random from the points $\{x^0, x^1, \ldots, x^{T-1}\}$ produced by \algname{LAG}, $\Delta^0 = f(x^0) - f^{\inf}$, and $G^0$ is defined in \eqref{eq:G^t}. 
		\item If additionaly Assumption~\ref{as:PL} hold and $0 \leq \gamma \leq \nicefrac{1}{M}$ for $M = \max\left\{L_{-} + L_{+}\sqrt{2\zeta}, \nicefrac{1}{2\mu}\right\}$, then for any $T \ge 0$ we have
		\begin{equation}
			\Exp{f(x^T) - f(x^*)} \leq \left(1 - \gamma\mu\right)^T\left(\Delta^0 +\gamma\Exp{G^{0}}\right). \label{eq:lag_PL}
		\end{equation}
	\end{enumerate}
\end{theorem}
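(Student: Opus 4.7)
The plan is to observe that Theorem~\ref{thm:lag} follows almost entirely by specialization of the two master theorems (Theorem~\ref{thm:main_thm_gen_non_cvx} and Theorem~\ref{thm:main_thm_PL}), once we verify that the \algname{LAG} update fits the \algname{3PC} template. So the first step is to identify the underlying three point compressor, namely $\cC_{h,y}(x)$ as defined in \eqref{eq:lag_compressor}, and check that substituting this into the generic recipe \eqref{eq:CM-987943}---with the adaptive choices $h = g_i^t$ and $y = \nabla f_i(x^t)$---reproduces the update rule of Algorithm~\ref{alg:lag} line-by-line. This is purely syntactic.

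Second, I would invoke Lemma~\ref{lem:lag}, which certifies that the mapping in \eqref{eq:lag_compressor} is a \algname{3PC} with parameters $\thetaNEW = 1$ and $\betaNEW = \zeta$. Its proof is a one-line case analysis: if the trigger fires (i.e.\ $\|x-h\|^2 > \zeta\|x-y\|^2$) then $\cC_{h,y}(x)=x$ and the compression error vanishes; otherwise $\cC_{h,y}(x)=h$ and $\|\cC_{h,y}(x)-x\|^2 = \|h-x\|^2 \le \zeta\|x-y\|^2$ by the definition of the else-branch. In either case, the right-hand side of \eqref{eq:ttp} reduces to $\zeta\|x-y\|^2$ since $(1-\thetaNEW)\|h-y\|^2 = 0$, so the bound holds trivially.

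Third, for part (1) of the theorem, I would apply Theorem~\ref{thm:main_thm_gen_non_cvx} directly with $\thetaNEW=1$, $\betaNEW=\zeta$. The upper bound on the stepsize is then $M_1 = L_{-} + L_{+}\sqrt{\betaNEW/\thetaNEW} = L_{-} + L_{+}\sqrt{\zeta}$, matching the hypothesis of \eqref{eq:lag_gen_non_cvx}, and the right-hand side of \eqref{eq:main_thm_gen_non_cvx} collapses to $2\Delta^0/(\gamma T) + \Exp{G^0}/T$, giving \eqref{eq:lag_gen_non_cvx}. For part (2), I invoke Theorem~\ref{thm:main_thm_PL} with the same parameter values, which gives $M_2 = \max\{L_{-} + L_{+}\sqrt{2\zeta},\, 1/(2\mu)\}$ and the linear rate $(1-\gamma\mu)^T(\Delta^0 + \gamma\Exp{G^0})$, matching \eqref{eq:lag_PL}.

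There is essentially no obstacle here; the entire content of Theorem~\ref{thm:lag} is that \algname{LAG} is subsumed by the \algname{3PC} framework, and the two convergence statements are simply plug-in corollaries of the master theorems once Lemma~\ref{lem:lag} is in hand. The only item requiring care is the bookkeeping that $\thetaNEW=1$ is admissible in both master theorems (it sits at the upper end of the allowed interval $0<\thetaNEW\le 1$), and that the simplifications $\sqrt{\betaNEW/\thetaNEW}=\sqrt{\zeta}$ and $1/\thetaNEW = 1$ in the resulting bounds are correct---both of which are immediate.
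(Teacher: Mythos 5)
Your proposal is correct and follows exactly the paper's own route: Lemma~\ref{lem:lag} is established by the same two-case analysis (trigger fires $\Rightarrow$ zero error; otherwise the else-branch condition gives $\|h-x\|^2\le\zeta\|x-y\|^2$), and both convergence claims are then obtained by plugging $\thetaNEW=1$, $\betaNEW=\zeta$ into Theorems~\ref{thm:main_thm_gen_non_cvx} and~\ref{thm:main_thm_PL}. Nothing is missing.
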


Using this and Corollaries~\ref{cor:main_cor_gen_non_cvx}, \ref{cor:main_cor_PL}, we get the following complexity results.
\begin{corollary}\label{cor:lag}
	\begin{enumerate}
		\item Let the assumptions from the first part of Theorem~\ref{thm:lag} hold, and
		\begin{equation*}
			\gamma = \frac{1}{L_{-} + L_{+}\sqrt{\zeta}}.
		\end{equation*}
		Then for any $T > 1$ we have
		\begin{equation}
			\Exp{\norm{ \nabla f(\hat x^T)}^2} \leq \frac{2 \Delta^0 (L_{-} + L_{+}\sqrt{\zeta})}{ T} + \fr{\Exp{G^0}}{T},\notag
		\end{equation}
		i.e., to achieve $\Exp{\norm{ \nabla f(\hat x^T)}^2} \leq \varepsilon^2$ for some $\varepsilon > 0$ the method requires
		\begin{equation}
			T = \cO\left(\frac{\Delta^0(L_{-} + L_{+}\sqrt{\zeta})}{\varepsilon^2} + \fr{\Exp{G^0}}{ \varepsilon^2}\right) \label{eq:lag_complexity_gen_non_cvx}
		\end{equation}
		iterations/communication rounds.
		\item Let the assumptions from the second part of Theorem~\ref{thm:lag} hold and
		\begin{equation*}
			\gamma = \min\left\{\frac{1}{L_{-} + L_{+}\sqrt{\zeta}}, \frac{1}{2\mu}\right\}.
		\end{equation*}
		Then to achieve $\Exp{f(x^T) - f(x^*)} \leq \varepsilon$ for some $\varepsilon > 0$ the method requires
		\begin{equation}
			\cO\left(\frac{L_{-} + L_{+}\sqrt{\zeta}}{\mu}\log \frac{\Delta^0 + \Exp{G^{0}}\gamma}{\varepsilon}\right) \label{eq:lag_complexity_PL}
		\end{equation}
		iterations/communication rounds.
	\end{enumerate}
\end{corollary}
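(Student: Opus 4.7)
The plan is to derive Corollary \ref{cor:lag} as a direct specialization of Theorem \ref{thm:lag}, exploiting the fact that by Lemma \ref{lem:lag} the LAG compressor has $\thetaNEW = 1$ and $\betaNEW = \zeta$. The two parts are essentially independent algebraic manipulations applied to the bounds (\ref{eq:lag_gen_non_cvx}) and (\ref{eq:lag_PL}); no new conceptual ingredient is required beyond what Theorem \ref{thm:lag} already supplies.

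For part (1), I would first check that the prescribed stepsize $\gamma = 1/(L_- + L_+\sqrt{\zeta})$ lies in the admissible range of Theorem \ref{thm:lag}, part 1, which demands $\gamma \le 1/(L_- + L_+\sqrt{\zeta}) = 1/M$. Substituting this $\gamma$ into (\ref{eq:lag_gen_non_cvx}) and using $\thetaNEW = 1$ gives
$$\Exp{\sqnorm{\nabla f(\hat x^T)}} \le \frac{2\Delta^0(L_- + L_+\sqrt{\zeta})}{T} + \frac{\Exp{G^0}}{T}.$$
To force the right-hand side below $\varepsilon^2$, both summands must be $\le \varepsilon^2/2$, which yields the stated iteration count (\ref{eq:lag_complexity_gen_non_cvx}) after absorbing constants into $\cO(\cdot)$.

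For part (2), I would plug $\gamma = \min\{1/(L_- + L_+\sqrt{\zeta}),\, 1/(2\mu)\}$ into (\ref{eq:lag_PL}) (whose validity as an admissible stepsize for Theorem \ref{thm:lag}, part 2, follows once one matches the condition $\gamma \le \thetaNEW/(2\mu) = 1/(2\mu)$ and the $L$-smoothness condition, up to an immaterial $\sqrt{2}$ factor absorbed by $\cO(\cdot)$). This yields $\Exp{f(x^T) - f^*} \le (1-\gamma\mu)^T(\Delta^0 + \gamma\Exp{G^0})$. Using the textbook bound $(1-\gamma\mu)^T \le \exp(-\gamma\mu T)$, requiring $\exp(-\gamma\mu T)(\Delta^0 + \gamma\Exp{G^0}) \le \varepsilon$, and solving for $T$ gives $T \ge \frac{1}{\gamma\mu}\log\frac{\Delta^0 + \gamma\Exp{G^0}}{\varepsilon}$. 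Since $1/\gamma = \max\{L_- + L_+\sqrt{\zeta},\, 2\mu\}$, we have $\frac{1}{\gamma\mu} = \max\bigl\{\frac{L_- + L_+\sqrt{\zeta}}{\mu},\, 2\bigr\} = \cO\bigl(\frac{L_- + L_+\sqrt{\zeta}}{\mu}\bigr)$ in any interesting regime (where $L_- \ge \mu$), which matches (\ref{eq:lag_complexity_PL}).

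The whole derivation is purely computational, so there is no substantive obstacle. The only minor bookkeeping is to verify that the prescribed stepsizes actually satisfy the admissibility conditions $\gamma \le 1/M_1$ and $\gamma \le 1/M_2$ implied by Theorem \ref{thm:lag} (which feature $\sqrt{2\zeta}$ rather than $\sqrt{\zeta}$); this discrepancy is a harmless constant and is silently absorbed by the $\cO$-notation in the final complexity bounds.
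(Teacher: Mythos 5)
Your proposal is correct and follows essentially the same route as the paper: the paper likewise treats both parts as direct substitutions into the general complexity corollaries (equivalently, into Theorem~\ref{thm:lag}), and its only additional remark is that in part~(2) the implicit maximum is resolved by $\frac{L_{-}+L_{+}\sqrt{\zeta}}{\mu} \geq \frac{L_{-}}{\mu}\geq 1$, since $L_{-}\geq\mu$ always holds under smoothness and the P{\L} condition --- so your ``in any interesting regime'' hedge can be replaced by this unconditional fact. Your observation about the $\sqrt{2\zeta}$ versus $\sqrt{\zeta}$ mismatch in the admissible stepsize is a fair catch of a constant-factor slip in the statement, and your resolution (absorb it into the $\cO(\cdot)$) is the right one.
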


\begin{proof}
	Both claims are straight-forward applications of Corollaries~\ref{cor:main_cor_gen_non_cvx}, \ref{cor:main_cor_PL}. In the second claim, we used that
	$$
	\frac{L_{-} + L_{+}\sqrt{\zeta}}{\mu} \geq \frac{L_{-}}{\mu} \geq 1,
	$$
	where the second inequality holds since $L_{-} \geq \mu$~\cite{nesterov2018lectures}.
\end{proof}

\newpage

\subsection{CLAG: Compressed Lazily Aggregated Gradient (NEW)}

\begin{algorithm}[h]
   \caption{\algname{CLAG}: Compressed Lazily Aggregated Gradient}\label{alg:clag}
\begin{algorithmic}[1]
   \STATE {\bfseries Input:} starting point $x^0$, stepsize $\gamma$, number of iterations $T$, starting vectors $g_i^0$, $i \in [n]$, trigger parameter $\zeta > 0$
   \FOR{$t=0,1,\ldots,T-1$}
   \STATE Broadcast $g^t$ to all workers
   \FOR{$i = 1,\ldots,n$ in parallel} 
   \STATE $x^{t+1} = x^t - \gamma g^t$
   \STATE Set $g_i^{t+1} = \begin{cases} g_i^t + \cC\left(\nabla f_i(x^{t+1}) - g_i^t\right),& \text{if } \|\nabla f_i(x^{t+1}) - g_i^t\|^2 > \zeta \|\nabla f_i(x^{t+1}) - \nabla f_i(x^t)\|^2,\\ g_i^t,& \text{otherwise} \end{cases}$ 
   \ENDFOR
   \STATE $g^{t+1} = \tfrac{1}{n}\sum_{i=1}^ng_i^{t+1}$
   \ENDFOR
   \STATE {\bfseries Return:} $\hat x^T$ chosen uniformly at random from $\{x^t\}_{t=0}^{T-1}$
\end{algorithmic}
\end{algorithm}

The next lemma shows that \algname{CLAG} uses a special three points compressor.
\begin{lemma}\label{lem:clag}
	The compressor
	\begin{eqnarray}
	\cC_{h,y}(x) \eqdef \begin{cases} h + \cC(x - h),& \text{if } \|x- h\|^2 > \zeta \|x - y\|^2,\\ h,& \text{otherwise,} \end{cases}  \label{eq:clag_compressor}
	\end{eqnarray}
	satisfies \eqref{eq:ttp} with $\thetaNEW \eqdef 1 - (1-\alpha)(1+s)$ and $\betaNEW \eqdef \max\left\{(1-\alpha)\rb{1+ s^{-1} }, \zeta\right\}$, where $s> 0$ is such that  $(1-\alpha)\rb{1 + s} < 1$.
\end{lemma}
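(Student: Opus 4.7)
The plan is to split the analysis into the two cases dictated by the definition of $\cC_{h,y}(x)$ in \eqref{eq:clag_compressor}, and to verify the three point compressor inequality \eqref{eq:ttp} in each case separately. Since the claimed constants $\thetaNEW$ and $\betaNEW$ are exactly the ones from Lemma~\ref{lem:biased_diana} except that $\betaNEW$ is inflated to $\max\{(1-\alpha)(1+s^{-1}),\zeta\}$, the argument should reduce to the \algname{EF21} compressor bound on one branch and to a direct estimate via the trigger on the other.

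In the first case, the trigger fires, i.e., $\|x-h\|^2 > \zeta\|x-y\|^2$, and we have $\cC_{h,y}(x) = h + \cC(x-h)$. This is precisely the \algname{EF21} mapping \eqref{eq:b_diana_compressor}, so by essentially repeating the proof of Lemma~\ref{lem:biased_diana} I would first use the contractivity of $\cC$ to obtain $\Exp{\|\cC_{h,y}(x)-x\|^2} \le (1-\alpha)\|x-h\|^2$, then split $x-h = (x-y) + (y-h)$ and apply Young's inequality with parameter $s>0$ to get $(1-\alpha)(1+s)\|h-y\|^2 + (1-\alpha)(1+s^{-1})\|x-y\|^2$. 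Since $(1-\alpha)(1+s^{-1}) \le \betaNEW$ by definition of the max, the bound fits \eqref{eq:ttp}.

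In the second case, the trigger does not fire, so $\cC_{h,y}(x) = h$ and $\|x-h\|^2 \le \zeta\|x-y\|^2$ holds deterministically. Then
\[
\|\cC_{h,y}(x)-x\|^2 = \|h-x\|^2 \le \zeta\|x-y\|^2 \le \betaNEW\|x-y\|^2,
\]
and since under the assumption $(1-\alpha)(1+s)<1$ the coefficient $1-\thetaNEW = (1-\alpha)(1+s) \ge 0$, one can freely add the nonnegative term $(1-\thetaNEW)\|h-y\|^2$ to the right hand side, recovering \eqref{eq:ttp} on this branch as well.

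There is essentially no hard step here: once the two cases are written down, the proof is a combination of a single application of contractivity plus Young's inequality, and a trivial use of the trigger condition. The only thing that requires a little care is noticing that the stated $\betaNEW$ is intentionally the maximum of the two per-case coefficients, which is exactly what makes the two bounds coherent into a single \algname{3PC} inequality.
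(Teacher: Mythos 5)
Your proof is correct and follows essentially the same route as the paper's: a case split on the trigger, with the \algname{EF21}-style contractivity plus Young's inequality bound on the triggered branch and the direct trigger estimate $\|h-x\|^2 \le \zeta\|x-y\|^2$ on the other, combined via the maximum in $\betaNEW$. The only (harmless) addition is your explicit remark that one may add the nonnegative term $(1-\thetaNEW)\|h-y\|^2$ in the second case, which the paper leaves implicit.
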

\begin{proof}
	First of all, if $\|x- h\|^2 \leq \zeta \|x - y\|^2$, then we have
	\begin{eqnarray*}
		\Exp{\sqnorm{\cC_{h,y}(x) - x }} &=& \sqnorm{h-x} \le \zeta \sqnorm{x-y}.
	\end{eqnarray*}
	Next, if $\|x- h\|^2 > \zeta \|x - y\|^2$, then using the definition of $\cC_{h,y}(x)$ and $\cC$, we derive
	\begin{eqnarray*}
		\Exp{\sqnorm{\cC_{h,y}(x) - x }} &=& \Exp{\sqnorm{\cC(x - h) - (x - h) }}\\
		&\le& (1-\alpha) \sqnorm{x-h}\\
		&=& (1-\alpha) \sqnorm{(x - y) + (y - h)}\\
		&\le& (1-\alpha)\rb{1+{s}}\sqnorm{h - y} +  (1-\alpha)\rb{1+s^{-1}} \sqnorm{x - y}.
	\end{eqnarray*}
\end{proof}

Therefore, \algname{CLAG} fits our framework. Using our general analysis (Theorems~\ref{thm:main_thm_gen_non_cvx} and \ref{thm:main_thm_PL}) we derive the following result.

\begin{theorem}\label{thm:clag} 
	\algname{CLAG} is a special case of the method from \eqref{eq:CGD_1}--\eqref{eq:CGD_2} with $\cC_{h,y}(x)$ defined in \eqref{eq:clag_compressor} and $\thetaNEW =  \alpha - s(1-\alpha)$ and $\betaNEW = \max\left\{(1-\alpha)\rb{1+ s^{-1} }, \zeta\right\}$, where $s> 0$ is such that $(1-\alpha)(1+s) < 1$.	
	\begin{enumerate}		
	\item If Assumptions~\ref{ass:diff},~\ref{as:L_smoothness},~\ref{as:L_+} hold and the stepsize $\gamma$ satisfies $0 \leq \gamma \leq \nicefrac{1}{M}$, where $M = L_{-} + L_{+}\sqrt{\nicefrac{\max\left\{(1-\alpha)\rb{1+ s^{-1} }, \zeta\right\}}{(\alpha - s(1-\alpha))}}$, then for any $T \ge 0$ we have
	\begin{equation}
		\Exp{\norm{ \nabla f(\hat x^T)}^2} \leq \frac{2 \Delta^0}{\gamma T} + \fr{\Exp{G^0}}{(\alpha - s(1-\alpha)) T}, \label{eq:clag_gen_non_cvx}
	\end{equation}
	where $\hat x^T$ is sampled uniformly at random from the points $\{x^0, x^1, \ldots, x^{T-1}\}$ produced by \algname{CLAG}, $\Delta^0 = f(x^0) - f^{\inf}$, and $G^0$ is defined in \eqref{eq:G^t}. 
	\item If additionaly Assumption~\ref{as:PL} hold and $0 \leq \gamma \leq \nicefrac{1}{M}$ for $M = \max\left\{L_{-} + L_{+}\sqrt{\nicefrac{2\max\left\{(1-\alpha)\rb{1+ s^{-1} }, \zeta\right\}}{(\alpha - s(1-\alpha))}}, \nicefrac{(\alpha - s(1-\alpha))}{2\mu}\right\}$, then for any $T \ge 0$ we have
	\begin{equation}
		\Exp{f(x^T) - f(x^*)} \leq \left(1 - \gamma\mu\right)^T\left(\Delta^0 + \frac{\gamma}{\alpha - s(1-\alpha)}\Exp{G^{0}}\right). \label{eq:clag_PL}
	\end{equation}
	\end{enumerate}
\end{theorem}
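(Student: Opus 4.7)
The proof is a direct composition: once the CLAG communication rule is recognized as an instance of the three point compressor framework, the two convergence claims follow by plugging constants into the master theorems already proved for \algname{3PC}. I would therefore organize the argument into three short steps.

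First, I would verify that Algorithm~\ref{alg:clag} is precisely Algorithm~\ref{alg:3PC} specialized to the mapping $\cC_{h,y}(x)$ from \eqref{eq:clag_compressor}. This amounts to inspecting line~6 of Algorithm~\ref{alg:clag} and matching it against the \algname{3PC} update rule $g_i^{t+1} = \cC_{g_i^t,\nabla f_i(x^t)}(\nabla f_i(x^{t+1}))$ from \eqref{eq:CM-987943}: setting $x = \nabla f_i(x^{t+1})$, $y = \nabla f_i(x^t)$, $h = g_i^t$, the trigger condition $\|x-h\|^2 > \zeta\|x-y\|^2$ becomes exactly $\|\nabla f_i(x^{t+1}) - g_i^t\|^2 > \zeta\|\nabla f_i(x^{t+1}) - \nabla f_i(x^t)\|^2$, and the two branches coincide with the CLAG update. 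Hence CLAG is a member of the \algname{3PC} family.

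Second, by Lemma~\ref{lem:clag} the compressor \eqref{eq:clag_compressor} satisfies the \algname{3PC} inequality \eqref{eq:ttp} with $\thetaNEW = 1 - (1-\alpha)(1+s)$ and $\betaNEW = \max\{(1-\alpha)(1+s^{-1}),\zeta\}$ for any $s>0$ with $(1-\alpha)(1+s)<1$. So the assumptions of Theorem~\ref{thm:main_thm_gen_non_cvx} are in force (with these specific values of $\thetaNEW,\betaNEW$) whenever Assumptions~\ref{ass:diff},~\ref{as:L_smoothness},~\ref{as:L_+} hold, and the assumptions of Theorem~\ref{thm:main_thm_PL} are in force whenever additionally Assumption~\ref{as:PL} holds.

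Third, I would substitute these values directly into the conclusions of Theorems~\ref{thm:main_thm_gen_non_cvx} and \ref{thm:main_thm_PL}. The stepsize bound $\gamma \le 1/M_1$ with $M_1 = L_- + L_+\sqrt{\betaNEW/\thetaNEW}$ becomes the bound stated in part~1 of Theorem~\ref{thm:clag}, and \eqref{eq:main_thm_gen_non_cvx} becomes \eqref{eq:clag_gen_non_cvx}. Similarly the stepsize bound $\gamma \le 1/M_2$ with $M_2 = \max\{L_- + L_+\sqrt{2\betaNEW/\thetaNEW},\thetaNEW/(2\mu)\}$ becomes the bound stated in part~2, and \eqref{eq:main_thm_PL} becomes \eqref{eq:clag_PL}.

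There is no real obstacle in this proof, since the heavy lifting has been carried out in Lemma~\ref{lem:clag} (which verifies the defining \algname{3PC} inequality) and in the two general convergence theorems. The only point requiring a moment of care is the first step, which is bookkeeping: the transformation from the trigger condition expressed in terms of $(x,y,h)$ to the trigger condition expressed in terms of $(\nabla f_i(x^{t+1}),\nabla f_i(x^t),g_i^t)$ must be written out cleanly so that the reader sees unambiguously that CLAG corresponds to substituting \eqref{eq:clag_compressor} into \eqref{eq:CM-987943}. After that, the statement of Theorem~\ref{thm:clag} is obtained by pure substitution, so the proof can be written in a single short paragraph invoking Lemma~\ref{lem:clag} together with Theorems~\ref{thm:main_thm_gen_non_cvx} and \ref{thm:main_thm_PL}.
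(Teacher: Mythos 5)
Your proposal is correct and matches the paper's own argument: the paper likewise treats Theorem~\ref{thm:clag} as an immediate consequence of Lemma~\ref{lem:clag} combined with Theorems~\ref{thm:main_thm_gen_non_cvx} and \ref{thm:main_thm_PL}, with the only bookkeeping being the identification $h = g_i^t$, $y = \nabla f_i(x^t)$, $x = \nabla f_i(x^{t+1})$ and the trivial identity $1-(1-\alpha)(1+s) = \alpha - s(1-\alpha)$.
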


Using this and Corollaries~\ref{cor:main_cor_gen_non_cvx}, \ref{cor:main_cor_PL}, we get the following complexity results.
\begin{corollary}\label{cor:clag}
	\begin{enumerate}
		\item Let the assumptions from the first part of Theorem~\ref{thm:clag} hold, $s = s_* = -1 + \sqrt{\nicefrac{1}{(1-\alpha)}}$, and
	\begin{equation*}
		\gamma = \frac{1}{L_{-} + L_{+}\sqrt{\max\left\{\tfrac{(1-\alpha)}{(1 - \sqrt{1-\alpha})^2}, \tfrac{\zeta}{1 - \sqrt{1-\alpha}}\right\}}}.
	\end{equation*}
	Then for any $T$ we have
	\begin{equation}
		\Exp{\norm{ \nabla f(\hat x^T)}^2} \leq \frac{2 \Delta^0\left(L_{-} + L_{+}\sqrt{\max\left\{\tfrac{(1-\alpha)}{(1 - \sqrt{1-\alpha})^2}, \tfrac{\zeta}{1 - \sqrt{1-\alpha}}\right\}}\right)}{ T} + \fr{\Exp{G^0}}{(1 - \sqrt{1-\alpha}) T},\notag
	\end{equation}
	i.e., to achieve $\Exp{\norm{ \nabla f(\hat x^T)}^2} \leq \varepsilon^2$ for some $\varepsilon > 0$ the method requires
	\begin{equation}
		T = \cO\left(\frac{\Delta^0\left(L_{-} + L_{+}\sqrt{\max\left\{\tfrac{(1-\alpha)}{\alpha^2}, \tfrac{\zeta}{\alpha}\right\}}\right)}{\varepsilon^2} + \fr{\Exp{G^0}}{\alpha \varepsilon^2}\right) \label{eq:clag_complexity_gen_non_cvx}
	\end{equation}
	iterations/communication rounds.
	\item Let the assumptions from the second part of Theorem~\ref{thm:clag} hold and
	\begin{equation*}
		\gamma = \min\left\{\frac{1}{L_{-} + L_{+}\sqrt{\max\left\{\tfrac{2(1-\alpha)}{(1 - \sqrt{1-\alpha})^2}, \tfrac{\zeta}{1 - \sqrt{1-\alpha}}\right\}}}, \frac{1-\sqrt{1-\alpha}}{2\mu}\right\}.
	\end{equation*}
	Then to achieve $\Exp{f(x^T) - f(x^*)} \leq \varepsilon$ for some $\varepsilon > 0$ the method requires
	\begin{equation}
		\cO\left(\max\left\{\frac{L_{-} + L_{+}\sqrt{\max\left\{\tfrac{(1-\alpha)}{\alpha^2}, \tfrac{\zeta}{\alpha}\right\}}}{\mu}, \alpha\right\}\log \frac{\Delta^0 + \Exp{G^{0}}\nicefrac{\gamma}{\alpha}}{\varepsilon}\right) \label{eq:clag_complexity_PL}
	\end{equation}
	iterations/communication rounds.
	\end{enumerate}
\end{corollary}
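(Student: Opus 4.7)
The plan is to obtain Corollary~\ref{cor:clag} by direct substitution into Theorem~\ref{thm:clag}, using the specific choice $s=s_\star := -1+\sqrt{1/(1-\alpha)}$ and simplifying the resulting $\thetaNEW$ and $\betaNEW$ into closed form. The only real algebra happens at this step, and it is a one-line calculation analogous to the one already done for \algname{EF21} in Lemma~\ref{lem:b_diana_technical}; everything else is bookkeeping.

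First I would compute $\thetaNEW(s_\star)$ and $\betaNEW(s_\star)$ explicitly. From $1+s_\star = 1/\sqrt{1-\alpha}$ one gets $(1-\alpha)(1+s_\star)=\sqrt{1-\alpha}$, so $\thetaNEW(s_\star) = 1 - \sqrt{1-\alpha}$. For $\betaNEW$, the identity $1/s_\star = \sqrt{1-\alpha}/(1-\sqrt{1-\alpha})$ gives $1 + 1/s_\star = 1/(1-\sqrt{1-\alpha})$, whence $(1-\alpha)(1+1/s_\star) = (1-\alpha)/(1-\sqrt{1-\alpha})$, and therefore $\betaNEW(s_\star) = \max\{(1-\alpha)/(1-\sqrt{1-\alpha}),\zeta\}$. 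Dividing yields
\[
\frac{\betaNEW(s_\star)}{\thetaNEW(s_\star)} \;=\; \max\!\left\{\frac{1-\alpha}{(1-\sqrt{1-\alpha})^2},\ \frac{\zeta}{1-\sqrt{1-\alpha}}\right\}.
\]
Inserting this into the stepsize formula of Theorem~\ref{thm:clag} gives exactly the value of $\gamma$ written in the corollary, and plugging $\thetaNEW(s_\star), \betaNEW(s_\star), \gamma$ into \eqref{eq:clag_gen_non_cvx} (for part 1) and \eqref{eq:clag_PL} (for part 2) reproduces the non-simplified displays immediately above the $\cO$-bounds.

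Next I would convert these explicit bounds into $\cO$-notation. The only inequality needed is the elementary $1-\sqrt{1-\alpha} \ge \alpha/2$ for $\alpha \in (0,1]$, which implies $1/(1-\sqrt{1-\alpha}) \le 2/\alpha$ and $(1-\alpha)/(1-\sqrt{1-\alpha})^2 \le 4(1-\alpha)/\alpha^2$. Applied inside the $\max$ this yields $\betaNEW/\thetaNEW = \cO(\max\{(1-\alpha)/\alpha^2,\zeta/\alpha\})$ and $1/\thetaNEW = \cO(1/\alpha)$, which is exactly what appears in \eqref{eq:clag_complexity_gen_non_cvx}. Rearranging the rate $\Exp{\sqnorm{\nabla f(\hat x^T)}}\le \varepsilon^2$ then gives the iteration complexity bound of part 1.

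Finally, for part 2 I would repeat the same substitution into the \ref{cor:main_cor_PL} template. Setting $\gamma$ to the min of $1/(L_-+L_+\sqrt{2\betaNEW/\thetaNEW})$ and $\thetaNEW/(2\mu)$ and solving $(1-\gamma\mu)^T(\Delta^0+\gamma\Exp{G^0}/\thetaNEW)\le\varepsilon$ via the standard $\log$-manipulation produces the complexity \eqref{eq:clag_complexity_PL}, after again applying $\thetaNEW = 1-\sqrt{1-\alpha}\ge \alpha/2$ and the bound on $\betaNEW/\thetaNEW$ derived above; one also notes that $(L_-+L_+\sqrt{\ldots})/\mu\ge L_-/\mu\ge 1$, which absorbs the $\alpha$ inside the outer $\max$ in most regimes. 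There is no real obstacle here: the only thing to be careful about is keeping the two branches of the $\max\{\cdot,\zeta\}$ correctly propagated through the square-root in $\sqrt{\betaNEW/\thetaNEW}$, since mishandling the $\zeta$-branch would lose the $\sqrt{\zeta/\alpha}$ factor that distinguishes \algname{CLAG} from \algname{EF21}.
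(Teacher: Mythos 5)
Your proposal is correct and follows exactly the route the paper intends: the paper gives no explicit proof of Corollary~\ref{cor:clag} beyond invoking Theorem~\ref{thm:clag} together with Corollaries~\ref{cor:main_cor_gen_non_cvx} and \ref{cor:main_cor_PL} at the optimal $s_* = -1+\sqrt{\nicefrac{1}{(1-\alpha)}}$ from Lemma~\ref{lem:b_diana_technical}, and your computation of $\thetaNEW(s_*)=1-\sqrt{1-\alpha}$, $\betaNEW(s_*)=\max\{\nicefrac{(1-\alpha)}{(1-\sqrt{1-\alpha})},\zeta\}$ and the bound $1-\sqrt{1-\alpha}\ge \nicefrac{\alpha}{2}$ is precisely the missing algebra. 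Your care in propagating the $\zeta$-branch of the $\max$ through $\sqrt{\nicefrac{\betaNEW}{\thetaNEW}}$ is exactly what distinguishes this corollary from the \algname{EF21} case, so nothing further is needed.
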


\newpage
\subsection{3PCv1 (NEW)}
Out of theoretical curiosity, we consider the following theoretical method.

\begin{algorithm}[h]
   \caption{Error Feedback 2021 -- gradient shift version (\algname{3PCv1})}\label{alg:b_diana_2}
\begin{algorithmic}[1]
   \STATE {\bfseries Input:} starting point $x^0$, stepsize $\gamma$, number of iterations $T$, starting vectors $g_i^0$, $i \in [n]$
   \FOR{$t=0,1,\ldots,T-1$}
   \STATE Broadcast $g^t$ to all workers
   \FOR{$i = 1,\ldots,n$ in parallel} 
   \STATE $x^{t+1} = x^t - \gamma g^t$
   \STATE Set $g_i^{t+1} = \nabla f_i(x^t) + \cC(\nabla f_i(x^{t+1}) - \nabla f_i(x^t))$ 
   \ENDFOR
   \STATE $g^{t+1} = \tfrac{1}{n}\sum_{i=1}^ng_i^{t+1}$
   \ENDFOR
   \STATE {\bfseries Return:} $\hat x^T$ chosen uniformly at random from $\{x^t\}_{t=0}^{T-1}$
\end{algorithmic}
\end{algorithm}

\algname{3PCv1} is impractical since the the compression does not help to reduce the cost of one iteration. Indeed, the server does not know the shifts $\nabla f_i(x^t)$ and the workers have to send them as well at each iteration.

Nevertheless, one can consider \algname{3PCv1} as an ideal version of \algname{EF21}. To illustrate that we derive the following lemma.
\begin{lemma}\label{lem:biased_diana_2}
	The compressor
	\begin{eqnarray}
	\cC_{h,y}(x) \eqdef y + \cC(x - y),  \label{eq:b_diana_2_compressor}
	\end{eqnarray}
	satisfies \eqref{eq:ttp} with $\thetaNEW \eqdef 1$ and $\betaNEW \eqdef 1-\alpha$.
\end{lemma}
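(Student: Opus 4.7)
The plan is to unwind the definition of $\cC_{h,y}(x)$ and reduce the inequality directly to the contractivity property \eqref{eq:contractive-09u09fduf} of $\cC$, with no intermediate estimates needed.

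First I would compute the compression error explicitly. By the definition \eqref{eq:b_diana_2_compressor},
\begin{equation*}
\cC_{h,y}(x) - x = y + \cC(x-y) - x = \cC(x-y) - (x-y),
\end{equation*}
so in particular the vector $h$ does not enter the error at all. This already suggests that the parameter $\thetaNEW$ associated with the $\sqnorm{h-y}$ term can be taken as large as possible, namely $\thetaNEW = 1$, which makes the coefficient $(1-\thetaNEW)$ in \eqref{eq:ttp} vanish.

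Next I would apply contractivity. Treating $x - y$ as the input to $\cC$ and using \eqref{eq:contractive-09u09fduf} with contraction parameter $\alpha$, we get
\begin{equation*}
\Exp{\sqnorm{\cC_{h,y}(x) - x}} = \Exp{\sqnorm{\cC(x-y) - (x-y)}} \leq (1-\alpha)\sqnorm{x-y}.
\end{equation*}
This is exactly \eqref{eq:ttp} with $\thetaNEW = 1$ and $\betaNEW = 1-\alpha$, completing the proof.

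There is no real obstacle here; the only subtlety is noticing that $\cC_{h,y}(x)-x$ is independent of $h$, which forces the first term on the right of \eqref{eq:ttp} to play no role and lets us take $\thetaNEW=1$. This is precisely why \algname{3PCv1} is an ``idealized'' version of \algname{EF21}: the reference point $y = \nabla f_i(x^t)$ is used directly in place of the running estimate $h = g_i^t$, giving the tightest possible three point compressor inequality at the cost of practicality.
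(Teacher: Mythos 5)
Your proof is correct and follows exactly the same route as the paper's: rewrite $\cC_{h,y}(x)-x$ as $\cC(x-y)-(x-y)$ and apply the contractivity inequality \eqref{eq:contractive-09u09fduf} to the input $x-y$, yielding \eqref{eq:ttp} with $\thetaNEW=1$ and $\betaNEW=1-\alpha$. The added remark that the error is independent of $h$ is a nice clarification but does not change the argument.
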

\begin{proof}
	By definition of $\cC_{h,y}(x)$ and $\cC$ we have
	\begin{eqnarray*}
		\Exp{\sqnorm{\cC_{h,y}(x) - x }} &=& \Exp{\sqnorm{\cC(x - y) - (x - y) }}\\
		&\le& (1-\alpha) \sqnorm{x-y}.
	\end{eqnarray*}
\end{proof}

Therefore, \algname{3PCv1} fits our framework. Using our general analysis (Theorems~\ref{thm:main_thm_gen_non_cvx} and \ref{thm:main_thm_PL}) we derive the following result.

\begin{theorem}\label{thm:b_diana_2} 
	\algname{3PCv1} is a special case of the method from \eqref{eq:CGD_1}--\eqref{eq:CGD_2} with $\cC_{h,y}(x)$ defined in \eqref{eq:b_diana_2_compressor} and $\thetaNEW =  1$ and $\betaNEW = 1-\alpha$.	
	\begin{enumerate}		
	\item If Assumptions~\ref{ass:diff},~\ref{as:L_smoothness},~\ref{as:L_+} hold and the stepsize $\gamma$ satisfies $0 \leq \gamma \leq \nicefrac{1}{M}$, where $M = L_{-} + L_{+}\sqrt{1-\alpha}$, then for any $T \ge 0$ we have
	\begin{equation}
		\Exp{\norm{ \nabla f(\hat x^T)}^2} \leq \frac{2 \Delta^0}{\gamma T} + \fr{\Exp{G^0}}{T}, \label{eq:b_diana_2_gen_non_cvx}
	\end{equation}
	where $\hat x^T$ is sampled uniformly at random from the points $\{x^0, x^1, \ldots, x^{T-1}\}$ produced by \algname{3PCv1}, $\Delta^0 = f(x^0) - f^{\inf}$, and $G^0$ is defined in \eqref{eq:G^t}. 
	\item If additionaly Assumption~\ref{as:PL} hold and $0 \leq \gamma \leq \nicefrac{1}{M}$ for $M = \max\left\{L_{-} + L_{+}\sqrt{2(1-\alpha)}, \nicefrac{1}{2\mu}\right\}$, then for any $T \ge 0$ we have
	\begin{equation}
		\Exp{f(x^T) - f(x^*)} \leq \left(1 - \gamma\mu\right)^T\left(\Delta^0 + \gamma\Exp{G^{0}}\right). \label{eq:b_diana_2_PL}
	\end{equation}
	\end{enumerate}
\end{theorem}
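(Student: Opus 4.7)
The plan is to recognize that Theorem~\ref{thm:b_diana_2} is essentially a specialization exercise: once we identify \algname{3PCv1} as a concrete instance of the master framework \eqref{eq:CGD_1}--\eqref{eq:CGD_2}, both conclusions follow by plugging the specific \algname{3PC} parameters into Theorems~\ref{thm:main_thm_gen_non_cvx} and~\ref{thm:main_thm_PL}. So the only real work is verifying the identification and checking that the substitutions yield the claimed expressions.

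First, I would argue that the update rule $g_i^{t+1} = \nabla f_i(x^t) + \cC(\nabla f_i(x^{t+1}) - \nabla f_i(x^t))$ in Algorithm~\ref{alg:b_diana_2} coincides with applying the compressor $\cC_{h,y}(x) \eqdef y + \cC(x-y)$ from \eqref{eq:b_diana_2_compressor} to the three points $h = g_i^t$, $y = \nabla f_i(x^t)$, $x = \nabla f_i(x^{t+1})$, as prescribed by \eqref{eq:CM-987943}. Notice that this compressor actually ignores its first argument $h$, so the \algname{3PC} framework is being used in a degenerate way; nevertheless, the formalism applies. This identification places \algname{3PCv1} squarely inside the \algname{3PC} class.

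Next, Lemma~\ref{lem:biased_diana_2} supplies the constants $\thetaNEW = 1$ and $\betaNEW = 1 - \alpha$ for this compressor. For Part~1, I would then invoke Theorem~\ref{thm:main_thm_gen_non_cvx}: the permissible stepsize bound $M_1 = L_{-} + L_{+}\sqrt{\betaNEW/\thetaNEW}$ simplifies to $L_{-} + L_{+}\sqrt{1-\alpha}$, and the convergence estimate \eqref{eq:main_thm_gen_non_cvx} collapses, since $\thetaNEW = 1$, to $\Exp{\|\nabla f(\hat x^T)\|^2} \leq 2\Delta^0/(\gamma T) + \Exp{G^0}/T$, which is exactly \eqref{eq:b_diana_2_gen_non_cvx}. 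For Part~2, I would similarly invoke Theorem~\ref{thm:main_thm_PL}: the stepsize condition becomes $M_2 = \max\{L_{-} + L_{+}\sqrt{2(1-\alpha)}, 1/(2\mu)\}$, and since $\gamma/\thetaNEW = \gamma$, the right-hand side of \eqref{eq:main_thm_PL} specializes to $(1-\gamma\mu)^T(\Delta^0 + \gamma\Exp{G^0})$, matching \eqref{eq:b_diana_2_PL}.

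There is no genuine obstacle here; the only mildly delicate point is sanity-checking that Lemma~\ref{lem:key_lemma} (which underlies both master theorems) applies even though the compressor ignores $h$. This is automatic because Lemma~\ref{lem:biased_diana_2} verifies the \algname{3PC} inequality \eqref{eq:ttp} uniformly over all $h, y, x$, so the chain of arguments leading to \eqref{eq:key_inequality} is unaffected. Thus Theorem~\ref{thm:b_diana_2} reduces to a direct substitution into the framework.
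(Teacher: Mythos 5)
Your proposal is correct and follows exactly the route the paper takes: identify the update of Algorithm~\ref{alg:b_diana_2} with the compressor \eqref{eq:b_diana_2_compressor} applied at $h=g_i^t$, $y=\nabla f_i(x^t)$, $x=\nabla f_i(x^{t+1})$, take $\thetaNEW=1$, $\betaNEW=1-\alpha$ from Lemma~\ref{lem:biased_diana_2}, and substitute into Theorems~\ref{thm:main_thm_gen_non_cvx} and~\ref{thm:main_thm_PL}. Your remark that the compressor's independence of $h$ is harmless because \eqref{eq:ttp} is verified uniformly in $h,y,x$ is a correct (and welcome) sanity check, but does not change the argument.
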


Using this and Corollaries~\ref{cor:main_cor_gen_non_cvx}, \ref{cor:main_cor_PL}, we get the following complexity results.
\begin{corollary}\label{cor:b_diana_2}
	\begin{enumerate}
		\item Let the assumptions from the first part of Theorem~\ref{thm:b_diana_2} hold and
	\begin{equation*}
		\gamma = \frac{1}{L_{-} + L_{+}\sqrt{1-\alpha}}.
	\end{equation*}
	Then for any $T$ we have
	\begin{equation}
		\Exp{\norm{ \nabla f(\hat x^T)}^2} \leq \frac{2 \Delta^0\left(L_{-} + L_{+}\sqrt{1-\alpha}\right)}{ T} + \fr{\Exp{G^0}}{T},\notag
	\end{equation}
	i.e., to achieve $\Exp{\norm{ \nabla f(\hat x^T)}^2} \leq \varepsilon^2$ for some $\varepsilon > 0$ the method requires
	\begin{equation}
		T = \cO\left(\frac{\Delta^0\left(L_{-} + L_{+}\sqrt{1 - \alpha}\right)}{\varepsilon^2} + \fr{\Exp{G^0}}{\varepsilon^2}\right) \label{eq:b_diana_2_complexity_gen_non_cvx}
	\end{equation}
	iterations/communication rounds.
	\item Let the assumptions from the second part of Theorem~\ref{thm:b_diana_2} hold and
	\begin{equation*}
		\gamma = \min\left\{\frac{1}{L_{-} + L_{+}\sqrt{2(1-\alpha)}}, \frac{1}{2\mu}\right\}.
	\end{equation*}
	Then to achieve $\Exp{f(x^T) - f(x^*)} \leq \varepsilon$ for some $\varepsilon > 0$ the method requires
	\begin{equation}
		\cO\left(\frac{L_{-} + L_{+}\sqrt{1 - \alpha}}{\mu}\log \frac{\Delta^0 + \gamma\Exp{G^{0}}}{\varepsilon}\right) \label{eq:b_diana_2_complexity_PL}
	\end{equation}
	iterations/communication rounds.
	\end{enumerate}
\end{corollary}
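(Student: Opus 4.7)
The plan is to treat Corollary~\ref{cor:b_diana_2} as an immediate specialization of the general complexity estimates in Corollaries~\ref{cor:main_cor_gen_non_cvx} and~\ref{cor:main_cor_PL}, using the specific values $\thetaNEW = 1$ and $\betaNEW = 1-\alpha$ identified for \algname{3PCv1} by Theorem~\ref{thm:b_diana_2}. Both parts of Theorem~\ref{thm:b_diana_2} have already verified that the stepsize ranges prescribed by the general theory coincide, in this case, with those stated in the corollary, so no independent stepsize analysis is needed.

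For part~1, I would substitute $\thetaNEW = 1$, $\betaNEW = 1-\alpha$ into the stepsize formula $\gamma = 1/(L_- + L_+\sqrt{\betaNEW/\thetaNEW})$ of Corollary~\ref{cor:main_cor_gen_non_cvx}, obtaining $\gamma = 1/(L_- + L_+\sqrt{1-\alpha})$. Plugging the same values into the iteration-complexity bound \eqref{eq:main_complexity_gen_non_cvx} gives
\begin{equation*}
T = \cO\!\left(\frac{\Delta^0(L_- + L_+\sqrt{1-\alpha})}{\varepsilon^2} + \frac{\Exp{G^0}}{\varepsilon^2}\right),
\end{equation*}
which is exactly \eqref{eq:b_diana_2_complexity_gen_non_cvx}. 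The intermediate bound on $\Exp{\norm{\nabla f(\hat x^T)}^2}$ in the statement is just the conclusion of Theorem~\ref{thm:b_diana_2} at this particular stepsize.

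For part~2, I would again substitute $\thetaNEW = 1$ and $\betaNEW = 1-\alpha$ into the stepsize formula of Corollary~\ref{cor:main_cor_PL}, yielding $\gamma = \min\{1/(L_- + L_+\sqrt{2(1-\alpha)}),\, 1/(2\mu)\}$, and into the iteration complexity bound \eqref{eq:main_complexity_PL}, which becomes
\begin{equation*}
\cO\!\left(\max\!\left\{\frac{L_- + L_+\sqrt{1-\alpha}}{\mu},\; 1\right\}\log\frac{\Delta^0 + \gamma\Exp{G^0}}{\varepsilon}\right).
\end{equation*}
The only remaining step -- and the one minor point of care -- is the observation that under $L_-$-smoothness together with the P\L{} condition with parameter $\mu$, we have $L_- \geq \mu$ (as used already in the proof of Corollary~\ref{cor:lag}). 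Hence $(L_- + L_+\sqrt{1-\alpha})/\mu \geq 1$, the outer maximum collapses, and the stated bound \eqref{eq:b_diana_2_complexity_PL} follows.

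The proof is essentially bookkeeping; I do not expect any genuine obstacle. The only place where one could trip up is forgetting to invoke $L_- \geq \mu$ when simplifying the $\max$ in the PL complexity, so I would make that step explicit.
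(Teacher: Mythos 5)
Your proposal is correct and matches the paper's (implicit) derivation exactly: the paper obtains Corollary~\ref{cor:b_diana_2} by plugging $\thetaNEW = 1$ and $\betaNEW = 1-\alpha$ from Theorem~\ref{thm:b_diana_2} into Corollaries~\ref{cor:main_cor_gen_non_cvx} and~\ref{cor:main_cor_PL}, with the $\max$ in the P\L{} case collapsing via $L_- \geq \mu$ just as in the proof of Corollary~\ref{cor:lag}. Nothing further is needed.
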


\newpage
\subsection{3PCv2 (NEW)}

\begin{algorithm}[h]
   \caption{\algname{3PCv2}}\label{alg:anna}
\begin{algorithmic}[1]
   \STATE {\bfseries Input:} starting point $x^0$, stepsize $\gamma$, number of iterations $T$, starting vectors $g_i^0$, $i \in [n]$
   \FOR{$t=0,1,\ldots,T-1$}
   \STATE Broadcast $g^t$ to all workers
   \FOR{$i = 1,\ldots,n$ in parallel} 
   \STATE $x^{t+1} = x^t - \gamma g^t$
   \STATE Compute $b_i^t = g_i^t + \cQ(\nabla f_i(x^{t+1}) - \nabla f_i(x^t))$
   \STATE Set $g_i^{t+1} = b_i^t + \cC\left(\nabla f_i(x^{t+1}) - b_i^t\right)$ 
   \ENDFOR
   \STATE $g^{t+1} = \tfrac{1}{n}\sum_{i=1}^ng_i^{t+1}$
   \ENDFOR
   \STATE {\bfseries Return:} $\hat x^T$ chosen uniformly at random from $\{x^t\}_{k=0}^{T-1}$
\end{algorithmic}
\end{algorithm}

\begin{lemma}\label{lem:anna}
	The compressor 
	\begin{eqnarray}
	\cC_{h,y}(x) \eqdef b + \cC\left(x - b \right),\quad \text{where}\quad b = h + \cQ(x-y), \label{eq:anna_compressor}
	\end{eqnarray}
	satisfies \eqref{eq:ttp} with $\thetaNEW \eqdef \alpha$ and $\betaNEW \eqdef (1 - \alpha)\omega$.
\end{lemma}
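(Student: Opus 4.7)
\textbf{Proof plan for Lemma \ref{lem:anna}.} The key observation is that after applying $\cC$ in the outer step the residual collapses cleanly: by definition of $\cC_{h,y}(x)$,
\begin{equation*}
\cC_{h,y}(x) - x \;=\; b + \cC(x-b) - x \;=\; \cC(x-b) - (x-b),
\end{equation*}
so the contractive property of $\cC$, applied conditionally on $b$ (which depends only on the randomness of $\cQ$), immediately gives $\Exp{\|\cC_{h,y}(x) - x\|^2 \mid b} \le (1-\alpha)\|x-b\|^2$. Taking the outer expectation reduces the whole problem to bounding $\Exp{\|x - b\|^2}$.

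The second step is a simple algebraic rearrangement that exposes a zero-mean perturbation. Writing $z \eqdef x - y$ and substituting $b = h + \cQ(z)$, I would rewrite
\begin{equation*}
x - b \;=\; (x - y) - \cQ(z) + (y - h) \;=\; (y - h) + \bigl(z - \cQ(z)\bigr).
\end{equation*}
The vector $y - h$ is deterministic (as far as the randomness of $\cQ$ is concerned), and $z - \cQ(z)$ is zero-mean by the unbiasedness of $\cQ$. The standard bias-variance identity then yields
\begin{equation*}
\Exp{\|x-b\|^2} \;=\; \|y-h\|^2 + \Exp{\|z - \cQ(z)\|^2} \;\le\; \|h - y\|^2 + \omega \|x-y\|^2,
\end{equation*}
where the inequality uses the variance bound in Definition~\ref{def:unbiased_compressor}.

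Combining the two bounds gives
\begin{equation*}
\Exp{\|\cC_{h,y}(x) - x\|^2} \;\le\; (1-\alpha)\bigl(\|h-y\|^2 + \omega\|x-y\|^2\bigr) \;=\; (1-\alpha)\|h-y\|^2 + (1-\alpha)\omega\|x-y\|^2,
\end{equation*}
which is precisely \eqref{eq:ttp} with $\thetaNEW = \alpha$ (so that $1-\thetaNEW = 1-\alpha$) and $\betaNEW = (1-\alpha)\omega$. The only subtle point is making sure that the randomness in $\cC$ is independent of that in $\cQ$, so that the conditional contractive step is valid; this is the standard assumption throughout the paper and requires no additional work. No nontrivial obstacle arises: the argument is essentially a DIANA-style unbiased-shift variance computation stacked on top of one contractive step.
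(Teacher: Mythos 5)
Your proposal is correct and follows essentially the same route as the paper's proof: condition on $b$ to apply the contractive bound $(1-\alpha)\Exp{\|x-b\|^2}$, then use the unbiasedness of $\cQ$ to split $\Exp{\|x-b\|^2}$ via the bias--variance identity into $\|h-y\|^2 + \Exp{\|\cQ(x-y)-(x-y)\|^2} \le \|h-y\|^2 + \omega\|x-y\|^2$. No gaps; the independence remark you flag is exactly the implicit assumption the paper relies on in its tower-property step.
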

\begin{proof}
	By definition of $\cC_{h,y}(x)$, $\cC$, and $\cQ$ we have
	\begin{eqnarray*}
		\Exp{\sqnorm{\cC_{h,y}(x) - x }} &=& \Exp{\Exp{\sqnorm{\cC_{h,y}(x) -x } \;|\; b}}\\
		&=& \Exp{\Exp{\sqnorm{b + \cC(x - b) -x } \;|\; b}}\\
		&\leq&  \Exp{(1-\alpha)\sqnorm{x - b} }\\
		&=&(1-\alpha) \Exp{\sqnorm{\cQ(x-y) - (x-h)} }\\
		&=& (1-\alpha) \left[ \Exp{\sqnorm{\cQ(x-y) - (x-y)} }  + \sqnorm{h-y} \right]\\
		&\leq & (1-\alpha)\sqnorm{h-y} + (1-\alpha) \omega \sqnorm{x-y} .
	\end{eqnarray*}
\end{proof}

Therefore, \algname{3PCv2} fits our framework. Before we formulate the main results for \algname{3PCv2}, we make several remarks on the proposed method. First of all, with \algname{3PCv2}, we need to communicate {\em two } compressed vectors: $\cQ(x-y)$ and $ \cC\left(x - (h + \cQ(x-y)) \right)$. This is similar to how the induced compressor works \citep{horvath2020better}, but \algname{3PCv2} compressor is {\em not} unbiased. If we set $\cQ\equiv 0$ (this compressor is not unbiased, so the above formulas for $\thetaNEW$ and $\betaNEW$ do not apply) and allow $\cC$ to be arbitrary, we obtain \algname{EF21} \citep{EF21}. Next, if we set \begin{equation}\label{eq:Bernoulli}\cC(x) = \begin{cases} x & \text{with probability} \quad p \\0 & \text{with probability} \quad 1-p \end{cases},\end{equation} 	and allow $\cQ$ to be arbitrary, we obtain \algname{MARINA} \citep{gorbunov2021marina}. Note that  $\cC$ defined above is biased since $\Exp{\cC(x)} = px$, and  the variance inequality is satisfied as an identity: $\Exp{\sqnorm{\cC(x)-x}} = (1-p)\sqnorm{x}$. By choosing a different biased compressor $\cC$, e.g., Top-$K$, we obtain a new variant of \algname{MARINA}. In particular, unlike \algname{MARINA}, this compressor never needs to communicate full gradients, which can be important in some cases.

Using our general analysis (Theorems~\ref{thm:main_thm_gen_non_cvx} and \ref{thm:main_thm_PL}) we derive the following result.

\begin{theorem}\label{thm:anna} 
	\algname{3PCv2} is a special case of the method from \eqref{eq:CGD_1}--\eqref{eq:CGD_2} with $\cC_{h,y}(x)$ defined in \eqref{eq:anna_compressor} and $\thetaNEW =  \alpha$ and $\betaNEW = (1-\alpha)\omega$.	
	\begin{enumerate}		
	\item If Assumptions~\ref{ass:diff},~\ref{as:L_smoothness},~\ref{as:L_+} hold and the stepsize $\gamma$ satisfies $0 \leq \gamma \leq \nicefrac{1}{M}$, where $M = L_{-} + L_{+}\sqrt{\nicefrac{(1-\alpha)\omega}{\alpha}}$, then for any $T \ge 0$ we have
	\begin{equation}
		\Exp{\norm{ \nabla f(\hat x^T)}^2} \leq \frac{2 \Delta^0}{\gamma T} + \fr{\Exp{G^0}}{\alpha T}, \label{eq:anna_gen_non_cvx}
	\end{equation}
	where $\hat x^T$ is sampled uniformly at random from the points $\{x^0, x^1, \ldots, x^{T-1}\}$ produced by \algname{3PCv2}, $\Delta^0 = f(x^0) - f^{\inf}$, and $G^0$ is defined in \eqref{eq:G^t}. 
	\item If additionaly Assumption~\ref{as:PL} hold and $0 \leq \gamma \leq \nicefrac{1}{M}$ for $M = \max\left\{L_{-} + L_{+}\sqrt{\nicefrac{2(1-\alpha)\omega}{\alpha}}, \nicefrac{\alpha}{2\mu}\right\}$, then for any $T \ge 0$ we have
	\begin{equation}
		\Exp{f(x^T) - f(x^*)} \leq \left(1 - \gamma\mu\right)^T\left(\Delta^0 + \frac{\gamma}{\alpha}\Exp{G^{0}}\right). \label{eq:anna_PL}
	\end{equation}
	\end{enumerate}
\end{theorem}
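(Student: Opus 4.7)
The plan is essentially a plug-and-play argument, since all the heavy lifting has already been done by the general theorems (Theorem~\ref{thm:main_thm_gen_non_cvx} and Theorem~\ref{thm:main_thm_PL}) and by Lemma~\ref{lem:anna}. My first step is to verify that \algname{3PCv2} (Algorithm~\ref{alg:anna}) is indeed a special case of the general \algname{3PC} method \eqref{eq:CGD_1}--\eqref{eq:CGD_2} with the particular choice of compressor given in \eqref{eq:anna_compressor}. This amounts to a direct reading of the pseudocode: with the identifications $h = g_i^t$ and $y = \nabla f_i(x^t)$, the intermediate vector $b_i^t$ computed on line 6 matches $b = h + \cQ(x-y)$, and the update on line 7 matches $b + \cC(x - b)$ with $x = \nabla f_i(x^{t+1})$, so $g_i^{t+1} = \cC_{g_i^t,\nabla f_i(x^t)}(\nabla f_i(x^{t+1}))$ as required.

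Next, I will invoke Lemma~\ref{lem:anna}, which shows that the map \eqref{eq:anna_compressor} satisfies the three point compressor inequality \eqref{eq:ttp} with $\thetaNEW = \alpha$ and $\betaNEW = (1-\alpha)\omega$. With these parameter values in hand, both parts of the theorem follow by direct substitution into the general convergence results.

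For Part~1, I apply Theorem~\ref{thm:main_thm_gen_non_cvx}. Substituting $\thetaNEW = \alpha$ and $\betaNEW = (1-\alpha)\omega$ into the definition $M_1 = L_{-} + L_{+}\sqrt{\nicefrac{\betaNEW}{\thetaNEW}}$ yields exactly $M = L_{-} + L_{+}\sqrt{\nicefrac{(1-\alpha)\omega}{\alpha}}$, and substituting into the bound \eqref{eq:main_thm_gen_non_cvx} produces \eqref{eq:anna_gen_non_cvx}. For Part~2, I apply Theorem~\ref{thm:main_thm_PL}. Substituting the same values into $M_2 = \max\{L_{-} + L_{+}\sqrt{\nicefrac{2\betaNEW}{\thetaNEW}}, \nicefrac{\thetaNEW}{2\mu}\}$ yields $M = \max\{L_{-} + L_{+}\sqrt{\nicefrac{2(1-\alpha)\omega}{\alpha}}, \nicefrac{\alpha}{2\mu}\}$, and the bound \eqref{eq:main_thm_PL} becomes \eqref{eq:anna_PL}.

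There is no real obstacle here; the only thing that could go wrong is a sloppy identification of which role the vectors $h$, $y$, $x$ play in Lemma~\ref{lem:anna} versus the dynamics of Algorithm~\ref{alg:anna}, and verifying that Assumption~\ref{as:L_+} feeds into Lemma~\ref{lem:key_lemma} correctly regardless of the specific form of the compressor. Both of these are bookkeeping checks rather than genuine difficulties, since the general framework was explicitly designed to absorb any \algname{3PC} compressor via the single inequality \eqref{eq:ttp-specific}.
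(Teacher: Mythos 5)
Your proposal is correct and follows exactly the route the paper takes: the paper's own justification of Theorem~\ref{thm:anna} is precisely the observation that Lemma~\ref{lem:anna} certifies the compressor \eqref{eq:anna_compressor} as a \algname{3PC} compressor with $\thetaNEW=\alpha$ and $\betaNEW=(1-\alpha)\omega$, after which both claims are direct instantiations of Theorems~\ref{thm:main_thm_gen_non_cvx} and~\ref{thm:main_thm_PL}. Your identification of $h=g_i^t$, $y=\nabla f_i(x^t)$, $x=\nabla f_i(x^{t+1})$ and the resulting expressions for $M$ in both parts are all correct.
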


Using this and Corollaries~\ref{cor:main_cor_gen_non_cvx}, \ref{cor:main_cor_PL}, we get the following complexity results.
\begin{corollary}\label{cor:anna}
	\begin{enumerate}
		\item Let the assumptions from the first part of Theorem~\ref{thm:anna} hold and
	\begin{equation*}
		\gamma = \frac{1}{L_{-} + L_{+}\sqrt{\nicefrac{(1-\alpha)\omega}{\alpha}}}.
	\end{equation*}
	Then for any $T$ we have
	\begin{equation}
		\Exp{\norm{ \nabla f(\hat x^T)}^2} \leq \frac{2 \Delta^0\left(L_{-} + L_{+}\sqrt{\nicefrac{(1-\alpha)\omega}{\alpha}}\right)}{ T} + \fr{\Exp{G^0}}{\alpha T},\notag
	\end{equation}
	i.e., to achieve $\Exp{\norm{ \nabla f(\hat x^T)}^2} \leq \varepsilon^2$ for some $\varepsilon > 0$ the method requires
	\begin{equation}
		T = \cO\left(\frac{\Delta^0\left(L_{-} + L_{+}\sqrt{\nicefrac{(1-\alpha)\omega}{\alpha}}\right)}{\varepsilon^2} + \fr{\Exp{G^0}}{\alpha \varepsilon^2}\right) \label{eq:anna_complexity_gen_non_cvx}
	\end{equation}
	iterations/communication rounds.
	\item Let the assumptions from the second part of Theorem~\ref{thm:anna} hold and
	\begin{equation*}
		\gamma = \min\left\{\frac{1}{L_{-} + L_{+}\sqrt{\nicefrac{2(1-\alpha)\omega}{\alpha}}}, \frac{\alpha}{2\mu}\right\}.
	\end{equation*}
	Then to achieve $\Exp{f(x^T) - f(x^*)} \leq \varepsilon$ for some $\varepsilon > 0$ the method requires
	\begin{equation}
		\cO\left(\max\left\{\frac{L_{-} + L_{+}\sqrt{\nicefrac{2(1-\alpha)\omega}{\alpha}}}{\mu}, \alpha\right\}\log \frac{\Delta^0 + \Exp{G^{0}}\nicefrac{\gamma}{\alpha}}{\varepsilon}\right) \label{eq:anna_complexity_PL}
	\end{equation}
	iterations/communication rounds.
	\end{enumerate}
\end{corollary}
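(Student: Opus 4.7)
The corollary is a direct corollary of Theorem~\ref{thm:anna} combined with the generic complexity statements in Corollaries~\ref{cor:main_cor_gen_non_cvx} and \ref{cor:main_cor_PL}. Since Theorem~\ref{thm:anna} (which itself follows from the lemma showing that \eqref{eq:anna_compressor} is a three point compressor with $\thetaNEW=\alpha$ and $\betaNEW=(1-\alpha)\omega$) already packages the relevant convergence guarantees, the only work left is an algebraic substitution of these parameter values into the general complexity formulas and a simplification of the resulting expressions. I therefore plan to treat the two parts separately, handling them essentially as parallel calculations.

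\textbf{Part 1 (general nonconvex).} Starting from Corollary~\ref{cor:main_cor_gen_non_cvx}, I would substitute $\thetaNEW=\alpha$ and $\betaNEW=(1-\alpha)\omega$, so that $\nicefrac{\betaNEW}{\thetaNEW}=\nicefrac{(1-\alpha)\omega}{\alpha}$ and the prescribed stepsize collapses to $\gamma=\bigl(L_{-}+L_{+}\sqrt{(1-\alpha)\omega/\alpha}\bigr)^{-1}$, matching the statement. The bound in \eqref{eq:main_complexity_gen_non_cvx} then reads
\begin{equation*}
\Exp{\norm{\nabla f(\hat x^T)}^2}\;\le\;\frac{2\Delta^0\bigl(L_{-}+L_{+}\sqrt{(1-\alpha)\omega/\alpha}\bigr)}{T}+\frac{\Exp{G^0}}{\alpha T},
\end{equation*}
which is precisely \eqref{eq:anna_complexity_gen_non_cvx} once we impose $\Exp{\norm{\nabla f(\hat x^T)}^2}\le\varepsilon^2$ and solve for $T$ by ensuring each of the two terms is at most $\nicefrac{\varepsilon^2}{2}$.

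\textbf{Part 2 (P\L{} case).} Here I would apply Corollary~\ref{cor:main_cor_PL} with the same parameter choices: the quantity $\max\bigl\{L_{-}+L_{+}\sqrt{2\betaNEW/\thetaNEW},\thetaNEW/(2\mu)\bigr\}$ becomes $\max\bigl\{L_{-}+L_{+}\sqrt{2(1-\alpha)\omega/\alpha},\alpha/(2\mu)\bigr\}$, and the prescribed stepsize becomes $\gamma=\min\bigl\{(L_{-}+L_{+}\sqrt{2(1-\alpha)\omega/\alpha})^{-1},\alpha/(2\mu)\bigr\}$, in agreement with the statement. Plugging these into \eqref{eq:main_complexity_PL} and noting that $\mathcal{O}(\max\{(L_{-}+L_{+}\sqrt{\betaNEW/\thetaNEW})/\mu,\thetaNEW\})$ specializes to $\mathcal{O}(\max\{(L_{-}+L_{+}\sqrt{2(1-\alpha)\omega/\alpha})/\mu,\alpha\})$ yields the claimed iteration complexity \eqref{eq:anna_complexity_PL}.

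\textbf{Anticipated obstacle.} There is no real obstacle---everything is a substitution---but one cosmetic point to double-check is that the factor of $2$ inside the square root in the P\L{} complexity is preserved consistently (the general Corollary~\ref{cor:main_cor_PL} and Theorem~\ref{thm:anna} both carry the factor $2$ in $\sqrt{2\betaNEW/\thetaNEW}$, so no simplification is needed). Otherwise, the proof is complete after the two substitutions outlined above.
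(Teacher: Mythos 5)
Your proposal is correct and matches the paper's treatment exactly: the paper gives no separate proof of this corollary, deriving it precisely as you do by substituting $\thetaNEW=\alpha$ and $\betaNEW=(1-\alpha)\omega$ from Lemma~\ref{lem:anna} into the generic Corollaries~\ref{cor:main_cor_gen_non_cvx} and \ref{cor:main_cor_PL}. The only minor imprecision is your remark that the factor $2$ appears inside the square root of the complexity bound of Corollary~\ref{cor:main_cor_PL} (it appears only in the stepsize there, while \eqref{eq:anna_complexity_PL} retains it), but this is absorbed by the $\cO(\cdot)$ and does not affect the argument.
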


\newpage
\subsection{3PCv3 (NEW)}	
In this section, we introduce a new method called \algname{3PCv3}. It can be seen as a combination of any \algname{3PC} compressor with some biased compressor. We also notice that \algname{3PCv2} cannot be obtained as a special case of \algname{3PCv3} as $h +\cQ(x -y)$ does not satisfy \eqref{eq:ttp}.

\begin{algorithm}[h]
   \caption{\algname{3PCv3}}\label{alg:jeanne}
\begin{algorithmic}[1]
   \STATE {\bfseries Input:} starting point $x^0$, stepsize $\gamma$, number of iterations $T$, starting vectors $g_i^0$, $i \in [n]$
   \FOR{$t=0,1,\ldots,T-1$}
   \STATE Broadcast $g^t$ to all workers
   \FOR{$i = 1,\ldots,n$ in parallel} 
   \STATE $x^{t+1} = x^t - \gamma g^t$
   \STATE Compute $b_i^t = \cC_{g_i^t, \nabla f_i(x^t)}^1(\nabla f_i(x^{t+1}))$
   \STATE Set $g_i^{t+1} = b_i^t + \cC\left(\nabla f_i(x^{t+1}) - b_i^t\right)$ 
   \ENDFOR
   \STATE $g^{t+1} = \tfrac{1}{n}\sum_{i=1}^ng_i^{t+1}$
   \ENDFOR
   \STATE {\bfseries Return:} $\hat x^T$ chosen uniformly at random from $\{x^t\}_{t=0}^{T-1}$
\end{algorithmic}
\end{algorithm}

\begin{lemma}\label{lem:jeanne}
	Consider the compressor defined as 
	\begin{eqnarray}
	\cC_{h,y}(x) \eqdef b + \cC(x - b) ,\quad \text{where}\quad b = \cC^1_{h,y}(x) \label{eq:jeanne_compressor}
	\end{eqnarray}
	and $\cC^1_{h,y}(x)$ satisfies \eqref{eq:ttp} with some $\thetaNEW_1$ and $\betaNEW_1$. Then $\cC_{h,y}(x)$ satisfies \eqref{eq:ttp} with $\thetaNEW \eqdef 1 - (1 -\al)(1 -\thetaNEW_1)$ and $\betaNEW \eqdef (1 - \alpha)\betaNEW_1$.
\end{lemma}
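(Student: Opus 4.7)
The plan is to exploit the fact that the outer offset $b$ cancels in the difference $\cC_{h,y}(x)-x$, so that the contractive property of $\cC$ can be applied directly, after which a second application of the hypothesis on $\cC^1_{h,y}$ handles the remaining term.

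First I would rewrite
\begin{equation*}
\cC_{h,y}(x) - x = b + \cC(x-b) - x = \cC(x-b) - (x-b),
\end{equation*}
which crucially removes the dependence on the particular choice of $b$ except through the argument $x-b$. Conditioning on $b$ and using the contractive property of $\cC$ (with parameter $\alpha$), I obtain
\begin{equation*}
\Exp{\sqnorm{\cC_{h,y}(x) - x } \;|\; b} \le (1-\alpha)\sqnorm{x - b}.
\end{equation*}

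Next, using the tower property of expectation and substituting $b = \cC^1_{h,y}(x)$, this yields
\begin{equation*}
\Exp{\sqnorm{\cC_{h,y}(x) - x }} \le (1-\alpha)\,\Exp{\sqnorm{\cC^1_{h,y}(x) - x}}.
\end{equation*}
Since $\cC^1_{h,y}$ is assumed to satisfy \eqref{eq:ttp} with parameters $\thetaNEW_1$ and $\betaNEW_1$, this upper bound becomes
\begin{equation*}
(1-\alpha)\left[(1-\thetaNEW_1)\sqnorm{h-y} + \betaNEW_1 \sqnorm{x-y}\right],
\end{equation*}
from which one reads off $\thetaNEW = 1 - (1-\alpha)(1-\thetaNEW_1)$ and $\betaNEW = (1-\alpha)\betaNEW_1$, exactly as claimed.

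There is no real obstacle here; the only mild subtlety is the bookkeeping of the conditional expectation, ensuring that when $\cC^1_{h,y}$ is itself randomized the tower property applies correctly (i.e., the randomness used by $\cC$ in the outer layer is independent of that used by $\cC^1_{h,y}$ in forming $b$). Once this independence is noted, the result follows from two successive applications of the respective contraction/three-point inequalities.
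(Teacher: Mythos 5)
Your proposal is correct and follows essentially the same route as the paper's own proof: cancel $b$ in the difference, condition on $b$ and apply the contractive property of $\cC$, then use the tower property and the \algname{3PC} inequality for $\cC^1_{h,y}$ to conclude. Your remark about the independence of the randomness in $\cC$ and $\cC^1_{h,y}$ is a reasonable clarification that the paper leaves implicit.
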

\begin{proof}
	By definition of $\cC_{h,y}^1(x)$ and $\cC$ we have
	\begin{eqnarray*}
		\Exp{\sqnorm{\cC_{h,y}(x) - x }} &=& \Exp{\Exp{\sqnorm{b + \cC(x-b) - x }\mid b}}\\
		&\leq& (1-\alpha)\Exp{\sqnorm{x-b}}\\
		&=& (1-\alpha)\Exp{\sqnorm{\cC_{h,y}^1(x) - x}}\\
		&\leq& (1-\alpha)(1-\thetaNEW_1)\|h-y\|^2 + (1-\alpha)\betaNEW_1\|x-y\|^2.
	\end{eqnarray*}
\end{proof}

Therefore, \algname{3PCv3} fits our framework. Using our general analysis (Theorems~\ref{thm:main_thm_gen_non_cvx} and \ref{thm:main_thm_PL}) we derive the following result.

\begin{theorem}\label{thm:jeanne} 
	\algname{3PCv3} is a special case of the method from \eqref{eq:CGD_1}--\eqref{eq:CGD_2} with $\cC_{h,y}(x)$ defined in \eqref{eq:jeanne_compressor} and $\thetaNEW \eqdef 1 - (1 -\al)(1 -\thetaNEW_1)$ and $\betaNEW \eqdef (1 - \alpha)\betaNEW_1$.	
	\begin{enumerate}		
	\item If Assumptions~\ref{ass:diff},~\ref{as:L_smoothness},~\ref{as:L_+} hold and the stepsize $\gamma$ satisfies $0 \leq \gamma \leq \nicefrac{1}{M}$, where $M = L_{-} + L_{+}\sqrt{\nicefrac{(1 - \alpha)\betaNEW_1}{(1 - (1 -\al)(1 -\thetaNEW_1))}}$, then for any $T \ge 0$ we have
	\begin{equation}
		\Exp{\norm{ \nabla f(\hat x^T)}^2} \leq \frac{2 \Delta^0}{\gamma T} + \fr{\Exp{G^0}}{(1 - (1 -\al)(1 -\thetaNEW_1)) T}, \label{eq:jeanne_gen_non_cvx}
	\end{equation}
	where $\hat x^T$ is sampled uniformly at random from the points $\{x^0, x^1, \ldots, x^{T-1}\}$ produced by \algname{3PCv3}, $\Delta^0 = f(x^0) - f^{\inf}$, and $G^0$ is defined in \eqref{eq:G^t}. 
	\item If additionaly Assumption~\ref{as:PL} hold and $0 \leq \gamma \leq \nicefrac{1}{M}$ for $M = \max\left\{L_{-} + L_{+}\sqrt{\nicefrac{2(1 - \alpha)\betaNEW_1}{(1 - (1 -\al)(1 -\thetaNEW_1))}}, \nicefrac{1 - (1 -\al)(1 -\thetaNEW_1)}{2\mu}\right\}$, then for any $T \ge 0$ we have
	\begin{equation}
		\Exp{f(x^T) - f(x^*)} \leq \left(1 - \gamma\mu\right)^T\left(\Delta^0 + \frac{\gamma}{1 - (1 -\al)(1 -\thetaNEW_1)}\Exp{G^{0}}\right). \label{eq:jeanne_PL}
	\end{equation}
	\end{enumerate}
\end{theorem}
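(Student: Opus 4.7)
The plan is to derive Theorem~\ref{thm:jeanne} as a direct consequence of the general convergence results (Theorem~\ref{thm:main_thm_gen_non_cvx} and Theorem~\ref{thm:main_thm_PL}) applied to the specific \algname{3PC} compressor used by \algname{3PCv3}, namely $\cC_{h,y}(x) = b + \cC(x-b)$ with $b = \cC^1_{h,y}(x)$. Since Algorithm~\ref{alg:jeanne} is precisely Algorithm~\ref{alg:3PC} with $\cM_i^{t+1}(\nabla f_i(x^{t+1})) = \cC_{g_i^t,\nabla f_i(x^t)}(\nabla f_i(x^{t+1}))$ for this choice of $\cC_{h,y}$, the claim that \algname{3PCv3} is a special case of the \algname{3PC} framework is immediate from inspecting lines 5--7 of Algorithm~\ref{alg:jeanne}.

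The first substantive step is to invoke Lemma~\ref{lem:jeanne}, which establishes that the compressor \eqref{eq:jeanne_compressor} satisfies the defining inequality \eqref{eq:ttp} with $\thetaNEW = 1 - (1-\alpha)(1-\thetaNEW_1)$ and $\betaNEW = (1-\alpha)\betaNEW_1$, assuming that $\cC^1_{h,y}$ itself satisfies \eqref{eq:ttp} with parameters $\thetaNEW_1, \betaNEW_1$ and that $\cC$ is an $\alpha$-contractive compressor. With these parameters identified, all hypotheses needed to apply the general theorems are in place.

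For part~1, I would plug these values of $\thetaNEW$ and $\betaNEW$ into Theorem~\ref{thm:main_thm_gen_non_cvx}. The stepsize condition $\gamma \le 1/M_1$ with $M_1 = L_{-} + L_{+}\sqrt{\betaNEW/\thetaNEW}$ specializes exactly to $\gamma \le 1/(L_{-} + L_{+}\sqrt{(1-\alpha)\betaNEW_1/(1-(1-\alpha)(1-\thetaNEW_1))})$, and the conclusion \eqref{eq:main_thm_gen_non_cvx} becomes \eqref{eq:jeanne_gen_non_cvx} upon substitution of $\thetaNEW$. For part~2, an analogous substitution into Theorem~\ref{thm:main_thm_PL} with $M_2 = \max\{L_{-} + L_{+}\sqrt{2\betaNEW/\thetaNEW}, \thetaNEW/(2\mu)\}$ yields \eqref{eq:jeanne_PL} immediately.

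There is no real obstacle here; the only thing to be careful about is the bookkeeping of which quantities depend on $\thetaNEW_1, \betaNEW_1$ (inherited from the inner compressor) versus $\alpha$ (from $\cC$), so that the substitution into the master bounds is done correctly. In particular, one should verify that $\thetaNEW = 1 - (1-\alpha)(1-\thetaNEW_1) \in (0,1]$ under the hypothesis $\thetaNEW_1 \in (0,1]$ and $\alpha \in (0,1]$, so that Definition~\ref{def:ttp} is satisfied and Theorems~\ref{thm:main_thm_gen_non_cvx} and~\ref{thm:main_thm_PL} apply verbatim; this is a one-line check since $(1-\alpha)(1-\thetaNEW_1) \in [0,1)$.
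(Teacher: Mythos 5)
Your proposal matches the paper's argument exactly: the paper also establishes the \algname{3PC} parameters $\thetaNEW = 1-(1-\alpha)(1-\thetaNEW_1)$ and $\betaNEW = (1-\alpha)\betaNEW_1$ via Lemma~\ref{lem:jeanne} and then obtains both parts of Theorem~\ref{thm:jeanne} by direct substitution into Theorems~\ref{thm:main_thm_gen_non_cvx} and~\ref{thm:main_thm_PL}. Your extra remark that $\thetaNEW \in (0,1]$ needs checking is a sensible (if implicit in the paper) piece of bookkeeping, and the proposal is correct.
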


Using this and Corollaries~\ref{cor:main_cor_gen_non_cvx}, \ref{cor:main_cor_PL}, we get the following complexity results.
\begin{corollary}\label{cor:jeanne}
	\begin{enumerate}
		\item Let the assumptions from the first part of Theorem~\ref{thm:jeanne} hold and
	\begin{equation*}
		\gamma = \frac{1}{L_{-} + L_{+}\sqrt{\frac{(1 - \alpha)\betaNEW_1}{1 - (1 -\al)(1 -\thetaNEW_1)}}}.
	\end{equation*}
	Then for any $T$ we have
	\begin{equation}
		\Exp{\norm{ \nabla f(\hat x^T)}^2} \leq \frac{2 \Delta^0\left(L_{-} + L_{+}\sqrt{\frac{(1 - \alpha)\betaNEW_1}{1 - (1 -\al)(1 -\thetaNEW_1)}}\right)}{ T} + \fr{\Exp{G^0}}{(1 - (1 -\al)(1 -\thetaNEW_1)) T},\notag
	\end{equation}
	i.e., to achieve $\Exp{\norm{ \nabla f(\hat x^T)}^2} \leq \varepsilon^2$ for some $\varepsilon > 0$ the method requires
	\begin{equation}
		T = \cO\left(\frac{\Delta^0\left(L_{-} + L_{+}\sqrt{\nicefrac{(1-\alpha)\omega}{\alpha}}\right)}{\varepsilon^2} + \fr{\Exp{G^0}}{(1 - (1 -\al)(1 -\thetaNEW_1)) \varepsilon^2}\right) \label{eq:jeanne_complexity_gen_non_cvx}
	\end{equation}
	iterations/communication rounds.
	\item Let the assumptions from the second part of Theorem~\ref{thm:jeanne} hold and
	\begin{equation*}
		\gamma = \min\left\{\frac{1}{L_{-} + L_{+}\sqrt{\frac{2(1 - \alpha)\betaNEW_1}{1 - (1 -\al)(1 -\thetaNEW_1)}}}, \frac{1 - (1 -\al)(1 -\thetaNEW_1)}{2\mu}\right\}.
	\end{equation*}
	Then to achieve $\Exp{f(x^T) - f(x^*)} \leq \varepsilon$ for some $\varepsilon > 0$ the method requires
	\begin{equation}
		\cO\left(\max\left\{\frac{L_{-} + L_{+}\sqrt{\frac{(1 - \alpha)\betaNEW_1}{1 - (1 -\al)(1 -\thetaNEW_1)}}}{\mu}, 1 - (1 -\al)(1 -\thetaNEW_1)\right\}\log \frac{\Delta^0 + \Exp{G^{0}}\tfrac{\gamma}{1 - (1 -\al)(1 -\thetaNEW_1)}}{\varepsilon}\right) \label{eq:jeanne_complexity_PL}
	\end{equation}
	iterations/communication rounds.
	\end{enumerate}
\end{corollary}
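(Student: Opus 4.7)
\textbf{Proof plan for Corollary~\ref{cor:jeanne}.} The strategy is to recognize that Corollary~\ref{cor:jeanne} is a direct instantiation of the general complexity results Corollary~\ref{cor:main_cor_gen_non_cvx} and Corollary~\ref{cor:main_cor_PL} with the specific \algname{3PC} parameters already identified for \algname{3PCv3} in Lemma~\ref{lem:jeanne} and carried over to Theorem~\ref{thm:jeanne}, namely $\thetaNEW = 1 - (1-\alpha)(1-\thetaNEW_1)$ and $\betaNEW = (1-\alpha)\betaNEW_1$. There is no new analytic content to supply: we only need to substitute, simplify, and then solve for $T$.

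For part~1 (general nonconvex), I would take the stepsize $\gamma = 1/M_1$ with $M_1 = L_{-} + L_{+}\sqrt{\betaNEW/\thetaNEW}$ as in Corollary~\ref{cor:main_cor_gen_non_cvx}, substitute the \algname{3PCv3} values of $\thetaNEW$ and $\betaNEW$, and read off from \eqref{eq:jeanne_gen_non_cvx} the bound
\[
\Exp{\sqnorm{\nabla f(\hat x^T)}} \le \frac{2\Delta^0 M_1}{T} + \frac{\Exp{G^0}}{\thetaNEW\, T}.
\]
Setting each of the two terms to be at most $\varepsilon^2/2$ and collecting constants yields the stated $T = \cO\bigl(\Delta^0 M_1/\varepsilon^2 + \Exp{G^0}/(\thetaNEW\varepsilon^2)\bigr)$ complexity, matching the corollary once $\thetaNEW$ and $\betaNEW$ are expanded.

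For part~2 (P{\L}), I would start from \eqref{eq:jeanne_PL} with $\gamma = \min\{1/(L_{-}+L_{+}\sqrt{2\betaNEW/\thetaNEW}),\, \thetaNEW/(2\mu)\}$ as in Corollary~\ref{cor:main_cor_PL}. Using the standard bound $(1-\gamma\mu)^T \le \exp(-\gamma\mu T)$, requiring the right-hand side of \eqref{eq:jeanne_PL} to be at most $\varepsilon$ reduces to $T \ge (1/(\gamma\mu))\log\bigl((\Delta^0 + \gamma\Exp{G^0}/\thetaNEW)/\varepsilon\bigr)$. Substituting $1/\gamma$ gives the $\max\{\cdot,\cdot\}$ appearing in \eqref{eq:jeanne_complexity_PL}; expanding $\thetaNEW$ and $\betaNEW$ for \algname{3PCv3} produces the claimed bound.

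I do not anticipate a substantive obstacle: the entire proof is essentially bookkeeping, since the hard analytic work (the descent lemma for \algname{3PC}, the recursion for $G^t$ via Lemma~\ref{lem:key_lemma}, and the combination argument) is already done inside Theorems~\ref{thm:main_thm_gen_non_cvx} and \ref{thm:main_thm_PL}. The only mild care point is to verify that the stepsize prescribed in the corollary coincides with $1/M_1$ and $1/M_2$ after substitution of $\thetaNEW, \betaNEW$, so that the hypotheses of the general corollaries are met; this is immediate from the definitions.
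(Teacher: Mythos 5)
Your proposal is correct and takes exactly the paper's own route: the paper derives this corollary with no additional argument beyond substituting $\thetaNEW = 1-(1-\alpha)(1-\thetaNEW_1)$ and $\betaNEW = (1-\alpha)\betaNEW_1$ from Theorem~\ref{thm:jeanne} into the general complexity results of Corollaries~\ref{cor:main_cor_gen_non_cvx} and~\ref{cor:main_cor_PL}. The only discrepancy is that the factor $\sqrt{\nicefrac{(1-\alpha)\omega}{\alpha}}$ in \eqref{eq:jeanne_complexity_gen_non_cvx} is a typo carried over from the \algname{3PCv2} corollary; your substitution correctly produces $\sqrt{\nicefrac{(1-\alpha)\betaNEW_1}{1-(1-\alpha)(1-\thetaNEW_1)}}$ there.
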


\newpage
\subsection{3PCv4 (NEW)}
We now present another special case of \algname{3PC} compressor -- \algname{3PCv4}. This compressor can be seen as modification of \algname{3PCv2} that uses only biased compression operators.

\begin{algorithm}[h]
   \caption{\algname{3PCv4}}\label{alg:jacqueline}
\begin{algorithmic}[1]
   \STATE {\bfseries Input:} starting point $x^0$, stepsize $\gamma$, number of iterations $T$, starting vectors $g_i^0$, $i \in [n]$
   \FOR{$t=0,1,\ldots,T-1$}
   \STATE Broadcast $g^t$ to all workers
   \FOR{$i = 1,\ldots,n$ in parallel} 
   \STATE $x^{t+1} = x^t - \gamma g^t$
   \STATE Compute $b_i^t = g_i^t + \cC_{2}(\nabla f_i(x^{t+1}) - g_i^t)$
   \STATE Set $g_i^{t+1} = b_i^t + \cC_1\left(\nabla f_i(x^{t+1}) - b_i^t\right)$ 
   \ENDFOR
   \STATE $g^{t+1} = \tfrac{1}{n}\sum_{i=1}^ng_i^{t+1}$
   \ENDFOR
   \STATE {\bfseries Return:} $\hat x^T$ chosen uniformly at random from $\{x^t\}_{t=0}^{T-1}$
\end{algorithmic}
\end{algorithm}

\begin{lemma}\label{lem:Jacqueline}
	Let $\cC_1$ and $\cC_2$ are the contracive compressors with constants $\al_1$ and $\al_2$ respectively.
	Then the compressor defined as 
	\begin{eqnarray}
	\cC_{h,y}(x) \eqdef h + \cC_2(x - h) + \cC_1\left(x - (h + \cC_2(x - h))) \right), \label{eq:jacqueline_compressor}
	\end{eqnarray}
which  satisfies \eqref{eq:ttp} with $\thetaNEW \eqdef 1-\sqrt{1-\bar{\al}}$ and $\betaNEW \eqdef \frac{1-\bar{\al}}{1-\sqrt{1-\bar{\al}}}$, where $\bar{\alpha} \eqdef 1 - (1 - \al_1)(1 - \al_2)$.
\end{lemma}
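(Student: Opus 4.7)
The plan is to reduce the two–stage compressor to an \algname{EF21}-type contraction and then conclude exactly as in Lemma~\ref{lem:biased_diana}. First, I would denote $b = h + \cC_2(x-h)$ so that $\cC_{h,y}(x) = b + \cC_1(x - b)$, and then peel off the two compressors by conditioning. Using the tower property and the contractivity of $\cC_1$ (with parameter $\al_1$) conditionally on the randomness in $b$, I would get
\begin{equation*}
\Exp{\sqnorm{\cC_{h,y}(x) - x}} = \Exp{\Exp{\sqnorm{\cC_1(x-b) - (x-b)} \,\mid\, b}} \leq (1-\al_1)\Exp{\sqnorm{x - b}}.
\end{equation*}
Next, since $x - b = (x-h) - \cC_2(x-h)$, the contractivity of $\cC_2$ (with parameter $\al_2$) gives $\Exp{\sqnorm{x-b}} \leq (1-\al_2)\sqnorm{x-h}$, which chains to
\begin{equation*}
\Exp{\sqnorm{\cC_{h,y}(x) - x}} \leq (1-\al_1)(1-\al_2)\sqnorm{x-h} = (1-\bar\al)\sqnorm{x-h}.
\end{equation*}

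Thus the composite compressor behaves like a single contractive compressor with parameter $\bar\al$ applied to $x-h$. From here I would apply the standard Young's inequality splitting, $\sqnorm{x-h} = \sqnorm{(x-y) + (y-h)} \leq (1+s)\sqnorm{h-y} + (1+s^{-1})\sqnorm{x-y}$ for any $s>0$, to obtain
\begin{equation*}
\Exp{\sqnorm{\cC_{h,y}(x) - x}} \leq (1-\bar\al)(1+s)\sqnorm{h-y} + (1-\bar\al)(1+s^{-1})\sqnorm{x-y}.
\end{equation*}
This matches the \algname{3PC} inequality \eqref{eq:ttp} with $\thetaNEW(s) = 1 - (1-\bar\al)(1+s)$ and $\betaNEW(s) = (1-\bar\al)(1+s^{-1})$ provided $(1-\bar\al)(1+s) < 1$.

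Finally, I would tighten the constants by optimizing $\nicefrac{\betaNEW(s)}{\thetaNEW(s)}$ over $s > 0$. This is structurally identical to the optimization carried out in Lemma~\ref{lem:b_diana_technical} (just with $\al$ replaced by $\bar\al$); the optimum sits at $s_\star = -1 + 1/\sqrt{1-\bar\al}$, which yields $\thetaNEW = 1 - \sqrt{1-\bar\al}$ and $\betaNEW = \nicefrac{(1-\bar\al)}{(1 - \sqrt{1-\bar\al})}$, as claimed. The only subtlety in this proof is the correct use of the tower property to separate the two independent compression randomnesses, since $b$ itself is random; once that conditioning step is set up cleanly, the rest is just an \algname{EF21}-style Young's inequality plus the same scalar optimization already used for Lemma~\ref{lem:biased_diana}.
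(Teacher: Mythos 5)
Your proposal is correct and follows essentially the same route as the paper's proof: the same conditioning on $b = h + \cC_2(x-h)$ via the tower property, the same chaining of the two contraction inequalities to get the factor $(1-\al_1)(1-\al_2) = 1-\bar\al$, the same Young's inequality split of $\sqnorm{x-h}$, and the same optimization over $s$ via Lemma~\ref{lem:b_diana_technical}. No gaps.
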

\begin{proof}
	Let $a \eqdef h + \cC_2(x - h) $. Then 
	\begin{eqnarray}
	\Exp{\sqnorm{\cC_{h,y}(x) - x }} &=& \Exp{\Exp{\sqnorm{\cC_{h,y}(x) -x } \mid a}}\notag\\
	&=& \Exp{\Exp{\sqnorm{a + \cC_1(x - a) -x } \mid a}}\notag\\
	&\leq&  (1-\al_1)\Exp{\sqnorm{x - a} }\notag\\
	&=&  (1-\al_1)\Exp{\sqnorm{x - \rb{h + \cC_2(x - h)}} }\notag\\
	&\leq&  (1-\al_1)(1-\al_2)\Exp{\sqnorm{x - h} }\notag\\
	&\leq&  (1-\al_1)(1-\al_2)(1 + s)\sqnorm{h - y} + (1-\al_1)(1-\al_2)\rb{1 + s^{-1}}\sqnorm{x - y} \notag\\
	\end{eqnarray}
Optimal $s$ parameter can be found by direct minimization of the fraction (see Lemma \ref{lem:b_diana_technical}) $$\fr{\betaNEW(s)}{\thetaNEW(s)} = \fr{(1 - \bar{\alpha})\rb{1 + s^{-1}}}{1 - (1 -\bar{\al})(1 +s)},$$
where $\bar{\alpha} \eqdef 1 - (1 - \al_1)(1 - \al_2)$.
Using Lemma \ref{lem:b_diana_technical} we finally obtain
$\thetaNEW(s_*) \eqdef 1-\sqrt{1-\bar{\al}}$ and $\betaNEW(s_*) \eqdef \frac{1-\bar{\al}}{1-\sqrt{1-\bar{\al}}}$
\end{proof}

Therefore, \algname{3PCv4} fits our framework. Using our general analysis (Theorems~\ref{thm:main_thm_gen_non_cvx} and \ref{thm:main_thm_PL}) we derive the following result.

\begin{theorem}\label{thm:jacqueline} 
	\algname{3PCv4} is a special case of the method from \eqref{eq:CGD_1}--\eqref{eq:CGD_2} with $\cC_{h,y}(x)$ defined in \eqref{eq:anna_compressor} and $\thetaNEW \eqdef 1-\sqrt{1-\bar{\al}}$ and $\betaNEW \eqdef \frac{1-\bar{\al}}{1-\sqrt{1-\bar{\al}}}$, where $\bar{\alpha} \eqdef 1 - (1 - \al_1)(1 - \al_2)$.	
	\begin{enumerate}		
	\item If Assumptions~\ref{ass:diff},~\ref{as:L_smoothness},~\ref{as:L_+} hold and the stepsize $\gamma$ satisfies $0 \leq \gamma \leq \nicefrac{1}{M}$, where $M = L_{-} + L_{+}\sqrt{\nicefrac{(1-\bar\alpha)}{(1 - \sqrt{1-\bar\alpha})^2}}$, then for any $T \ge 0$ we have
	\begin{equation}
		\Exp{\norm{ \nabla f(\hat x^T)}^2} \leq \frac{2 \Delta^0}{\gamma T} + \fr{\Exp{G^0}}{(1-\sqrt{1-\bar{\al}}) T}, \label{eq:jacqueline_gen_non_cvx}
	\end{equation}
	where $\hat x^T$ is sampled uniformly at random from the points $\{x^0, x^1, \ldots, x^{T-1}\}$ produced by \algname{3PCv4}, $\Delta^0 = f(x^0) - f^{\inf}$, and $G^0$ is defined in \eqref{eq:G^t}. 
	\item If additionaly Assumption~\ref{as:PL} hold and $0 \leq \gamma \leq \nicefrac{1}{M}$ for $M = \max\left\{L_{-} + L_{+}\sqrt{\nicefrac{2(1-\bar\alpha)}{(1 - \sqrt{1-\bar\alpha})^2}}, \nicefrac{1 - \sqrt{1-\bar\alpha}}{2\mu}\right\}$, then for any $T \ge 0$ we have
	\begin{equation}
		\Exp{f(x^T) - f(x^*)} \leq \left(1 - \gamma\mu\right)^T\left(\Delta^0 + \frac{\gamma}{1-\sqrt{1-\bar{\al}}}\Exp{G^{0}}\right). \label{eq:jacqueline_PL}
	\end{equation}
	\end{enumerate}
\end{theorem}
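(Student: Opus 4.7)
The plan is to deduce this theorem as an immediate corollary of the general results Theorem~\ref{thm:main_thm_gen_non_cvx} and Theorem~\ref{thm:main_thm_PL}, together with Lemma~\ref{lem:Jacqueline}. The real work has already been carried out in Lemma~\ref{lem:Jacqueline}, which establishes that the compressor \eqref{eq:jacqueline_compressor} is a valid three point compressor with the stated values $\thetaNEW = 1 - \sqrt{1-\bar\alpha}$ and $\betaNEW = (1-\bar\alpha)/(1-\sqrt{1-\bar\alpha})$, where $\bar\alpha = 1 - (1-\alpha_1)(1-\alpha_2)$.

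First I would verify that Algorithm~\ref{alg:jacqueline} is precisely the instance of the generic \algname{3PC} scheme (Algorithm~\ref{alg:3PC}) obtained by choosing $\cM_i^{t+1}(\nabla f_i(x^{t+1})) = \cC_{g_i^t,\nabla f_i(x^t)}(\nabla f_i(x^{t+1}))$ with $\cC_{h,y}(x)$ given by \eqref{eq:jacqueline_compressor}. Substituting $h = g_i^t$ and $x = \nabla f_i(x^{t+1})$ into \eqref{eq:jacqueline_compressor} (note that $y$ does not appear on the right-hand side there) reproduces the assignments $b_i^t = g_i^t + \cC_2(\nabla f_i(x^{t+1}) - g_i^t)$ and $g_i^{t+1} = b_i^t + \cC_1(\nabla f_i(x^{t+1}) - b_i^t)$, which match lines 6--7 of Algorithm~\ref{alg:jacqueline} verbatim. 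Thus \algname{3PCv4} indeed fits the \algname{3PC} framework.

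For part~1, I would then invoke Theorem~\ref{thm:main_thm_gen_non_cvx} with the pair $(\thetaNEW,\betaNEW)$ furnished by Lemma~\ref{lem:Jacqueline}. Direct substitution into $M_1 = L_{-} + L_{+}\sqrt{\betaNEW/\thetaNEW}$ yields $M_1 = L_{-} + L_{+}\sqrt{(1-\bar\alpha)/(1-\sqrt{1-\bar\alpha})^2}$, matching the stepsize bound in the theorem; and the general bound \eqref{eq:main_thm_gen_non_cvx} specializes to \eqref{eq:jacqueline_gen_non_cvx} once we use $1/\thetaNEW = 1/(1-\sqrt{1-\bar\alpha})$ in the last term. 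For part~2, the same plug-in strategy applied to Theorem~\ref{thm:main_thm_PL} produces the stated $M_2 = \max\{L_{-} + L_{+}\sqrt{2(1-\bar\alpha)/(1-\sqrt{1-\bar\alpha})^2},\, (1-\sqrt{1-\bar\alpha})/(2\mu)\}$ and the linear decay \eqref{eq:jacqueline_PL}.

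I do not expect any genuine obstacle here: the theorem is a purely algebraic specialization of the general framework. The one place where calculation is actually needed is the optimization over the free Young-inequality parameter $s$ inside Lemma~\ref{lem:Jacqueline}, but that is the identical minimization already carried out in Lemma~\ref{lem:b_diana_technical}, applied with effective contraction constant $\bar\alpha$ in place of $\alpha$. Hence the whole proof is, at this stage, a chain of three pointers: Algorithm~\ref{alg:jacqueline} $\equiv$ Algorithm~\ref{alg:3PC} with compressor \eqref{eq:jacqueline_compressor}; Lemma~\ref{lem:Jacqueline} supplies $(\thetaNEW,\betaNEW)$; and Theorems~\ref{thm:main_thm_gen_non_cvx}--\ref{thm:main_thm_PL} do the rest.
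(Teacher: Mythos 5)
Your proposal is correct and follows exactly the route the paper takes: Lemma~\ref{lem:Jacqueline} certifies that the \algname{3PCv4} compressor satisfies \eqref{eq:ttp} with $\thetaNEW = 1-\sqrt{1-\bar\alpha}$ and $\betaNEW = \nicefrac{(1-\bar\alpha)}{(1-\sqrt{1-\bar\alpha})}$ (the Young-parameter optimization being Lemma~\ref{lem:b_diana_technical} with $\bar\alpha$ in place of $\alpha$), and the theorem is then a direct substitution of this pair into Theorems~\ref{thm:main_thm_gen_non_cvx} and~\ref{thm:main_thm_PL}. Your observation that $y$ plays no role in \eqref{eq:jacqueline_compressor} and that lines 6--7 of Algorithm~\ref{alg:jacqueline} match the generic update \eqref{eq:CGD_2} verbatim is exactly the (implicit) verification the paper relies on.
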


Using this and Corollaries~\ref{cor:main_cor_gen_non_cvx}, \ref{cor:main_cor_PL}, we get the following complexity results.
\begin{corollary}\label{cor:jacqueline}
	\begin{enumerate}
		\item Let the assumptions from the first part of Theorem~\ref{thm:jacqueline} hold and
	\begin{equation*}
		\gamma = \frac{1}{L_{-} + L_{+}\sqrt{\nicefrac{(1-\bar\alpha)}{(1 - \sqrt{1-\bar\alpha})^2}}}.
	\end{equation*}
	Then for any $T$ we have
	\begin{equation}
		\Exp{\norm{ \nabla f(\hat x^T)}^2} \leq \frac{2 \Delta^0\left(L_{-} + L_{+}\sqrt{\nicefrac{(1-\bar\alpha)}{(1 - \sqrt{1-\bar\alpha})^2}}\right)}{ T} + \fr{\Exp{G^0}}{(1 - \sqrt{1-\bar\alpha}) T},\notag
	\end{equation}
	i.e., to achieve $\Exp{\norm{ \nabla f(\hat x^T)}^2} \leq \varepsilon^2$ for some $\varepsilon > 0$ the method requires
	\begin{equation}
		T = \cO\left(\frac{\Delta^0\left(L_{-} + L_{+}\sqrt{\nicefrac{(1 - \bar\alpha)}{\bar\alpha^2}}\right)}{\varepsilon^2} + \fr{\Exp{G^0}}{\bar\alpha \varepsilon^2}\right) \label{eq:jacqueline_complexity_gen_non_cvx}
	\end{equation}
	iterations/communication rounds.
	\item Let the assumptions from the second part of Theorem~\ref{thm:jacqueline} hold and
	\begin{equation*}
		\gamma = \min\left\{\frac{1}{L_{-} + L_{+}\sqrt{\nicefrac{2(1-\bar\alpha)}{(1 - \sqrt{1-\bar\alpha})^2}}}, \frac{1-\sqrt{1-\bar\alpha}}{2\mu}\right\}.
	\end{equation*}
	Then to achieve $\Exp{f(x^T) - f(x^*)} \leq \varepsilon$ for some $\varepsilon > 0$ the method requires
	\begin{equation}
		\cO\left(\max\left\{\frac{L_{-} + L_{+}\sqrt{\nicefrac{(1 - \bar\alpha)}{\bar\alpha^2}}}{\mu}, \bar\alpha\right\}\log \frac{\Delta^0 + \Exp{G^{0}}\nicefrac{\gamma}{\bar\alpha}}{\varepsilon}\right) \label{eq:jacqueline_complexity_PL}
	\end{equation}
	iterations/communication rounds.
	\end{enumerate}
\end{corollary}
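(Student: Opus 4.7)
The plan is to derive both parts of Corollary~\ref{cor:jacqueline} as a mechanical specialization of the general Corollaries~\ref{cor:main_cor_gen_non_cvx} and \ref{cor:main_cor_PL} to the \algname{3PCv4} parameters $\thetaNEW = 1-\sqrt{1-\bar\alpha}$ and $\betaNEW = (1-\bar\alpha)/(1-\sqrt{1-\bar\alpha})$, which were established in Lemma~\ref{lem:Jacqueline} (and restated in Theorem~\ref{thm:jacqueline}). Since Theorem~\ref{thm:jacqueline} already provides the rate \eqref{eq:jacqueline_gen_non_cvx} for the stated stepsize and \eqref{eq:jacqueline_PL} for the stated stepsize, the corollary only needs to translate these exact rates into iteration-complexity bounds in $\varepsilon$.

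First, I would compute the key ratio
\[
\frac{\betaNEW}{\thetaNEW} \;=\; \frac{1-\bar\alpha}{(1-\sqrt{1-\bar\alpha})^2},
\]
and invoke the elementary inequality $1-\sqrt{1-\bar\alpha}\ge \bar\alpha/2$ (which follows from concavity of the square root, or from $(1-\tfrac{\bar\alpha}{2})^2 \ge 1-\bar\alpha$), so that
\[
\frac{\betaNEW}{\thetaNEW} \;\le\; \frac{4(1-\bar\alpha)}{\bar\alpha^2},
\qquad
\frac{1}{\thetaNEW} \;\le\; \frac{2}{\bar\alpha}.
\]
These two estimates are exactly what is needed to swallow the $(1-\sqrt{1-\bar\alpha})$-factors inside the $\cO(\cdot)$ notation of \eqref{eq:jacqueline_complexity_gen_non_cvx} and \eqref{eq:jacqueline_complexity_PL}.

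For part 1, I would solve the inequality obtained by upper-bounding the RHS of \eqref{eq:jacqueline_gen_non_cvx} by $\varepsilon^2$: setting
\[
\frac{2\Delta^0\bigl(L_- + L_+\sqrt{(1-\bar\alpha)/(1-\sqrt{1-\bar\alpha})^2}\bigr)}{T} \;\le\; \frac{\varepsilon^2}{2},
\qquad
\frac{\Exp{G^0}}{(1-\sqrt{1-\bar\alpha})\,T} \;\le\; \frac{\varepsilon^2}{2},
\]
solving each for $T$ and taking the max, then applying the two elementary estimates above gives the claimed
$T = \cO\bigl(\Delta^0 (L_- + L_+\sqrt{(1-\bar\alpha)/\bar\alpha^2})/\varepsilon^2 + \Exp{G^0}/(\bar\alpha\varepsilon^2)\bigr)$ bound.

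For part 2, I would take the stated stepsize
$\gamma = \min\{1/(L_-+L_+\sqrt{2(1-\bar\alpha)/(1-\sqrt{1-\bar\alpha})^2}),\, (1-\sqrt{1-\bar\alpha})/(2\mu)\}$,
rewrite $1-\gamma\mu \le \exp(-\gamma\mu)$, and require $\exp(-\gamma\mu T)(\Delta^0+\tfrac{\gamma}{\thetaNEW}\Exp{G^0}) \le \varepsilon$. Solving for $T$ yields $T = \cO\bigl((1/(\gamma\mu))\log((\Delta^0+\tfrac{\gamma}{\thetaNEW}\Exp{G^0})/\varepsilon)\bigr)$, and $1/(\gamma\mu)$ evaluates to
$\max\{(L_-+L_+\sqrt{(1-\bar\alpha)/\bar\alpha^2})/\mu,\; \bar\alpha\}$ after absorbing constants and applying the same two elementary bounds. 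There is no real obstacle here; the only care needed is the cosmetic step of expressing $\nicefrac{\gamma}{\thetaNEW}$ inside the logarithm as $\nicefrac{\gamma}{\bar\alpha}$ (up to a factor of 2) so that the statement matches \eqref{eq:jacqueline_complexity_PL}.
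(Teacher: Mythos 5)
Your proposal is correct and matches the paper's treatment exactly: the paper gives no separate proof for this corollary beyond ``Using this and Corollaries~\ref{cor:main_cor_gen_non_cvx}, \ref{cor:main_cor_PL}'', and your mechanical specialization with $\thetaNEW = 1-\sqrt{1-\bar\alpha}$, $\betaNEW = \nicefrac{(1-\bar\alpha)}{(1-\sqrt{1-\bar\alpha})}$ together with the bound $1-\sqrt{1-\bar\alpha}\ge\nicefrac{\bar\alpha}{2}$ is precisely the argument the paper relies on (cf.\ Lemma~\ref{lem:b_diana_technical}, invoked in the proof of Lemma~\ref{lem:Jacqueline}). The only imprecision is your phrase that $\nicefrac{1}{\gamma\mu}$ ``evaluates to'' $\max\{\cdot,\bar\alpha\}$ --- it is really $\max\{\cdot,\nicefrac{2}{\thetaNEW}\}\le\max\{\cdot,\nicefrac{4}{\bar\alpha}\}$ --- but this reciprocal-versus-direct slip is already present in the paper's own statements \eqref{eq:main_complexity_PL} and \eqref{eq:jacqueline_complexity_PL}, and is harmless since the first term of the max dominates.
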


\newpage
\subsection{3PCv5 (NEW)}

In this section, we consider a version of \algname{MARINA} that uses biased compression instead of unbiased one.

\begin{algorithm}[h]
   \caption{Biased \algname{MARINA} (\algname{3PCv5})}\label{alg:b_marina}
\begin{algorithmic}[1]
   \STATE {\bfseries Input:} starting point $x^0$, stepsize $\gamma$, probability $p\in(0,1]$, number of iterations $T$, starting vectors $g_i^0$, $i \in [n]$
   \FOR{$t=0,1,\ldots,T-1$}
   \STATE Sample $c_t \sim \text{Be}(p)$
   \STATE Broadcast $g^t$ to all workers
   \FOR{$i = 1,\ldots,n$ in parallel} 
   \STATE $x^{t+1} = x^t - \gamma g^t$
   \STATE Set $g_i^{t+1} = \begin{cases}\nabla f_i(x^{t+1}), & \text{if } c_t = 1,\\ g_i^t + \cC\left(\nabla f_{i}(x^{t+1}) - \nabla f_{i}(x^t))\right), & \text{if } c_t = 0\end{cases}$ 
   \ENDFOR
   \STATE $g^{t+1} = \tfrac{1}{n}\sum_{i=1}^ng_i^{t+1}$
   \ENDFOR
   \STATE {\bfseries Return:} $\hat x^T$ chosen uniformly at random from $\{x^t\}_{t=0}^{T-1}$
\end{algorithmic}
\end{algorithm}

The next lemma shows that \algname{3PCv5} uses a special three points compressor.
\begin{lemma}\label{lem:biased_marina}
	The compressor
	\begin{eqnarray}\label{eq:b_marina_compressor}
	\cC_{h,y}(x) = \begin{cases}x,& \text{w.p.\ } p\\ h + \cC(x - y),& \text{w.p.\ } 1 - p \end{cases}
	\end{eqnarray} 
	satisfies \eqref{eq:ttp} with $\thetaNEW =  p - s(1-p)$ and $\betaNEW = (1-p)\rb{1+s^{-1}}(1-\alpha)$, where $s>0$ is such that $(1-p)(1+s) < 1$.
\end{lemma}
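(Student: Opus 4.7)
The plan is to condition on the Bernoulli coin flip defining $\cC_{h,y}(x)$ and reduce to the contractive bound on $\cC$. Conditional on the event of probability $p$ we have $\cC_{h,y}(x) - x = 0$, so the compression error vanishes. Conditional on the complementary event of probability $1-p$, the compression error equals $h + \cC(x-y) - x$, and the key algebraic step is to rewrite this as
\begin{equation*}
h + \cC(x-y) - x = (h - y) + \bigl(\cC(x-y) - (x-y)\bigr),
\end{equation*}
which cleanly separates a deterministic ``drift'' $h-y$ from the intrinsic compression error of $\cC$ on the input $x-y$.

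Next I would apply the standard Young-type inequality $\|a+b\|^2 \le (1+s)\|a\|^2 + (1+s^{-1})\|b\|^2$ with the scalar $s>0$ from the statement, to the two summands above. This yields
\begin{equation*}
\Exp{\sqnorm{h + \cC(x-y) - x}} \le (1+s)\sqnorm{h-y} + (1+s^{-1})\Exp{\sqnorm{\cC(x-y) - (x-y)}}.
\end{equation*}
The contractive property \eqref{eq:contractive-09u09fduf} of $\cC$ (with parameter $\alpha$) bounds the last expectation by $(1-\alpha)\sqnorm{x-y}$.

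Combining the two branches using the tower rule and weighting by the respective probabilities gives
\begin{equation*}
\Exp{\sqnorm{\cC_{h,y}(x) - x}} \le (1-p)(1+s)\sqnorm{h-y} + (1-p)(1+s^{-1})(1-\alpha)\sqnorm{x-y}.
\end{equation*}
Matching this against the \algname{3PC} inequality \eqref{eq:ttp} forces $1-\thetaNEW = (1-p)(1+s)$, i.e.\ $\thetaNEW = p - s(1-p)$, and $\betaNEW = (1-p)(1+s^{-1})(1-\alpha)$. The restriction $(1-p)(1+s)<1$ is precisely what guarantees $\thetaNEW \in (0,1]$, as required by Definition~\ref{def:ttp}. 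There is no genuine obstacle here; the only choice to watch is the decomposition $h+\cC(x-y)-x = (h-y)+(\cC(x-y)-(x-y))$, which is what allows the contractive bound on $\cC$ to be invoked directly on the argument $x-y$ (rather than on $x-h$, as in the \algname{EF21} case \eqref{eq:b_diana_compressor}).
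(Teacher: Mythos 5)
Your proof is correct and follows essentially the same route as the paper's: condition on the Bernoulli event, decompose $h+\cC(x-y)-x=(h-y)+(\cC(x-y)-(x-y))$, apply the Young-type inequality with parameter $s$, and invoke contractivity of $\cC$ on the input $x-y$. The identification $\thetaNEW = p - s(1-p)$, $\betaNEW = (1-p)(1+s^{-1})(1-\alpha)$ and the role of the constraint $(1-p)(1+s)<1$ all match the paper exactly.
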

\begin{proof}
By definition of $\cC_{h,y}(x)$ and $\cC$ we have
	\begin{eqnarray*}
		\Exp{\sqnorm{\cC_{h,y}(x) - x }} &\overset{\eqref{eq:b_marina_compressor}}{=}& (1-p) \Exp{\sqnorm{ h + \cC(x - y) - x }}\\
		&=& (1-p) \Exp{\sqnorm{ h - y + \cC(x - y) - (x -y) }}\\
		&\le& (1-p)(1+s) \sqnorm{h - y} + (1-p)\rb{1+ s^{-1} }\Exp{\sqnorm{ \cC(x - y) - (x -y)}}\\
		&\le& (1-p)(1+s) \sqnorm{h - y} + (1-p)\rb{1+ s^{-1} }\rb{1 - \alpha}\sqnorm{x-y},
	\end{eqnarray*}
	where in the third row we use that $\|a+b\|^2 \leq (1+s)\|a\|^2 + (1+s^{-1})\|b\|^2$ for all $s > 0$, $a,b \in \R^d$. Assuming $(1-p)(1+s) < 1$, we get the result.
\end{proof}

Therefore, \algname{3PCv5} fits our framework. Using our general analysis (Theorems~\ref{thm:main_thm_gen_non_cvx} and \ref{thm:main_thm_PL}) we derive the following result.

\begin{theorem}\label{thm:b_marina} 
	\algname{3PCv5} is a special case of the method from \eqref{eq:CGD_1}--\eqref{eq:CGD_2} with $\cC_{h,y}(x)$ defined in \eqref{eq:b_marina_compressor} and $\thetaNEW =  p - s(1-p)$ and $\betaNEW = (1-p)\rb{1+s^{-1}}(1-\alpha)$, where $s> 0$ is such that $(1-p)(1+s) < 1$.	
	\begin{enumerate}		
	\item If Assumptions~\ref{ass:diff},~\ref{as:L_smoothness},~\ref{as:L_+} hold and the stepsize $\gamma$ satisfies $0 \leq \gamma \leq \nicefrac{1}{M}$, where $M = L_{-} + L_{+}\sqrt{\nicefrac{(1-p)\rb{1+s^{-1}}(1-\alpha)}{(p - s(1-p))}}$, then for any $T \ge 0$ we have
	\begin{equation}
		\Exp{\norm{ \nabla f(\hat x^T)}^2} \leq \frac{2 \Delta^0}{\gamma T} + \fr{\Exp{G^0}}{(p - s(1-p)) T}, \label{eq:b_marina_gen_non_cvx}
	\end{equation}
	where $\hat x^T$ is sampled uniformly at random from the points $\{x^0, x^1, \ldots, x^{T-1}\}$ produced by \algname{3PCv5}, $\Delta^0 = f(x^0) - f^{\inf}$, and $G^0$ is defined in \eqref{eq:G^t}. 
	\item If additionaly Assumption~\ref{as:PL} hold and $0 \leq \gamma \leq \nicefrac{1}{M}$ for $M = \max\left\{L_{-} + L_{+}\sqrt{\nicefrac{2(1-p)\rb{1+s^{-1}}(1-\alpha)}{(p - s(1-p))}}, \nicefrac{(p - s(1-p))}{2\mu}\right\}$, then for any $T \ge 0$ we have
	\begin{equation}
		\Exp{f(x^T) - f(x^*)} \leq \left(1 - \gamma\mu\right)^T\left(\Delta^0 + \frac{\gamma}{p - s(1-p)}\Exp{G^{0}}\right). \label{eq:b_marina_PL}
	\end{equation}
	\end{enumerate}
\end{theorem}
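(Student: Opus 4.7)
The plan is to recognize that Theorem~\ref{thm:b_marina} follows almost mechanically from what has already been established: Lemma~\ref{lem:biased_marina} confirms that the compressor defined in \eqref{eq:b_marina_compressor} is a genuine three point compressor in the sense of Definition~\ref{def:ttp}, with parameters $\thetaNEW = p - s(1-p)$ and $\betaNEW = (1-p)(1+s^{-1})(1-\alpha)$ for any admissible $s$. Hence \algname{3PCv5} is just Algorithm~\ref{alg:3PC} instantiated with this particular $\cC_{h,y}$, and Theorems~\ref{thm:main_thm_gen_non_cvx} and \ref{thm:main_thm_PL} apply verbatim.

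First I would verify that the update rule in Algorithm~\ref{alg:b_marina} matches the recursion \eqref{eq:CGD_2} under the substitution $y = \nabla f_i(x^t)$, $h = g_i^t$, $x = \nabla f_i(x^{t+1})$: indeed the compressed vector is $\nabla f_i(x^{t+1})$ with probability $p$ (corresponding to $c_t=1$), and $g_i^t + \cC(\nabla f_i(x^{t+1}) - \nabla f_i(x^t))$ with probability $1-p$ (corresponding to $c_t=0$), which is exactly $\cC_{h,y}(x)$ from \eqref{eq:b_marina_compressor}. So \algname{3PCv5} is an instance of \algname{3PC}.

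Next I would check admissibility of the stepsize condition. For the general nonconvex part, Theorem~\ref{thm:main_thm_gen_non_cvx} requires $0 \leq \gamma \leq 1/M_1$ with $M_1 = L_{-} + L_{+}\sqrt{\betaNEW/\thetaNEW}$; plugging in the values from Lemma~\ref{lem:biased_marina} gives exactly the condition stated in part 1 of the theorem, and the conclusion \eqref{eq:b_marina_gen_non_cvx} is just \eqref{eq:main_thm_gen_non_cvx} with $\thetaNEW = p - s(1-p)$ substituted in the second term. For the P{\L} part, Theorem~\ref{thm:main_thm_PL} requires $0 \leq \gamma \leq 1/M_2$ with $M_2 = \max\{L_{-} + L_{+}\sqrt{2\betaNEW/\thetaNEW},\ \thetaNEW/(2\mu)\}$, and substituting the same parameter values produces the stepsize condition and rate \eqref{eq:b_marina_PL} of part 2.

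There is essentially no obstacle here; the entire proof is a one-line invocation of the general machinery, and the content of the theorem lies in Lemma~\ref{lem:biased_marina} (the nontrivial \algname{3PC} inequality for this randomized compressor) together with the general convergence framework. If anything, the only mildly delicate point is to remember that the admissibility constraint $(1-p)(1+s) < 1$ from Lemma~\ref{lem:biased_marina} must be carried along so that $\thetaNEW > 0$, but this is built into the theorem's statement through the quantifier on $s$. A subsequent corollary (in the spirit of Corollaries~\ref{cor:b_diana} and \ref{cor:clag}) could optimize over $s$ using Lemma~\ref{lem:b_diana_technical} with $\alpha$ replaced by $p$, yielding the clean rate $\betaNEW/\thetaNEW = \cO((1-p)(1-\alpha)/p^2)$ advertised in Table~\ref{tab:methods}, but this optimization is not part of the theorem itself.
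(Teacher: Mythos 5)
Your proposal is correct and matches the paper's own treatment exactly: the paper proves Lemma~\ref{lem:biased_marina} and then obtains Theorem~\ref{thm:b_marina} by direct substitution of $\thetaNEW = p - s(1-p)$ and $\betaNEW = (1-p)(1+s^{-1})(1-\alpha)$ into Theorems~\ref{thm:main_thm_gen_non_cvx} and~\ref{thm:main_thm_PL}, with no additional argument. Your identification of the algorithm's update with $\cC_{g_i^t,\nabla f_i(x^t)}(\nabla f_i(x^{t+1}))$ and your remark that the optimization over $s$ (Lemma~\ref{lem:b_marina_technical}) belongs to the corollary rather than the theorem are both consistent with how the paper organizes the material.
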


Neglecting the term that depends on $G^0$ (for simplicity, one can assume that $g_i^0 = \nabla f_i(x^0)$ for $i \in [n]$), one can notice that the smaller $\nicefrac{\betaNEW}{\thetaNEW}$, the better the rate. Considering $\nicefrac{\betaNEW}{\thetaNEW}$ as a function of $s$ and optimizing this function in $s$, we find the optimal value of this ratio.

\begin{lemma}\label{lem:b_marina_technical}
	The optimal value of
	\begin{equation}
		\frac{\betaNEW}{\thetaNEW}(s) = \frac{(1-p)\rb{1+s^{-1}}(1-\alpha)}{(p - s(1-p))}\notag
	\end{equation}
	under the constraint $0 < s < \nicefrac{p}{(1-p)}$ equals
	\begin{equation}
		\frac{\betaNEW}{\thetaNEW}(s_*) = \frac{(1-p)(1-\alpha)}{(1 - \sqrt{1-p})^2} \leq \frac{4(1-p)(1-\alpha)}{p^2}\notag
	\end{equation}
	and it is achieved at $s^* = -1 + \sqrt{\nicefrac{1}{(1-p)}}$.
\end{lemma}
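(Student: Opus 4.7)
Observe that the factor $(1-p)(1-\alpha)$ in the numerator is a positive constant independent of $s$, so minimizing $\betaNEW/\thetaNEW$ over $s$ is equivalent to minimizing
\[
\varphi(s) \;\eqdef\; \frac{1+s^{-1}}{p - s(1-p)} \;=\; \frac{s+1}{s\bigl(p - s(1-p)\bigr)}
\]
on the open interval $s \in (0, p/(1-p))$, on which the denominator $s(p - s(1-p))$ is strictly positive. Since $\varphi(s) \to +\infty$ as $s \downarrow 0$ and as $s \uparrow p/(1-p)$, a minimizer exists in the interior.

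\textbf{Critical point.} I will differentiate $\betaNEW/\thetaNEW$ directly using the quotient rule, exactly as in the proof of Lemma~\ref{lem:b_diana_technical}. After clearing positive factors, the numerator of the derivative turns out to be proportional to
\[
(1-p)s^2 + 2(1-p)s - p,
\]
a quadratic in $s$ with roots $-1 \pm \sqrt{1/(1-p)}$. The positive root $s_* = -1 + \sqrt{1/(1-p)}$ lies in $(0, p/(1-p))$ (indeed, $s_* > 0$ since $1/(1-p) > 1$, and $s_* < p/(1-p)$ can be checked by rearranging $\sqrt{1/(1-p)} < 1/(1-p)$, i.e., $1-p < 1$). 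A sign analysis of the quadratic shows the derivative is negative on $(0, s_*)$ and positive on $(s_*, p/(1-p))$, confirming $s_*$ is the unique minimizer.

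\textbf{Evaluation at $s_*$.} Substituting $s_* = \sqrt{1/(1-p)} - 1$, I compute $1 + s_*^{-1} = 1 + \frac{\sqrt{1-p}}{1-\sqrt{1-p}} = \frac{1}{1-\sqrt{1-p}}$ and $p - s_*(1-p) = p - (\sqrt{1-p} - (1-p)) = 1 - \sqrt{1-p}$. Plugging these back in gives
\[
\frac{\betaNEW}{\thetaNEW}(s_*) \;=\; \frac{(1-p)(1-\alpha)}{(1-\sqrt{1-p})^{2}},
\]
as claimed.

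\textbf{Final inequality.} The bound $\frac{4(1-p)(1-\alpha)}{p^2}$ follows from the elementary estimate $1 - \sqrt{1-p} \geq p/2$ for $p \in (0,1]$, which can be seen from $(1-p/2)^2 = 1 - p + p^2/4 \geq 1 - p$, hence $\sqrt{1-p} \leq 1 - p/2$. Squaring $(1-\sqrt{1-p})^2 \geq p^2/4$ and inverting yields the stated upper bound. No step here poses any real obstacle; the computation is entirely mechanical and structurally identical to Lemma~\ref{lem:b_diana_technical} with $\alpha$ replaced by $p$ in the denominator.
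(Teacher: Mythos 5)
Your proof is correct and follows essentially the same route as the paper's: direct differentiation yielding the quadratic $(1-p)s^2+2(1-p)s-p$, selection of the root $s_*=-1+\sqrt{1/(1-p)}$, evaluation at $s_*$, and the bound $1-\sqrt{1-p}\ge p/2$. Your sign analysis (derivative negative on $(0,s_*)$, positive on $(s_*,\,p/(1-p))$) is in fact more careful than the paper's, which contains a typo describing the derivative as negative on both sides of $s_*$.
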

\begin{proof}
	First of all, we find the derivative of the considered function:
	\begin{eqnarray*}
		\left(\frac{\betaNEW}{\thetaNEW}(s)\right)' &=& (1-p)(1-\alpha)\frac{(1-p)s^2 + 2(1-p)s - p}{(ps - s^2(1-p))^2}.
	\end{eqnarray*}
	The function has 2 critical points: $-1 \pm \sqrt{\nicefrac{1}{(1-p)}}$. Moreover, the derivative is non-positive for $s \in (0, -1 + \sqrt{\nicefrac{1}{(1-p)}}]$ and negative for $s \in (-1 + \sqrt{\nicefrac{1}{(1-p)}}, +\infty)$. This implies that the optimal value on the interval $s \in (0, \nicefrac{p}{(1-p)})$ is achieved at $s_* = -1 + \sqrt{\nicefrac{1}{(1-p)}}$. Via simple computations one can verify that
	\begin{equation*}
		\frac{\betaNEW}{\thetaNEW}(s_*) = \frac{(1-p)(1-\alpha)}{(1 - \sqrt{1-p})^2}.
	\end{equation*}
	Finally, since $1 - \sqrt{1-p} \geq \nicefrac{p}{2}$, we have
	\begin{equation}
		\frac{\betaNEW}{\thetaNEW}(s_*) \leq \frac{4(1-p)(1-\alpha)}{p^2}.\notag
	\end{equation}
\end{proof}

Using this and Corollaries~\ref{cor:main_cor_gen_non_cvx}, \ref{cor:main_cor_PL}, we get the following complexity results.
\begin{corollary}\label{cor:b_marina}
	\begin{enumerate}
		\item Let the assumptions from the first part of Theorem~\ref{thm:b_marina} hold, $s = s_* = -1 + \sqrt{\nicefrac{1}{(1-p)}}$, and
	\begin{equation*}
		\gamma = \frac{1}{L_{-} + L_{+}\sqrt{\nicefrac{(1-p)(1-\alpha)}{(1 - \sqrt{1-p})^2}}}.
	\end{equation*}
	Then for any $T$ we have
	\begin{equation}
		\Exp{\norm{ \nabla f(\hat x^T)}^2} \leq \frac{2 \Delta^0\left(L_{-} + L_{+}\sqrt{\nicefrac{(1-p)(1-\alpha)}{(1 - \sqrt{1-p})^2}}\right)}{ T} + \fr{\Exp{G^0}}{(1 - \sqrt{1-p}) T},\notag
	\end{equation}
	i.e., to achieve $\Exp{\norm{ \nabla f(\hat x^T)}^2} \leq \varepsilon^2$ for some $\varepsilon > 0$ the method requires
	\begin{equation}
		T = \cO\left(\frac{\Delta^0\left(L_{-} + L_{+}\sqrt{\nicefrac{(1-p)(1 - \alpha)}{p^2}}\right)}{\varepsilon^2} + \fr{\Exp{G^0}}{p \varepsilon^2}\right) \label{eq:b_marina_complexity_gen_non_cvx}
	\end{equation}
	iterations/communication rounds.
	\item Let the assumptions from the second part of Theorem~\ref{thm:b_marina} hold and
	\begin{equation*}
		\gamma = \min\left\{\frac{1}{L_{-} + L_{+}\sqrt{\nicefrac{2(1-p)(1-\alpha)}{(1 - \sqrt{1-p})^2}}}, \frac{1-\sqrt{1-p}}{2\mu}\right\}.
	\end{equation*}
	Then to achieve $\Exp{f(x^T) - f(x^*)} \leq \varepsilon$ for some $\varepsilon > 0$ the method requires
	\begin{equation}
		\cO\left(\max\left\{\frac{L_{-} + L_{+}\sqrt{\nicefrac{(1-p)(1 - \alpha)}{p^2}}}{\mu}, p\right\}\log \frac{\Delta^0 + \Exp{G^{0}}\nicefrac{\gamma}{p}}{\varepsilon}\right) \label{eq:b_marina_complexity_PL}
	\end{equation}
	iterations/communication rounds.
	\end{enumerate}
\end{corollary}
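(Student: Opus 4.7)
The plan is to combine Lemma~\ref{lem:b_marina_technical} (which picks the best $s$ and computes the resulting constants) with the convergence bounds already established in Theorem~\ref{thm:b_marina}. Both parts are essentially algebraic consequences of those two results, and no new conceptual work is needed beyond careful substitution and rearrangement.

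For Part 1 (general nonconvex), I would first fix $s = s_* = -1 + \sqrt{1/(1-p)}$, at which Lemma~\ref{lem:b_marina_technical} gives $\thetaNEW = 1 - \sqrt{1-p}$ and $\betaNEW/\thetaNEW = (1-p)(1-\alpha)/(1-\sqrt{1-p})^2$. Plugging these into \eqref{eq:b_marina_gen_non_cvx} with the stated stepsize $\gamma = 1/(L_- + L_+\sqrt{(1-p)(1-\alpha)/(1-\sqrt{1-p})^2})$ yields the first displayed inequality of the corollary directly. For the iteration complexity, I would bound each of the two terms on the right-hand side by $\varepsilon^2/2$, which yields $T \gtrsim \Delta^0(L_- + L_+\sqrt{\betaNEW/\thetaNEW})/\varepsilon^2$ from the first term and $T \gtrsim \Exp{G^0}/(\thetaNEW \varepsilon^2)$ from the second. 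Finally, I would use the elementary estimate $1 - \sqrt{1-p} \ge p/2$ (already recorded in the proof of Lemma~\ref{lem:b_marina_technical}) to replace $1/(1-\sqrt{1-p})^2$ by $4/p^2$ and $1/\thetaNEW$ by $2/p$, absorbing the numerical constants into the $\cO(\cdot)$.

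For Part 2 (P{\L} case), I would apply the recursion \eqref{eq:b_marina_PL} of Theorem~\ref{thm:b_marina} with $\thetaNEW = 1-\sqrt{1-p}$ and $\betaNEW$ as above, and with $\gamma = \min\{1/(L_- + L_+\sqrt{2\betaNEW/\thetaNEW}),\thetaNEW/(2\mu)\}$. To push the right-hand side of \eqref{eq:b_marina_PL} below $\varepsilon$, it suffices to take $T \ge (1/(\gamma\mu))\log((\Delta^0 + \gamma\Exp{G^0}/\thetaNEW)/\varepsilon)$, because $(1-\gamma\mu)^T \le e^{-\gamma\mu T}$. Then I would unpack $1/(\gamma\mu) = \max\{(L_- + L_+\sqrt{2\betaNEW/\thetaNEW})/\mu,\; 2/\thetaNEW\}$ and, using $1-\sqrt{1-p}\ge p/2$ one more time, write this as $\cO(\max\{(L_- + L_+\sqrt{(1-p)(1-\alpha)/p^2})/\mu,\; 1/p\}\log(\cdot/\varepsilon))$, matching (up to the stated form of the second entry in the $\max$) the claimed expression \eqref{eq:b_marina_complexity_PL}.

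There is no real obstacle here; the statement is a direct corollary of Theorem~\ref{thm:b_marina} combined with the optimal-$s$ computation in Lemma~\ref{lem:b_marina_technical}. The only thing to be careful about is the bookkeeping in the $\max\{\cdot,\cdot\}$ that comes from the $\min$ in the stepsize choice, and consistently applying the elementary inequality $1-\sqrt{1-p}\ge p/2$ to turn fractions in $1-\sqrt{1-p}$ into the cleaner $p$-dependence that appears in \eqref{eq:b_marina_complexity_gen_non_cvx}--\eqref{eq:b_marina_complexity_PL}.
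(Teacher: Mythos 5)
Your proposal is correct and follows essentially the same route as the paper, which proves this corollary implicitly by invoking the general complexity results (Corollaries~\ref{cor:main_cor_gen_non_cvx} and \ref{cor:main_cor_PL}) with the parameters $\thetaNEW = 1-\sqrt{1-p}$ and $\betaNEW/\thetaNEW$ from Lemma~\ref{lem:b_marina_technical}, followed by the bound $1-\sqrt{1-p}\ge \nicefrac{p}{2}$. Your observation about the second entry of the $\max$ in \eqref{eq:b_marina_complexity_PL} is also consistent with how the paper states (and inherits) that term from Corollary~\ref{cor:main_cor_PL}.
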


\newpage

\section{MARINA}
In this section, we show that \algname{MARINA} \citep{gorbunov2021marina} can be analyzed using a similar proof technique that we use for the methods based on three points compressors.

\begin{algorithm}[h]
   \caption{\algname{MARINA} \citep{gorbunov2021marina}}\label{alg:marina}
\begin{algorithmic}[1]
   \STATE {\bfseries Input:} starting point $x^0$, stepsize $\gamma$, probability $p\in(0,1]$, number of iterations $T$
   \STATE Initialize $g^0 = \nabla f(x^0)$
   \FOR{$t=0,1,\ldots,T-1$}
   \STATE Sample $c_t \sim \text{Be}(p)$
   \STATE Broadcast $g^t$ to all workers
   \FOR{$i = 1,\ldots,n$ in parallel} 
   \STATE $x^{t+1} = x^t - \gamma g^t$
   \STATE Set $g_i^{t+1} = \begin{cases}\nabla f_i(x^{t+1}), & \text{if } c_t = 1,\\ g_i^t + \cQ\left(\nabla f_{i}(x^{t+1}) - \nabla f_{i}(x^t))\right), & \text{if } c_t = 0\end{cases}$ 
   \ENDFOR
   \STATE $g^{t+1} = \tfrac{1}{n}\sum_{i=1}^ng_i^{t+1}$
   \ENDFOR
   \STATE {\bfseries Return:} $\hat x^T$ chosen uniformly at random from $\{x^t\}_{t=0}^{T-1}$
\end{algorithmic}
\end{algorithm}

The next lemma casts \algname{MARINA} to our theoretical framework.

\begin{lemma}\label{lem:main_lem_marina}
	Let Assumption~\ref{as:L_+} hold. Then, \algname{MARINA} satisfies inequality \eqref{eq:key_inequality} with $G^t = \|g^t - \nabla f(x^t)\|^2,$ $\thetaNEW = p$, and $\betaNEW = \nicefrac{(1-p)\omega}{n}$.
\end{lemma}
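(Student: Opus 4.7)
The plan is to verify the recursion \eqref{eq:key_inequality} directly with $G^t = \sqnorm{g^t - \nabla f(x^t)}$, by computing the conditional expectation of $G^{t+1}$ given the history through iteration $t$. First I would split on the Bernoulli indicator $c_t$: on the event $\{c_t=1\}$ (probability $p$) we have $g^{t+1} = \nabla f(x^{t+1})$, so the contribution to $\Exp{G^{t+1}}$ vanishes; on the event $\{c_t=0\}$ (probability $1-p$) we can write
\begin{equation*}
g^{t+1} - \nabla f(x^{t+1}) \;=\; (g^t - \nabla f(x^t)) \;+\; \frac{1}{n}\sum_{i=1}^n \bigl(\cQ(\Delta_i^t) - \Delta_i^t\bigr),
\end{equation*}
where $\Delta_i^t \eqdef \nabla f_i(x^{t+1}) - \nabla f_i(x^t)$, using the telescoping identity $\frac{1}{n}\sum_i \Delta_i^t = \nabla f(x^{t+1}) - \nabla f(x^t)$.

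Next I would take the squared norm and condition on $x^t, x^{t+1}, g^t$. Since $\cQ$ is unbiased across workers and the workers' compressors are drawn independently, the first term is deterministic while the cross term with the average of the zero-mean errors $\cQ(\Delta_i^t) - \Delta_i^t$ vanishes, and the variance of the sum becomes a sum of variances divided by $n^2$. Applying the variance bound $\Exp{\sqnorm{\cQ(\Delta_i^t) - \Delta_i^t}} \leq \omega \sqnorm{\Delta_i^t}$ yields
\begin{equation*}
\Exp{\sqnorm{g^{t+1}-\nabla f(x^{t+1})} \mid \mathcal{F}_t} \;\leq\; (1-p)\sqnorm{g^t-\nabla f(x^t)} + \frac{(1-p)\omega}{n^2}\sum_{i=1}^n \sqnorm{\Delta_i^t}.
\end{equation*}

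Finally I would invoke Assumption~\ref{as:L_+} in the form $\frac{1}{n}\sum_i \sqnorm{\Delta_i^t} \leq L_+^2 \sqnorm{x^{t+1}-x^t}$ to replace the last sum by $\frac{(1-p)\omega L_+^2}{n}\sqnorm{x^{t+1}-x^t}$, then take full expectation and read off $\thetaNEW = p$, $\betaNEW = (1-p)\omega/n$. The only subtlety, and the one step worth stating carefully, is the use of independence of $\{\cQ_i\}_{i=1}^n$ across workers to kill the cross terms in the squared norm; without this the factor $1/n$ in $\betaNEW$ would be lost. Everything else is a direct computation from the definitions.
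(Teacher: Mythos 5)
Your proposal is correct and follows essentially the same route as the paper's proof: the same decomposition $g^{t+1}-\nabla f(x^{t+1}) = (g^t - \nabla f(x^t)) + \frac{1}{n}\sum_i(\cQ(\Delta_i^t)-\Delta_i^t)$ on the event $c_t=0$, unbiasedness to kill the cross term, independence across workers to reduce the variance of the average to $\nicefrac{1}{n^2}$ times the sum of variances, and then the $\omega$-variance bound followed by Assumption~\ref{as:L_+}. Nothing is missing.
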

\begin{proof}
The formula for $g_i^{t+1}$ implies that
\begin{equation*}
	g^{t+1} = \begin{cases}\nabla f(x^{t+1}), & \text{if } c_t = 1,\\ g^t + \frac{1}{n}\sum\limits_{i=1}^n\cQ\left(\nabla f_{i}(x^{t+1}) - \nabla f_{i}(x^t))\right), & \text{if } c_t = 0. \end{cases}
\end{equation*}
Using this and independence of $\cQ\left(\nabla f_{i}(x^{t+1}) - \nabla f_{i}(x^t))\right)$ for $i \in [n]$ and fixed $x^t,x^{t+1}$, we derive
	\begin{eqnarray*}
		\Exp{G^{t+1}} &=& \Exp{\sqnorm{g^{t+1} - \nabla f(x^{t+1})}}\\
		&=& (1-p) \Exp{\sqnorm{ g^t + \frac{1}{n}\sum\limits_{i=1}^n\cQ\left(\nabla f_{i}(x^{t+1}) - \nabla f_{i}(x^t))\right) - \nabla f(x^{t+1})}}\\	
		&=& (1-p) \Exp{\sqnorm{ g^t - \nabla f(x^t) + \frac{1}{n}\sum\limits_{i=1}^n\cQ\left(\nabla f_{i}(x^{t+1}) - \nabla f_{i}(x^t))\right) - \left(\nabla f(x^{t+1}) - \nabla f(x^t)\right)}}\\		
		&\overset{\eqref{eq:unb_compressor}}{=}& (1-p)\Exp{\sqnorm{g^t - \nabla f(x^t)}}\\
		&&\qquad + (1-p) \Exp{\sqnorm{\frac{1}{n}\sum\limits_{i=1}^n\left(\cQ\left(\nabla f_{i}(x^{t+1}) - \nabla f_{i}(x^t))\right) - \left(\nabla f_i(x^{t+1}) - \nabla f_i(x^t)\right)\right)}}\\
		&=& (1-p)\Exp{G^t} + \frac{1-p}{n^2}\sum\limits_{i=1}^n \Exp{\sqnorm{\cQ\left(\nabla f_{i}(x^{t+1}) - \nabla f_{i}(x^t))\right) - \left(\nabla f_i(x^{t+1}) - \nabla f_i(x^t)\right)}}\\
		&\overset{\eqref{eq:unb_compressor}}{=}& (1-p)\Exp{G^t} + \frac{(1-p)\omega}{n^2}\sum\limits_{i=1}^n \Exp{\sqnorm{\nabla f_i(x^{t+1}) - \nabla f_i(x^t)}}.
	\end{eqnarray*}
	It remains to apply Assumption~\ref{as:L_+} to get the result.
\end{proof}

We notice that the proofs of Theorems~\ref{thm:main_thm_gen_non_cvx}~and~\ref{thm:main_thm_PL} rely only on the inequality \eqref{eq:key_inequality}, the update rule $x^{t+1} = x^t - \gamma g^t$, and the fact that $G^t \geq \|g^t - \nabla f(x^t)\|^2.$ Therefore, using Lemma~\ref{lem:main_lem_marina} and our general results (Theorems~\ref{thm:main_thm_gen_non_cvx}~and~\ref{thm:main_thm_PL}), we recover the rates for \algname{MARINA} from \citet{gorbunov2021marina}.

\begin{theorem}\label{thm:marina} 
	\begin{enumerate}		
	\item If Assumptions~\ref{ass:diff},~\ref{as:L_smoothness},~\ref{as:L_+} hold and the stepsize $\gamma$ satisfies $0 \leq \gamma \leq \nicefrac{1}{M}$, where $M = L_{-} + L_{+}\sqrt{\nicefrac{(1-p)\omega}{np}}$, then for any $T \ge 0$ we have
	\begin{equation}
		\Exp{\norm{ \nabla f(\hat x^T)}^2} \leq \frac{2 \Delta^0}{\gamma T} + \fr{\Exp{G^0}}{p T}, \label{eq:marina_gen_non_cvx}
	\end{equation}
	where $\hat x^T$ is sampled uniformly at random from the points $\{x^0, x^1, \ldots, x^{T-1}\}$ produced by \algname{MARINA}, $\Delta^0 = f(x^0) - f^{\inf}$, and $G^0$ is defined in \eqref{eq:G^t}. 
	\item If additionaly Assumption~\ref{as:PL} hold and $0 \leq \gamma \leq \nicefrac{1}{M}$ for $M = \max\left\{L_{-} + L_{+}\sqrt{\nicefrac{2(1-p)\omega}{np}}, \nicefrac{p}{2\mu}\right\}$, then for any $T \ge 0$ we have
	\begin{equation}
		\Exp{f(x^T) - f(x^*)} \leq \left(1 - \gamma\mu\right)^T\left(\Delta^0 + \frac{\gamma}{p}\Exp{G^{0}}\right). \label{eq:marina_PL}
	\end{equation}
	\end{enumerate}
\end{theorem}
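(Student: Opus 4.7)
The plan is to reduce Theorem~\ref{thm:marina} directly to the general convergence results Theorems~\ref{thm:main_thm_gen_non_cvx} and \ref{thm:main_thm_PL}, using Lemma~\ref{lem:main_lem_marina} as the single algorithm-specific ingredient. First I would invoke Lemma~\ref{lem:main_lem_marina} to obtain that, with the choice $G^t \eqdef \sqnorm{g^t - \nabla f(x^t)}$, MARINA satisfies the master recursion \eqref{eq:key_inequality} with $\thetaNEW = p$ and $\betaNEW = \nicefrac{(1-p)\omega}{n}$. Since MARINA's iterate update is of the \algname{DCGD} form $x^{t+1} = x^t - \gamma g^t$ by construction of Algorithm~\ref{alg:marina}, and since $G^t \geq \sqnorm{g^t - \nabla f(x^t)}$ trivially (with equality here), the three ingredients singled out in the commentary immediately preceding this theorem are all in place for MARINA.

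Next I would substitute $\thetaNEW = p$ and $\betaNEW = \nicefrac{(1-p)\omega}{n}$ directly into the conclusions of Theorems~\ref{thm:main_thm_gen_non_cvx} and \ref{thm:main_thm_PL}. Under Assumptions~\ref{ass:diff},~\ref{as:L_smoothness},~\ref{as:L_+}, the stepsize constraint $\gamma \le \nicefrac{1}{(L_- + L_+\sqrt{\betaNEW/\thetaNEW})}$ of Theorem~\ref{thm:main_thm_gen_non_cvx} becomes $\gamma \le \nicefrac{1}{(L_- + L_+\sqrt{(1-p)\omega/(np)})}$, and the rate \eqref{eq:main_thm_gen_non_cvx} turns into \eqref{eq:marina_gen_non_cvx} after plugging in these values. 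Similarly, adding Assumption~\ref{as:PL}, the constraint $\gamma \leq \nicefrac{1}{\max\{L_- + L_+\sqrt{2\betaNEW/\thetaNEW}, \thetaNEW/(2\mu)\}}$ of Theorem~\ref{thm:main_thm_PL} reduces to the one stated in the theorem, and \eqref{eq:main_thm_PL} specializes to \eqref{eq:marina_PL}.

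The only subtlety---and the mild obstacle worth flagging---is that the MARINA-specific definition $G^t = \sqnorm{g^t - \nabla f(x^t)}$ differs from the per-worker averaged quantity $\suminn \sqnorm{g_i^t - \nabla f_i(x^t)}$ used in the generic proof. This matters at exactly one step of the proof of Theorem~\ref{thm:main_thm_gen_non_cvx_appendix}, namely the Jensen bound $\sqnorm{\suminn (g_i^t - \nabla f_i(x^t))} \leq \suminn \sqnorm{g_i^t - \nabla f_i(x^t)} = G^t$ used to pass from \eqref{eq:aux_smooth_lemma} to \eqref{eq:aux_smooth_lemma_distrib}. Under the MARINA definition this inequality becomes an equality, which is only stronger, so the argument proceeds without change. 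Hence both proofs carry through verbatim, and the theorem follows by a straightforward specialization with no new calculation required beyond Lemma~\ref{lem:main_lem_marina}.
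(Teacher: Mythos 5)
Your proposal is correct and follows essentially the same route as the paper: the paper likewise invokes Lemma~\ref{lem:main_lem_marina} to verify the master recursion \eqref{eq:key_inequality} with $\thetaNEW = p$, $\betaNEW = \nicefrac{(1-p)\omega}{n}$, and then observes that the proofs of Theorems~\ref{thm:main_thm_gen_non_cvx} and~\ref{thm:main_thm_PL} only use that recursion, the update rule $x^{t+1}=x^t-\gamma g^t$, and the bound $G^t \geq \sqnorm{g^t - \nabla f(x^t)}$. Your explicit flagging of where the redefined $G^t$ enters the generic argument (the Jensen step, which becomes an equality) is precisely the point the paper's commentary makes, so nothing is missing.
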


Next, this theorem and Corollaries~\ref{cor:main_cor_gen_non_cvx} and \ref{cor:main_cor_PL} imply the following complexity results.
\begin{corollary}\label{cor:marina}
	\begin{enumerate}
		\item Let the assumptions from the first part of Theorem~\ref{thm:marina} hold and
	\begin{equation*}
		\gamma = \frac{1}{L_{-} + L_{+}\sqrt{\nicefrac{(1-p)\omega}{np}}}.
	\end{equation*}
	Then for any $T$ we have
	\begin{equation}
		\Exp{\norm{ \nabla f(\hat x^T)}^2} \leq \frac{2 \Delta^0\left(L_{-} + L_{+}\sqrt{\nicefrac{(1-p)\omega}{np}}\right)}{ T} + \fr{\Exp{G^0}}{p T},\notag
	\end{equation}
	i.e., to achieve $\Exp{\norm{ \nabla f(\hat x^T)}^2} \leq \varepsilon^2$ for some $\varepsilon > 0$ the method requires
	\begin{equation}
		T = \cO\left(\frac{\Delta^0\left(L_{-} + L_{+}\sqrt{\nicefrac{(1-p)\omega}{np}}\right)}{\varepsilon^2} + \fr{\Exp{G^0}}{p \varepsilon^2}\right) \label{eq:marina_complexity_gen_non_cvx}
	\end{equation}
	iterations/communication rounds.
	\item Let the assumptions from the second part of Theorem~\ref{thm:marina} hold and
	\begin{equation*}
		\gamma = \min\left\{\frac{1}{L_{-} + L_{+}\sqrt{\nicefrac{2(1-p)\omega}{np}}}, \frac{p}{2\mu}\right\}.
	\end{equation*}
	Then to achieve $\Exp{f(x^T) - f(x^*)} \leq \varepsilon$ for some $\varepsilon > 0$ the method requires
	\begin{equation}
		\cO\left(\max\left\{\frac{L_{-} + L_{+}\sqrt{\nicefrac{(1-p)\omega}{np}}}{\mu}, p\right\}\log \frac{\Delta^0 + \Exp{G^{0}}\nicefrac{\gamma}{p}}{\varepsilon}\right) \label{eq:marina_complexity_PL}
	\end{equation}
	iterations/communication rounds.
	\end{enumerate}
\end{corollary}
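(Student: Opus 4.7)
The plan is to specialize the generic complexity bounds of Corollaries~\ref{cor:main_cor_gen_non_cvx} and~\ref{cor:main_cor_PL} to \algname{MARINA} by plugging in $\thetaNEW = p$ and $\betaNEW = \nicefrac{(1-p)\omega}{n}$, the two constants identified in Lemma~\ref{lem:main_lem_marina}. Since Theorem~\ref{thm:marina} already supplies the per-iteration convergence guarantees in both regimes, only elementary algebra is required and no further properties of the algorithm need to be invoked.

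For the general nonconvex case (part 1), I would substitute the prescribed stepsize $\gamma = 1/(L_{-} + L_{+}\sqrt{(1-p)\omega/(np)})$ into the bound of Theorem~\ref{thm:marina}, part 1, yielding
\[
\Exp{\|\nabla f(\hat x^T)\|^2} \le \frac{2\Delta^0 \bigl(L_{-} + L_{+}\sqrt{(1-p)\omega/(np)}\bigr)}{T} + \frac{\Exp{G^0}}{p T}.
\]
I would then bound each of the two summands by $\varepsilon^2/2$, which produces two lower bounds on $T$ whose sum (which dominates their maximum up to a factor of $2$) is exactly the $\mathcal{O}(\cdot)$ expression in~\eqref{eq:marina_complexity_gen_non_cvx}.

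For the P{\L} case (part 2), inserting $\gamma = \min\{1/(L_{-}+L_{+}\sqrt{2(1-p)\omega/(np)}),\ p/(2\mu)\}$ into Theorem~\ref{thm:marina}, part 2, yields a geometric contraction $(1-\gamma\mu)^T$ multiplying the initial Lyapunov value $\Delta^0 + (\gamma/p)\Exp{G^0}$. Using $(1-\gamma\mu)^T \le e^{-T\gamma\mu}$ and demanding the resulting quantity to be at most $\varepsilon$ gives the requirement $T \ge \frac{1}{\gamma\mu}\log\frac{\Delta^0 + (\gamma/p)\Exp{G^0}}{\varepsilon}$. Since $\nicefrac{1}{\gamma\mu} = \max\{(L_{-}+L_{+}\sqrt{2(1-p)\omega/(np)})/\mu,\ 2/p\}$ by the definition of $\gamma$, one recovers~\eqref{eq:marina_complexity_PL} after absorbing the numerical factor of $2$ inside the square root (and the $2/p$ vs.\ $p$ discrepancy) into the $\mathcal{O}(\cdot)$ notation.

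The derivation is essentially mechanical, so I do not anticipate any real obstacle. The only items deserving care are the routine bookkeeping of the factor of $2$ inside the $\sqrt{\cdot}$ that appears in the P{\L} stepsize (absorbed by the $\mathcal{O}(\cdot)$) and the observation that the sum of the two lower bounds in part 1 upper-bounds their maximum up to a constant; everything else is a direct instance of the general complexity corollaries.
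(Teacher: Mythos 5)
Your proposal is correct and takes essentially the same route as the paper, whose proof is simply the invocation of Theorem~\ref{thm:marina} together with the general Corollaries~\ref{cor:main_cor_gen_non_cvx} and~\ref{cor:main_cor_PL} instantiated with $\thetaNEW = p$ and $\betaNEW = \nicefrac{(1-p)\omega}{n}$. The $\nicefrac{2}{p}$ versus $p$ mismatch you flag in the P{\L} complexity is inherited from the statement of Corollary~\ref{cor:main_cor_PL} itself, so your handling of it is consistent with the paper.
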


\clearpage

\section{More Experiments}\label{sec:extra_experiments}
This section is organized as follows. We report more details on the experiment with autoencoder in \cref{sec:autoenc_exps}. In \Cref{sec:quadratic_exps}, we validate the new methods \algname{3PCv1}, \ldots, \algname{3PCv5} on a synthetic quadratic problem with a careful control of heterogenity level. Finally, in \cref{sec:clag}, we provide additional experiments with compressed lazy aggregation \algname{CLAG}. We refer the reader to \cref{sec:3points_special_cases_appendix,sec:contractive} for a formal definition of the algorithms and compressors.

All methods are implemented in Python 3.8 and run on 3 different CPU cluster nodes 
\begin{itemize}
	\item AMD EPYC 7702 64-Core; 
	\item Intel(R) Xeon(R) Gold 6148 CPU @ 2.40GHz;
	\item Intel(R) Xeon(R) Gold 6248 CPU @ 2.50GHz. 
\end{itemize}
Communication between server and clients is emulated in one computing node.

\subsection{Learning autoencoder model}\label{sec:autoenc_exps}

In this set of experiments, we test the proposed optimization methods on the task of learning a representation of MNIST dataset \cite{LeCun_MNIST}. We recall that we consider the following optimization problem

\begin{equation}
	\min _{\boldsymbol{D} \in \R^{d_{f} \times d_{e}}, \boldsymbol{E} \in \mathbb{R}^{d_{e} \times d_{f}}} \left[f(\boldsymbol{D}, \boldsymbol{E}):=\frac{1}{N} \sum_{i=1}^{N}\left\|\boldsymbol{D} \boldsymbol{E} a_{i}-a_{i}\right\|^{2}\right],
\end{equation}
where $a_i$ are flattened represenations of images with $d_f = 784$, $\boldsymbol{D}$ and $\boldsymbol{E}$ are learned parameters of the autoencoder model. We fix the encoding dimensions as $d_e = 16$ and distribute the data samples across $n = 10, 100$, or $1000$ clients. In order to control the heterogenity of this distribution, we use the following randomized procedure. First, split the dataset randomly into $n+1$ equal parts $D_0, D_1, \dots, D_n$ and fix the \textit{homogenity level} parameter $0\leq\hat{p}\leq 1$. Then let the $i$-th client take $D_0$ with probability $\hat{p}$ or $D_i$ otherwise. If $\hat{p} = 1$, we are in homogeneous regime. If $\hat{p} = 0$, all clients have different randomly shuffled data samples. Additionally, we study even more heterogeneous setting where we perform the \textit{split by labels}. This means that the clients from $1$ to $\nfr{n}{10}$ own the images corresponding to the first class, nodes from $\nfr{n}{10} + 1$ to $\nfr{2n}{10}$ own the images corresponding to the second class and so on (MNIST dataset has $10$ different classes). 

In this section, we choose $K=\nfr{d}{n}$, where $d = 2 \cdot d_f \cdot d_e = 25088$ is the total dimension of learning parameters $\boldsymbol{D}$ and $\boldsymbol{E}$. It is argued by \cite{PermK} that this is a suitable choice for \algname{MARINA} method with Rand-$K$ or Perm-$K$ sparsifiers. Methods involving two compressor such as \algname{3PCv2}, require to communicate two sparse sequences at every communication round. To account for this, we select $K_1, K_2$ from the set $\{\nfr{K}{2}, K\}$, that is there are four possible choices  for compression levels $K_1, K_2$ of two sparsifiers in \algname{3PCv2}. Then we select the pair which works best.

We fine-tune every method with the step-sizes from the set $\{2^{-12}, 2^{-11}, \dots, 2^{5}\}$ and select the best run based on the value of $\sqnorm{\nfxt}$ at the last iterate. The step-size for each method is indicated in the legend of each plot.

\paragraph{\algname{EF21} embraces different sparsifiers.}

Since \citet{Seide2014} proposed the error feedback style scheme, it has been successfully used in distributed training combined with some contractive compressor. A popular choice is Top-$K$, which preserves the "most important" coordinates  and shows empirical superiority. However, a natural question arises:
\begin{quote}\em Is the success of \algname{EF21} with Top-$K$ attributed to a careful algorithm design or to a greedy sparsifier in use?
\end{quote}
We compare \algname{EF21} with three different compressors: Top-$K$, cPerm-$K$, cRand-$K$ in \cref{fig:ef21-nodes-grads}. \algname{MARINA} with Perm-$K$ is added for the reference. In all cases, Top-$K$ demonstrates fast improvement in the first communication rounds. When $n=10$, the randomized compressors (cPerm-$K$ and cRand-$K$) work best for \algname{EF21}. When $n = 100$ the picture is similar, but cPerm-$K$ shows better performance than cRand-$K$ when \textit{homogenity level} is high ($1$ or $0.5$). Finally, Top-$K$ wins in the competition for $n = 1000$.

Takeaway $1$: \algname{EF21} is well designed and works well with different contractive compressors, including the randomized ones.

Takeaway $2$: \algname{EF21} combined with Top-$K$ is particularly useful if
\begin{itemize}
	\item we are interested in the progress during the initial phase of training;
	\item agressive sparsification is applied ($\nfr{k}{d} \ll 1\%$) and $n$ is large; or
	\item nodes own very different parts of dataset, i.e., we are in heterogeneous regime.
\end{itemize}

\begin{figure}[t]
	\centering
	\begin{subfigure}{}
		{Number of clients $n = 10$, compression level $K = 2509$. }
		\includegraphics[width=\linewidth]{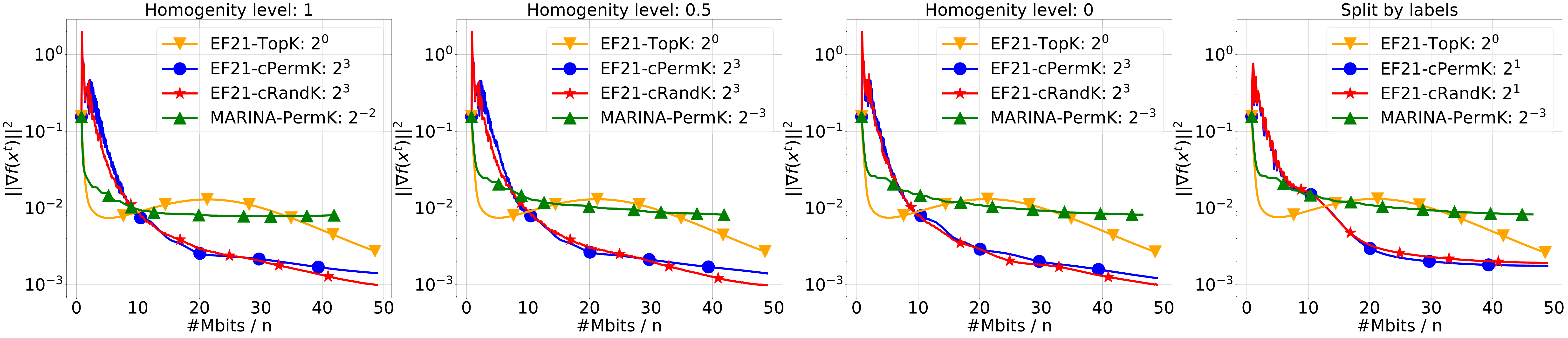}
	\end{subfigure}
	\begin{subfigure}{}
		{Number of clients $n = 100$, compression level $K = 251$. }
		\includegraphics[width=\linewidth]{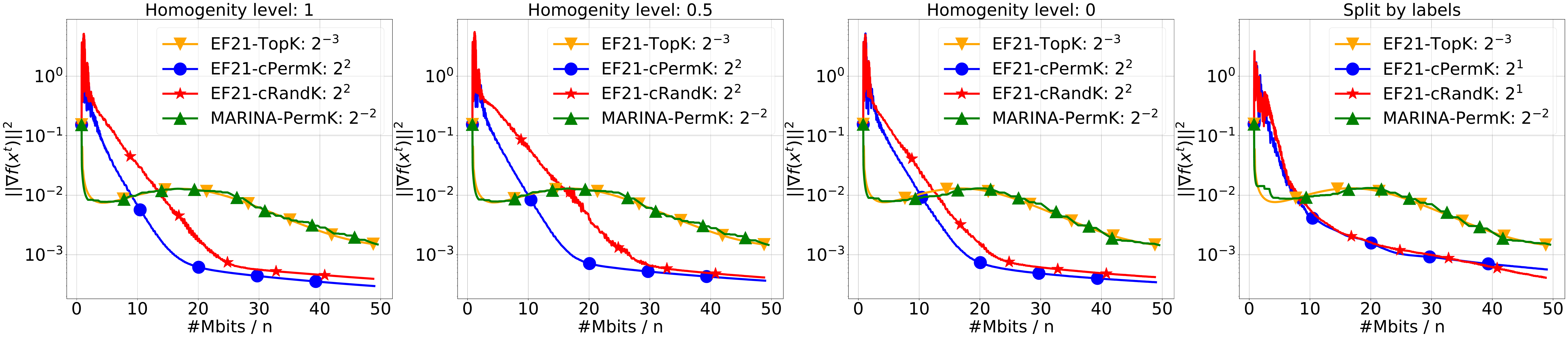}
	\end{subfigure}
	\begin{subfigure}{}
		{Number of clients $n = 1000$, compression level $K = 25$. }
		\includegraphics[width=\linewidth]{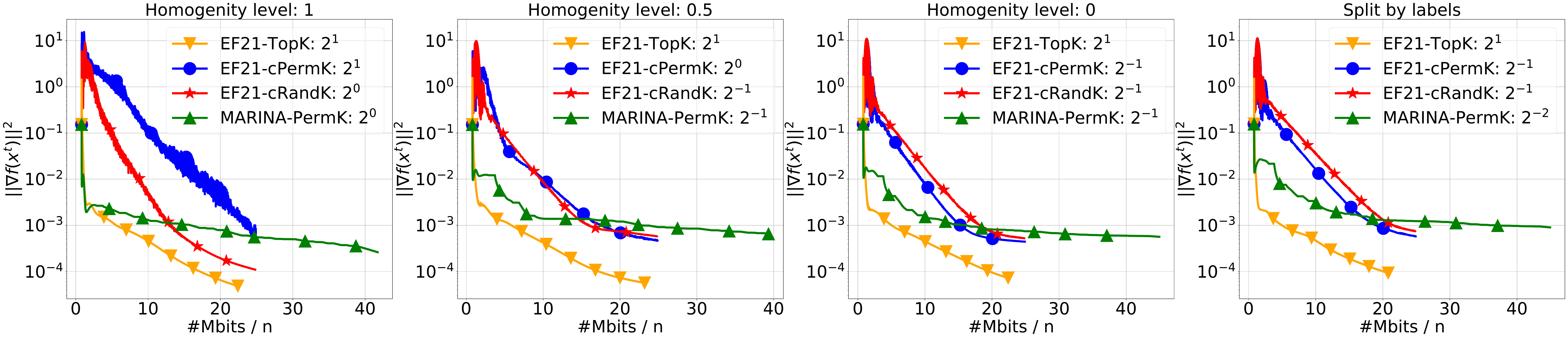}
	\end{subfigure}
	\caption{Comparison of \algname{EF21} with Top-$K$, cPerm-$K$ and cRand-$K$ compressors. \algname{MARINA} with Perm-$K$ is provided for the reference.}
	\label{fig:ef21-nodes-grads}
\end{figure}

\paragraph{\algname{MARINA} and greedy sparsification (\algname{3PCv5})}
We now draw our attention to one of the newly proposed methods: \algname{MARINA} combined with biased compression operators (named as \algname{3PCv5} in \cref{alg:b_marina} and \cref{tab:methods}). According to our theory, see \cref{tab:methods}, \algname{3PCv5} has the same compexity as \algname{EF21}. In this experiment, we aim to validate the proposed method with greedy Top-$K$ sparsifier. We compare it to \algname{MARINA} with Perm-$K$ and Rand-$K$ and include \algname{EF21} as a reference method. Interestingly, Top-$K$ improves over Perm-$K$ and Rand-$K$ when $n = 10$; in homogeneus case, the behavior of Top-$K$ and Perm-$K$ is similar, see \cref{fig:marina-nodes-grads}. However, this improvement vanishes when $n$ is increased ($n = 100, 1000$) and sparsification is more agressive; \algname{MARINA} with Top-$K$ requires much smaller step-sizes to converge. In all cases, \algname{EF21} with Top-$K$ is faster. 

\begin{figure}[t]
	\centering
	\begin{subfigure}{}
		{Number of clients $n = 10$, compression level $K = 2509$. }
		\includegraphics[width=\linewidth]{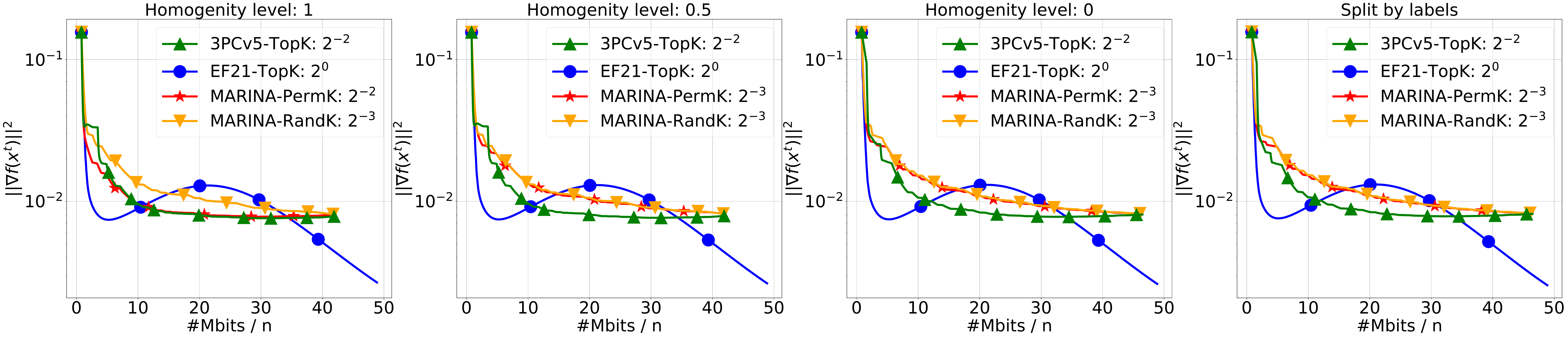}
	\end{subfigure}
	\begin{subfigure}{}
		{Number of clients $n = 100$, compression level $K = 251$. }
		\includegraphics[width=\linewidth]{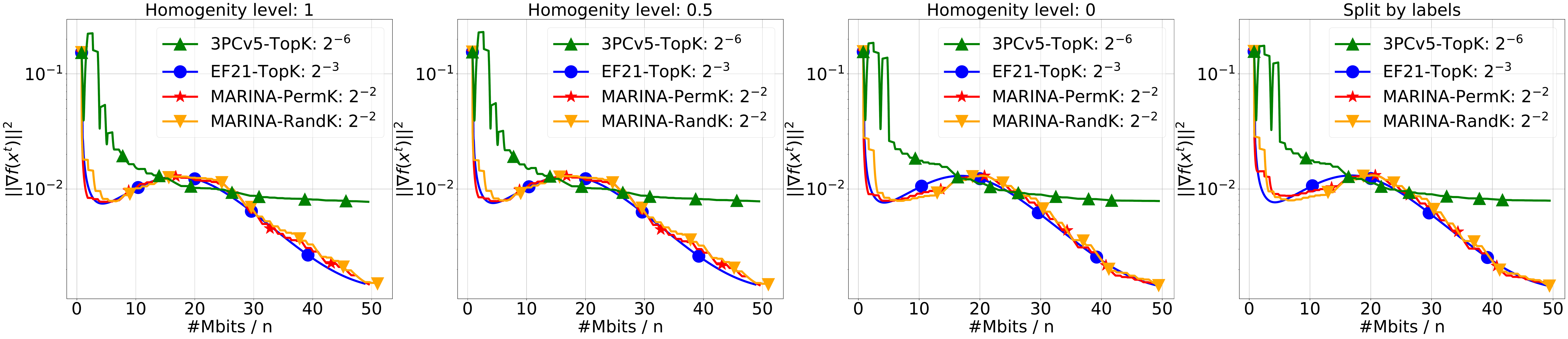}
	\end{subfigure}
	\begin{subfigure}{}		
		{Number of clients $n = 1000$, compression level $K = 25$. }
		\includegraphics[width=\linewidth]{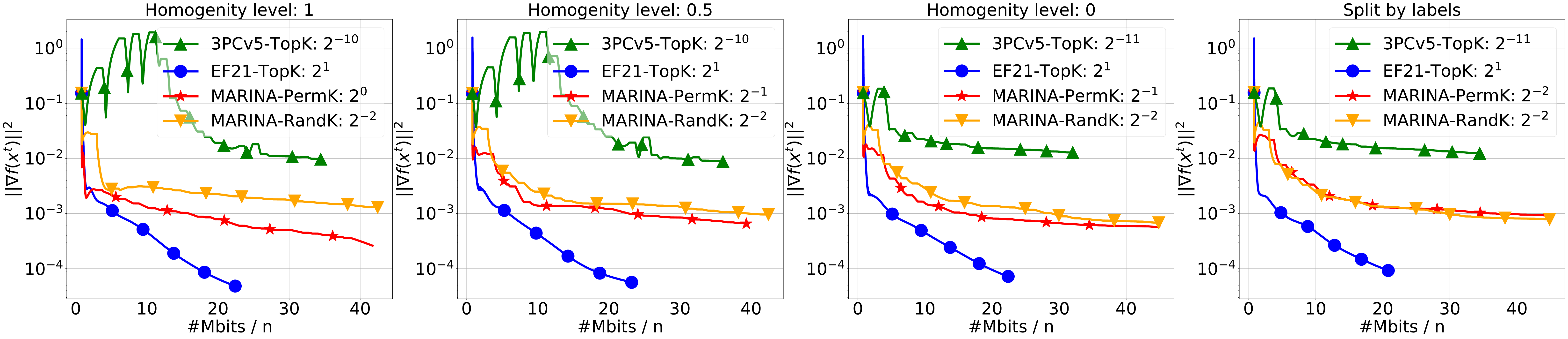}
	\end{subfigure}
		\caption{Comparison of \algname{MARINA} with Perm-$K$, Rand-$K$ and \algname{3PCv5} with Top-$K$.}
		\label{fig:marina-nodes-grads}
\end{figure}

\paragraph{Other \algname{3PC} variants}
Motivated by the success of greedy sparsification and favorable properties of randomized sparsifiers, we aim to investigate if one can combine the two in a nontrivial way and obtain even faster method. One possible way to do so is to look more closely to one of the special cases of \algname{3PC} named \algname{3PCv2}. With \algname{3PCv2} (\cref{alg:anna}), we have more freedom because it has two compressors. In our experiments, we consider three different sparsifiers (Top-$K$, Rand-$K$, Perm-$K$) as for the first compressor and fix the second one as Top-$K$, see \cref{fig:anna-nodes-grads}. 

For $n = 10$, the performance of \algname{3PCv2} with Rand-$K$-Top-$K$ and Top-$K$-Top-$K$ is very similar to the one of \algname{EF21} with Top-$K$. Interestingly, \algname{3PCv2}-Rand-$K$-Top-$K$ becomes superior for $n = 100$ converging even faster than \algname{EF21}. The difference is especially prominent in heterogeneous setting. Finally, \algname{EF21} shows slightly beter performance in the experiments with $1000$ nodes. We can conclude that:
\begin{itemize}
	\item \algname{3PCv2} can outperform \algname{EF21} in some cases, for example, \cref{fig:anna-100-nodes-grads},
	\item \algname{EF21} is still superior when $n$ is large.
\end{itemize} 
However, more emprirical evidence is needed to investigate the behavior of \algname{3PCv2} and other new methods fitting \algname{3PC} framework.

\begin{figure}[H]
	\centering
	\begin{subfigure}{}
		{Number of clients $n = 10$, compression level $K = 2509$. }
		\includegraphics[width=\linewidth]{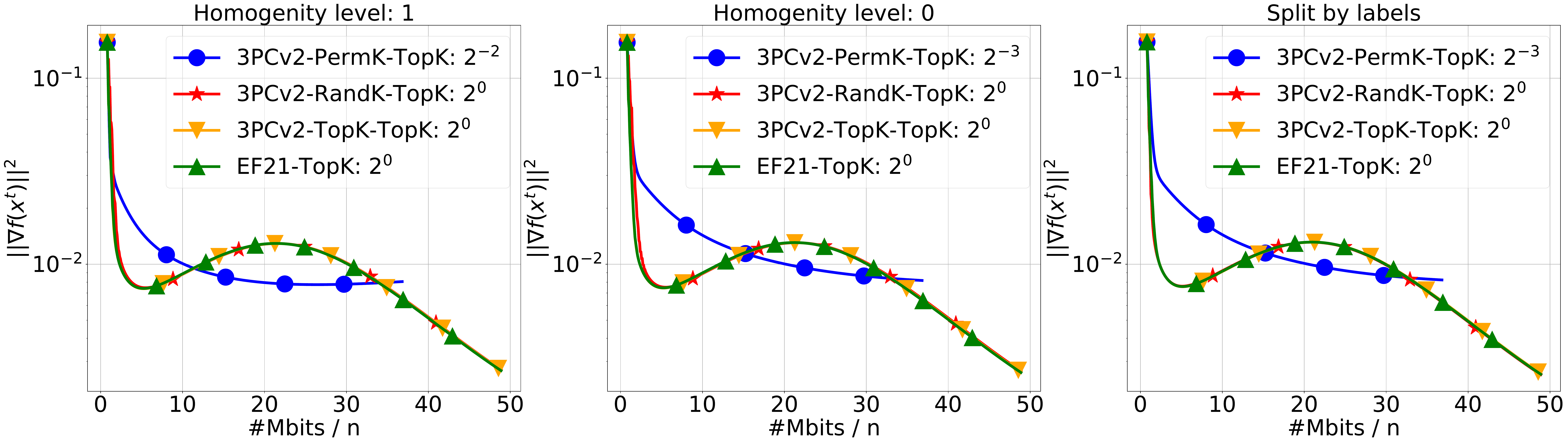}
		\label{fig:anna-10-nodes-grads}
	\end{subfigure}
	\begin{subfigure}{}
		{Number of clients $n = 100$, compression level $K = 251$. }
		\includegraphics[width=\linewidth]{plots/auto_encoder/ANNA-100-nodes-grads}
		\label{fig:anna-100-nodes-grads}
	\end{subfigure}
	\begin{subfigure}{}
		{Number of clients $n = 100$, compression level $K = 25$. }
		\includegraphics[width=\linewidth]{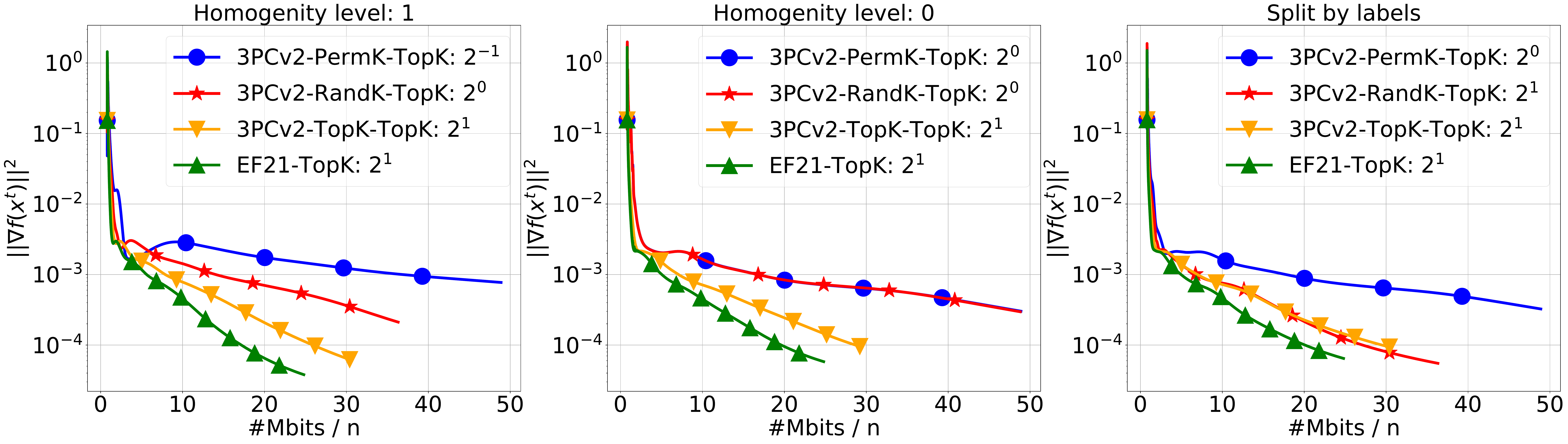}
		\label{fig:anna-1000-nodes-grads}
	\end{subfigure}
	\caption{Comparison of \algname{3PCv2} with Perm-$K$, Rand-$K$ and Top-$K$ as the first compressor. Top-$K$ is used as the second compressor. \algname{EF21} with Top-$K$ is provided for the reference.}
	\label{fig:anna-nodes-grads}
\end{figure}

\subsection{Solving synthetic quadratic problem}\label{sec:quadratic_exps}

In this experimental section we compare practical performance of the proposed methods \algname{3PCv1},   \algname{3PCv2},  \algname{3PCv4},  \algname{3PCv5}
against existing state-of-the-art methods for compressed distributed optimization \algname{MARINA} and \algname{EF21}.
For this comparison we set up the similar setting that was introduced in \cite{PermK}.
Firstly, let us describe the experimental setup in detail. We consider the finite sum function 
$f(x) = \suminn f_i(x)$, consisting of  synthetic quadratic functions
\begin{eqnarray}
	f_i(x) = \fr{1}{2}x^{\top}\bA_i x - x^{\top} b_i,	
\end{eqnarray}
 where $\bA_i \in \R^{d \times d}$, $b_i \in \R^d,$  and $\bA_i = \bA_i^{\top}$ is the training data that belongs to the device/worker  $i$. In all experiments of this section, we  have $d = 1000$ and generated $\bA_i$ in a such way that $f$ is $\lambda$--strongly convex ( i.e., $\frac{1}{n}\sum_{i=1}^n \bA_i\succcurlyeq \lambda \bI $ for $\lambda > 0$) with $\lambda = 1\mathrm{e}^{-6}$. We now present Algorithm \ref{algorithm:matrix_generation} which is used to generate these synthetic matrices (training data).
 
 \begin{algorithm}[!h]
 	\caption{Quadratic optimization task generation \citep{PermK}}
 	\begin{algorithmic}[1]
 		\label{algorithm:matrix_generation}
 		\STATE \textbf{Parameters:} number nodes $n$, dimension $d$, regularizer $\lambda$, and noise scale $s$.
 		\FOR{$i = 1, \dots, n$}
 		\STATE Generate random noises $\nu_i^s = 1 + s \xi_i^s$ and $\nu_i^b = s \xi_i^b,$ i.i.d. $\xi_i^s, \xi_i^b \sim \mathcal {N}(0, 1)$
 		\STATE Take vector $b_i = \frac{\nu_i^s}{4}(-1 + \nu_i^b, 0, \cdots, 0) \in \R^{d}$
 		\STATE Take the initial tridiagonal matrix
 		\[\bA_i = \frac{\nu_i^s}{4}\left( \begin{array}{cccc}
 			2 & -1 & & 0\\
 			-1 & \ddots & \ddots & \\
 			& \ddots & \ddots & -1 \\
 			0 & & -1 & 2 \end{array} \right) \in \R^{d \times d}\]
 		\ENDFOR
 		\STATE Take the mean of matrices $\bA = \frac{1}{n}\sum_{i=1}^n \bA_i$
 		\STATE Find the minimum eigenvalue $\lambda_{\min}(\bA)$
 		\FOR{$i = 1, \dots, n$}
 		\STATE Update matrix $\bA_i = \bA_i + (\lambda - \lambda_{\min}(\bA)) \bI$
 		\ENDFOR
 		\STATE Take starting point $x^0 = (\sqrt{d}, 0, \cdots, 0)$
 		\STATE \textbf{Output:} matrices $\bA_1, \cdots, \bA_n$, vectors $b_1, \cdots, b_n$, starting point $x^0$
 	\end{algorithmic}
 \end{algorithm}
We generated optimization tasks  having different number of nodes $n = \cb{10, 100, 1000}$ and capturing various data-heterogeneity regimes that are controlled by so-called \textit{Hessian variance}\footnote{For more details, see the original paper \citet{PermK} introducing this concept.} term:
 \begin{definition}[Hessian variance \citep{PermK}] \label{ass:HV} Let $\Lpm\geq 0$ be the smallest quantity such that
 	\begin{equation}\label{eq:HV} \textstyle \frac{1}{n} \sum \limits_{i=1}^n \norm{\nabla f_i(x) - \nabla f_i(y)}^2 - \norm{\nabla f(x) - \nabla f(y)}^2 \leq L^2_{\pm} \norm{x-y}^2, \quad \forall x,y\in \R^d. \end{equation}
 	We refer to the quantity $L_{\pm}^2$ by the name {\em Hessian variance.} 
 \end{definition}
From the definition , it follows that the case of similar (or even identical) functions $f_i$ relates to the small (or even $0$) Hessian variance, whereas in the case of completely different $f_i$
(which relate to heterogeneous data regime)  $L_{\pm}$ can be large.

In our experiments, homogeneity of  each optimizations task is controlled by noise scale $s$ introduced in the Algorithm \ref{algorithm:matrix_generation}. Indeed, for the noise scale $s = 0$, all matrices $\bA_i$ are equal, whereas  with the increase of the noise scale, functions become less “similar” and $L_{\pm}$ rises. We take noise scales $s \in \cb{0.0, 0.05, 0.8, 1.6, 6.4}$. A summary of the $L_{\pm}$ and $L_{-}$ values corresponding to these noise scales is given in the Tables \ref{tbl:L_pm_summary} and \ref{tbl:L_m_summary}. For the considered quadratic problem $L_{\pm}$ can be analytically expressed as  $L_{\pm} = \sqrt{\lam_{\max}\rb{\suminn \bA_i^2 - \rb{\suminn \bA_i}^2 } }$.

\begin{table}[h]
	\caption{Summary of the Hessian variance terms $L_{\pm}$ for different number of nodes $n$ various noise scales $s$.} 
	\label{tbl:L_pm_summary}
	\centering
	\begin{tabular}{l l l l l l}
		\toprule
		\diagbox[width=1.2cm ,  height=0.5cm]{ $n$ }{\raisebox{0.5ex}{ $s$}}  & $0$ & $0.05$ &$0.8$  & $1.6$& $6.4$ \\
		\midrule 			
		$10$ 			& $0$  & $0.06$ & $0.9$ &  $1.79$ & $7.17$		\\  
		$100$		  & $0$ & $0.05$& $0.82$  & $1.65$ & $6.58$	\\
		$1000$	     & $0$  & $0.05$ & $0.81$  & $1.62$ &	$6.48$  \\
		\bottomrule
	\end{tabular}
\end{table}

\begin{table}[h]
	\caption{Summary of the Hessian variance terms $L_{-}$ for different number of nodes $n$ various noise scales $s$.} 
	\label{tbl:L_m_summary}
	\centering
	\begin{tabular}{l l l l l l}
		\toprule
		\diagbox[width=1.2cm ,  height=0.5cm]{ $n$ }{\raisebox{0.5ex}{ $s$}}  & $0$ & $0.05$ &$0.8$  & $1.6$& $6.4$ \\
		\midrule 			
		$10$ 			& $1.0$  & $1.02$ & $1.35$ &  $1.7$ & $3.82$		\\  
		$100$		  & $1.0$ & $1.0$& $0.97$  & $0.94$ & $0.77$	\\
		$1000$	     & $1.0$  & $1.0$ & $0.97$  & $0.95$ &	$0.78$  \\
		\bottomrule
	\end{tabular}
\end{table}

For all algorithms, at each iteration we compute the squared norm of the exact/full gradient for comparison of the methods performance. We terminate our algorithms either if they reach the certain number of iterations or the following stopping criterion is satisfied:  $\sqnorm{\nf{\xt}} \le 10^{-7}$. 

In all experiments, the stepsize of each method is set to the largest theoreticaly possible stepsize multiplied by some constant multiplier which was individually tuned in all cases within powers of $2$ : $\cb{2, 4, 8, 16, 32, 64, 128, 256, 512, 1024, 2048, 4096, 8192, 16384, 32768}$ .
\paragraph{\algname {EF21} and different compressors}
Following the same order as in the section \ref{sec:autoenc_exps} we start by comparing existing SOTA methods (\algname{MARINA} with Perm-$K$ and \algname{EF21} with Top-$K$) against \algname{EF21} with cPerm-$K$ and cRand-$K$.
In Figure \ref{fig:ef21-methods},  paramater $K = \nicefrac{d}{n}$ is fixed for each row. Each column corresponds to a heterogeneity levels defined by the averaged $L_{\pm}$  and $L_{-}$  per values $n$ (averaged per column in the Tables \ref{tbl:L_pm_summary} and \ref{tbl:L_m_summary}).

\begin{figure}[t]
	\centering
		\includegraphics[width=\linewidth]{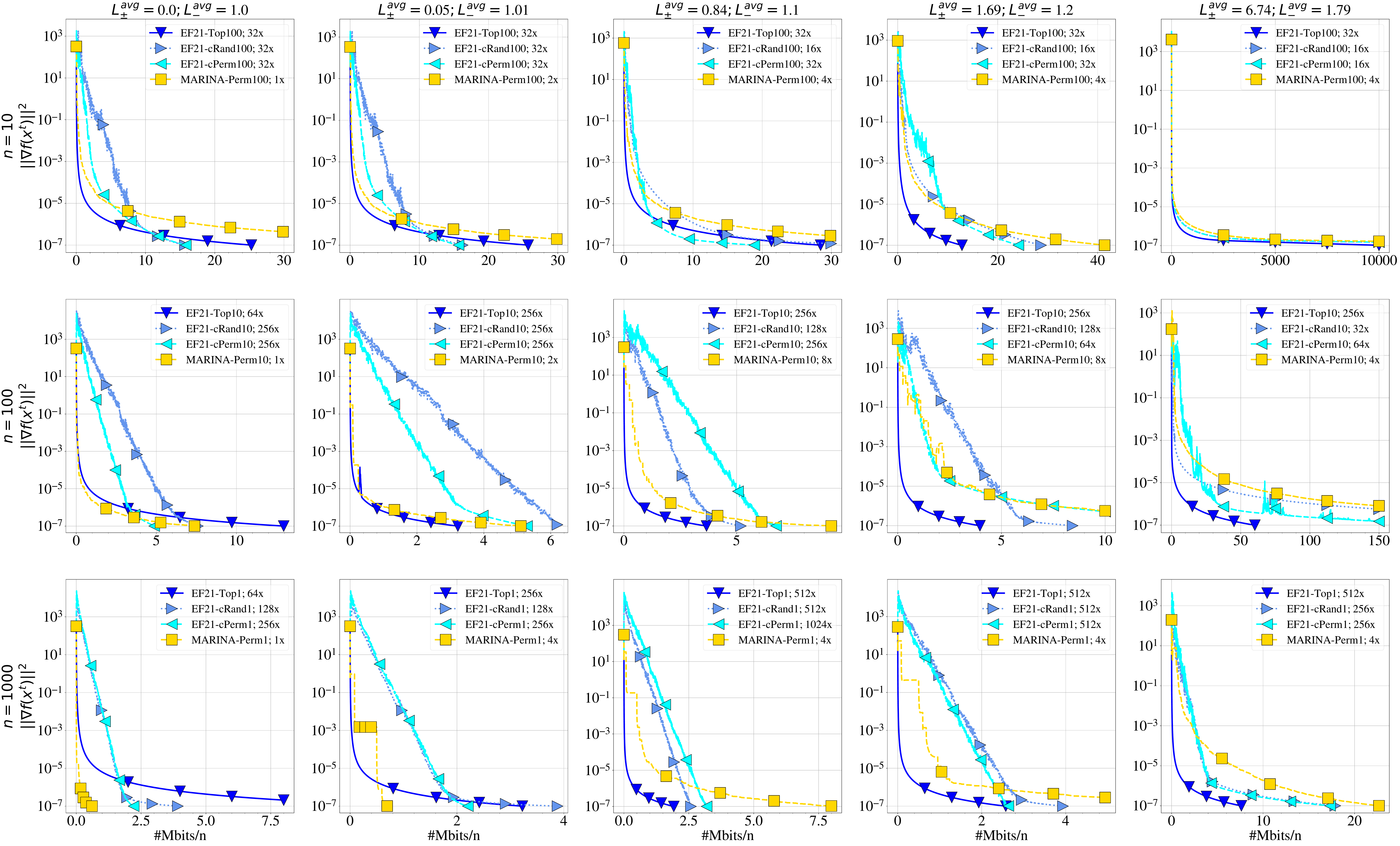}
		\caption{Comparison of \algname{MARINA} with Perm-$K$, \algname{EF21} with Top-$K$, cPerm-$K$ and cRand-$K$ with $K = \nicefrac{d}{n}$ and tuned stepsizes. By $1\times, 2\times, 4\times$ (and so on) we indicate that the stepsize is set to a multiple of the largest stepsize predicted by theory. $L_{\pm}^{avg}$ and $L_{-}^{avg}$ are the averaged constants $L_{\pm}$ and $L_{-}$ per column.}
		\label{fig:ef21-methods}
\end{figure}

These experiments shows that, in low Hessian variance regime \algname{EF21} with cPerm-$K$ and cRand-$K$ in some cases improves \algname{MARINA} with Perm-$K$ for $n=10, 100$, whereas for $n=1000$ \algname{MARINA} with cPerm-$K$ still dominates. Moreover, even in big Hessian variance regime $\algname{EF21}$ methods converges faster than \algname{MARINA} with cPerm-$K$ but not as fast as \algname{EF21} with Top-$K$. We are not aware of any prior empirical study for \algname{EF21} combined with cPerm-$K$ or cRand-$K$.

\paragraph{\algname {MARINA} and different compressors}
In this section, we keep the same setting and compare a new method \algname{3PCv5} with Top-$K$ against  \algname{MARINA} with Perm-$K$, Rand-$K$ and \algname{EF21} with Top-$K$. In Figure \ref{fig:marina-methods}, one can see that  \algname{3PCv5} with Top-$K$ outperforms  \algname{MARINA} methods only in a couple of cases for $n=10$, whereas for the most of the regimes it converges slower than other methods.
\begin{figure}[t]
	\centering
	\includegraphics[width=\linewidth]{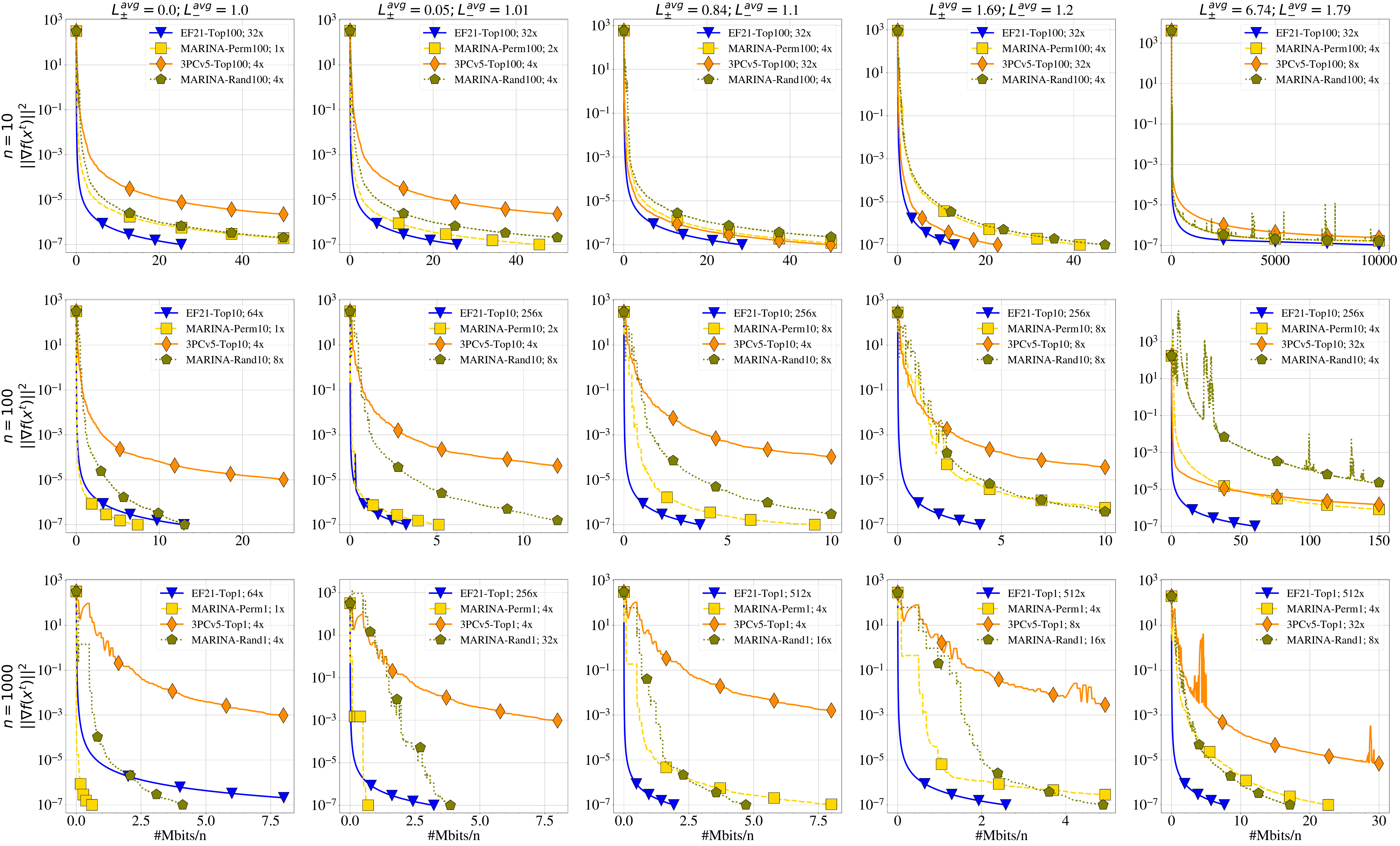}
	\caption{Comparison of \algname{MARINA} with Perm-$K$, Rand-$K$, \algname{EF21} with Top-$K$ and \algname{3PCv5} with $K = \nicefrac{d}{n}$ and tuned stepsizes. By $1\times, 2\times, 4\times$ (and so on) we indicate that the stepsize is set to a multiple of the largest stepsize predicted by theory. $L_{\pm}^{avg}$ and $L_{-}^{avg}$ are the averaged constants $L_{\pm}$ and $L_{-}$ per column.}
	\label{fig:marina-methods}
\end{figure}

\paragraph{\algname {3PCv2} beat SOTA methods in the most cases!}
In this series of experiments, we stick to the previous setting and append the results of the new method \algname {3PCv2} with 2 different combinataion of compressors: Rand$K_1$-Top$K_2$ and Rand$K_1$ $*$Perm$K$ -Top$K_2$, where  Rand$K_1$ $*$Perm-$K$  is the composition of  Rand-$K_1$ and Perm-$K$. For both methods, constants $K_1$ and $K_2$ were extensively tuned over the set of $9$ different pairs (see Figures \ref{fig:don-anna-rt} and \ref{fig:don-anna-rpt} for details). In the Figure \ref{fig:don-best} it is shown that both variants \algname {3PCv2} methods converge quickly for $n=100$ in all heterogeneity regimes, outperforming \algname {MARINA} and \algname{EF21}. In the big Hessian variance regime and  $n=10$,  \algname{3PCv2} also converges faster than \algname{EF21} with Top-$K$, however, for even more homogeneous cases \algname{3PCv2} slightly looses to \algname{EF21} with cPerm-$K$ or cRand-$K$. We also would like to note that we excluded \algname{3PCv4} with Top$K_1$-Top$K_2$ from our comparison here since in practice for $K = \nicefrac{d}{n}$ it coincides with \algname{EF21} with Top-$K$ (see Figure \ref{fig:don-jacqueline} for more details)\footnote{We believe that this behaviors of \algname{3PCv4} with Top$K_1$-Top$K_2$ takes place is due to the problem sparsity.}

\begin{figure}[H]
	\centering
		\includegraphics[width=\linewidth]{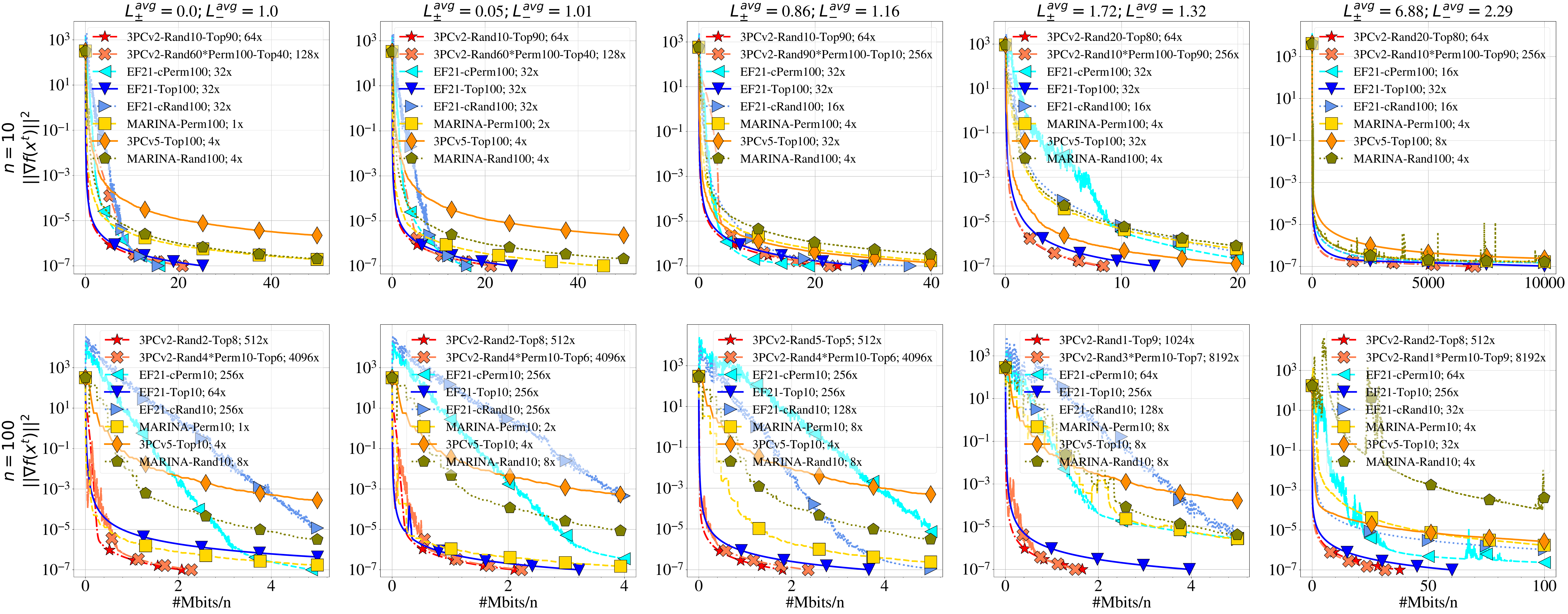}
\caption{Comparison of \algname{MARINA}, \algname{EF21}, \algname{3PCv2} and \algname{3PCv5} with various compressors , $K = \nicefrac{d}{n}$ and tuned stepsizes. By $1\times, 2\times, 4\times$ (and so on) we indicate that the stepsize is set to a multiple of the largest stepsize predicted by theory. $L_{\pm}^{avg}$ and $L_{-}^{avg}$ are the averaged constants $L_{\pm}$ and $L_{-}$ per column.}
		\label{fig:don-best}
\end{figure}

We further continue with the setting where $\nfr{K}{d} = 0.02$ is fixed for each $n$. In the Figure \ref{fig:cmprsd-best} illustrates that \algname {3PCv2} remaines the best choice for $n=10$ and $n=100$, whereas in the homogeneous regime and $n=1000$ \algname {EF21} with cRand-$K$ can reach the desired tolerance a bit faster. However, for big Hessian varince regime \algname {3PCv2} as again preferable.
\begin{figure}[H]
		\includegraphics[width=\linewidth]{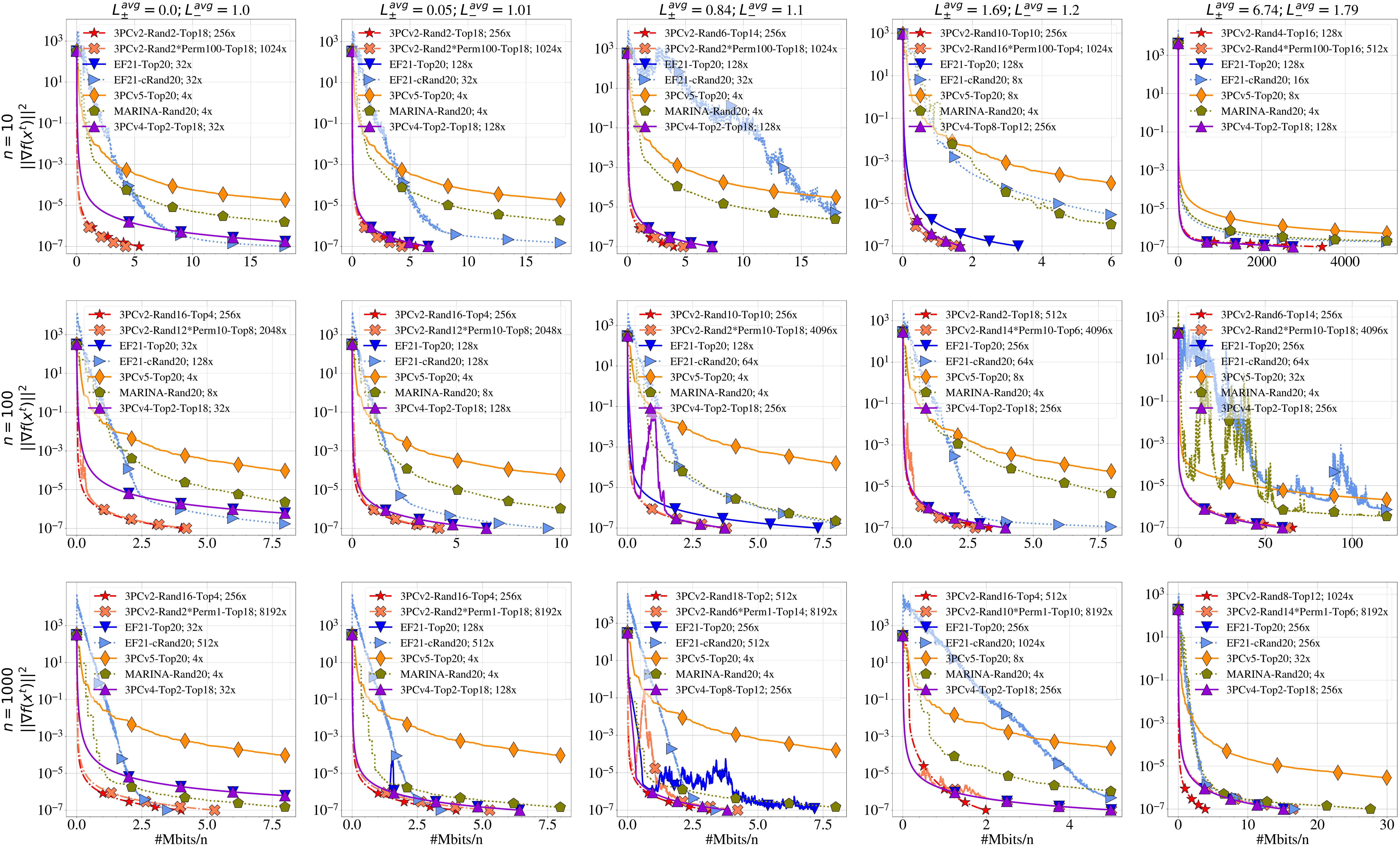}
		\caption{Comparison of \algname{MARINA}, \algname{EF21}, \algname{3PCv2}, \algname{3PCv5} and \algname{3PCv5} with various compressors , $K = 0.02{d}$ and tuned stepsizes. By $1\times, 2\times, 4\times$ (and so on) we indicate that the stepsize is set to a multiple of the largest stepsize predicted by theory. $L_{\pm}^{avg}$ and $L_{-}^{avg}$ are the averaged constants $L_{\pm}$ and $L_{-}$ per column.}
		\label{fig:cmprsd-best}
\end{figure}

\paragraph{Fine-tuning of $(K_1, K_2)$ pairs for \algname{3PCv2} and \algname{3PCv4}}
In this section we provide with some auxillary results on the demonstration of the tuning $(K_1, K_2)$ pairs for \algname{3PCv2} and \algname{3PCv4} on different compressors. In the $K = \nicefrac{d}{n}$ scenario (see Figure \ref{fig:don-anna-rt}), in all cases the best performance of \algname{3PCv2} with Rand$K_1$-Top$K_2$ is achieved when $K_2 > K_1$, whereas for the case when $\nfr{K}{d}=0.02$ (see Figure \ref{fig:cmprsd-anna-rt}) there is a dependence on $n$; for $n=10$, the choice when $K_2 > K_1$ is preferable in all cases, whereas for $n=100$ and $n=1000$ it is the case only in big Hessian variance regime. At the same time, for optimal pairs $(K_1,K_2)$ of the method \algname{3PCv2} with Rand$K_1*$Perm$K$-Top$K_2$ we observe that the choice is $K_2 > K_1$ (see Figures \ref{fig:don-anna-rpt},  \ref{fig:cmprsd-anna-rpt}).

\begin{figure}[H]
	\centering
		\includegraphics[width=\linewidth]{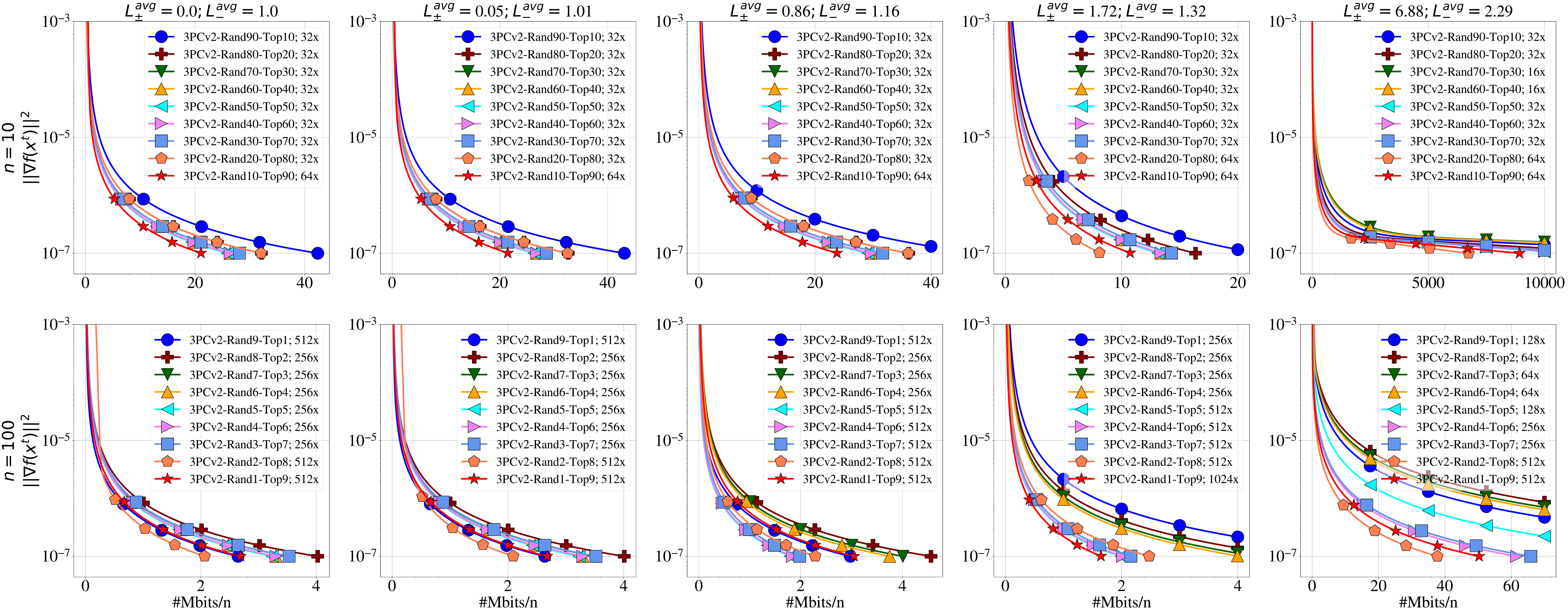}
		\caption{Comparison of \algname{3PCv2} with methods with Rand$K_1$-Top$K_2$ with  $K_1 + K_2 =  \nicefrac{d}{n}$ and tuned stepsizes. By $1\times, 2\times, 4\times$ (and so on) we indicate that the stepsize is set to a multiple of the largest stepsize predicted by theory. $L_{\pm}^{avg}$ and $L_{-}^{avg}$ are the averaged constants $L_{\pm}$ and $L_{-}$ per column.}
		\label{fig:don-anna-rt}
	\end{figure}
		\begin{figure}[H]
			\includegraphics[width=\linewidth]{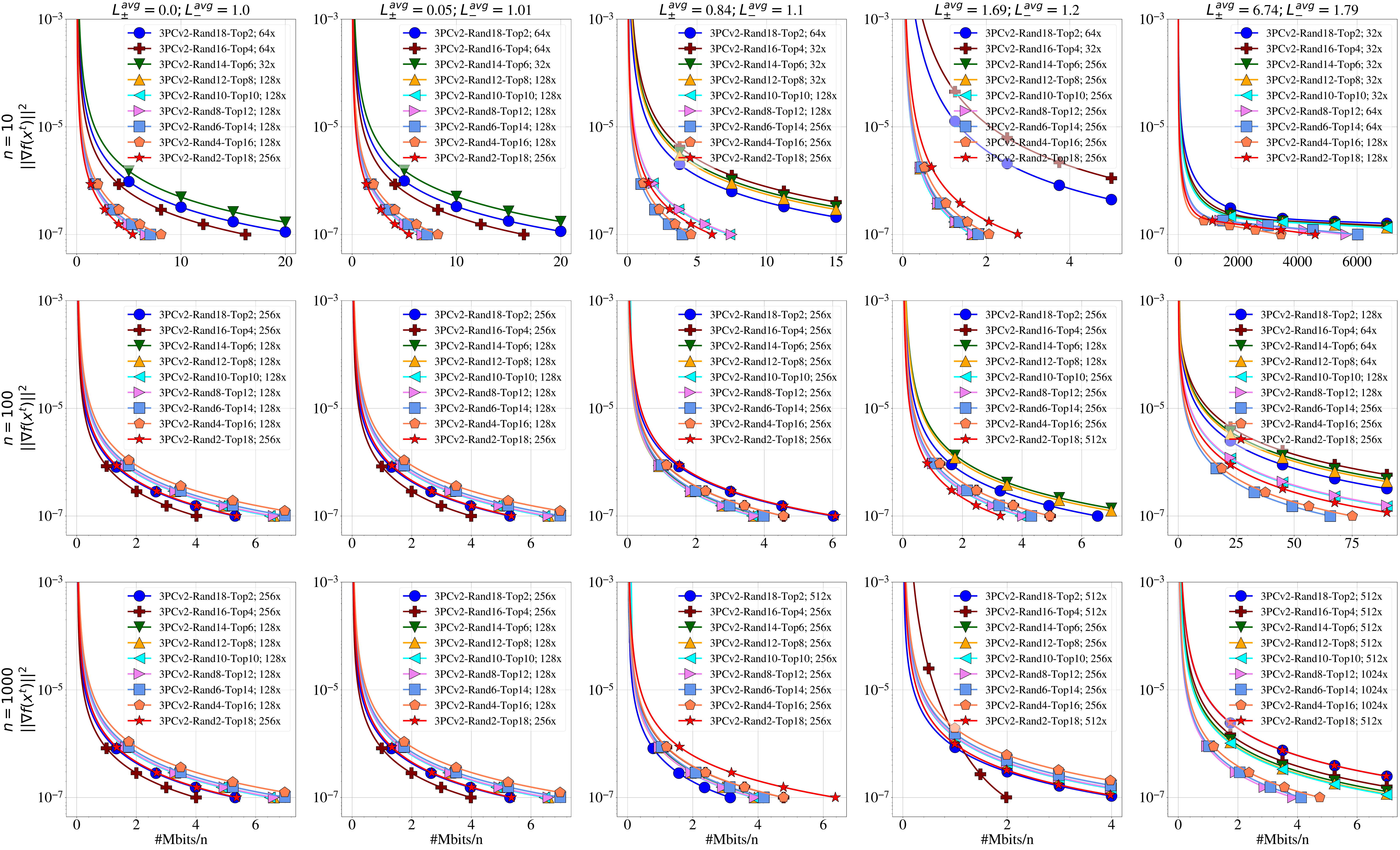}
			\caption{Comparison of \algname{3PCv2} with methods with Rand$K_1$-Top$K_2$ with $K_1 + K_2 =  0.02{d}$ and tuned stepsizes. By $1\times, 2\times, 4\times$ (and so on) we indicate that the stepsize is set to a multiple of the largest stepsize predicted by theory. $L_{\pm}^{avg}$ and $L_{-}^{avg}$ are the averaged constants $L_{\pm}$ and $L_{-}$ per column.}
			\label{fig:cmprsd-anna-rt}
\end{figure}

\begin{figure}[H]

		\includegraphics[width=\linewidth]{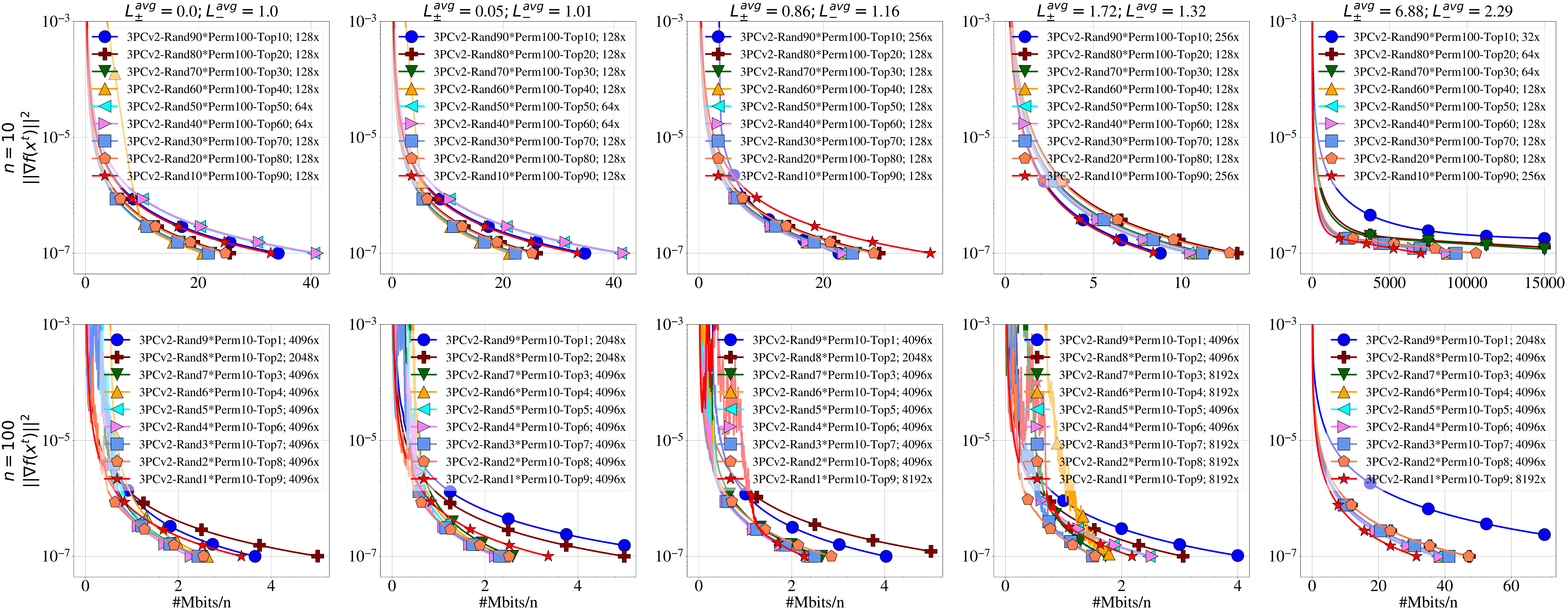}
		\caption{Comparison of \algname{3PCv2} with methods with Rand$K_1*$Perm$K$-Top$K_2$ with $K_1 + K_2 =  \nicefrac{d}{n}$ and tuned stepsizes. By $1\times, 2\times, 4\times$ (and so on) we indicate that the stepsize is set to a multiple of the largest stepsize predicted by theory. $L_{\pm}^{avg}$ and $L_{-}^{avg}$ are the averaged constants $L_{\pm}$ and $L_{-}$ per column.}
		\label{fig:don-anna-rpt}
		\end{figure}
\begin{figure}[H]
		\includegraphics[width=\linewidth]{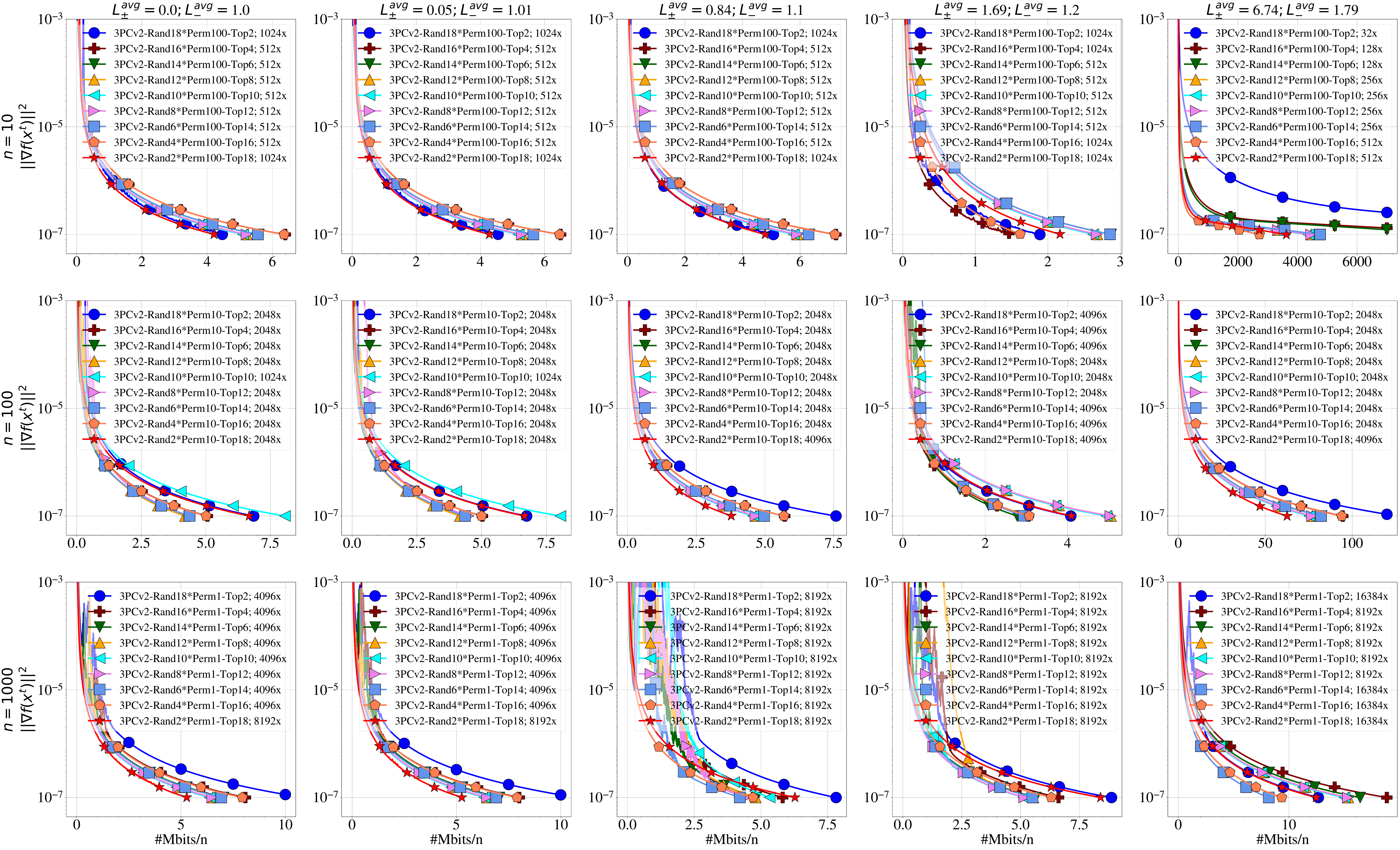}
		\caption{Comparison of \algname{3PCv2} with methods with Rand$K_1*$Perm$K$-Top$K_2$ with $K_1 + K_2 =  0.02{d}$ and tuned stepsizes. By $1\times, 2\times, 4\times$ (and so on) we indicate that the stepsize is set to a multiple of the largest stepsize predicted by theory. $L_{\pm}^{avg}$ and $L_{-}^{avg}$ are the averaged constants $L_{\pm}$ and $L_{-}$ per column.}
		\label{fig:cmprsd-anna-rpt}
\end{figure}

Figures \ref{fig:don-jacqueline} and \ref{fig:cmprsd-jacqueline} show that for the considered sparse quadratic problem in most cases the method \algname{3PCv4} with Top$K_1$-Top$K_2$ compressors behaves as a \algname{EF21} with Top-$K$. Only in a few cases \algname{3PCv4} shows an improvement  over \algname{EF21}.

\begin{figure}[H]
		\includegraphics[width=\linewidth]{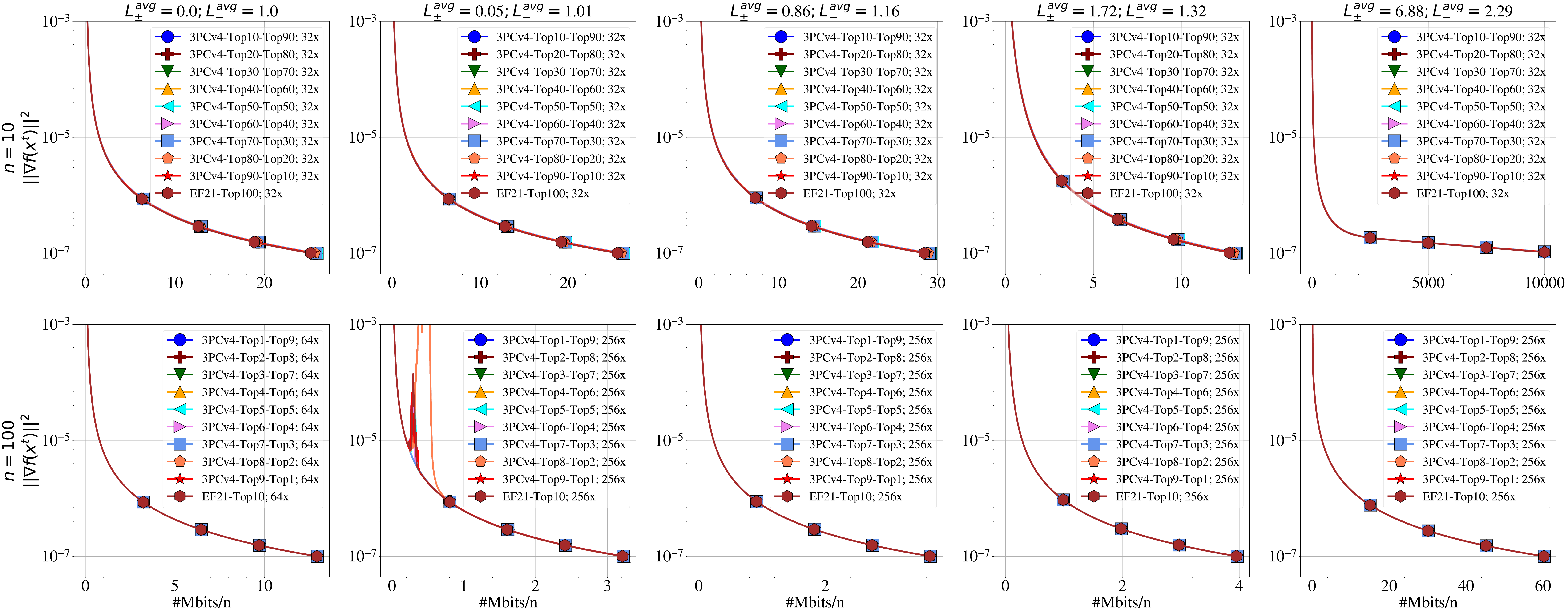}
		\caption{Comparison of \algname{3PCv4} with methods with Top$K_1$-Top$K_2$ with $K_1 + K_2 =  \nicefrac{d}{n}$ and tuned stepsizes. By $1\times, 2\times, 4\times$ (and so on) we indicate that the stepsize is set to a multiple of the largest stepsize predicted by theory. $L_{\pm}^{avg}$ and $L_{-}^{avg}$ are the averaged constants $L_{\pm}$ and $L_{-}$ per column.}
		\label{fig:don-jacqueline}
	\end{figure}
\begin{figure}[H]
		\includegraphics[width=\linewidth]{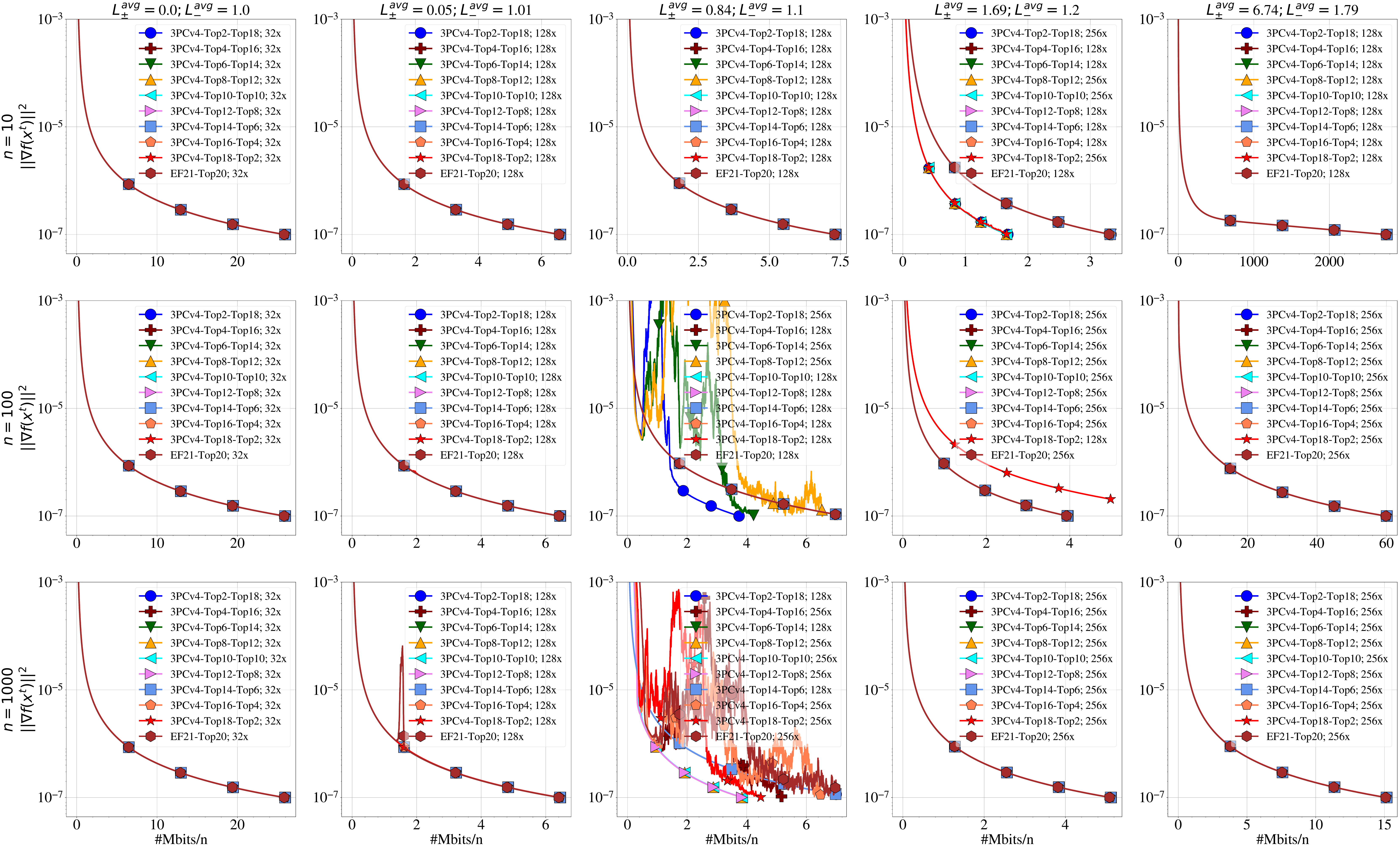}
		\caption{Comparison of \algname{3PCv4} with methods with Top$K_1$-Top$K_2$ with $K_1 + K_2 =  0.02{d}$ and tuned stepsizes. By $1\times, 2\times, 4\times$ (and so on) we indicate that the stepsize is set to a multiple of the largest stepsize predicted by theory. $L_{\pm}^{avg}$ and $L_{-}^{avg}$ are the averaged constants $L_{\pm}$ and $L_{-}$ per column.}
		\label{fig:cmprsd-jacqueline}
\end{figure}

\paragraph{3PCv1}

The next sequence of plots compares \algname{EF21} with Top-$K$, \algname{3PCv1} with Top-$K$ and classical \algname{GD}. Since all methods communicates different amount of floats\footnote{Each node  in \algname{EF21} with Top-$K$ send exactly $K$ floats on server, whereas for \algname{3PCv1} with Top-$K$ and \algname{GD} server receives $d+K$ and $d$ floats from each node respectively.} on each iteration they are compared in terms of the \# communication rounds. Yet being unpractical, \algname{3PCv1} can provide an intuition of how the intermediate method between \algname{GD} and \algname{EF21} could work and what performance can be achieved in \algname{3PCv1} by additional sending $d$ dimensional vector from each node to the server. 
Figure \ref{fig:cmprsd-bd2} illustrates that in low Hessian variance regime \algname{3PCv1} with Top-$K$ behaves as a classical \algname{GD}, whereas in a more heterogeneous regime, it can loose \algname{GD} in terms of the number of communication rounds.
\begin{figure}[H]
	\includegraphics[width=\linewidth]{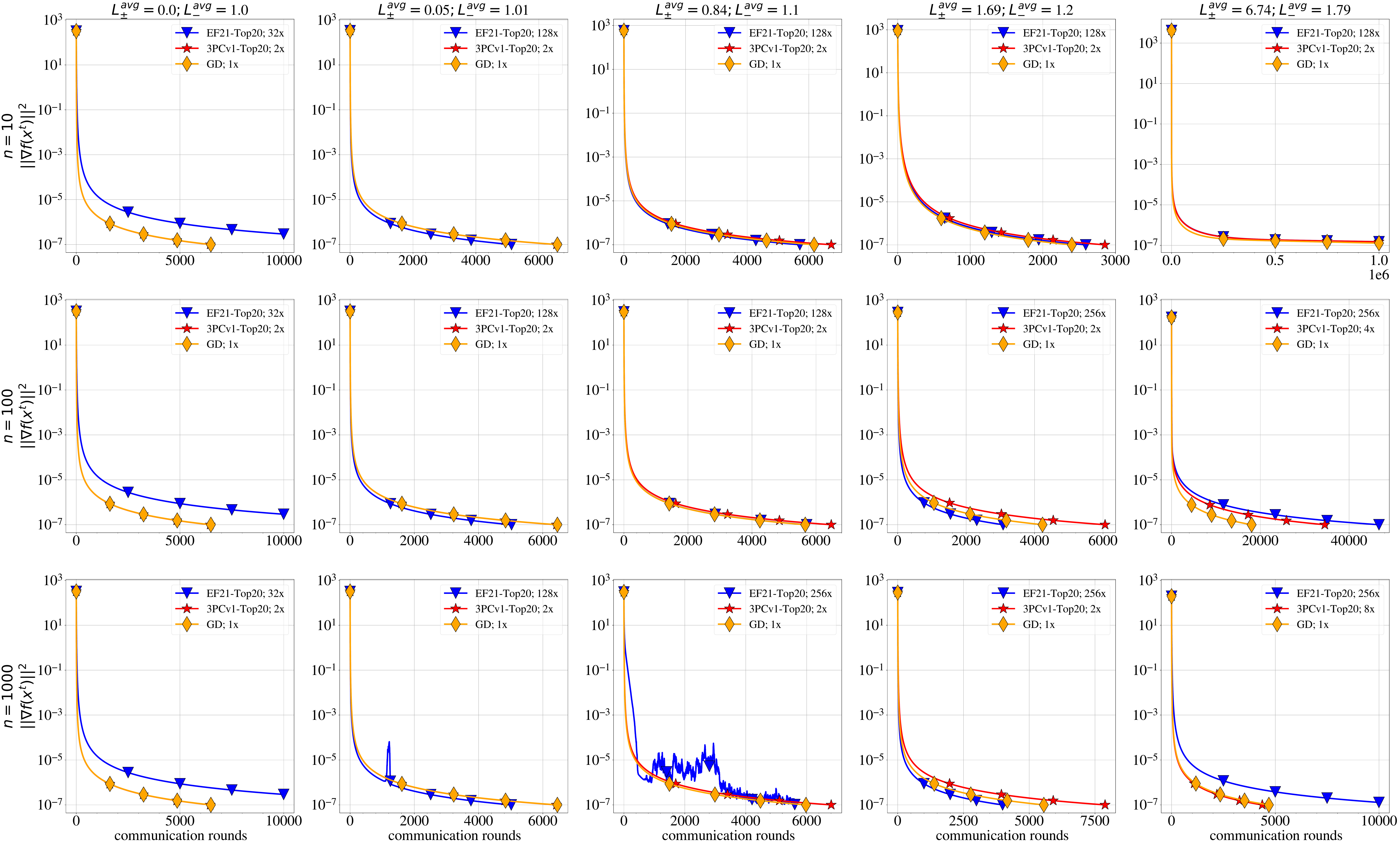}
	\caption{Comparison of \algname{GD} , \algname{3PCv1} with Top-$K$ and \algname{EF21} with Top-$K$ for $ K=  0.02{d}$ and tuned stepsizes. By $1\times, 2\times, 4\times$ (and so on) we indicate that the stepsize is set to a multiple of the largest stepsize predicted by theory. $L_{\pm}^{avg}$ and $L_{-}^{avg}$ are the averaged constants $L_{\pm}$ and $L_{-}$ per column.}
	\label{fig:cmprsd-bd2}
\end{figure}

\subsection{Testing compressed lazy aggregation (\algname{CLAG})}\label{sec:clag}

Following ~\citet{EF21}, we show the performance advantages of \algname{CLAG}. We recall that we are interested in solving the non-convex logistic regression problem,
\begin{align}
\min\limits_{x \in \R^d} f(x) =  \frac{1}{N}\sum\limits_{i=1}^N \log(1 + \exp(-y_i a_i^\top x)) + \lambda \sum\limits_{j=1}^d \frac{x_j^2}{1 + x_j^2},
\end{align}
where $a_i \in \R^d, y_i \in \{-1, 1\}$ are the training data, and $\lambda > 0$ is a regularization parameter. Parameter $\lambda$ is always set to $0.1$ in the experiments. We use four LIBSVM~\cite{chang2011libsvm}  datasets \emph{phishing, w6a, a9a, ijcnn1} as training data. A dataset has been evenly split into $n=20$ equal parts where each part represents a separate client dataset (the remainder of partition between clients has been withdrawn).

{\bfseries Heatmap of communication complexities of \algname{CLAG} for different combinations of parameters.} In our first group of experiments (see Figures~\ref{fig:heatmap_phishing},~\ref{fig:heatmap_w6a},~\ref{fig:heatmap_ijcnn1},~\ref{fig:heatmap_a9a}), we run \algname{CLAG} with Top-$K$ compressor. The compression level $K$ varies evenly between $1$ and $d$, where $d$ is the number of features of a chosen dataset. Trigger $\zeta$ passes zero and subsequent powers of two from zero to eleven. For each combination of $K$ and $\zeta$, we compute empirically the {\em minimum} number of bits per worker sent from clients to the server. Minimum is taken among 12 launches of \algname{CLAG} with different scalings of the theoretical stepsize, scales are powers of two from zero to eleven. The stopping criterion for each launch is based on the condition: $\|\nabla f(x)\| < \delta$, where $\delta$ equals to $10^{-4}$ for {\em phishing} and to $10^{-2}$ for {\em a9a, ijcnn1} and {\em w6a} datasets. Since the algorithm may not converge with too large stepsizes, the time limit of five minutes has been set for one launch. We stress that \algname{CLAG} reduces to \algname{LAG} when $k=d$ and to \algname{EF21} when $\zeta = 0$. The experiment shows that for the most of datasets (excluding {\em phishing}) the minimum communication complexity is attained at a combination of $(K, \zeta)$, which does not reduce \algname{CLAG} to \algname{EF21} or \algname{LAG}. Thus \algname{CLAG} can be consistently faster than \algname{EF21} and \algname{LAG}.

{\bfseries Plots for limited communication cost.} In our second group of experiments (see Figures ~\ref{fig:plot_phishing},~\ref{fig:plot_w6a},~\ref{fig:plot_ijcnn1},~\ref{fig:plot_a9a}), we are in the same setup as in the previous one but this time the stopping criterion bounds the communication cost of algorithms; \algname{CLAG}, \algname{LAG} and \algname{EF21} stop when they first hit the communication cost of 32 Mbits per client. Compression levels $K$ for each dataset at each plot correspond to $1$, $25\%$ and $50\%$ of features. Stepsizes for each algorithm is fine-tuned over the same grid as in the previous experiment. The best $\zeta$ are chosen for \algname{CLAG} and \algname{LAG} from the same grid as in the previous experiment. The experiment exhibits again but from the different perspective the advantages of \algname{CLAG} over its counterparts.

\begin{figure}[H]
\centering
\includegraphics[width=\linewidth]{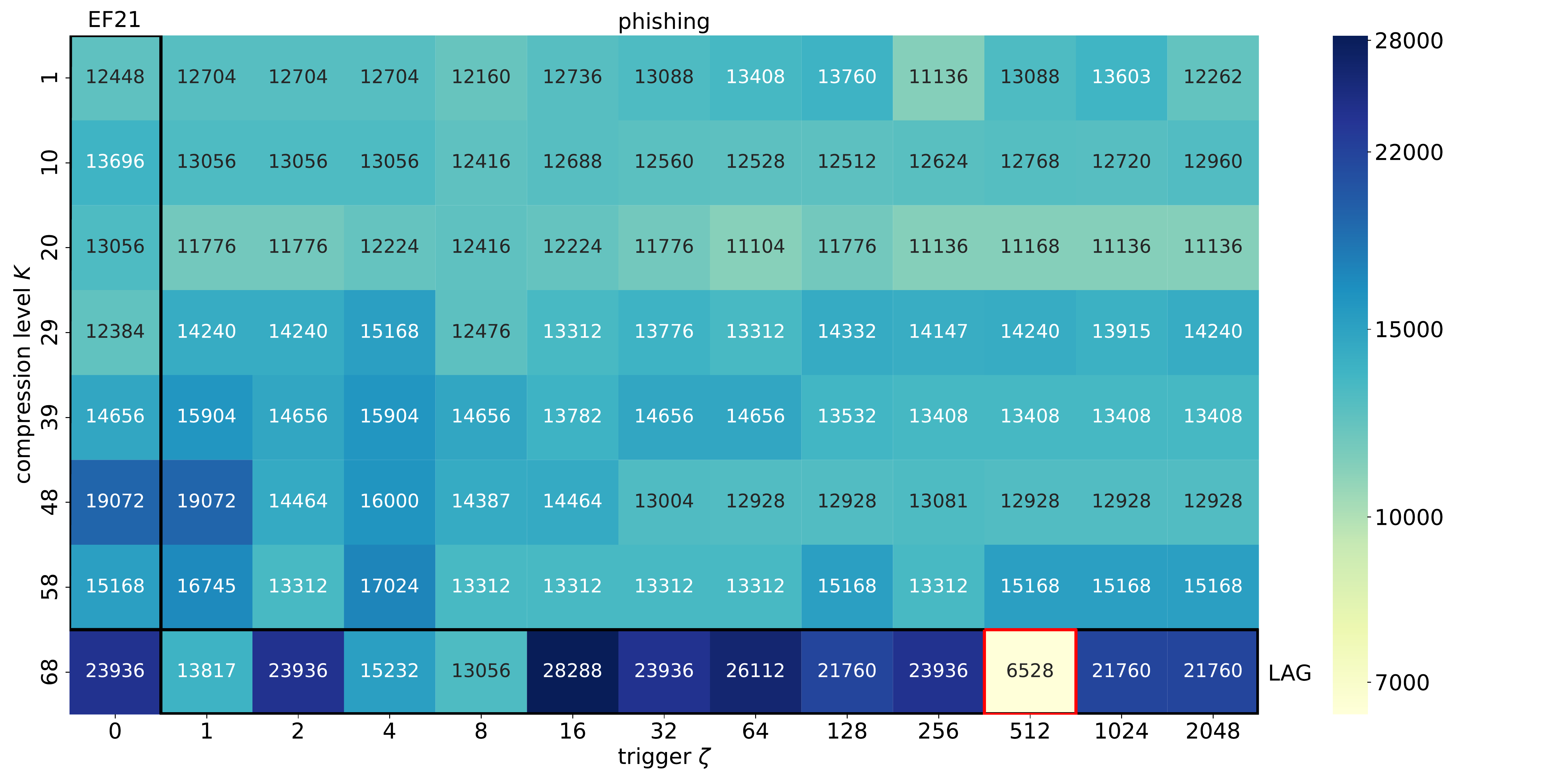}
\caption{Heatmap of communication complexities of \algname{CLAG} for different combination of compression levels and triggers with fine-tuned stepsizes on {\em phishing} dataset. We contour cells corresponding to \algname{EF21} and \algname{LAG}, as special cases of \algname{CLAG}, by black rectangles. The red-countered cell indicates the experiment with the smallest communication cost.}
\label{fig:heatmap_phishing}
\end{figure}
\begin{figure}[H]
	\centering
	\includegraphics[width=\linewidth]{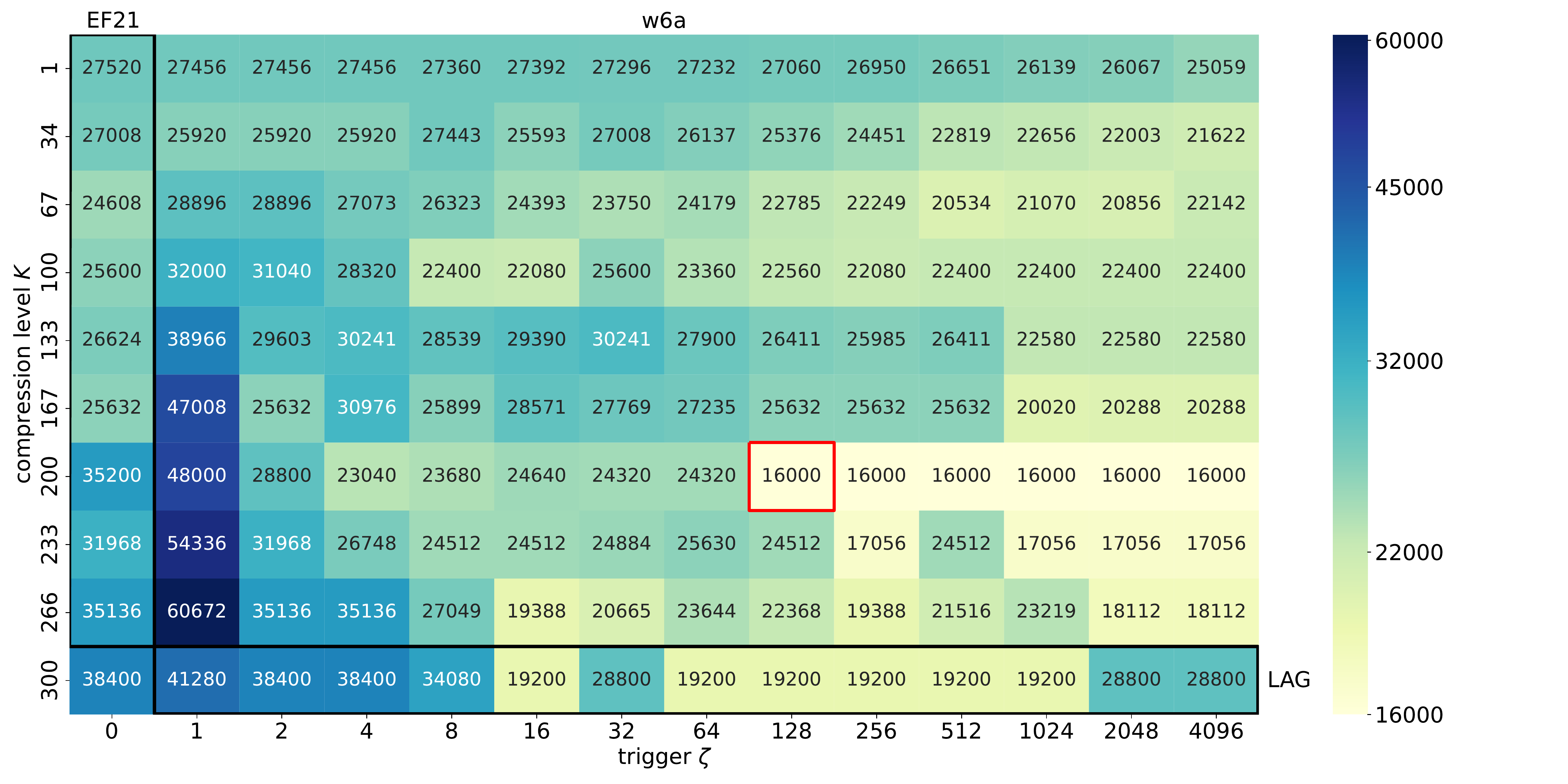}
	\caption{Heatmap of communication complexities of \algname{CLAG} for different combination of compression levels and triggers with fine-tuned stepsizes on {\em w6a} dataset. We contour cells corresponding to \algname{EF21} and \algname{LAG}, as special cases of \algname{CLAG}, by black rectangles. The red-countered cell indicates the experiment with the smallest communication cost.}
	\label{fig:heatmap_w6a}
\end{figure}
\begin{figure}[H]
	\centering
	\includegraphics[width=\linewidth]{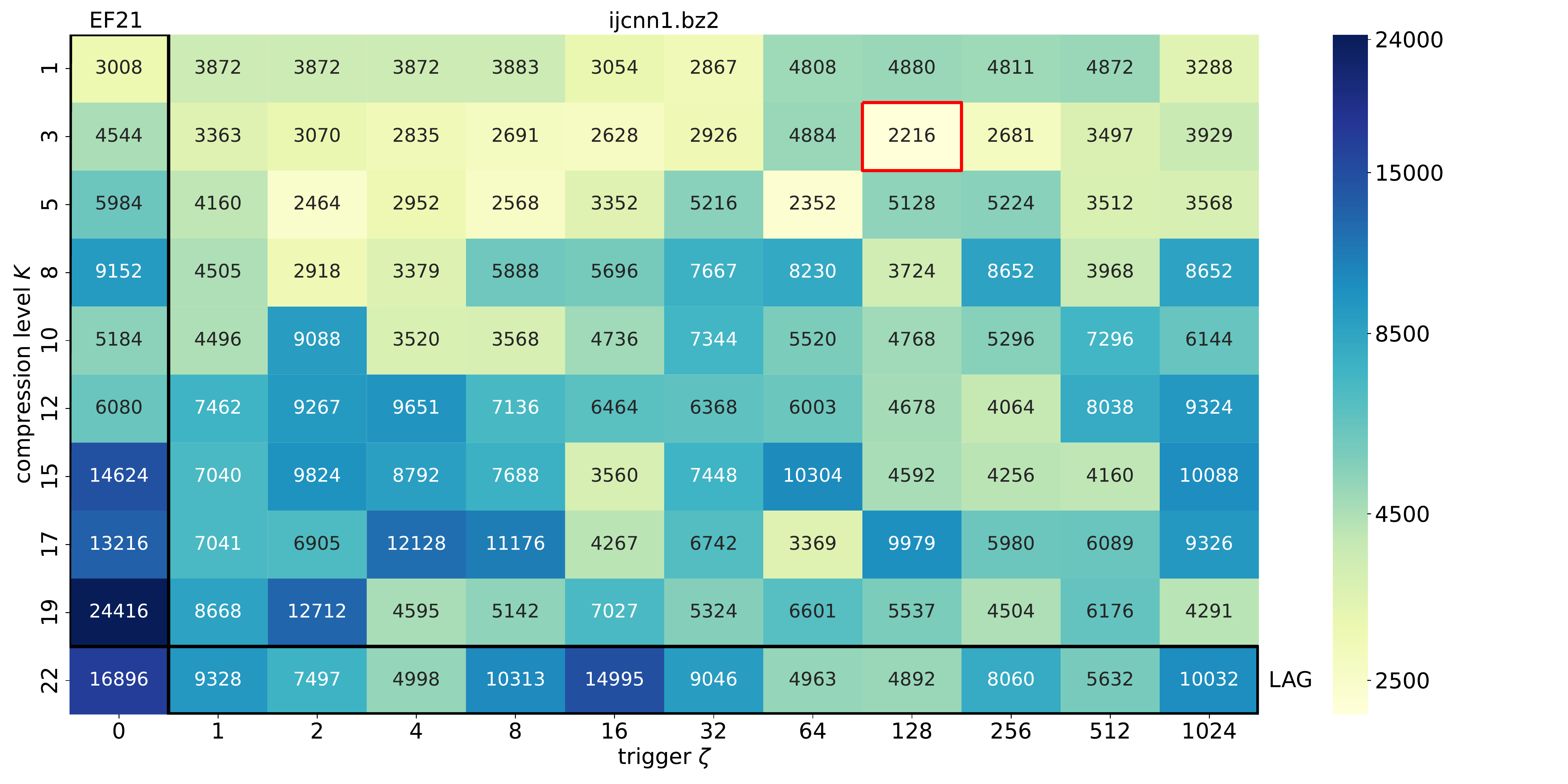}
	\caption{Heatmap of communication complexities of \algname{CLAG} for different combination of compression levels and triggers with fine-tuned stepsizes on {\em ijcnn1} dataset. We contour cells corresponding to \algname{EF21} and \algname{LAG}, as special cases of \algname{CLAG}, by black rectangles. The red-countered cell indicates the experiment with the smallest communication cost.}
	\label{fig:heatmap_ijcnn1}
\end{figure}
\begin{figure}[H]
	\centering
	\includegraphics[width=\linewidth]{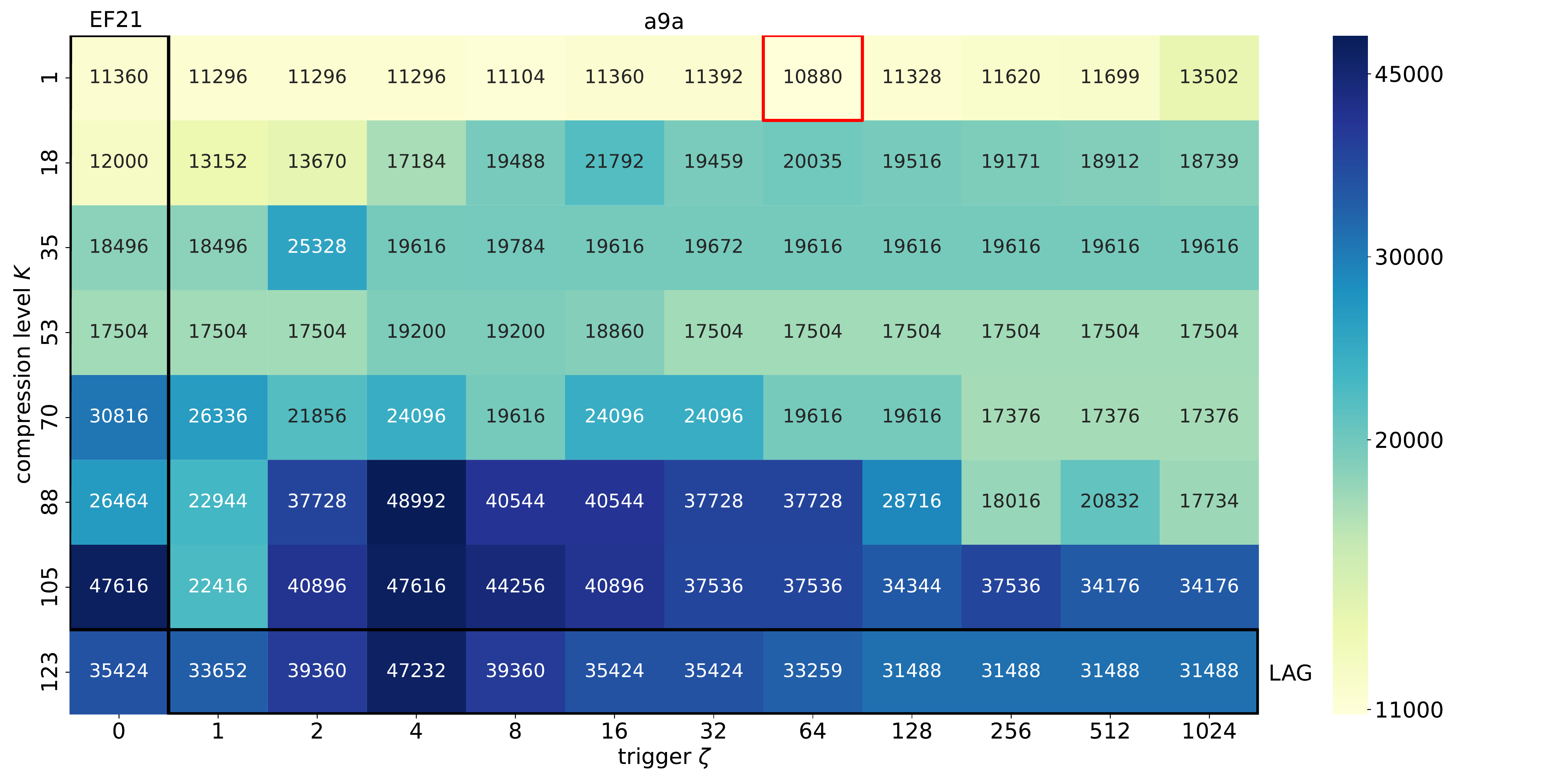}
	\caption{Heatmap of communication complexities of \algname{CLAG} for different combination of compression levels and triggers with fine-tuned stepsizes on {\em a9a} dataset. We contour cells corresponding to \algname{EF21} and \algname{LAG}, as special cases of \algname{CLAG}, by black rectangles. The red-countered cell indicates the experiment with the smallest communication cost.We contour cells corresponding to \algname{EF21} and \algname{LAG}, as special cases of \algname{CLAG}, by black rectangles. The red-countered cell indicates the experiment with the smallest communication cost.}
	\label{fig:heatmap_a9a}
\end{figure}


\begin{figure}[H]
	\centering
	\includegraphics[width=\linewidth]{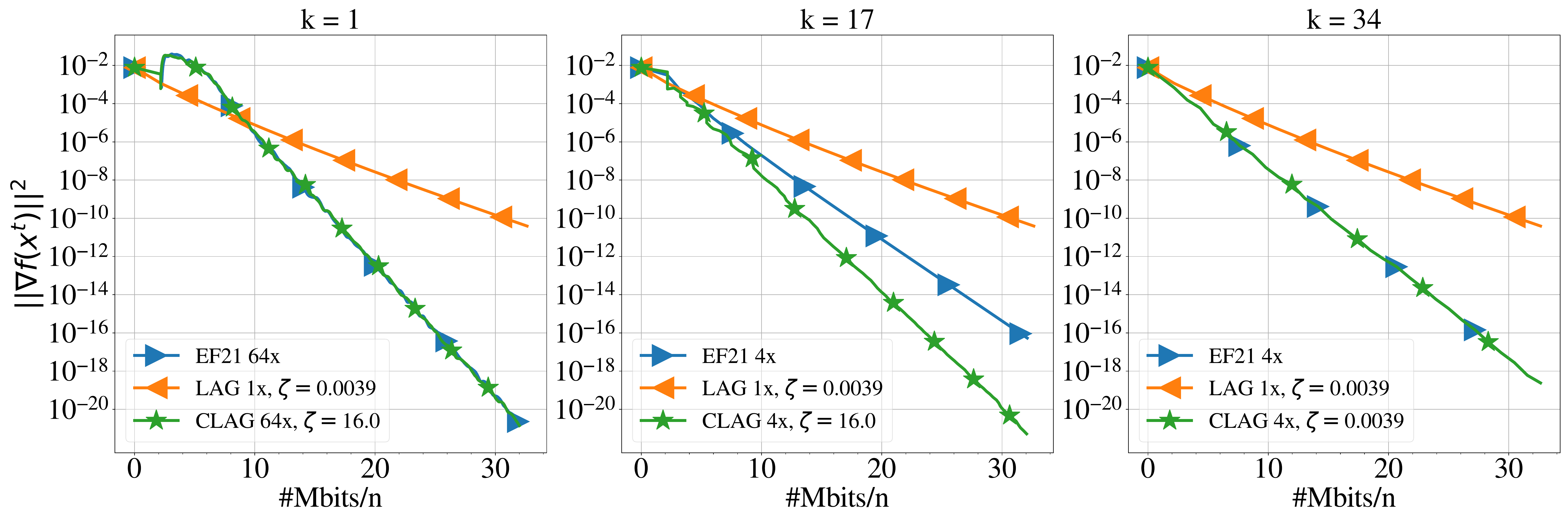}
	\caption{Comparison of \algname{CLAG}, \algname{LAG} and \algname{EF21} with Top-$K$ with fine-tuned stepsizes on {\em phishing} dataset}
	\label{fig:plot_phishing}
\end{figure}
\begin{figure}[H]
	\centering
	\includegraphics[width=\linewidth]{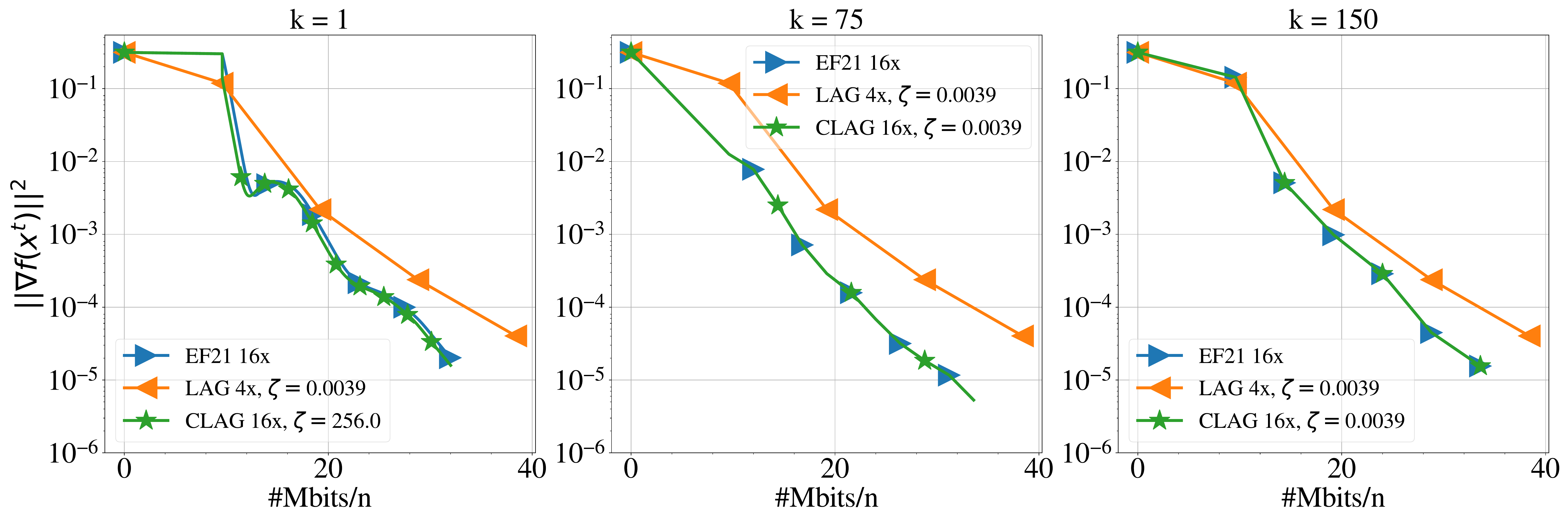}
	\caption{Comparison of \algname{CLAG}, \algname{LAG} and \algname{EF21} with Top-$K$ with fine-tuned stepsizes on {\em w6a} dataset}
	\label{fig:plot_w6a}
\end{figure}
\begin{figure}[H]
	\centering
	\includegraphics[width=\linewidth]{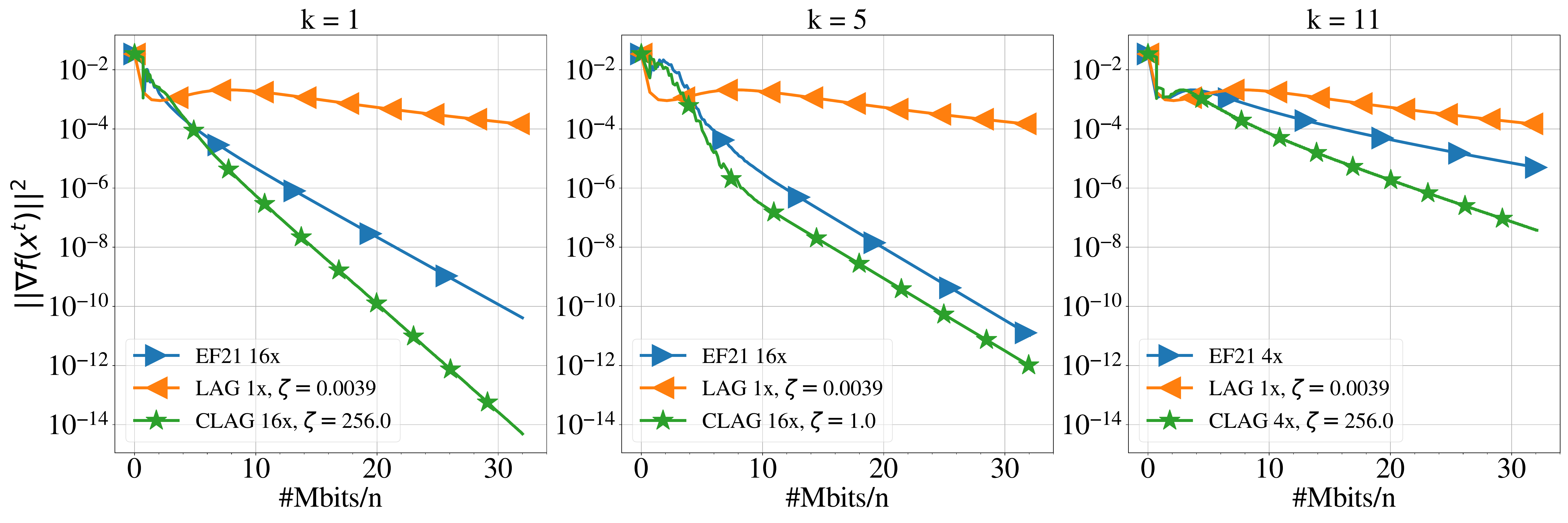}
	\caption{Comparison of \algname{CLAG}, \algname{LAG} and \algname{EF21} with Top-$K$ with fine-tuned stepsizes on {\em ijcnn1} dataset}
	\label{fig:plot_ijcnn1}
\end{figure}
\begin{figure}[H]
	\centering
	\includegraphics[width=\linewidth]{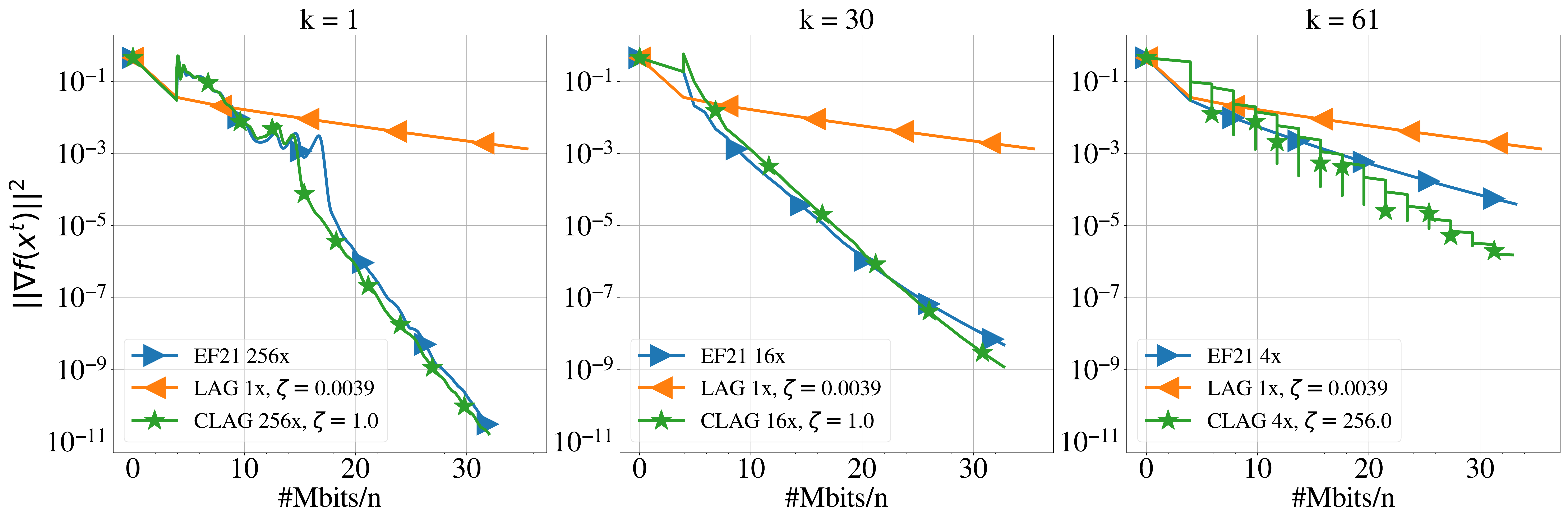}
	\caption{Comparison of \algname{CLAG}, \algname{LAG} and \algname{EF21} with Top-$K$ with fine-tuned stepsizes on {\em a9a} dataset}
	\label{fig:plot_a9a}
\end{figure}

\end{document}